\newcommand{\aaa}{\mathfrak{a}}
\newcommand{\al}{\mathbf{A}}
\newcommand{\B}{\mathcal B}
\newcommand{\Bg}{\B_{\nabla\ell}}
\newcommand{\Bgh}{\hat{\B}_{\nabla\ell}}
\newcommand{\Bl}{{\B}_\ell}
\newcommand{\Blh}{\hat{\B}_\ell}
\newcommand{\Btl}{\B_{\tilde\ell}}
\newcommand{\Bcmi}{\B_{\text{CMI}}}
\newcommand{\Btlh}{\hat{\B}_{\tilde\ell}}
\newcommand{\dd}{\mathrm{d}}
\newcommand{\df}{\mathfrak{D}}
\newcommand{\kk}{{k'}}
\newcommand{\KL}{\mathrm{K\!L}}
\newcommand{\E}{\mathbb{E}}
\newcommand{\Ell}{\mathscr{L}}
\newcommand{\ee}{\varepsilon}
\newcommand{\G}{\mathcal G}
\newcommand{\indep}{\perp \!\!\! \perp}
\newcommand{\LI}{\mathrm{L}}
\newcommand{\N}{\mathcal N}
\newcommand{\Nn}{\mathbb N}
\newcommand{\one}{\mathbf{1}}
\newcommand{\Pp}{\mathbb{P}}
\newcommand{\PP}{\mu}
\newcommand{\PR}{\mathscr{P}}
\newcommand{\QQ}{\nu}
\newcommand{\R}{\mathbb{R}}
\newcommand{\Rr}{\mathcal{R}}
\newcommand{\Sc}{\mathcal{S}}
\newcommand{\Supp}{\mathrm{Supp}}
\newcommand{\TV}{\mathrm{T\!V}}
\newcommand{\uuu}{\mathfrak{u}}
\newcommand{\U}{\mathcal{U}}
\newcommand{\vvv}{\mathfrak{v}}
\newcommand{\V}{\mathbb{V}}
\newcommand{\W}{\mathcal{W}}
\newcommand{\Wass}{\mathfrak{W}}
\newcommand{\Wassun}{\Wass_{1\!\text{D}}}
\newcommand{\X}{\mathcal{X}}
\newcommand{\Z}{\mathcal{Z}}
\DeclareMathOperator*{\erf}{erf}
\DeclareMathOperator*{\Id}{Id}
\newtheorem{assumptions}{Assumptions}
\newenvironment{manualprop}[1]{%
  \manpropin
}{\endmanpropin}
\newenvironment{manuallemma}[1]{%
  \manlemmin
}{\endmanlemmin}
\newenvironment{proofsketch}{%
  \proof}{\endproof}
\title[Chained Generalisation Bounds]{Chained Generalisation Bounds}
\begin{document}

\maketitle

\begin{abstract}%
This work discusses how to derive upper bounds for the expected generalisation error of supervised learning algorithms by means of the chaining technique. By developing a general theoretical framework, we establish a duality between generalisation bounds based on the regularity of the loss function, and their chained counterparts, which can be obtained by lifting the regularity assumption from the loss onto its gradient. This allows us to re-derive the chaining mutual information bound from the literature, and to obtain novel chained information-theoretic generalisation bounds, based on the Wasserstein distance and other probability metrics. We show on some toy examples that the chained generalisation bound can be significantly tighter than its standard counterpart, particularly when the distribution of the hypotheses selected by the algorithm is very concentrated. 
\end{abstract}

\begin{keywords}%
  Generalisation bounds; Chaining; Information-theoretic bounds; Mutual information; Wasserstein distance; PAC-Bayes.%
\end{keywords}

\section{Introduction}
In the supervised setting, a learning algorithm is a procedure that takes a training dataset as input and returns a hypothesis (\textit{e.g.}, regression coefficients, weights of a neural network, etc.). Ideally, the learned hypothesis should perform well on both the input dataset and new data, which were not used for the training. There is hence interest in providing generalisation bounds, namely upper bounds on the algorithm's gap in performance for seen and unseen instances.

The first generalisation bounds were based on characterisations of the hypothesis space's complexity, such as the VC dimension or the Rademacher complexity \citep{Bousquet2004, vapnik00,shalevBook2014}. However, due to their algorithm-independent nature, these bounds must hold even for the worst algorithm on the given hypothesis space. Consequently, they are often inadequate for modern over-parameterised neural networks, with the complexity measure usually scaling exponentially with the architecture's depth \citep{neuralnetAnthony, zhang2017understanding, belkin2018understand}.  

To address this issue, recent approaches aim at providing algorithm-dependent generalisation bounds. The underlying intuition is that if the output hypothesis is less dependent on the input dataset, it would be less prone to overfitting, and so generalises better. Among the results building on this idea, there are bounds based on uniform stability \citep{bousquet2002} and differential privacy \citep{DworkDP2014}, PAC-Bayesian bounds \citep{guedj2019primer, McAllester98somepac-bayesian, mcallester}, and information-theoretic bounds. 

In this paper, we shall mainly focus on the information-theoretic framework, where the learning algorithm is seen as a noisy channel connecting the input dataset and the chosen hypothesis. \cite{russo2019much} and \cite{Xu2017InformationtheoreticAO} were the first to introduce this approach. They upper-bounded the expected generalisation error via the Mutual Information (MI) between the input sample and the learnt hypothesis. This bound is simple and can be applied to a broad class of learning algorithms. However, a major drawback is that it becomes infinite if the choice of the hypothesis is deterministic in the input. Motivated by this problem, several strategies have been proposed. 

\cite{Bu2019} gave an individual-sample MI bound, while \cite{steinke2020} introduced a conditional version of the MI, which is always finite. \cite{Galvez2020OnRS}, \cite{haghifam2020}, and \cite{hellstrom2020} extended and merged these results. Alternatively, different measures of algorithmic stability can replace the MI: \cite{lopez2018wass}, \cite{Wang2019Wass}, and \cite{borja2021tighter} proposed bounds based on the Wasserstein distance, while others focused on total variation, $f$-divergences, and lautum information \citep{Wang2019Wass, borja2021tighter, esposito2020generalization, lautuminf}.

Adopting a different perspective, \cite{asadi2018chaining} observed that several information-theoretic bounds fail to exploit the dependencies between hypotheses. They hence proposed to combine the original MI bound with the chaining method, a powerful tool from high dimensional probability originally aimed at upper-bounding the expected supremum of random processes. First introduced by Kolmogorov (see \cite{vanHandel}), the chaining technique has been successfully extended and developed \citep{DUDLEY, Talagrand2005MMbook, Talagrand2014MMbook}. In their Chaining Mutual Information (CMI) bound, \cite{asadi2018chaining} take finer and finer discretisations of the hypothesis space and rewrite the generalisation error as a telescopic sum, whose terms can be controlled by exploiting the dependencies between the hypotheses. Subsequently, \cite{Asadi2020ChainingMC} adapted the CMI technique to the architecture of deep neural nets, while \cite{zhou2022stochastic} introduced bounds based on a stochastic version of chaining. However, it is worth mentioning that previous works had already applied the chaining method to algorithm-dependent bounds. For instance, \cite{genericPACBayes} combined the generic chaining from \cite{Talagrand2005MMbook} with the PAC-Bayesian approach.

As a final comment, it must be noted that the generalisation bounds from the information-theoretic literature are hard to evaluate in practice, involving expectations with respect to the unknown sample distribution. Nevertheless, they provide useful intuition on the mechanism of the learning process and, as a result, they represent a very active research area. Moreover, recent works have built on them to derive computable analytical bounds for specific algorithms, such as Langevin dynamics, stochastic gradient Langevin dynamics, and stochastic gradient descent \citep{Bu2019, negrea19, haghifam2020, Galvez2020OnRS, neu21}.

\subsection{Our contributions}
The CMI bound is an interesting multi-scale reformulation of the original MI result by \cite{russo2019much}. However, in the information-theoretic literature on generalisation bounds, the chaining method has been coupled only with the MI \citep{asadi2018chaining,Asadi2020ChainingMC,zhou2022stochastic}. Two questions then naturally arise. \textit{Is it possible to derive chained versions of other kinds of generalisation bounds? Can these chained bounds be tighter than their original counterparts?} 

In the present work, we establish a duality that reads as follows. \textit{Each bound, based on (a certain notion of) regularity of the loss function, corresponds to a chained bound that can be obtained by lifting the regularity condition from the loss to its gradient.} To make sense of this, we first introduce a general framework, standardising the main step in the proof of several information-theoretic bounds from the literature. We then discuss how to extend this framework leveraging the chaining technique, and we provide a simple method to derive novel chained generalisation bounds. We show indeed that in our framework each unchained bound corresponds to a chained one (see Theorems \ref{thm:genstd} and \ref{thm:gench}), in a way reflecting the connection between the MI and CMI results.

The framework introduced in this work encompasses several information-theoretic \textit{backward-channel}\footnote{In the information-theoretic literature, the \textit{forward-channel} connects the sample to the hypothesis, while the \textit{backward-channel} goes the other way. Chaining on the hypotheses combines naturally with the \textit{backward-channel}.} bounds, and allows us to derive their chained counterparts. However, due to space limitations, many explicit results are deferred to Appendix \ref{app:bounds} (see Table \ref{table:boundsapp}) and in the main text we focus on four bounds to concretely illustrate how our framework works: the MI bound from \cite{russo2019much} and the CMI bound from \cite{asadi2018chaining} serve as a motivation for our general result, while as an application of our framework we derive a novel Wasserstein bound (see Proposition \ref{prop:Wassch}), which is the chained counterpart of a bound from \cite{lopez2018wass}.
\begin{comment}
\begin{table}[t]
\centering
\caption{Examples of the chained-unchained duality}
\label{table:4bounds}
\resizebox{\linewidth}{!}{\begin{tabular}{lccc}
    \toprule[1pt]
    Name & Assumption ($\forall w\in\W$) & Bound & Ref\\
    \midrule
    MI & $\ell(w, X)$ $1$-SG & $\sqrt{2 I(W;S)/m}$ & \cite{russo2019much} \\
    CMI & $\nabla_w\ell(w, X)$ $1$-SG & $\sum_k\varepsilon_{k-1}\sqrt{2 I(W_k;S)/m}$ & \cite{asadi2018chaining} \& New\\
    \midrule 
    Wasserstein & $\ell(w, \cdot)$ $1$-Lipschitz & $\E_{\Pp_W}[\Wass(\Pp_S, \Pp_{S|W})]/\sqrt m$ & \cite{lopez2018wass}\\
    Chained Wasserstein & $\nabla_w\ell(w, \cdot)$ $1$-Lipschitz & $\sum_k\varepsilon_{k-1}\E_{\Pp_W}[\Wass(\Pp_S, \Pp_{S|W_k})]/\sqrt m$ & New\\
    \bottomrule[1pt]
\end{tabular}}
\end{table}
\end{comment}

Moreover, we discuss some possible extensions of our work. On the one hand, our information-theoretic framework can be restated with weaker regularity assumptions on both the loss and the hypothesis space. On the other hand, we present an additional bound that does not fit our theoretical framework but can still be derived using essentially the same technical machinery. It is a chained PAC-Bayesian generalisation result, which has the interesting features of being finite even for deterministic algorithms and not requiring the loss to be bounded by a small constant.

As a final remark, there is no generic answer on whether the chained bounds are tighter than their unchained counterparts. However, the chaining technique turns out to be particularly effective when the hypotheses' distribution is very concentrated. In fact, many of the standard bounds do not exploit this feature, the most pathological case being the MI bound, which can even be infinite. In contrast, the chained bounds can be significantly tighter, intrinsically leveraging the dependencies between different hypotheses. We illustrate this phenomenon through some simple toy examples.
%\subsection{Structure of the paper}
%\textcolor{red}{Should we put it?}
%------------------------- Preliminaries ----------------------------%
\section{Preliminaries}\label{sec:prel}
Let the input space $(\X, d_\X)$ be a separable complete metric space, and $\Sigma_\X$ the corresponding Borel $\sigma$-algebra. We define $\Sc=\X^m$ and consider a metric $d_\Sc$ inducing the product $\sigma$-algebra $\Sigma_\Sc=\Sigma_\X^{\otimes m}$. We denote the training dataset as $s=\{x_1,\dots,x_m\}\in\Sc$. Let $\Pp_X$ be a probability measure on $\X$ and $X$ a random variable with law $\Pp_X$. $S=\{X_1, \dots, X_m\}\in\Sc$ denotes the random training sample, with law $\Pp_S$. We will always assume that the marginal $\Pp_{X_i}=\Pp_X$, for each index $i$. This is of course the case if the $X_i$ are i.i.d.\ ($\Pp_S=\Pp_X^{\otimes m}$). We will suppose that the hypothesis space $\W$ is a closed subset of $\R^d$, endowed with its Borel $\sigma$-algebra $\Sigma_\W$. A learning algorithm consists in a Markov kernel that maps each $s\in\Sc$ to a probability measure $\Pp_{W|S=s}$ on $\W$. In turn, this defines a joint probability $\Pp_{W, S}$ on $\W\times\Sc$. We denote as $\Pp_W$ and $\Pp_S$ the marginal distributions of $\Pp_{W, S}$, and we let $s\mapsto\Pp_{W|S=s}$ and $w\mapsto\Pp_{S|W=w}$ be regular conditional probabilities\footnote{The existence of these is ensured by the fact that $\Sc$ and $\W$ are Polish spaces, cf.\ Theorem 10.2.2 in \cite{dudley_2002}.}. 

In the supervised framework, the goal is to approximate a map $x\mapsto f_\star(x)$ by making use of the information contained in the training sample $s$ (the value of $f^\star(x_i)$ is known for each $x_i\in s$). Each hypothesis $w$ represents a parameterised mapping $x\mapsto f_w(x)$, and the training process consists in tuning $w$, so as to approximate $f^\star$. The loss $\ell:\W\times\X\to\R$, allows to assess how far each $f_w(x)$ is from $f^\star(x)$. We will always assume that $\ell(w, \cdot)\in L^1(\Pp_X)$. Define the empirical and the true loss 
$$\Ell_s(w) = \frac{1}{m}\sum_{i=1}^m\ell(w, x_i)\,;\qquad\qquad \Ell_\X(w) = \E_{\Pp_X}[\ell(w, X)]\,. $$
We call generalisation error the difference $g_s(w) = \Ell_\X(w)- \Ell_s(w)$. In this work, we are essentially interested in upper-bounding its expected value $\G = \E_{\Pp_{W, S}}[g_S(W)]$. 

The equality $\E_{\Pp_{W, S}}[\Ell_\X(W)] = \E_{\Pp_{W\otimes S}}[\Ell_S(W)]$, where $\Pp_{W\otimes S}=\Pp_W\otimes\Pp_S$, follows from $\Ell_\X(w)=\E_{\Pp_S}[\Ell_S(w)]$ and is the starting point of several information-theoretic bounds. Indeed,
$$\G = \E_{\Pp_{W\otimes S}}[\Ell_S(W)] - \E_{\Pp_{W, S}}[\Ell_S(W)]\,$$
can be upper-bounded in terms of how ``far apart'' $\Pp_{W, S}$ and $\Pp_{W\otimes S}$ are.

\subsection{Further notation and conventions}
The following notation will be used throughout the rest of the paper. $(\Z, \Sigma_\Z)$ denotes a generic separable complete metric space, endowed with the Borel $\sigma$-algebra induced by its metric $d_\Z$. We endow $\PR$, the space of all the probability measures on $(\Z, \Sigma_\Z)$, with the topology of the weak convergence, and we denote the corresponding Borel $\sigma$-algebra as $\Sigma_\PR$. For two coupled random variables $Z, Z'$ on $\Z$, we write  $\Pp_{Z\otimes Z'}$ for the independent coupling $\Pp_Z\otimes \Pp_{Z'}$. For $v,v'\in\R^q$ (for a generic $q\in\mathbb N$) we write $\|v\|$ and $v\cdot v'$ for the Euclidean norm and the standard dot product in $\R^q$ respectively. For a random vector $V\in\R^q$, we write that $V\in L^1$ if $\E_{\Pp_V}[\|V\|]<+\infty$. When we need to specify the integrability of $V$ with respect to a particular law $\mu$, we explicitly write $V\in L^1(\mu)$, that is $\E_\mu[\|V\|]<+\infty$. Finally, $\xi$ denotes an arbitrary non-negative real number.

\section{General framework}\label{sec:fram}
\subsection{Bounds based on the regularity of the loss}\label{sec:framstd}
Both the standard MI and Wasserstein bounds from \cite{russo2019much} and \cite{lopez2018wass} (see Propositions \ref{prop:MI} and \ref{prop:Wass} in Section \ref{sec:ex} for the explicit statements) build on some regularity condition on the dependence of $\ell$ in $x$, holding uniformly on $\W$. As this is a common assumption for various \textit{backward-channel} bounds in the literature, we will now introduce a unified abstract framework, which allows us to re-derive several information-theoretic bounds, such as many of those based on MI, Wasserstein distances, and other probability metrics. Due to the limited space, in the main text we only give a few concrete applications of our framework (see Section \ref{sec:ex}). A wide range of additional explicit examples, listed in Table \ref{table:boundsapp}, can be found in Appendix \ref{app:bounds}. 
\begin{definition}[$\df$-regularity]\label{def:Dreg} Let $\df$ be a measurable\footnote{The measurability wrt $\Sigma_\PR$ is a technical assumption that is required in order to ensure that expressions, such as $\int_\W\df(\Pp_S, \Pp_{S|W=w})\dd\Pp_W(w)$ in Theorem \ref{thm:genstd}, make sense. The reader can be assured that it holds whenever $\df$ is a measurable function of an $f$-divergence, or the Wasserstein distance. We refer to Appendix \ref{app:tech} for more details.} map $\PR\times\PR\to[0,+\infty]$. Fix $\mu\in\PR$ and $\xi\geq 0$. We say that $f:\Z\to\R$ has regularity $\Rr_\df(\xi)$, with respect to $\mu$, if $f\in L^1(\mu)$ and, for every $\nu\in\PR$ such that $\Supp(\nu)\subseteq\Supp(\mu)$ and $f\in L^1(\nu)$,
$$|\E_\mu[f(Z)]-\E_\nu[f(Z)]|\leq \xi\,\df(\mu, \nu)\,.$$
We can extend the definition to functions taking values in $\R^q$, for $q>1$. We say that $F:\Z\to\R^q$ has regularity $\Rr_\df(\xi)$ (wrt $\mu$) if $z\mapsto v\cdot F(w)$ has regularity $\Rr_\df(\xi\|v\|)$ (wrt $\mu$), for all $v\in\R^q$.
\end{definition}
%\begin{lemma}\label{lemma:klreg}
%Let $\df:(\mu, \nu) \mapsto \sqrt{2\KL(\nu\|\mu)}$. If $f:\Z\to\R$ is such that $f(Z)$ is $\xi$-SG wrt $A\sim\mu\in\PR$, then $f$ has regularity $\Rr_{\df}(\xi)$ wrt $\mu$.
%\end{lemma}
%\begin{lemma}\label{lemma:wassreg}
%Let $\df:(\mu, \nu) \mapsto \Wass(\mu, \nu)$. If $f:\Z\to\R$ is $\xi$-Lipschitz, then $f$ has regularity $\Rr_{\df}(\xi)$, wrt any $\mu\in\PR$ such that $f\in L^1(\mu)$.
%\end{lemma}
The concept of $\df$-regularity is intrinsically connected to the choice of the measure $\mu\in\PR$, in the sense that $f$ might be $\Rr_\df(\xi)$ regular with respect to $\mu$, but not with respect to some other $\mu'\in\PR$. For two simple concrete examples of $\df$-regularity, we refer to Lemma \ref{lemma:dfreg}, in Section \ref{sec:ex}. 

%Note that although the Lipschitz condition implies a regularity with respect to any $\mu\in\PR$, as long as $f\in L^1(\mu)$, the subgaussianity gives a regularity only for a specific $\mu$, the probability measure with respect to which $f(Z)$ is $\xi$-SG. 

Now, let $\Z=\Sc$ and recall that $\W$ is a closed subset of $\R^d$, with Borel $\sigma$-algebra $\Sigma_\W$. On the product space $(\W\times\Sc, \Sigma_\W\otimes\Sigma_\Sc)$, we consider a probability measure $\Pp_{W, S}$, with marginals $\Pp_W$ and $\Pp_S$. Recall that since $\Sc$ is a Polish space, $w\mapsto \Pp_{S|W=w}$ defines a regular conditional probability (cf.\ Theorem 10.2.2 in \cite{dudley_2002}). The next result, which follows easily from the definition of regularity, is a powerful tool to derive generalisation bounds. 
\begin{theorem}\label{thm:genstd}
Assume that $s\mapsto \Ell_s(w)$ has regularity $\Rr_\df(\xi)$ wrt $\Pp_S$, $\forall w\in\W$. Then we have
$$|\G|=|\E_{\Pp_{W\otimes S}}[\Ell_S(W)]-\E_{\Pp_{W, S}}[\Ell_S(W)]|\leq \xi\,\E_{\Pp_W}[\df(\Pp_S, \Pp_{S|W})]\,,$$
where $\E_{\Pp_W}[\df(\Pp_S, \Pp_{S|W})]=\int_\W\df(\Pp_S, \Pp_{S|W=w})\,\dd\Pp_W(w)$.\footnote{Note that $w\mapsto\df(\Pp_S, \Pp_{S|W=w})$ is measurable, since both $w\mapsto \Pp_{S|W=w}$ and $(\mu, \nu)\mapsto\df(\mu, \nu)$ are Borel measurable (see Appendix \ref{app:tech}).}
\end{theorem}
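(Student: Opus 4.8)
The plan is to reduce the statement to a conditional application, hypothesis by hypothesis, of the one‑function estimate built into Definition~\ref{def:Dreg}. First I would recall from Section~\ref{sec:prel} the identity $\E_{\Pp_{W,S}}[\Ell_\X(W)]=\E_{\Pp_{W\otimes S}}[\Ell_S(W)]$, which already rewrites $\G=\E_{\Pp_{W\otimes S}}[\Ell_S(W)]-\E_{\Pp_{W,S}}[\Ell_S(W)]$ (this is the first equality in the statement). The key move is then to disintegrate both expectations along $\Pp_W$, using the regular conditional probability $w\mapsto\Pp_{S|W=w}$, whose existence is guaranteed by $\Sc$ being Polish. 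Since $\Pp_{W\otimes S}=\Pp_W\otimes\Pp_S$, conditioning on $W=w$ leaves $S\sim\Pp_S$, so $\E_{\Pp_{W\otimes S}}[\Ell_S(W)]=\int_\W\E_{\Pp_S}[\Ell_S(w)]\,\dd\Pp_W(w)$, while by definition of the disintegration $\E_{\Pp_{W,S}}[\Ell_S(W)]=\int_\W\E_{\Pp_{S|W=w}}[\Ell_S(w)]\,\dd\Pp_W(w)$. Subtracting and pulling the absolute value inside the outer integral (triangle inequality for integrals) gives $|\G|\le\int_\W\bigl|\E_{\Pp_S}[\Ell_S(w)]-\E_{\Pp_{S|W=w}}[\Ell_S(w)]\bigr|\,\dd\Pp_W(w)$.

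The second step is to apply, for each fixed $w\in\W$, the assumed $\Rr_\df(\xi)$‑regularity of $s\mapsto\Ell_s(w)$ with respect to $\Pp_S$, taking $\nu=\Pp_{S|W=w}$. For this to be legitimate I must check the two side conditions in Definition~\ref{def:Dreg}: that $\Supp(\Pp_{S|W=w})\subseteq\Supp(\Pp_S)$, which holds for $\Pp_W$‑almost every $w$ because $\Pp_S=\int_\W\Pp_{S|W=w}\,\dd\Pp_W(w)$ is precisely the $S$‑marginal of the disintegration; and that $\Ell_\cdot(w)\in L^1(\Pp_{S|W=w})$, which likewise holds for $\Pp_W$‑a.e.\ $w$ under the standing integrability hypothesis ($\ell(w,\cdot)\in L^1(\Pp_X)$, together with $\Pp_{X_i}=\Pp_X$ and Fubini, ensures the relevant integrals are finite, so that $\G$ itself is well defined). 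On this $\Pp_W$‑full‑measure set of $w$, regularity gives $\bigl|\E_{\Pp_S}[\Ell_S(w)]-\E_{\Pp_{S|W=w}}[\Ell_S(w)]\bigr|\le\xi\,\df(\Pp_S,\Pp_{S|W=w})$, and integrating this pointwise inequality against $\Pp_W$ yields exactly $|\G|\le\xi\,\E_{\Pp_W}[\df(\Pp_S,\Pp_{S|W})]$, which is the claim.

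A final clean‑up is to confirm that every integral written above is meaningful: $w\mapsto\Pp_{S|W=w}$ is measurable into $(\PR,\Sigma_\PR)$, $(w,s)\mapsto\Ell_s(w)$ is jointly measurable, hence $w\mapsto\E_{\Pp_{S|W=w}}[\Ell_S(w)]$ is measurable, and $w\mapsto\df(\Pp_S,\Pp_{S|W=w})$ is measurable by the composition argument of the footnote and Appendix~\ref{app:tech}. Note the $\R^q$‑valued clause of Definition~\ref{def:Dreg} is not needed here, since $\Ell_s(w)$ is scalar. I expect the only real obstacle to be exactly this measure‑theoretic bookkeeping — justifying the disintegration, the almost‑sure support inclusion, and the conditional integrability — rather than any genuine inequality: the substance of the proof is the single‑line invocation of $\df$‑regularity conditionally on $W$.
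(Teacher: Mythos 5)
Your proposal is correct and follows essentially the same route as the paper's proof (Theorem~\ref{thm:genstdA}): disintegrate along $\Pp_W$, verify the support inclusion (the paper's Lemma~\ref{lemma:supp}) and the conditional integrability $\Pp_W$-a.s., apply the $\Rr_\df(\xi)$ regularity with $\nu=\Pp_{S|W=w}$ pointwise in $w$, and integrate against $\Pp_W$ with the triangle/Jensen inequality. The only cosmetic difference is that the paper states the joint integrability $\E_{\Pp_{W,S}}[|\Ell_S(W)|]<\infty$ directly (implicitly, so that $\G$ is well defined) rather than deriving it from $\ell(w,\cdot)\in L^1(\Pp_X)$, but this does not change the argument.
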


By specialising the concept of $\df$-regularity, we can leverage the framework introduced so far and obtain generalisation bounds based on various probability divergences (cf.\ Table \ref{table:boundsapp}). Moreover, individual-sample bounds such as those from \cite{Bu2019} can fit in our framework, as well as bounds based on the random sub-sampling from a super-sample, in the same spirit of the conditional MI bound from \cite{steinke2020}. We refer the reader to Appendix \ref{app:bounds} for a more detailed discussion of these results.
%--------------------- Chained General Framework ----------------------%
\subsection{Bounds based on the regularity of the loss's gradient}\label{sec:framch}
The bounds based on the chaining technique, such as the CMI bound from \cite{asadi2018chaining} (see Proposition \ref{prop:MIchain0} in Section \ref{sec:ex}), do not fit naturally in the framework presented so far. We are thus motivated to find an alternative setting that naturally gives rise to chained bounds, thus establishing new generalisation results.

As a starting point, let us notice that the main idea behind the CMI bound is to lift the regularity assumption from  $x\mapsto\ell(w, x)$ onto $x\mapsto (\ell(w, x)-\ell(w',x))$. A natural guess is that this approach could provide chained bounds also in our general framework, and this is indeed the case (cf.\ Theorem \ref{thm:genchapp} in Appendix \ref{app:extfram}). However, if $\ell$ is regular enough we can focus on the gradient $\nabla_w\ell(w, x)$ instead. Since this leads to more intuitive and compact statements, we chose to consider this case in the main text. 
\begin{assumptions}\label{ass}\\
$\bullet$ \ The set $\W\subset\R^d$ is convex, compact, and with non-empty interior.\\
$\bullet$ \ The function $w\mapsto\ell(w,x)$ is of class $C^1$ on $\W$, $\Pp_X$-a.s.\\
$\bullet$ \ We have $\sup_{(w,x)\in\W\times\X}|\ell(w, x)|<+\infty$ and $\sup_{(w,x)\in\W\times\X}\|\nabla_w\ell(w, x)\|<+\infty$.
\end{assumptions}
Let us stress once more that the above assumptions are not necessary in order to obtain the duality chained-unchained bounds. In Appendix \ref{app:extfram} we discuss a more general setting: $\W$ can be non-convex and with empty interior, $\ell$ continuous on $\W$ ($\Pp_X$-a.s.) and only bounded in expectation.

The chained bounds involve a sequence of finer and finer discretisations of the hypotheses' space, which can be formalised as follows.  
\begin{definition}[Nets and refining sequences of nets]
Given $\ee>0$, we define an $\ee$-projection on $\W$ as a measurable mapping $\pi:\W\to\W$ such that $\pi(\W)$ has finitely many elements and, for all $w\in\W$, $\|\pi(w)-w\|\leq \ee$. The image $\pi(\W)$ is called an $\ee$-net on $\W$.\\
Consider a positive, vanishing, decreasing sequence $\{\ee_k\}_{n\in\Nn}$, and assume that there is a $w_0\in\W$ such that $\|w-w_0\|\leq\ee_0$ for each $w\in\W$. We call $\{\pi_k(\W)\}_{n\in\Nn}$ an $\{\ee_k\}$-refining sequence of nets if $\pi_0(\W) = \{w_0\}$ and, for all $k\geq 1$, we have that $\pi_k$ is a $\ee_k$-projection and $\pi_{k-1}\circ\pi_k=\pi_{k-1}$.
\end{definition}
To simplify the notation, for all $w\in\W$ we let $w_k = \pi_k(w)$, and similarly $W_k = \pi_k(W)$ and $\W_k=\pi_k(\W)$. Note that for all $k$, $w_{k'}$ is determined by $w_k$ whenever $k'\leq k$, as $w_{k'}=\pi_{k'}(w_k)$. Moreover, for all $k\geq 1$, $\|w_k- w_{k-1}\| = \|w_k-\pi_{k-1}(w_k)\|\leq\ee_{k-1}$.

The next theorem is the main result of this work. Together with Theorem \ref{thm:genstd}, it establishes the duality between chained and unchained generalisation bounds, which can essentially be obtained by lifting the regularity from the loss onto its gradient. 
\begin{theorem}\label{thm:gench}
Assume \ref{ass} and that $s\mapsto \nabla_w \Ell_s(w)$ has regularity $\Rr_\df(\xi)$ wrt $\Pp_S$, $\forall w\in\W$. Then, for any $\{\ee_k\}$-refining sequence of nets on $\W$,
$$|\G|=|\E_{\Pp_{W\otimes S}}[\Ell_S(W)]-\E_{\Pp_{W, S}}[\Ell_S(W)]|\leq \xi\sum_{k=1}^\infty \ee_{k-1}\E_{\Pp_W}[\df(\Pp_S, \Pp_{S|W_k})]\,,$$
where $\E_{\Pp_W}[\df(\Pp_S, \Pp_{S|W_k})]=\int_\W\df\big(\Pp_S, \Pp_{S|W\in \pi_k^{-1}(w)}\big)\,\dd\Pp_W(w)$.
\end{theorem}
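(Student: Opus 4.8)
The plan is to decompose $\Ell_S(W)$ along the refining sequence into a telescoping series of increments $\Ell_S(W_k)-\Ell_S(W_{k-1})$, to control each increment by invoking the gradient regularity along the segment joining $W_{k-1}$ to $W_k$, and then to sum over the scales. Concretely, I would first establish the chaining identity
\[
\G=\sum_{k\ge 1}\Big(\E_{\Pp_{W\otimes S}}\big[\Ell_S(W_k)-\Ell_S(W_{k-1})\big]-\E_{\Pp_{W,S}}\big[\Ell_S(W_k)-\Ell_S(W_{k-1})\big]\Big).
\]
This holds because $\Ell_S(W_N)=\Ell_S(W_0)+\sum_{k=1}^N\big(\Ell_S(W_k)-\Ell_S(W_{k-1})\big)$, because $\|W_N-W\|\le\ee_N\to 0$ together with the continuity of $w\mapsto\Ell_s(w)$ for $\Pp_S$-a.e.\ $s$ (the $C^1$ hypothesis in \ref{ass}) forces $\Ell_S(W_N)\to\Ell_S(W)$ a.s., and because the uniform bound $\sup_{w,x}|\ell(w,x)|<\infty$ lets dominated convergence pass this limit through both $\E_{\Pp_{W\otimes S}}$ and $\E_{\Pp_{W,S}}$; the $k=0$ term equals $\E_{\Pp_S}[\Ell_S(w_0)]$ under both laws since $W_0\equiv w_0$, so it drops out of $\G$.

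Next I would bound the $k$-th summand. Since $\pi_{k-1}\circ\pi_k=\pi_{k-1}$, the increment $\Ell_s(W_k)-\Ell_s(W_{k-1})$ is a function of $s$ and of $W_k$ only, say $\varphi_{w}(s):=\Ell_s(w)-\Ell_s(\pi_{k-1}(w))$, so the summand rewrites as $\E_{\Pp_W}\big[\E_{\Pp_S}[\varphi_{W_k}(S)]-\E_{\Pp_{S\mid W_k}}[\varphi_{W_k}(S)]\big]$, where $\Pp_{S\mid W_k=w}$ is the law of $S$ given $W\in\pi_k^{-1}(w)$. Fixing $w$, convexity of $\W$ keeps the segment $\gamma(t)=\pi_{k-1}(w)+t\,v$ (with $v:=w-\pi_{k-1}(w)$, $\|v\|\le\ee_{k-1}$) inside $\W$, and the fundamental theorem of calculus gives $\varphi_{w}(s)=\int_0^1\nabla_w\Ell_s(\gamma(t))\cdot v\,\dd t$ for $\Pp_S$-a.e.\ $s$. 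Boundedness of $\nabla_w\ell$ justifies exchanging this $t$-integral with $\E_{\Pp_S}$ and with $\E_{\Pp_{S\mid W_k=w}}$ (Fubini), so the inner difference equals $\int_0^1\big(\E_{\Pp_S}[\nabla_w\Ell_S(\gamma(t))\cdot v]-\E_{\Pp_{S\mid W_k=w}}[\nabla_w\Ell_S(\gamma(t))\cdot v]\big)\dd t$. For each $t$ one has $\gamma(t)\in\W$, and the vector-valued clause of Definition \ref{def:Dreg} turns the hypothesis ``$s\mapsto\nabla_w\Ell_s(\gamma(t))$ has regularity $\Rr_\df(\xi)$ wrt $\Pp_S$'' into ``$s\mapsto\nabla_w\Ell_s(\gamma(t))\cdot v$ has regularity $\Rr_\df(\xi\|v\|)$ wrt $\Pp_S$''; since $\Pp_{S\mid W_k=w}\ll\Pp_S$ for $\Pp_{W_k}$-a.e.\ $w$, its support sits inside $\Supp(\Pp_S)$, so the defining inequality applies and bounds the $t$-integrand by $\xi\|v\|\,\df(\Pp_S,\Pp_{S\mid W_k=w})\le\xi\ee_{k-1}\df(\Pp_S,\Pp_{S\mid W_k=w})$. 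Integrating over $t$ and then over $w\sim\Pp_{W_k}$ bounds the $k$-th summand in absolute value by $\xi\ee_{k-1}\E_{\Pp_W}[\df(\Pp_S,\Pp_{S\mid W_k})]$.

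A final triangle inequality on the series then gives $|\G|\le\xi\sum_{k\ge 1}\ee_{k-1}\E_{\Pp_W}[\df(\Pp_S,\Pp_{S\mid W_k})]$, with nothing to prove when the right-hand side diverges. The delicate steps are the interchanges of limit, series, and $t$-integral with the expectations — all covered by the uniform boundedness of $\ell$ and $\nabla_w\ell$ in \ref{ass}, through dominated convergence and Fubini — together with the remark that $\Pp_{S\mid W_k=w}$ is carried by $\Supp(\Pp_S)$, which is precisely what makes $\df$-regularity usable inside the conditional expectation; the measurability of $w\mapsto\df(\Pp_S,\Pp_{S\mid W_k=w})$ and of the $\ee$-projections is granted by the conventions of Definition \ref{def:Dreg} and Appendix \ref{app:tech}. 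Convexity and compactness of $\W$ are used only to keep the chaining segments in $\W$ and to secure the uniform gradient bound; both can be relaxed by chaining with $\ell(w,\cdot)-\ell(w',\cdot)$ in place of $\nabla_w\ell$, as in Appendix \ref{app:extfram}.
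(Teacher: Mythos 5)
Your proof is correct and takes essentially the same route as the paper's: the paper factors the argument into Lemma \ref{lemma:superreg} (gradient regularity implies increment regularity via the fundamental theorem of calculus, Fubini, and the uniform bounds) plus the general chained Theorem \ref{thm:genchapp} (telescoping decomposition, dominated convergence, support inclusion as in Lemma \ref{lemma:supp}, and Jensen), and your argument simply inlines these two steps into one. No gaps.
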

\begin{proofsketch}
Here is a sketch of the proof; see Appendix \ref{app:proofch} for the details. Following the standard chaining argument, we control $\Ell_s(w)$ by the telescopic sum $\sum_{k\geq 1} (\Ell_s(w_k)-\Ell_s(w_{k-1}))$. The upper bound will then follow from the fact that the $\Rr_\df(\xi)$ regularity of $s\mapsto\nabla_w \Ell_s(w)$ implies the $\Rr_{\df}(\ee_{k-1}\xi)$ regularity of $w\mapsto (\Ell_s(w_k)-\Ell_s(w_{k-1}))$.
\end{proofsketch}

Both Theorem \ref{thm:genstd} and \ref{thm:gench} are stated under uniform regularity conditions, in the sense that the value of the regularity's parameter $\xi$ has to be the same for all $w\in\W$. However, we can still achieve generalisation bounds under less strict assumptions. In Appendix \ref{app:extdf} we discuss the case of a measurable map $w\mapsto\xi_w$, such that, for some $p\in[1,+\infty]$, $\xi_W$ is bounded in $L^p(\Pp_W)$ (or $L^p(\Pp_{W_k})$, $\forall k\in\Nn$). Note that choosing $p=+\infty$ brings back the uniform condition.

In a similar spirit, one might try to relax the definition of $\ee$-net, by mimicking the stochastic chaining idea from \cite{zhou2022stochastic}. We defer this approach to future work.

\section{A few concrete examples: MI and Wasserstein bounds}\label{sec:ex}
In the current section we give a few concrete applications of the abstract framework that we have presented so far. We recover some simple generalisation bounds from the literature and establish a novel chained bound, based on the Wasserstein distance. 

First, we need to state a few standard definitions. 
\begin{definition}[Subgaussianity]\label{def:SG} A real random variable $Z\in L^1$ is $\xi$-SubGaussian ($\xi$-SG) if
$$\log\E_{\Pp_Z}[e^{\lambda Z}]\leq \lambda\E_{\Pp_Z}[Z] + \tfrac{\xi^2\lambda^2}{2}\,,\qquad\forall\lambda>0\,.$$
A random vector $V\in\R^q$ is $\xi$-SG if, for all $v\in\R^q$, $V\cdot v$ is $(\|v\|\xi)$-SG. Finally, a stochastic process $\{F_w\}_{w\in\W}$ is $\xi$-SG if, for every pair $(w, w')\in\W^2$, $F_w-F_{w'}$ is a $(\|w-w'\|\xi)$-SG random variable.
\end{definition}
Note that any bounded random variable $Z\in[a, b]$ is $\frac{b-a}{2}$-SG. Moreover, a normally distributed random variable $Z\sim\N(0, \xi)$ is $\xi$-SG.
\begin{definition}[Lipschitzianity] A function $f:\Z\to\R^q$ is $\xi$-Lipschitz on $\Z$ if, for all $z, z'\in\Z$, $$\|f(z)-f(z')\|\leq \xi d_\Z(z,z')\,.$$
\end{definition}
\begin{definition}[Kullback--Leibler divergence and mutual information] Let $\PP$ and $\QQ$ be two probability measures on $\Z$. We define the Kullback--Leibler divergence 
$$\KL(\QQ\|\PP) = \begin{cases}\E_\QQ[\log{\dd\QQ}/{\dd\PP}] &\text{if $\QQ\ll\PP$;}\\+\infty &\text{otherwise.}\end{cases}$$
% $$\KL(\QQ\|\PP) = \begin{cases}&\text{if $\QQ\ll\PP$;}\\+\infty&\text{otherwise.}\end{cases}$$
For two coupled random variables $Z, Z'$, the Mutual Information (MI) is defined as $$I(Z;Z') = \KL(\Pp_{Z, Z'}\|\Pp_{Z\otimes Z'}).\,$$
\end{definition}
The $\KL$ divergence is non-negative, with $\KL(\QQ\|\PP)=0$ if, and only if, $\PP=\QQ$. Similarly the MI is always non-negative, and null if, and only if, $Z\indep Z'$.
\begin{definition}[Wasserstein distance] Given two distributions $\PP$ and $\QQ$ on $\Z$ and fixed $p\geq 1$, their $p$-Wasserstein distance $\Wass_p$ is defined as
$$\Wass_p(\PP, \QQ) = \inf_{\pi\in\Pi[\PP, \QQ]}\E_{(Z,Z')\sim\pi}[d_\Z(Z,Z')^p]^{1/p}\,,$$
where $\Pi[\PP, \QQ]$ is the set of all probability measures, on $(\Z^2, \Sigma_\Z\otimes\Sigma_\Z)$, with marginals $\PP$ and $\QQ$.
\end{definition}
It can be shown that for $p>p'$ we have $\Wass_p(\PP, \QQ)\geq \Wass_{p'}(\PP, \QQ)$, so that in particular $\Wass_1$ is the weakest. For this reason, henceforth we will focus on $\Wass_1$, which we will simply denote $\Wass$. 

Using the concepts that we have just introduced, we can give two simple and concrete examples of $\df$-regularity.
\begin{lemma}\label{lemma:dfreg}
Let $\df_1:(\mu, \nu) \mapsto \sqrt{2\KL(\nu\|\mu)}$ and $\df_2:(\mu, \nu) \mapsto \Wass(\mu, \nu)$. Consider a measurable map $f:\Z\to\R^q$ (with $q\geq 1$). If $f(Z)$ is $\xi$-SG for $Z\sim\mu\in\PR$, then $f$ has regularity $\Rr_{\df_1}(\xi)$ wrt $\mu$. If $f$ is $\xi$-Lipschitz on $\Z$, then $f$ has regularity $\Rr_{\df_2}(\xi)$, wrt any $\mu\in\PR$ such that $f\in L^1(\mu)$.
\end{lemma}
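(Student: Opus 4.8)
The plan is to prove the two claims separately, in each case unpacking Definition~\ref{def:Dreg}. For a function $F:\Z\to\R^q$ with $q>1$, the $\Rr_\df$-regularity reduces by definition to the scalar case applied to $z\mapsto v\cdot F(z)$ for arbitrary $v\in\R^q$, so it suffices to treat $q=1$; I would state this reduction first and then work with a scalar $f$ throughout.

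For the subgaussian claim, fix $\nu\in\PR$ with $\Supp(\nu)\subseteq\Supp(\mu)$ and $f\in L^1(\nu)$. The standard route is the Donsker--Varadhan variational representation of the KL divergence: for every $\lambda\in\R$,
$$\lambda\bigl(\E_\nu[f(Z)]-\E_\mu[f(Z)]\bigr)\;\le\;\log\E_\mu\bigl[e^{\lambda(f(Z)-\E_\mu[f(Z)])}\bigr]+\KL(\nu\|\mu)\,.$$
Using the $\xi$-subgaussianity of $f(Z)$ under $\mu$, the log-moment-generating-function term is at most $\xi^2\lambda^2/2$ for $\lambda>0$ (and the analogous bound holds for $\lambda<0$ by applying subgaussianity to $-f$, or one simply notes that a bound on $\E_\mu[f]-\E_\nu[f]$ follows by swapping the sign of $\lambda$). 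Optimising the resulting quadratic $\lambda\mapsto \xi^2\lambda^2/2-\lambda t+\KL(\nu\|\mu)$ over $\lambda>0$ — which is the one genuine computation — gives $t\le\sqrt{2\KL(\nu\|\mu)}\,\xi$ for $t=|\E_\mu[f]-\E_\nu[f]|$, i.e.\ exactly $\Rr_{\df_1}(\xi)$ regularity. (If $\KL(\nu\|\mu)=+\infty$ the bound is vacuous.) A small point to address is that the Donsker--Varadhan bound requires $e^{\lambda f}$ to be $\mu$-integrable, which is guaranteed by subgaussianity, and that $f\in L^1(\nu)$ is part of the hypothesis so the left-hand side is well defined.

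For the Lipschitz claim, fix $\mu$ with $f\in L^1(\mu)$ and any $\nu\in\PR$ with $\Supp(\nu)\subseteq\Supp(\mu)$ and $f\in L^1(\nu)$. Since $f$ is $\xi$-Lipschitz, $f/\xi$ is $1$-Lipschitz, so by the Kantorovich--Rubinstein duality
$$|\E_\mu[f(Z)]-\E_\nu[f(Z)]|\;=\;\xi\,\bigl|\E_\mu[(f/\xi)(Z)]-\E_\nu[(f/\xi)(Z)]\bigr|\;\le\;\xi\sup_{g:\,1\text{-Lip}}\bigl|\E_\mu[g]-\E_\nu[g]\bigr|\;=\;\xi\,\Wass(\mu,\nu)\,,$$
which is $\Rr_{\df_2}(\xi)$ regularity. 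One should be slightly careful that Kantorovich--Rubinstein duality in the form $\Wass(\mu,\nu)=\sup_{g\ 1\text{-Lip}}|\E_\mu[g]-\E_\nu[g]|$ holds on a separable complete metric space (which $\Z$ is, by the standing assumption), and that the inequality direction we need — namely $|\E_\mu[g]-\E_\nu[g]|\le\Wass(\mu,\nu)$ for any $1$-Lipschitz $g$ — is in fact the easy direction, following directly from $|g(z)-g(z')|\le d_\Z(z,z')$ and taking expectation against an arbitrary coupling, then infimising. This avoids invoking the harder half of the duality theorem.

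The main obstacle, such as it is, is bookkeeping rather than depth: ensuring the variational inequality is applied with the correct integrability and sign conventions in the subgaussian case, and being careful that ``regularity wrt $\mu$'' quantifies only over $\nu$ with $\Supp(\nu)\subseteq\Supp(\mu)$ (harmless here, since both bounds we derive hold for all $\nu$ anyway). The vector-valued reduction, the optimisation of the quadratic, and the Wasserstein estimate are all routine, so no step should present a real difficulty.
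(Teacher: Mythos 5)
Your proposal is correct and follows essentially the same route as the paper: the Donsker--Varadhan variational representation of $\KL$ combined with the subgaussian bound on the log-moment generating function for $\df_1$, and the elementary coupling estimate (the easy direction of Kantorovich--Rubinstein) for $\df_2$, with the vector case reduced to scalars via the definition. The only detail the paper adds that you omit is the remark that $\df_1$ and $\df_2$ are measurable (as required by Definition~\ref{def:Dreg}), which it defers to Lemmas~\ref{lemma:klmeas} and~\ref{lemma:wassmeas}.
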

\subsection{Standard MI and Wasserstein bounds}
We state two simple generalisation bounds that were previously mentioned in the introduction. The proofs that we give leverage the abstract framework of Section \ref{sec:framstd}. The first result \citep{russo2019much, Xu2017InformationtheoreticAO} is an upperbound on $\G$ based on the mutual information between $W$ and $S$.
\begin{proposition}[Standard MI bound]\label{prop:MI}
Let $\Pp_S=\Pp_X^{\otimes m}$. If $\ell(w, X)$ is $\xi$-SG, $\forall w\in\W$, then
$$|\G|\leq \xi\sqrt{\frac{2I(W;S)}{m}}\,.$$
\end{proposition}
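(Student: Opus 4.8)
The plan is to apply Theorem~\ref{thm:genstd} with the divergence $\df=\df_1:(\mu,\nu)\mapsto\sqrt{2\KL(\nu\|\mu)}$ supplied by Lemma~\ref{lemma:dfreg}, and then to post-process the resulting bound using the tensorisation of subgaussianity and Jensen's inequality. First I would verify the hypothesis of Theorem~\ref{thm:genstd}, namely that $s\mapsto\Ell_s(w)$ has regularity $\Rr_{\df_1}(\xi/\sqrt m)$ with respect to $\Pp_S=\Pp_X^{\otimes m}$ for every $w\in\W$. Fix $w$. Since the $X_i$ are i.i.d.\ and each $\ell(w,X_i)$ is $\xi$-SG, the normalised sum $\Ell_s(w)=\frac1m\sum_{i=1}^m\ell(w,x_i)$, viewed as a function of $s$ under $\Pp_S$, is $(\xi/\sqrt m)$-SG: the moment generating function factorises over the independent coordinates, each contributing $\tfrac{\xi^2\lambda^2}{2m^2}$, for a total of $\tfrac{\xi^2\lambda^2}{2m}$. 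By Lemma~\ref{lemma:dfreg} (applied with $q=1$), this subgaussianity gives $s\mapsto\Ell_s(w)$ the regularity $\Rr_{\df_1}(\xi/\sqrt m)$ with respect to $\Pp_S$.

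Next I would invoke Theorem~\ref{thm:genstd} with this regularity parameter, obtaining
$$|\G|\le\frac{\xi}{\sqrt m}\,\E_{\Pp_W}\!\big[\sqrt{2\KL(\Pp_{S|W}\|\Pp_S)}\,\big]=\frac{\xi\sqrt2}{\sqrt m}\,\E_{\Pp_W}\!\big[\sqrt{\KL(\Pp_{S|W}\|\Pp_S)}\,\big].$$
Then I would apply Jensen's inequality to the concave map $t\mapsto\sqrt t$ to pull the expectation inside the square root:
$$\E_{\Pp_W}\!\big[\sqrt{\KL(\Pp_{S|W}\|\Pp_S)}\,\big]\le\sqrt{\E_{\Pp_W}\big[\KL(\Pp_{S|W}\|\Pp_S)\big]}=\sqrt{I(W;S)},$$
the last equality being the disintegration identity $I(W;S)=\E_{\Pp_W}[\KL(\Pp_{S|W}\|\Pp_S)]$, which follows from $\Pp_{W,S}\ll\Pp_{W\otimes S}$ and the chain rule for the Radon--Nikodym derivative. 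Combining the two displays yields $|\G|\le\xi\sqrt{2I(W;S)/m}$, as claimed. (If $I(W;S)=+\infty$ the bound is vacuous and there is nothing to prove; otherwise the conditional KL is $\Pp_W$-a.s.\ finite, so all the manipulations are legitimate.)

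The only genuinely delicate point is the tensorisation step: one must be careful that the definition of $\df_1$-regularity in Definition~\ref{def:Dreg} is stated for $\nu$ with $\Supp(\nu)\subseteq\Supp(\mu)$ and $f\in L^1(\nu)$, so I should check that $\Pp_{S|W=w}$ is absolutely continuous with respect to $\Pp_S$ for $\Pp_W$-a.e.\ $w$ whenever $I(W;S)<\infty$ — this is immediate from $\Pp_{W,S}\ll\Pp_{W\otimes S}$ by disintegration — and that $\Ell_s(w)\in L^1(\Pp_{S|W=w})$, which holds because $\ell(w,\cdot)\in L^1(\Pp_X)$ is assumed and, under the subgaussian hypothesis, one in fact has uniform integrability. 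Everything else is a routine chain of Jensen and the elementary MGF computation; the conceptual content is entirely carried by Theorem~\ref{thm:genstd} and Lemma~\ref{lemma:dfreg}.
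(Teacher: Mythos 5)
Your proposal is correct and follows essentially the same route as the paper's own proof: establish that $\Ell_S(w)$ is $(\xi/\sqrt m)$-SG by tensorisation, invoke Lemma \ref{lemma:dfreg} to get $\Rr_{\df}(\xi/\sqrt m)$ regularity with $\df:(\mu,\nu)\mapsto\sqrt{2\KL(\nu\|\mu)}$, apply Theorem \ref{thm:genstd}, and finish with Jensen's inequality and the identity $I(W;S)=\E_{\Pp_W}[\KL(\Pp_{S|W}\|\Pp_S)]$. The extra care you take with absolute continuity and integrability is fine but not a departure from the paper's argument.
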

\begin{proof}
First, since $\Pp_S=\Pp_X^{\otimes m}$, $\Ell_S(w)$ is the average of $m$ independent $\xi$-SG random variables, so it is $(\xi/\sqrt m)$-SG. In particular, with $\df: (\mu, \nu)\mapsto \sqrt{2\KL(\nu\|\mu)}$, Lemma \ref{lemma:dfreg} shows that $s\mapsto \Ell_s(w)$ has regularity $\Rr_\df(\xi/\sqrt m)$, $\forall w\in\W$. We conclude by Theorem \ref{thm:genstd} and Jensen's inequality.
\end{proof}
The next bound is from \cite{lopez2018wass} and is close in spirit to the previous one, as again it tries to measure how much information about $S$ is enclosed in $W$. However, now the MI is replaced by an expected Wasserstein distance. In order to get an explicit dependence on  $1/\sqrt m$, we assume that the metric $d_\Sc$ on $\Sc$ is related to the one on $\X$ via 
\begin{equation}\label{eq:defmetric}
    d_\Sc(s, s') = \left(\sum_{i=1}^m d_\X(x_i, x'_i)^2\right)^{1/2}\,,
\end{equation}
where $s=\{x_1,\dots, x_m\}$ and $s'=\{x'_1, \dots, x'_m\}$. Note that we do not need $\Pp_S=\Pp_X^{\otimes m}$.
\begin{proposition}[Standard Wasserstein bound]\label{prop:Wass}
Suppose that $d_\X$ and $d_\Sc$ are related by \eqref{eq:defmetric}. If, $\forall w\in\W$, $x\mapsto\ell(w, x)$ is $\xi$-Lipschitz on $\X$, then
$$|\G|\leq \frac{\xi}{\sqrt{m}}\,\E_{\Pp_W}[\Wass(\Pp_{S}, \Pp_{S|W})]\,.$$
\end{proposition}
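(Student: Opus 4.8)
The plan is to deduce this from Theorem~\ref{thm:genstd} by checking the $\Rr_{\df_2}$-regularity hypothesis for the right choice of $\df$, exactly as the proof of Proposition~\ref{prop:MI} used Lemma~\ref{lemma:dfreg} and Theorem~\ref{thm:genstd}. Concretely, set $\df=\df_2:(\mu,\nu)\mapsto\Wass(\mu,\nu)$, the $1$-Wasserstein distance on $\Sc$ induced by $d_\Sc$. By Lemma~\ref{lemma:dfreg}, it suffices to show that for every $w\in\W$ the map $s\mapsto\Ell_s(w)$ is $(\xi/\sqrt m)$-Lipschitz on $\Sc$ (with respect to $d_\Sc$), and that $\Ell_\cdot(w)\in L^1(\Pp_S)$; the latter is immediate from $\ell(w,\cdot)\in L^1(\Pp_X)$ and $\Pp_{X_i}=\Pp_X$. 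Then Theorem~\ref{thm:genstd} gives $|\G|\le(\xi/\sqrt m)\,\E_{\Pp_W}[\Wass(\Pp_S,\Pp_{S|W})]$, which is the claim.

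\textbf{Key step: the Lipschitz estimate on $\Sc$.} Fix $w$ and take $s=\{x_1,\dots,x_m\}$, $s'=\{x_1',\dots,x_m'\}$. Since $x\mapsto\ell(w,x)$ is $\xi$-Lipschitz on $\X$,
\[
|\Ell_s(w)-\Ell_{s'}(w)|=\Big|\frac1m\sum_{i=1}^m\big(\ell(w,x_i)-\ell(w,x_i')\big)\Big|\le\frac{\xi}{m}\sum_{i=1}^m d_\X(x_i,x_i')\,.
\]
Now apply Cauchy--Schwarz to $\sum_i d_\X(x_i,x_i')=\sum_i 1\cdot d_\X(x_i,x_i')$, obtaining $\sum_i d_\X(x_i,x_i')\le\sqrt m\big(\sum_i d_\X(x_i,x_i')^2\big)^{1/2}=\sqrt m\,d_\Sc(s,s')$ by the definition~\eqref{eq:defmetric} of $d_\Sc$. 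Hence $|\Ell_s(w)-\Ell_{s'}(w)|\le(\xi/\sqrt m)\,d_\Sc(s,s')$, i.e.\ $s\mapsto\Ell_s(w)$ is $(\xi/\sqrt m)$-Lipschitz on $(\Sc,d_\Sc)$. Lemma~\ref{lemma:dfreg} then yields regularity $\Rr_{\df_2}(\xi/\sqrt m)$ with respect to $\Pp_S$ (in fact with respect to any law under which the integrability holds), uniformly in $w\in\W$.

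\textbf{Main obstacle.} There is no serious obstacle here; the only thing to be careful about is the bookkeeping on the metrics. One must use the specific product metric~\eqref{eq:defmetric} — which is what makes the $\ell^1\!\to\ell^2$ passage cost exactly a factor $\sqrt m$ and thus produces the $1/\sqrt m$ scaling — rather than, say, the $\ell^1$ product metric (which would give no improvement) or the $\ell^\infty$ one. Beyond that, one should note that $\Wass$ in the bound is the $1$-Wasserstein distance computed with $d_\Sc$, so that the Lipschitz constant in the Kantorovich--Rubinstein-type comparison of Lemma~\ref{lemma:dfreg} is measured in the same metric; this is consistent by construction. The integrability conditions needed to invoke Lemma~\ref{lemma:dfreg} and Theorem~\ref{thm:genstd} follow from the standing assumption $\ell(w,\cdot)\in L^1(\Pp_X)$ together with $\Pp_{X_i}=\Pp_X$, so the argument goes through as in Proposition~\ref{prop:MI}.
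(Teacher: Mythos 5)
Your proposal is correct and follows essentially the same route as the paper: the Cauchy--Schwarz step converting the $\ell^1$ sum of $d_\X$ distances into $\sqrt m\,d_\Sc(s,s')$ to get the $(\xi/\sqrt m)$-Lipschitz estimate for $s\mapsto\Ell_s(w)$, followed by Lemma~\ref{lemma:dfreg} with $\df=\Wass$ and Theorem~\ref{thm:genstd}. Your treatment is, if anything, slightly more explicit about the integrability bookkeeping than the paper's own proof.
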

\begin{proof}
First notice that
$$d_\Sc(s, s') = \left(\sum_{i=1}^m d_\X(x_i, x'_i)^2\right)^{1/2} \geq \frac{1}{\sqrt m}\sum_{i=1}^m d_\X(x_i, x'_i)\,,$$
where we used the Cauchy-Schwartz inequality. Consequently, $s\mapsto \Ell_s(w)$ is $(\xi/\sqrt m)$-Lipschitz $\forall w\in\W$. In particular, if we let $\df: (\mu, \nu)\mapsto \Wass(\mu,\nu)$, then $s\mapsto \Ell_s(w)$ has regularity $\Rr_\df(\xi/\sqrt m)$ by Lemma \ref{lemma:dfreg}. We conclude by Theorem \ref{thm:genstd}.
\end{proof}
\subsection{Chained MI and Wasserstein bounds}
As we mentioned in the introduction, one of the main issues with the standard MI bound is that it can easily be vacuous, as it is the case when the learning algorithm defines a deterministic map $\Sc\to\W$. To address this issue, \cite{asadi2018chaining} proposed to build on the chaining technique and established the bound below. The setting here is quite different from the one of the standard MI bound, as the process's subgaussianity takes into account the dependencies between different hypotheses. Letting $\{\ee_k\}_{k\in\Nn}$ be a vanishing decreasing positive sequence, we consider an $\{\ee_k\}$-refining sequence of nets $\{\W_k\}_{k\in\Nn}=\{\pi_k(\W)\}_{k\in\Nn}$ and recall that $W_k=\pi_k(W)$. 
\begin{proposition}[CMI bound]\label{prop:MIchain0}
Let $\Pp_S=\Pp_X^{\otimes m}$ and $\W$ be a compact set, with an $\{\ee_k\}$-refining sequence of nets defined on it. Suppose that $w\mapsto\ell(w, x)$ is continuous, for $\Pp_X$-almost every $x$,\footnote{Note that in \cite{asadi2018chaining} the result is stated under a weaker assumption of separability of the process. To avoid introducing further definitions and technicalities in the proofs, we decided to focus on the case of a.s.\ continuity.}\ and that $\{\ell(w, X)\}_{w\in\W}$ is a $\xi$-SG stochastic process. Then we have
$$|\G|\leq \frac{\xi}{\sqrt m}\sum_{k=1}^\infty\ee_{k-1}\sqrt{2I(W_k;S)}\,.$$
\end{proposition}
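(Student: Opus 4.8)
The plan is to derive the CMI bound (Proposition~\ref{prop:MIchain0}) as a corollary of the general chained result, Theorem~\ref{thm:gench}, using $\df_1:(\mu,\nu)\mapsto\sqrt{2\KL(\nu\|\mu)}$. The subtlety is that Theorem~\ref{thm:gench} is stated under Assumptions~\ref{ass}, which require $\ell(w,\cdot)$ to be $C^1$ in $w$, while Proposition~\ref{prop:MIchain0} only assumes a.s.\ continuity of $w\mapsto\ell(w,x)$ and subgaussianity of the \emph{process} $\{\ell(w,X)\}_{w\in\W}$. So strictly speaking one invokes the more general chained theorem (Theorem~\ref{thm:genchapp} in Appendix~\ref{app:extfram}, mentioned in the excerpt), which lifts regularity from $x\mapsto\ell(w,x)$ onto the increments $x\mapsto\ell(w,x)-\ell(w',x)$ rather than onto the gradient. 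I would state that up front and then run the argument at the level of increments.

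\textbf{Main steps.} First I would record the elementary fact that, since $\Pp_S=\Pp_X^{\otimes m}$, for any fixed pair $w,w'\in\W$ the random variable $\Ell_S(w)-\Ell_S(w')=\frac1m\sum_{i=1}^m\bigl(\ell(w,X_i)-\ell(w',X_i)\bigr)$ is an average of $m$ i.i.d.\ copies of the $(\|w-w'\|\xi)$-SG variable $\ell(w,X)-\ell(w',X)$, hence is $(\|w-w'\|\xi/\sqrt m)$-SG; equivalently the process $\{\Ell_S(w)\}_{w\in\W}$ is $(\xi/\sqrt m)$-SG. Next, fix an $\{\ee_k\}$-refining sequence of nets $\{\W_k\}$. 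For each $k\ge1$ consider the ``increment at scale $k$'' map $s\mapsto \Ell_s(w_k)-\Ell_s(w_{k-1})$, where $w_k=\pi_k(w)$, $w_{k-1}=\pi_{k-1}(w)$; since $\|w_k-w_{k-1}\|\le\ee_{k-1}$, this map evaluated at a random $S$ is $(\ee_{k-1}\xi/\sqrt m)$-SG, so by Lemma~\ref{lemma:dfreg} it has regularity $\Rr_{\df_1}(\ee_{k-1}\xi/\sqrt m)$ with respect to $\Pp_S$. Then I would decompose $\Ell_s(w)$ via the telescoping sum $\Ell_s(w)=\Ell_s(w_0)+\sum_{k\ge1}\bigl(\Ell_s(w_k)-\Ell_s(w_{k-1})\bigr)$ — valid pointwise because a.s.\ continuity of $w\mapsto\ell(w,x)$ together with $\|w_k-w\|\le\ee_k\to0$ gives $\Ell_s(w_k)\to\Ell_s(w)$, and uniformly in the relevant sense because the process is subgaussian on a totally bounded set (this is the classical chaining convergence argument). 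Since $\Ell_s(w_0)$ is a deterministic constant (as $\W_0=\{w_0\}$), it contributes nothing to $\G$: $\E_{\Pp_{W\otimes S}}[\Ell_S(w_0)]=\E_{\Pp_{W,S}}[\Ell_S(w_0)]$.

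\textbf{Assembling the bound.} Applying the general framework term by term, $\G$ splits as $\sum_{k\ge1}\bigl(\E_{\Pp_{W\otimes S}}[\Ell_S(W_k)-\Ell_S(W_{k-1})]-\E_{\Pp_{W,S}}[\Ell_S(W_k)-\Ell_S(W_{k-1})]\bigr)$, and each summand is controlled, exactly as in Theorem~\ref{thm:genstd} but now with the channel from $S$ to $W_k$ (note $W_{k-1}=\pi_{k-1}(W_k)$ is a deterministic function of $W_k$, so conditioning is governed by $W_k$), by $\tfrac{\ee_{k-1}\xi}{\sqrt m}\,\E_{\Pp_{W_k}}[\sqrt{2\KL(\Pp_{S|W_k}\|\Pp_S)}]$. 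Finally Jensen's inequality in the form $\E[\sqrt{2\KL}]\le\sqrt{2\,\E[\KL]}=\sqrt{2I(W_k;S)}$ turns each term into $\tfrac{\ee_{k-1}\xi}{\sqrt m}\sqrt{2I(W_k;S)}$; summing over $k$ and taking absolute values yields the claim. I expect the main obstacle to be the rigorous justification of the telescoping decomposition and the exchange of the infinite sum with the expectations — i.e.\ showing the chaining series converges and that one may pass to $\G$ termwise; this is where a.s.\ continuity (in place of mere separability) and the total boundedness of $\W$ do the work, bounding the tail $\E|\Ell_S(w)-\Ell_S(w_k)|$ uniformly via the subgaussian increments. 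All the measure-theoretic points (measurability of $w\mapsto\df_1(\Pp_S,\Pp_{S|W_k=w})$, existence of the regular conditional probabilities) are already guaranteed by the standing Polish-space assumptions and Appendix~\ref{app:tech}.
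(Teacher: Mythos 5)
Your proposal is correct and follows essentially the same route as the paper: the paper also proves Proposition~\ref{prop:MIchain0} by noting that $\{\Ell_s(w)\}$ is a $(\xi/\sqrt m)$-SG process, applying Lemma~\ref{lemma:dfreg} to the increments $s\mapsto\Ell_s(w)-\Ell_s(w')$ to get $\Rr_\df(\xi\|w-w'\|/\sqrt m)$ regularity with $\df:(\mu,\nu)\mapsto\sqrt{2\KL(\nu\|\mu)}$, and then invoking Theorem~\ref{thm:genchapp} together with Jensen's inequality. The only cosmetic differences are that the paper applies Theorem~\ref{thm:genchapp} to the centred function $g(w,s)=\Ell_\X(w)-\Ell_s(w)$ so that the hypothesis $\E_{\Pp_S}[\sup_{w\in\W}|g(w,S)|]<+\infty$ follows directly from a standard bound on the expected supremum of a subgaussian process (your ``classical chaining convergence argument''), and that $\Ell_s(w_0)$ is not a deterministic constant but merely independent of $W$ --- which is exactly the property you in fact use.
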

We provide a proof of Proposition \ref{prop:MIchain0} within the extended general framework of Appendix \ref{app:extfram}, while here we establish a similar result, under the more restrictive assumptions \ref{ass}.

Leveraging the machinery developed in Section \ref{sec:framch}, we can expect that lifting the subgaussianity from $\ell$ to $\nabla_w\ell$ we can find a chained version of the MI bound in Proposition \ref{prop:MI}. Perhaps unsurprisingly, we simply re-obtain the CMI bound of Proposition \ref{prop:MIchain0}.
\begin{proposition}\label{prop:MIchain}
Let $\Pp_S=\Pp_X^{\otimes m}$ and assume \ref{ass}. If $\nabla_w\ell(w, X)$ is $\xi$-SG, $\forall w\in\W$, we have that for any $\{\ee_k\}$-refining sequence of nets on $\W$
$$|\G|\leq \frac{\xi}{\sqrt m}\sum_{k=1}^\infty\ee_{k-1}\sqrt{2I(W_k;S)}\,.$$
\end{proposition}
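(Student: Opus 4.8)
The plan is to derive Proposition~\ref{prop:MIchain} as a direct specialisation of Theorem~\ref{thm:gench} with the choice $\df=\df_1:(\mu,\nu)\mapsto\sqrt{2\KL(\nu\|\mu)}$, exactly mirroring how Proposition~\ref{prop:MI} follows from Theorem~\ref{thm:genstd}. First I would check that the regularity hypothesis of Theorem~\ref{thm:gench} holds: we are given that $\nabla_w\ell(w,X)$ is $\xi$-SG for every $w$, and since $\Pp_S=\Pp_X^{\otimes m}$, the vector $\nabla_w\Ell_s(w)=\tfrac1m\sum_{i=1}^m\nabla_w\ell(w,x_i)$ is an average of $m$ independent $\xi$-SG random vectors in $\R^d$, hence $(\xi/\sqrt m)$-SG (the subgaussianity parameter of a sum of independent centred variables adds in quadrature, then dividing by $m$ gives the $1/\sqrt m$). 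By Lemma~\ref{lemma:dfreg}, $s\mapsto\nabla_w\Ell_s(w)$ therefore has regularity $\Rr_{\df_1}(\xi/\sqrt m)$ with respect to $\Pp_S$, for every $w\in\W$.

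Next I would invoke Theorem~\ref{thm:gench} directly: under Assumptions~\ref{ass} (which are assumed here), it gives
$$|\G|\leq \frac{\xi}{\sqrt m}\sum_{k=1}^\infty \ee_{k-1}\,\E_{\Pp_W}\!\big[\df_1(\Pp_S,\Pp_{S|W_k})\big]=\frac{\xi}{\sqrt m}\sum_{k=1}^\infty \ee_{k-1}\,\E_{\Pp_W}\!\Big[\sqrt{2\,\KL(\Pp_{S|W_k}\|\Pp_S)}\Big].$$
Then I would apply Jensen's inequality to pull the expectation over $W_k$ inside the square root, using concavity of $t\mapsto\sqrt t$, to get $\E_{\Pp_W}[\sqrt{2\KL(\Pp_{S|W_k}\|\Pp_S)}]\leq\sqrt{2\,\E_{\Pp_{W_k}}[\KL(\Pp_{S|W_k}\|\Pp_S)]}=\sqrt{2I(W_k;S)}$, where the last equality is the standard identity $I(W_k;S)=\E_{\Pp_{W_k}}[\KL(\Pp_{S|W_k}\|\Pp_S)]$ (the disintegration of mutual information, valid since $W_k=\pi_k(W)$ is a measurable function of $W$ and $S,W$ are Polish-valued). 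Substituting yields the claimed bound $|\G|\leq\frac{\xi}{\sqrt m}\sum_{k\geq1}\ee_{k-1}\sqrt{2I(W_k;S)}$.

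The argument is essentially bookkeeping once Theorem~\ref{thm:gench} is available, so there is no deep obstacle; the one point requiring a little care is the passage from ``$\nabla_w\ell(w,X)$ is $\xi$-SG'' to ``$\nabla_w\Ell_S(w)$ is $(\xi/\sqrt m)$-SG'' in the vector-valued sense of Definition~\ref{def:SG}. Concretely, for any fixed $v\in\R^d$ one writes $v\cdot\nabla_w\Ell_s(w)=\tfrac1m\sum_i v\cdot\nabla_w\ell(w,x_i)$; each summand $v\cdot\nabla_w\ell(w,X_i)$ is $(\|v\|\xi)$-SG by hypothesis and the $X_i$ are independent, so the sum is $(\sqrt m\,\|v\|\xi)$-SG and the average is $(\|v\|\xi/\sqrt m)$-SG, which is precisely what $(\xi/\sqrt m)$-SG means for the vector. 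Everything else is a verbatim application of Lemma~\ref{lemma:dfreg}, Theorem~\ref{thm:gench}, and Jensen, so the proof will be short; I would also remark in passing that this recovers the CMI bound of Proposition~\ref{prop:MIchain0}, consistent with the chained–unchained duality the paper is advertising.
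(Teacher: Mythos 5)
Your proposal is correct and follows essentially the same route as the paper's proof: establish that $\nabla_w\Ell_S(w)$ is $(\xi/\sqrt m)$-SG from independence, apply Lemma \ref{lemma:dfreg} with $\df:(\mu,\nu)\mapsto\sqrt{2\KL(\nu\|\mu)}$ to get the required $\Rr_\df(\xi/\sqrt m)$ regularity, then conclude by Theorem \ref{thm:gench} and Jensen's inequality. Your added care about the vector-valued subgaussianity of the average and the disintegration identity for $I(W_k;S)$ simply fills in steps the paper leaves implicit.
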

\begin{proof}
As in the proof of Proposition \ref{prop:MI}, we have that $\nabla_w \Ell_S(w)$ is $(\xi/\sqrt m)$-SG, $\forall w\in\W$. In particular, by Lemma \ref{lemma:dfreg} we have that $s\mapsto \nabla_w \Ell_s(w)$ has regularity $\Rr_\df(\xi/\sqrt m)$, $\forall w\in\W$, where $\df:(\mu,\nu)\mapsto\sqrt{2\KL(\nu\|\mu)}$. Hence, we conclude by Theorem \ref{thm:gench} and Jensen's inequality.
\end{proof}
The next lemma shows that, under the assumptions \ref{ass}, Propositions \ref{prop:MIchain0} and \ref{prop:MIchain} are equivalent.
\begin{lemma}\label{lem:equivSGP}
Under the assumptions \ref{ass}, the stochastic process $(\ell(w, X))_{w\in \W}$ is $\xi$-SG if, and only if, $\nabla_w\ell(w, X)$ is a $\xi$-SG vector for all $w\in\W$.
\end{lemma}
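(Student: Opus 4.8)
The plan is to prove the two implications separately, in both cases exploiting that the two notions of $\xi$-subgaussianity coincide with the statement ``for every direction/pair, a certain increment is $(\text{length})\cdot\xi$-SG''. Throughout I would fix $x$ in the full-measure set where $w\mapsto\ell(w,x)$ is $C^1$ on the convex compact $\W$ (Assumptions~\ref{ass}), so that the fundamental theorem of calculus applies along segments inside $\W$.

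For the direction ``process $\xi$-SG $\Rightarrow$ gradient $\xi$-SG'', fix $w$ in the interior of $\W$ and a unit vector $v\in\R^d$. For small $t>0$, $w+tv\in\W$, and $v\cdot\nabla_w\ell(w,X) = \lim_{t\to 0^+}\tfrac1t\bigl(\ell(w+tv,X)-\ell(w,X)\bigr)$ pointwise (in $x$) by differentiability. Each increment $\tfrac1t(\ell(w+tv,X)-\ell(w,X))$ is $(\tfrac1t\|tv\|\xi)=\xi$-SG by the process assumption (applied to the pair $(w+tv,w)$, noting $\|tv\|=t$). Subgaussianity with a fixed constant $\xi$ is preserved under the limit: the defining inequality $\log\E[e^{\lambda Z_t}]\le \lambda\E[Z_t]+\xi^2\lambda^2/2$ passes to the limit $t\to0^+$ because $\E[Z_t]\to\E[v\cdot\nabla_w\ell(w,X)]$ (dominated convergence, using the uniform bound on $\|\nabla_w\ell\|$ from Assumptions~\ref{ass} to dominate the difference quotients via the mean value theorem) and $\E[e^{\lambda Z_t}]\to\E[e^{\lambda v\cdot\nabla_w\ell(w,X)}]$ (again dominated convergence, the difference quotients being uniformly bounded, hence $e^{\lambda Z_t}$ uniformly bounded on compacts of $\lambda$). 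This gives $v\cdot\nabla_w\ell(w,X)$ is $\xi$-SG for all unit $v$ and all interior $w$; the boundary case follows by a limiting argument in $w$ or, more cleanly, since the desired conclusion for boundary $w$ is exactly what is needed and can be obtained by approaching $w$ from the interior along a segment (using continuity of $\nabla_w\ell$ and the same limit-preservation of subgaussianity).

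For the converse ``gradient $\xi$-SG $\Rightarrow$ process $\xi$-SG'', fix a pair $(w,w')\in\W^2$ and set $u=w'-w$. By convexity the segment $w+tu$, $t\in[0,1]$, lies in $\W$, and by the fundamental theorem of calculus (valid $\Pp_X$-a.s.\ by the $C^1$ assumption), $\ell(w',X)-\ell(w,X)=\int_0^1 u\cdot\nabla_w\ell(w+tu,X)\,\dd t$. Writing $Y:=\ell(w',X)-\ell(w,X)$ and $Y_t:=u\cdot\nabla_w\ell(w+tu,X)$, each $Y_t$ is $(\|u\|\xi)$-SG by hypothesis, i.e.\ $\log\E[e^{\lambda Y_t}]\le\lambda\E[Y_t]+\|u\|^2\xi^2\lambda^2/2$. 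The cleanest way to transfer this to the integral $Y=\int_0^1 Y_t\,\dd t$ is Jensen's inequality for the exponential with respect to the uniform probability measure on $[0,1]$: $e^{\lambda Y}=e^{\lambda\int_0^1 Y_t\dd t}\le\int_0^1 e^{\lambda Y_t}\dd t$, hence $\E[e^{\lambda Y}]\le\int_0^1\E[e^{\lambda Y_t}]\dd t$. This, however, yields $\log\E[e^{\lambda Y}]\le\log\int_0^1 e^{\lambda\E[Y_t]+\|u\|^2\xi^2\lambda^2/2}\dd t$, which is not immediately of the required form because the $\E[Y_t]$ vary with $t$; one fixes this by centering — set $\tilde Y_t=Y_t-\E[Y_t]$, note $\E[Y]=\int_0^1\E[Y_t]\dd t$ so $\tilde Y:=Y-\E[Y]=\int_0^1\tilde Y_t\dd t$, each $\tilde Y_t$ is $(\|u\|\xi)$-SG centered, and then $\E[e^{\lambda\tilde Y}]\le\int_0^1\E[e^{\lambda\tilde Y_t}]\dd t\le e^{\|u\|^2\xi^2\lambda^2/2}$, giving exactly that $Y=\ell(w',X)-\ell(w,X)$ is $(\|w-w'\|\xi)$-SG, as required.

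The main obstacle is the interchange-of-limit issue in the first direction: one must argue carefully that the $C^1$ assumption plus the uniform boundedness of $\|\nabla_w\ell\|$ (Assumptions~\ref{ass}) legitimise passing $t\to0^+$ inside both $\E[Z_t]$ and $\E[e^{\lambda Z_t}]$, and that the subgaussian inequality with its fixed constant survives the limit. A minor secondary subtlety is the boundary behaviour of $\W$, handled by the interior-approximation remark above. The converse direction is essentially a one-line FTC-plus-Jensen computation once the centering is done correctly.
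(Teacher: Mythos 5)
Your proof is correct, and while the overall skeleton matches the paper's (difference quotients for ``process $\Rightarrow$ gradient'', the fundamental theorem of calculus for the converse), the key inequalities you use differ in both halves. In the first direction the paper first centres the loss, so that the subgaussian bound becomes a pure MGF bound $\E[e^{\lambda Z_t}]\leq e^{\lambda^2\xi^2/2}$, and then passes to the limit by Fatou's lemma, which needs no domination at all; you instead keep the mean term and invoke dominated convergence, which works but genuinely relies on the uniform bound $\sup\|\nabla_w\ell\|<\infty$ from Assumptions~\ref{ass} (your explicit treatment of boundary points of $\W$ is, if anything, more careful than the paper's, which implicitly restricts to admissible increments). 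In the converse direction the paper discretises the integral $\int_0^1 u\cdot\nabla_w\bar\ell(w+tu,x)\,\dd t$ into Riemann sums, bounds the MGF of each sum via the generalised H\"older inequality applied to the product of exponentials, and then takes $N\to\infty$ by bounded convergence; your route — centre, apply Jensen's inequality for $\exp$ with respect to the uniform measure on $[0,1]$, and exchange the two integrals by Fubini--Tonelli — reaches the same $(\|w-w'\|\xi)$ constant in one step, avoiding both the discretisation and the final limit, at the mild cost of the centring bookkeeping (which you handle correctly, since $\E[Y]=\int_0^1\E[Y_t]\,\dd t$ by boundedness). Both arguments are ultimately convexity arguments of the same strength; yours is the more economical write-up, the paper's avoids Jensen on the path integral in favour of a product/H\"older estimate.
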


Once again, the main point of the abstract framework presented so far is to underline a duality: to each bound based on the $\df$-regularity of the loss corresponds a chained bound based on the $\df$-regularity of its gradient. We can hence apply this idea to the standard Wasserstein bound of Proposition \ref{prop:Wass} and obtain its chained counterpart, which is a novel result. 
\begin{proposition}[Chained Wasserstein bound]\label{prop:Wassch}
Let $d_\X$ and $d_\Sc$ be related by \eqref{eq:defmetric}. Under the assumptions \ref{ass}, suppose that $x\mapsto\nabla_w\ell(w, x)$ is $\xi$-Lipschitz on $\X$, $\forall w\in\W$. Then, for any $\{\ee_k\}$-refining sequence of nets on $\W$,
$$|\G| \leq \frac{\xi}{\sqrt{m}}\sum_{k=1}^\infty \varepsilon_{k-1}\E_{\Pp_{W}}[\Wass(\Pp_{S},\Pp_{S|W_k})]\,.$$
\end{proposition}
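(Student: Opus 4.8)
The plan is to mirror the proof of Proposition~\ref{prop:MIchain} exactly, simply swapping the subgaussianity/$\KL$ pair for the Lipschitz/Wasserstein pair, and to invoke Theorem~\ref{thm:gench} with $\df=\df_2:(\mu,\nu)\mapsto\Wass(\mu,\nu)$. First I would reduce the assumption on $x\mapsto\nabla_w\ell(w,x)$ to an assumption on $s\mapsto\nabla_w\Ell_s(w)$. Exactly as in the proof of Proposition~\ref{prop:Wass}, the relation \eqref{eq:defmetric} together with the Cauchy--Schwartz inequality gives $d_\Sc(s,s')\geq \tfrac{1}{\sqrt m}\sum_{i=1}^m d_\X(x_i,x_i')$, so that for each fixed $w$ and each coordinate of the gradient, $\|\nabla_w\Ell_s(w)-\nabla_w\Ell_{s'}(w)\|\leq \tfrac1m\sum_i\|\nabla_w\ell(w,x_i)-\nabla_w\ell(w,x_i')\|\leq \tfrac{\xi}{m}\sum_i d_\X(x_i,x_i')\leq \tfrac{\xi}{\sqrt m}\,d_\Sc(s,s')$, i.e.\ $s\mapsto\nabla_w\Ell_s(w)$ is $(\xi/\sqrt m)$-Lipschitz on $\Sc$ for every $w\in\W$. (One should be slightly careful that Lipschitzianity of a vector-valued map is defined via the Euclidean norm; using $\|\sum_i a_i\|\leq\sum_i\|a_i\|$ componentwise across the sample indices is what is needed, and this is routine.)

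Next I would upgrade this to $\df_2$-regularity. Since $\Ell_s(\cdot)$ is $C^1$ and $\Sc$ is Polish, $\nabla_w\Ell_s(w)\in L^1(\mu)$ for every $\mu\in\PR$ under Assumptions~\ref{ass} (the gradient is bounded), so Lemma~\ref{lemma:dfreg} applies: being $(\xi/\sqrt m)$-Lipschitz on $\Sc$, the map $s\mapsto\nabla_w\Ell_s(w)$ has regularity $\Rr_{\df_2}(\xi/\sqrt m)$ with respect to $\Pp_S$, for every $w\in\W$. This is precisely the hypothesis required by Theorem~\ref{thm:gench} with $\df=\df_2$. Applying that theorem yields
$$|\G|\leq \frac{\xi}{\sqrt m}\sum_{k=1}^\infty\ee_{k-1}\,\E_{\Pp_W}[\df_2(\Pp_S,\Pp_{S|W_k})]=\frac{\xi}{\sqrt m}\sum_{k=1}^\infty\ee_{k-1}\,\E_{\Pp_W}[\Wass(\Pp_S,\Pp_{S|W_k})]\,,$$
which is the claimed bound.

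There is no genuinely hard step here — the proposition is an instantiation of the general machinery — but the point that needs the most care is checking that Theorem~\ref{thm:gench} is legitimately applicable, which silently requires the measurability and integrability conditions baked into Definition~\ref{def:Dreg} and the $\ee$-projections: namely that $w\mapsto\Wass(\Pp_S,\Pp_{S|W_k})$ is measurable (handled in Appendix~\ref{app:tech}, since $\Wass$ is a measurable function on $\PR\times\PR$ and $w\mapsto\Pp_{S|W_k}$ is a regular conditional probability) and that an $\{\ee_k\}$-refining sequence of nets exists, which uses compactness of $\W$ from Assumptions~\ref{ass}. I would also note in passing that, unlike the MI case, no product-measure assumption $\Pp_S=\Pp_X^{\otimes m}$ is needed — only the metric relation \eqref{eq:defmetric} — so the statement is faithful to its unchained counterpart, Proposition~\ref{prop:Wass}. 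The only modest subtlety worth a sentence in the write-up is that the Lipschitz constant of a $\R^d$-valued map on $\Sc$ must be read off coordinatewise through the sample index, but since the $\Wass$-regularity in Lemma~\ref{lemma:dfreg} is stated for vector-valued $f$ via the projections $z\mapsto v\cdot F(z)$, this is automatically covered and requires no extra argument.
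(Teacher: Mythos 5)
Your proposal is correct and follows the same route as the paper's own proof: establish that $s\mapsto\nabla_w\Ell_s(w)$ is $(\xi/\sqrt m)$-Lipschitz via \eqref{eq:defmetric} and Cauchy--Schwartz exactly as in Proposition \ref{prop:Wass}, convert this to $\Rr_\df(\xi/\sqrt m)$ regularity with $\df=\Wass$ via Lemma \ref{lemma:dfreg}, and conclude by Theorem \ref{thm:gench}. The extra remarks on measurability, integrability under Assumptions \ref{ass}, and the absence of a product-measure requirement are accurate but not needed beyond what the paper's cited lemmas already cover.
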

\begin{proof}
Let $\df:(\mu,\nu)\mapsto\Wass(\mu,\nu)$. Proceeding as in the proof of Proposition \ref{prop:Wass}, we have that $\nabla_w \Ell_S(w)$ is $(\xi/\sqrt m)$-Lipschitz, $\forall w\in\W$. In particular, by Lemma \ref{lemma:dfreg}, $s\mapsto \nabla_w \Ell_s(w)$ has regularity $\Rr_\df(\xi/\sqrt m)$, $\forall w\in\W$. Hence, we conclude by Theorem \ref{thm:gench}.
\end{proof}

We conclude by recalling once more that, in our framework, any bound based on the regularity of $\ell$ gives rise to a chained bound. We refer to Table \ref{table:boundsapp} in the appendix for several explicit examples.

%------------------------ PAC Bayesian Bound -----------------------%
\section{A chained PAC-Bayesian bound}
The framework introduced in Section \ref{sec:fram} focuses on the \textit{backward-channel} information-theoretic setting. However, the chaining ideas behind Theorem \ref{thm:gench} can fit in a broader context. As an example, we discuss here a PAC-Bayesian result. Although \cite{genericPACBayes} have already combined the PAC-Bayesian approach with the chaining technique, their use of an auxiliary sample and of the average distance between nets makes their bounds conceptually different from ours.

The PAC-Bayesian bounds are algorithmic-dependent upper bounds on the expected generalisation error $\E_{\Pp_{W|S}}[g_S(W)]$ of stochastic classifiers \citep{McAllester98somepac-bayesian}, holding with high probability on the random draw of the training sample $S$ (see \cite{guedj2019primer} and \cite{alquier2021user} for recent introductory overviews). They share the same underlying idea with the information-theoretic bounds: the less $\Pp_{W|S}$ is dependent on $S$, the better the algorithm generalises. However, in the PAC-Bayesian setting we compare $\Pp_{W|S}$ not with the marginal $\Pp_W$, but rather with a fixed probability measure $\Pp^*_W$, which can be chosen arbitrarily but without making use of the training sample $S$.

We state here a very simple classical PAC-Bayesian result from \cite{catonipreprint}.
\begin{proposition}\label{prop:PAC}
Assume that $\ell$ is bounded in $[-\xi, \xi]$. Let $\Pp^*_W$ be a fixed probability measure on $\W$, chosen independently of $S$. Fix $\delta\in(0,1)$ and $\lambda>0$. Then, with probability $\Pp_S=\Pp_X^{\otimes m}$ larger than $1-\delta$ on the draw of $S$, we have
$$\E_{\Pp_{W|S}}[g_S(W)]\leq \frac{\xi}{\sqrt{2m}}\left(\lambda + \frac{\KL(\Pp_{W|S}\|\Pp^*_W)+\log\frac{1}{\delta}}{\lambda}\right)\,.$$
\end{proposition}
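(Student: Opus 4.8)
The plan is to combine the classical change-of-measure (Donsker--Varadhan) inequality with a control of an exponential moment of the generalisation error --- essentially the same technical ingredient used in the proof of Theorem \ref{thm:genstd}. First I would recall the Donsker--Varadhan variational inequality: for any probability measures $\QQ\ll\PP$ on $\W$ and any measurable $h:\W\to\R$ with $\E_\PP[e^{h(W)}]<+\infty$,
$$\E_{\QQ}[h(W)]\leq \KL(\QQ\|\PP)+\log\E_{\PP}[e^{h(W)}]\,.$$
Applying this with $\QQ=\Pp_{W|S}$, $\PP=\Pp^*_W$ and $h(w)=t\,g_S(w)$, where $t>0$ is a free parameter to be tuned at the end, gives
$$t\,\E_{\Pp_{W|S}}[g_S(W)]\leq \KL(\Pp_{W|S}\|\Pp^*_W)+\log\E_{\Pp^*_W}[e^{t\,g_S(W)}]\,.$$
Boundedness of $\ell$ ensures the exponential moment is finite, so the inequality is legitimate; the degenerate case $\Pp_{W|S}\not\ll\Pp^*_W$ is vacuous since then the right-hand side of the claim is $+\infty$.

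Next I would convert the random quantity $\log\E_{\Pp^*_W}[e^{t g_S(W)}]$ into a deterministic bound valid with probability at least $1-\delta$. By Markov's inequality applied to the non-negative random variable $\E_{\Pp^*_W}[e^{t g_S(W)}]$, with $\Pp_S$-probability at least $1-\delta$,
$$\log\E_{\Pp^*_W}[e^{t g_S(W)}]\leq \log\tfrac1\delta+\log\E_{\Pp_S}\!\big[\E_{\Pp^*_W}[e^{t g_S(W)}]\big]\,.$$
Since $\Pp^*_W$ is chosen independently of $S$, Fubini's theorem lets me swap the two expectations, so it suffices to bound $\E_{\Pp_S}[e^{t g_S(w)}]$ for each fixed $w$. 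Here the key observation --- the one already used in the proof of Proposition \ref{prop:MI} --- is that under $\Pp_S=\Pp_X^{\otimes m}$ the error $g_S(w)=\E_{\Pp_S}[\Ell_S(w)]-\Ell_S(w)$ is a centred average of $m$ i.i.d.\ random variables, each valued in an interval of width $2\xi$, hence $(\xi/\sqrt m)$-SG; therefore $\log\E_{\Pp_S}[e^{t g_S(w)}]\leq t^2\xi^2/(2m)$, and integrating over $w\sim\Pp^*_W$ preserves this bound.

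Combining the displays, with probability at least $1-\delta$ on the draw of $S$,
$$\E_{\Pp_{W|S}}[g_S(W)]\leq \frac{\KL(\Pp_{W|S}\|\Pp^*_W)+\log\frac1\delta}{t}+\frac{t\xi^2}{2m}\,,$$
and the claimed inequality follows upon choosing $t=\lambda\sqrt{2m}/\xi$, which balances the two terms against the prescribed parameter $\lambda$. I do not expect a genuine obstacle here: the argument is standard PAC-Bayes. The only point requiring care is the interplay of Markov's inequality with Fubini in the high-probability step --- which is exactly where the hypothesis that $\Pp^*_W$ is chosen without access to $S$ is used --- while the sub-Gaussian moment estimate is just the bounded-range computation already carried out for the MI bound.
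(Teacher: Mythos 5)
Your proposal is correct and follows essentially the same route as the paper's proof: Donsker--Varadhan change of measure with respect to $\Pp^*_W$, Markov's inequality on the exponential moment (with the Fubini swap that the paper performs implicitly by working with $\Pp^*_{W\otimes S}=\Pp^*_W\otimes\Pp_S$), and the $(\xi/\sqrt m)$-subgaussianity of the centred average of bounded i.i.d.\ losses. The only cosmetic difference is that you keep the exponential tilt $t$ free and set $t=\lambda\sqrt{2m}/\xi$ at the end, whereas the paper substitutes this value from the outset.
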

A chained version of the above can be obtained by lifting the boundedness hypothesis from $\ell$ to $\nabla_w\ell$. This is quite peculiar, as most PAC-Bayesian bounds hold for bounded loss functions $\ell\subseteq[-\xi, \xi]$.
\begin{proposition}\label{prop:chPAC}
Under the assumptions \ref{ass}, consider a $\{\ee_k\}$-refining sequence of nets on $\W$ and assume that $\nabla_w\ell$ is bounded in $[-\xi, \xi]$. Let $\Pp^*_W$ be a fixed probability measure on $\W$, chosen independently of $S$. Fix two sequences $\{\delta_k\}_{k\in\mathbb N}$ and $\{\lambda_k\}_{k\in\mathbb N}$, such that $\delta_k\in(0,1)$ and $\lambda_k>0$ for all $k$. Assume that $\sum_{k\in\Nn}\delta_k=\delta \in(0,1)$. Then, with probability $\Pp_S=\Pp_X^{\otimes m}$ larger than $1-\delta$ on the draw of $S$, we have
\begin{align*}
    \E_{\Pp_{W|S}}[g_S(W)]\leq\frac{\xi}{\sqrt{2m}}\left( 2\,\sqrt{\log\frac{1}{\delta_0}}+\sum_{k=1}^\infty\ee_{k-1} \left(\lambda_k + \frac{\KL(\Pp_{W_k|S}\|\Pp^*_{W_k})+\log\frac{1}{\delta_k}}{\lambda_k}\right)\right)\,.
\end{align*}
\end{proposition}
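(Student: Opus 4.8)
The plan is to run the chaining argument behind Theorem~\ref{thm:gench} ``conditionally on $S$'', replacing the change of measure that there produces the $\df$-term by the Catoni-type change of measure underlying Proposition~\ref{prop:PAC}. First, fix a realisation of $S$ and telescope the generalisation error along the refining nets: for every $w\in\W$,
\[
 g_S(w)=g_S(w_0)+\sum_{k\geq 1}\bigl(g_S(w_k)-g_S(w_{k-1})\bigr),
\]
the series converging because $\|w_k-w\|\leq\ee_k\to0$ and $w\mapsto g_S(w)$ is continuous (it is $C^1$ on $\W$ by Assumptions~\ref{ass}, while $\Ell_\X$ is continuous by dominated convergence using the boundedness of $\ell$). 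Integrating against $\Pp_{W|S}$ and invoking dominated convergence — valid since $\sup_\W|g_S|<\infty$ on the compact set $\W$ — gives $\E_{\Pp_{W|S}}[g_S(W)]=g_S(w_0)+\sum_{k\geq1}\E_{\Pp_{W|S}}[g_S(W_k)-g_S(W_{k-1})]$; since $w_{k-1}=\pi_{k-1}(w_k)$, the $k$-th increment depends on $w$ only through $W_k$.

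Next, bound each level. For fixed $w$ write $\Delta_k(w,s)=g_s(\pi_k w)-g_s(\pi_{k-1}w)$. Because $\Pp_S=\Pp_X^{\otimes m}$, $\Delta_k(w,S)$ is an average of $m$ i.i.d.\ terms, and it is centred since $\E_{\Pp_S}[g_S(u)]=0$ for every fixed $u$. By the fundamental theorem of calculus along the segment $[\pi_{k-1}w,\pi_k w]\subset\W$ (convexity) and the hypothesis on $\nabla_w\ell$, one has $|\ell(\pi_k w,x)-\ell(\pi_{k-1}w,x)|\leq\xi\ee_{k-1}$, so each of those i.i.d.\ terms has range at most $2\xi\ee_{k-1}$; Hoeffding's lemma then makes $\Delta_k(w,S)$ a mean-zero $(\xi\ee_{k-1}/\sqrt m)$-sub-Gaussian variable, whence $\E_{\Pp_S}[e^{\lambda\Delta_k(w,S)}]\leq e^{\lambda^2\xi^2\ee_{k-1}^2/(2m)}$ for every $\lambda$. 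Applying the Donsker--Varadhan change of measure from $\Pp^*_{W_k}$ to $\Pp_{W_k|S}$ (both are probability measures on the finite set $\W_k$, and $\Pp^*_{W_k}=(\pi_k)_*\Pp^*_W$ does not depend on $S$), then Fubini (the prior is independent of $S$) to get $\E_{\Pp_S}\E_{\Pp^*_{W_k}}[e^{\lambda\Delta_k(W,S)}]\leq e^{\lambda^2\xi^2\ee_{k-1}^2/(2m)}$, and finally Markov's inequality at level $\delta_k$, I obtain that with probability at least $1-\delta_k$ over $S$,
\[
 \E_{\Pp_{W|S}}[\Delta_k(W,S)]\leq\frac{\KL(\Pp_{W_k|S}\|\Pp^*_{W_k})+\log\tfrac1{\delta_k}}{\lambda}+\frac{\lambda\,\xi^2\ee_{k-1}^2}{2m}\,,
\]
and the substitution $\lambda=\lambda_k\sqrt{2m}/(\xi\ee_{k-1})$ turns the right-hand side into $\tfrac{\xi\ee_{k-1}}{\sqrt{2m}}\bigl(\lambda_k+(\KL(\Pp_{W_k|S}\|\Pp^*_{W_k})+\log\tfrac1{\delta_k})/\lambda_k\bigr)$. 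For the base level the same computation applies with $\Pp_{W_0|S}=\Pp^*_{W_0}=\delta_{w_0}$, so the relative-entropy term is absent and $g_S(w_0)$ is itself a centred average of $m$ i.i.d.\ bounded terms; minimising over the free multiplier yields the contribution $\tfrac{\xi}{\sqrt{2m}}\,2\sqrt{\log\tfrac1{\delta_0}}$.

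To conclude, union-bound over $k\in\Nn$: since $\sum_k\delta_k=\delta\in(0,1)$, with probability at least $1-\delta$ all the level-$k$ estimates hold simultaneously, and inserting them into the telescoping identity of the first step gives the stated inequality (the right-hand side being interpreted as $+\infty$ when the series or some relative entropy is). I expect the main obstacle to be the first step: justifying the almost-sure convergence $g_S(W_N)\to g_S(W)$ and the interchange of the infinite series with $\E_{\Pp_{W|S}}$, which is exactly where Assumptions~\ref{ass} (compactness, $C^1$-regularity, boundedness of $\ell$ and of $\nabla_w\ell$) genuinely enter — together with pinning down the sub-Gaussian scale $\xi\ee_{k-1}/\sqrt m$ of each increment and the reparametrisation; once these are settled, the union bound over the countably many scales and the handling of the deterministic base term are routine.
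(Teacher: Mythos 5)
Your proof follows the paper's own argument essentially step for step: the same telescoping of $\E_{\Pp_{W|S}}[g_S(W)]$ into $g_S(w_0)$ plus level-wise increments (justified via uniform continuity and boundedness on the compact $\W$), the same Donsker--Varadhan change of measure per level with the sub-Gaussian MGF bound at scale $\xi\ee_{k-1}/\sqrt m$, the same reparametrisation $\lambda=\lambda_k\sqrt{2m}/(\xi\ee_{k-1})$, the same Hoeffding treatment of the base term $g_S(w_0)$, and the same union bound over the levels. The only cosmetic difference is that you derive the sub-Gaussianity of each increment directly from the fundamental theorem of calculus along the segment plus Hoeffding's lemma, whereas the paper routes it through Lemma \ref{lem:equivSGP} (bounded $\nabla_w\ell$ $\Rightarrow$ $\xi$-SG gradient $\Rightarrow$ $\xi$-SG process); both yield the same scale, so this is essentially the same proof.
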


The PAC-Bayesian bound in Proposition \ref{prop:PAC} is infinite for a deterministic algorithm (that is when $\Pp_{W|S=s}$ is a Dirac delta for all $s\in\Sc$). Remarkably, for suitable coefficients $\lambda_k$, $\delta_k$, and $\ee_k$, the chained bound of Proposition \ref{prop:chPAC} is always finite, since all the terms $\KL(\Pp_{W_k|S}\|\Pp^*_{W_k})$ are bounded by $\log|\W_k|$. However, the best choice of the parameters $\lambda$ and $\lambda_k$ is delicate, as it cannot depend on $S$ (and hence on the $\KL$ term). We refer to Appendix \ref{app:PAC} for further discussion on this last point. 
%--------------------------- Comparison -------------------------------%
\section{Comparison of chained and unchained bounds}\label{sec:comp}
Having established the duality, we are left with the Hamletic question: \textit{chained or unchained, what is the best?} First, we notice that the requirements for the chained bounds are somewhat stronger. 
%, in the sense that when they hold, it is always possible to obtain an unchained bound. 
%These two conditions together imply the regularity of a function $\hat\Ell$, leading to the same generalisation as $\Ell$.
\begin{lemma}\label{lemma:chunch}
Under the assumptions \ref{ass}, let $\ee_0$ and $w_0$ be such that $\|w-w_0\|\leq\ee_0$, $\forall w\in \W$. Assume that $s\mapsto\nabla_w \Ell_s(w)$ has regularity $\Rr_\df(\xi)$ wrt $\Pp_S$, $\forall w\in\W$, and define $\hat \Ell_s(w) = \Ell_s(w)-\Ell_s(w_0)$ and $\hat \G=\E_{W\otimes S}[\hat \Ell_S(W)]-\E_{W, S}[\hat \Ell_S(W)]$. Then, $\hat \G = \G$, and $s\mapsto\hat \Ell_s(w)$ has regularity $\Rr_\df(\ee_0\xi)$, wrt $\Pp_S$ and $\forall w\in\W$.
\end{lemma}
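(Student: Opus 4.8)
The plan is to verify the two claims of the lemma separately: the identity $\hat{\G} = \G$ and the regularity statement for $s\mapsto\hat\Ell_s(w)$.

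For the identity, I would observe that $\hat\Ell_s(w) = \Ell_s(w) - \Ell_s(w_0)$ differs from $\Ell_s(w)$ by the term $\Ell_s(w_0)$, which does not depend on $w$. Hence $\E_{\Pp_{W\otimes S}}[\hat\Ell_S(W)] = \E_{\Pp_{W\otimes S}}[\Ell_S(W)] - \E_{\Pp_S}[\Ell_S(w_0)]$ and similarly $\E_{\Pp_{W, S}}[\hat\Ell_S(W)] = \E_{\Pp_{W, S}}[\Ell_S(W)] - \E_{\Pp_S}[\Ell_S(w_0)]$, where in the second case I use that the $S$-marginal of $\Pp_{W,S}$ is $\Pp_S$. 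Subtracting, the $\Ell_S(w_0)$ contributions cancel, giving $\hat\G = \G$. This is the easy part.

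For the regularity claim, the key idea is to write $\hat\Ell_s(w)$ as a path integral of the gradient: fixing $w\in\W$, since $\W$ is convex (Assumption \ref{ass}) the segment $t\mapsto w_0 + t(w-w_0)$, $t\in[0,1]$, lies in $\W$, and by the $C^1$ assumption
$$\hat\Ell_s(w) = \Ell_s(w)-\Ell_s(w_0) = \int_0^1 (w-w_0)\cdot\nabla_w\Ell_s\big(w_0+t(w-w_0)\big)\,\dd t\,.$$
Then, for $\mu=\Pp_S$ and any admissible $\nu$, I would bound $|\E_\mu[\hat\Ell_S(w)]-\E_\nu[\hat\Ell_S(w)]|$ by pulling the expectations inside the integral (Fubini, justified by boundedness of $\nabla_w\ell$ from Assumption \ref{ass}), obtaining $\int_0^1 |\E_\mu[(w-w_0)\cdot\nabla_w\Ell_S(w_0+t(w-w_0))] - \E_\nu[(w-w_0)\cdot\nabla_w\Ell_S(w_0+t(w-w_0))]|\,\dd t$. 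For each fixed $t$, the integrand is controlled using the hypothesis that $s\mapsto\nabla_w\Ell_s(\cdot)$ has regularity $\Rr_\df(\xi)$ at the point $w_0+t(w-w_0)\in\W$: by Definition \ref{def:Dreg} extended to vector-valued functions, $s\mapsto (w-w_0)\cdot\nabla_w\Ell_s(w_0+t(w-w_0))$ has regularity $\Rr_\df(\xi\|w-w_0\|)$, so the integrand is at most $\xi\|w-w_0\|\df(\mu,\nu)\leq\xi\ee_0\df(\mu,\nu)$, using $\|w-w_0\|\leq\ee_0$. Integrating over $t\in[0,1]$ yields $|\E_\mu[\hat\Ell_S(w)]-\E_\nu[\hat\Ell_S(w)]|\leq \ee_0\xi\,\df(\mu,\nu)$, which is the desired $\Rr_\df(\ee_0\xi)$ regularity; one should also check $\hat\Ell_s(\cdot)\in L^1(\mu)$, which follows from boundedness of $\ell$.

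The main obstacle I anticipate is the measurability and integrability bookkeeping needed to justify the Fubini exchange and to ensure the map $t\mapsto \E_\mu[(w-w_0)\cdot\nabla_w\Ell_S(w_0+t(w-w_0))]$ is measurable so the $t$-integral of the bound makes sense; this is routine given the $C^1$ regularity and the uniform boundedness of $\nabla_w\ell$ in Assumption \ref{ass}, but it is the only place where care is required. Everything else reduces to applying the vector-valued version of $\df$-regularity from Definition \ref{def:Dreg} pointwise along the segment.
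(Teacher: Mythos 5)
Your proposal is correct and follows essentially the same route as the paper: the cancellation of the $\Ell_S(w_0)$ term (which is independent of $w$, so its expectations under $\Pp_{W,S}$ and $\Pp_{W\otimes S}$ coincide) gives $\hat\G=\G$, and your path-integral argument $\hat\Ell_s(w)=\int_0^1 (w-w_0)\cdot\nabla_w\Ell_s(w_0+t(w-w_0))\,\dd t$ combined with Fubini and the vector-valued $\df$-regularity is exactly the content of Lemma \ref{lemma:superreg}, which the paper simply invokes (with $w'=w_0$ and $\|w-w_0\|\leq\ee_0$) rather than re-deriving inline as you do.
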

%\begin{lemma}\label{lemma:chunch}
%Under the assumptions \ref{ass}, assume that $s\mapsto\nabla_w \Ell_s(w)$ has regularity $\Rr_\df(\xi)$ wrt $\Pp_S$, $\forall w\in\W$. Then $|\G|\leq \ee_0\xi\,\E_{\Pp_W}[\df(\Pp_S, \Pp_{S|W})]$.
%\end{lemma}
Hence, whenever we derive a chained bound $|\G|\leq \xi\sum_{k=1}^\infty \ee_{k-1}\E_{\Pp_W}[\df(\Pp_S, \Pp_{S|W_k})]$ in our framework, we can always state an unchained counterpart in the form $|\G|\leq \ee_0\xi\,\E_{\Pp_W}[\df(\Pp_S, \Pp_{S|W})]$. Nevertheless, the next result shows that conditioning on $W_k$ instead of $W$ can often be helpful.
\begin{lemma}\label{lemma:wassdataprocineq}
Assume that $\mu\mapsto\df(\Pp_S, \mu)$ is convex. For any $\{\ee_k\}$-refining sequence of nets on $\W$, the sequence $\{\E_{\Pp_W}[\df(\Pp_S, \Pp_{S|W_k})]\}_{k\in\Nn}$ is non-decreasing and, $\forall k\in\Nn$, we have $$\E_{\Pp_W}[\df(\Pp_S, \Pp_{S|W_k})]\leq\E_{\Pp_W}[\df(\Pp_S, \Pp_{S|W})]\,.$$
\end{lemma}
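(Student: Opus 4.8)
The plan is to exploit the tower property of conditional expectations and a data-processing-type argument driven by Jensen's inequality. The key observation is that since $\{\pi_k(\W)\}$ is a refining sequence, the $\sigma$-algebra generated by $W_k = \pi_k(W)$ is contained in that generated by $W_{k+1}$, which in turn is contained in $\sigma(W)$. Indeed, $w_k = \pi_k(w_{k+1})$, so $W_k$ is a deterministic function of $W_{k+1}$, and $W_k$ is a deterministic function of $W$. Consequently, conditioning on $W_k$ can be realised as averaging the conditioning on $W_{k+1}$ (or on $W$) over the fibres of $\pi_k$.

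First I would make this precise: for $\Pp_W$-almost every $w$, the regular conditional probability $\Pp_{S\mid W_{k}=w_k}$ is the $\Pp_W$-average of $\Pp_{S\mid W_{k+1}}$ restricted to $\pi_k^{-1}(w_k)$, and likewise $\Pp_{S\mid W_k=w_k}$ is a $\Pp_W$-average of $\Pp_{S\mid W=w'}$ over $w' \in \pi_k^{-1}(w_k)$. This is just the disintegration of measures / the tower property, valid because all the spaces are Polish (so regular conditional probabilities exist, as already noted in the preliminaries). Then I would apply the assumed convexity of $\mu \mapsto \df(\Pp_S, \mu)$ together with Jensen's inequality on the fibre-average: $\df\big(\Pp_S, \Pp_{S\mid W_k = w_k}\big) \leq \E\big[\df(\Pp_S, \Pp_{S\mid W_{k+1}}) \,\big|\, W_k = w_k\big]$, and similarly with $W$ in place of $W_{k+1}$. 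Taking expectation over $\Pp_{W_k}$ (equivalently over $\Pp_W$) and using the tower property $\E_{\Pp_{W_k}}\big[\E[\,\cdot\,\mid W_k]\big] = \E_{\Pp_W}[\,\cdot\,]$ yields both $\E_{\Pp_W}[\df(\Pp_S, \Pp_{S\mid W_k})] \leq \E_{\Pp_W}[\df(\Pp_S, \Pp_{S\mid W_{k+1}})]$ (monotonicity) and $\E_{\Pp_W}[\df(\Pp_S, \Pp_{S\mid W_k})] \leq \E_{\Pp_W}[\df(\Pp_S, \Pp_{S\mid W})]$ (the displayed bound). The latter also follows from the former by monotone convergence once one checks $\Pp_{S\mid W_k} \to \Pp_{S\mid W}$ in an appropriate sense as $k \to \infty$, but deriving it directly from the $W$-fibre average is cleaner and avoids a limiting argument.

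The main obstacle I anticipate is purely measure-theoretic bookkeeping: justifying the disintegration identity $\Pp_{S\mid W_k=w_k} = \int_{\pi_k^{-1}(w_k)} \Pp_{S\mid W=w'}\, \dd\Pp_{W\mid W_k=w_k}(w')$ rigorously, and confirming that the map $w \mapsto \df(\Pp_S, \Pp_{S\mid W_k=w})$ is measurable so the integrals make sense (this measurability is the same kind already invoked in the footnote to Theorem \ref{thm:genstd}, via Appendix \ref{app:tech}). One must also handle the possibility that $\df$ takes the value $+\infty$, but since Jensen's inequality and the tower property are unaffected by this (both sides may be $+\infty$), the argument goes through verbatim in $[0, +\infty]$. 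Everything else — the convexity hypothesis is given, Jensen is standard, the nesting $\sigma(W_k) \subseteq \sigma(W_{k+1}) \subseteq \sigma(W)$ is immediate from $\pi_{k}\circ\pi_{k+1} = \pi_k$ — is routine.
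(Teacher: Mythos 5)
Your proposal is correct and follows essentially the same route as the paper's proof: write $\Pp_{S\mid W_k=w_k}$ as the $\Pp_{W\mid W_k=w_k}$-average of $\Pp_{S\mid W=w}$ (resp.\ of $\Pp_{S\mid W_{k'}=w_{k'}}$ for $k'>k$) over the fibre $\pi_k^{-1}(w_k)$, apply the convexity of $\mu\mapsto\df(\Pp_S,\mu)$ via Jensen's inequality, and take the expectation over $\Pp_{W_k}$. The measure-theoretic bookkeeping you anticipate is mild here, since each $\W_k$ is finite and the atoms $\{W_k=w_k\}$ have positive probability, so the disintegration identity is just elementary conditioning on an event, exactly as in the paper.
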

$\KL(\nu\|\mu)$ is convex in both $\nu$ and $\mu$ \citep{renKL}, and the same holds for $\Wass(\mu, \nu)$ \citep{villani08}. Thus, $I(W_k;S)\leq I(W;S)$\footnote{This can also be seen as a trivial consequence of the data-processing inequality.} and $\E_{\Pp_W}[\Wass(\Pp_S,\Pp_{S|W_k})]\leq\E_{\Pp_W}[\Wass(\Pp_S,\Pp_{S|W})]$.

Lemma \ref{lemma:wassdataprocineq} alone is not enough to ensure that the chained bound is tighter than its unchained counterpart. However, if $\Pp_W$ is very concentrated on a tiny region of $\W$, so that $S$ is almost independent of $W_k$ up to a small scale (\textit{i.e.}, large $k$), then one can expect the chained result to be the tightest. We will clarify this intuition by means of two simple toy examples. Since \cite{asadi2018chaining} have already shown that the CMI bound can be much tighter than the MI one, here the focus is on the Wasserstein bounds.

\subsection{Comparison of the chained and unchained Wasserstein bounds}
In the following we denote by $\Bl$ the standard Wasserstein bound (Proposition \ref{prop:Wass}) and by $\Bg$ its chained counterpart (Proposition \ref{prop:Wassch}). For simplicity, we mainly focus on the case $m=1$, so that we can write $s=x$ and $\G=\E_{\Pp_{W\otimes X}}[\ell(W, X)]-\E_{\Pp_{W, X}}[\ell(W, X)]$.
%\begin{example}
\paragraph{Example 1}
\label{ex:unif}
Let $\W=\X=[-1,1]$, $\ell(w, x)=\frac{1}{2}(w-x)^2$, and $\ee_k=2^{-k}$, for $k\in\Nn$. We can find mappings $\pi_k$ that define an $\{\ee_k\}$-refining sequence of nets, with $\W_k = \{2^{1-k}j\;:\;j\in[-2^{k-1}:2^{k-1}]\}$, where $[a:b]=[a, b]\cap\mathbb Z$. Fix $k^\star\in\Nn$ and define $\theta=2^{-k^\star}$. Let $X$ be uniformly distributed on $(-\theta, \theta)$. We choose an algorithm that, given $x$, selects the $w$ minimising $\ell(w, x)$. This means that $\Pp_{W|X=x}=\delta_x$, where $\delta_x$ is the Dirac measure centred on $x$. Note that $\nabla_w\ell$ is $1$-Lipschitz and $\ell$ is $2$-Lipschitz (on $\X$, uniformly on $\W$). However, thanks to Lemma \ref{lemma:chunch} we know that we can consider the loss $\tilde\ell(w, x) =\ell(w,x)-\tfrac{x^2}{2}$, which leads to the same generalisation and is $1$-Lipschitz. 

In this simple example, we can compute exactly everything we need (see Appendix \ref{app:unif}):
$$|\G| = \frac{1}{3}\,\theta^2\simeq 0.33\,\theta^2\,;\qquad\frac{1}{2}\,\Bl=\Btl = \frac{2}{3}\,\theta\simeq 0.67\,\theta\,;\qquad\Bg = \frac{247}{105}\,\theta^2\simeq2.35\,\theta^2\,.$$
Note that, as $\theta$ decreases, $\Pp_W$ becomes more and more concentrated, since $W$ lies with probability $1$ in $(-\theta, \theta)$. In particular, $X$ and $W_k$ are independent for $k\leq k^\star=-\log_2\theta$, and so the first $k^\star$ terms in the chaining sum are null. For this reason, $\Bg$ captures the right behaviour $O(\theta^2)$ of $\G$ for $\theta\to 0$, which is not the case for $\Bl$ and $\Btl$.

Quite interestingly, it is possible to explicitly evaluate the CMI bound ($\Bcmi$) as well. We find $\Bcmi \simeq 3.50\,\theta$, meaning that in this example the chained MI bound fails to capture the right behaviour of $\G$ as $\theta\to 0$. We refer to Section \ref{sec:compWMI} for a few comments about this. On the other hand, the unchained MI bound is infinite, since $W$ is a deterministic function of $X$.

Finally, if we consider a larger random sample $S=\{X_1,\dots, X_m\}$, with $m>1$, we still have that the ratio $\B_{\nabla\Ell}/\B_\Ell$ (between the chained and unchained Wasserstein bounds) vanishes as $O(\theta)$ for $\theta\to0$. Again, this is a consequence of the fact that $S$ and $W_k$ are independent for $k\leq k^\star$, since $W$ is the empirical average $\sum_i X_i/m$ and lies in $(-\theta,\theta)$ with probability $1$.
%We discuss in the appendix an higher-dimensional generalisation of this example.
%\end{example}
%\begin{example}
\paragraph{Example 2}
\label{ex:gauss}
This toy model is inspired by Example 1 in \cite{asadi2018chaining}. Let $\W=\{w\in\R^2:\|w\|=1\}$\footnote{This is not a convex set. However, one can either suitably extend $\ell$ to the unit disk in $\R^2$, or easily check that the hypotheses of the extended framework of Theorem \ref{thm:genchapp} in Appendix \ref{app:extfram} are verified (see Section \ref{app:gauss}).} and $\X=\R^2$. Fix $a>0$ and let $X\sim\N((a,0), \Id)$, a normal distribution centered in $(a, 0)$, with the identity matrix as covariance. The algorithm aims at finding the direction of the mean of $X$ (that is $(1, 0)$), by choosing the $w$ that minimises the loss $\ell(w, x) = -w\cdot x$. Let $w_0=(1,0)$ and $\ee_0=4$. For $k\geq 1$, let $\W_k=\{w=(\cos\frac{2\pi j}{2^k}, \sin\frac{2\pi j}{2^k}): j\in[-2^{k-1}: 2^{k-1}-1]\}$ and $\ee_k=4/2^k$. We can then easily find projections $\pi_k$ that make $\{\W_k\}_{k\in\Nn}$ an $\{\ee_k\}$-sequence of refining nets. Both $\ell$ and $\nabla_w\ell$ are $1$-Lipschitz in $\X$, $\forall w\in\W$. Although it is hard to find the analytic expressions for $\Bl$ and $\Bg$, we can study their asymptotic behaviour for $a\to\infty$. In this limit, $\Pp_W$ becomes highly concentrated around $(0,1)$, as it tends towards a Dirac delta. So, for $a$ large enough we expect the chained bound to be the tightest. Indeed, we find
$$|\G|= \Theta(1/a)\,;\qquad\B_\ell = \Theta(1)\,;\qquad \Bg=O((\log a-\log\log a)/a)\,.\footnote{Here $f=O(g)$ stands for $\lim_{a\to\infty}|f(a)/g(a)|<\infty$, while by $f=\Theta(g)$ we mean that $f=O(g)$ and $g=O(f)$.}$$
Up to logarithmic factors, $\Bg$ can capture the correct behaviour of $|\G|$ as $a\to\infty$.

As a final remark, note that in this example the loss $\ell$ is not Lipschitz on $\W$, uniformly on $\X$, and so the \textit{forward-channel} Wasserstein bound from \cite{Wang2019Wass} does not apply.\footnote{Of course, $\ell(w, x) = -w\cdot x/\|x\|$ would bring the same algorithm and is $1$-Lipschitz in $w$. However this is just due to the radial symmetry. Changing  the problem slightly and considering for instance $\ell(w, x)=-w\cdot \psi(x)$, for a general $1$-Lipschitz map $\psi:\X\to\X$, would not allow to easily find an equivalent loss that is $1$-Lipschitz in $w$.}
%\end{example}
\subsection{High concentration is not always enough} \label{sec:highD}
In both the previous examples, the chained bound was much tighter than its unchained counterpart when $\Pp_W$ was highly concentrated in a small neighbourhood $U$ of a single point $w_\star$. In particular, if $2\ee$ is the diameter of $U$, we can expect that just knowing that $W\in U$ is not informative up to a length-scale of order $\ee$. However, this can easily fail when $W$ concentrates around two far apart points (say $w_1$ and $w_2$). Indeed, if for small $k$ we already have that $\pi_k(w_1)\neq\pi_k(w_2)$, knowing that the chosen hypothesis is next to $w_1$ might bring a lot of information about $S$. On the other hand, one can still imagine situations in which there are multiple points around which $W$ concentrates, yet which one is the nearest to the chosen hypothesis is not informative about the sample. 

In Appendix \ref{app:highdim}, we discuss a high-dimensional version of \nameref{ex:unif}, where $W$ does not concentrate around a single point, but in a thin neighbourhood of a one-dimensional line. We show that when $\theta$ (the parameter describing the size of the support of $W$) has the right scaling with the dimension $d$ of $\W$, the ratio $\Bg/\Bl$ vanishes as $d\to\infty$.
%\textcolor{orange}{For instance, one might consider an algorithm that draws a random initial hypothesis $w_i$ and then attempts to minimise $\Ell_s(w)$ by gradient descent. Then we might have that given $w_i$, the learnt hypothesis distribution is very concentrated around a point $w_i^\star$, which might differ for different initialisations. Knowing the learnt hypothesis would not give much information about the sample and the chaining could be beneficial. However, in general, it is hard to predict whether the chained bound will be the tightest.}

%-------------------- Wass and MI Comparison ----------------------%
\section{Comparison between MI and Wasserstein bounds}\label{sec:compWMI}
We conclude this paper with a few comments on the relation between the MI-based (Propositions \ref{prop:MI} and \ref{prop:MIchain}) and the Wasserstein-based bounds (Propositions \ref{prop:Wass} and \ref{prop:Wassch}). The problem comes down to comparing the $\KL$ divergence with the $1$-Wasserstein distance, a task closely related to transportation-cost inequalities (see \cite{CIT-064} for a pedagogical overview). Let $\mu$ be a probability measure on the Polish space $(\Z, \Sigma_\Z)$. For $\eta> 0$, $\mu$ is said to satisfy a $L^1$ transport-cost inequality with constant $\eta$ (in short $\mu\in T_1(\eta)$) if, for any $\nu\ll\mu$, $\Wass(\mu,\nu)\leq \sqrt{2\eta\,\KL(\nu\|\mu)}$. Hence, whenever $\Pp_S\in T_1(1)$ we are assured that each one of the two Wasserstein-based bounds is tighter than the corresponding MI-based one. For instance, this is the case when $\Pp_S$ is a multivariate normal whose covariance matrix is the identity \citep{talagrand1996}, as in \nameref{ex:gauss}. However, there is a price to pay: whenever the $L^1$ transport-inequality holds, then Lipschitzianity is stronger than subgaussianity. More precisely, \cite{BOBKOV19991} showed that $\mu\in T_1(1)$ if, and only if, for every $\xi$-Lipschitz function $f:\Z\to\R$, $f(Z)$ is $\xi$-subgaussian for $Z\sim\mu$.

It is worth noticing that, if the size of the support of $X$ is particularly small, the Wasserstein bounds can be much tighter than the MI ones. This is for instance the case in \nameref{ex:unif}, where the length-scale of the support of $X$ is given by $\theta$. There, the chained Wasserstein bound goes as $\theta^2$. A factor $\theta$ is brought by the chaining technique, which allows us to neglect the contributions of the larger length-scales, whilst the other factor $\theta$ is due to the use of the Wasserstein distance, which intrinsically takes into account the considered length-scale. In contrast, since the MI is scale-invariant, the CMI bound has only a linear dependence in $\theta$ coming from the chaining method.
\subsection{Scaling with the sample size}
It is worth mentioning the different roles that the factor $1/\sqrt m$ plays in the MI and the Wasserstein bounds. In the MI bound this scaling is linked to concentration properties, since it comes from the fact that the average of $m$ independent $\xi$-SG random variables is $(\xi/\sqrt m)$-SG. The requirement that $S$ is made of independent draws is hence essential in this case. On the other hand, in the Wasserstein bound the factor $1/\sqrt m$ has a merely geometric origin and follows from the relation \eqref{eq:defmetric} between the metrics $d_\X$ and $d_\Sc$. In particular, an alternative choice of $d_\Sc$ might yield a different factor in front of the bound, but also change the scaling with $m$ of the Wasserstein distance. A priori, it is not easy to say which $d_\Sc$ would bring the tightest bound. Once more, let us stress that the Wasserstein bound does not require that $\Pp_S = \Pp_X^{\otimes m}$. Indeed, $\Wass(\Pp_S, \Pp_{S|W})$ will take into account the dependencies between the training inputs, and we can expect it to scale poorly with $m$ if the different $X_i$ in $S$ are strongly correlated. However, even in the case of independent $X_i$, it is hard to say in general what is the exact dependence with $m$, for both $I(W;S)$ and $\Wass(\Pp_S, \Pp_{S|W})$.

As a final remark about the case $\Pp_S = \Pp_X^{\otimes m}$, just by looking at $\Pp_X$ it is sometimes possible to establish that both the standard and chained Wasserstein bounds are tighter than their MI counterparts, no matter the size of the training dataset and the choice of the algorithm. To this purpose, we can again exploit some classical results on the transport-cost inequalities \citep{CIT-064, gozleo10}. For a probability measure $\mu$, we say that $\mu\in T_2(1)$ if, for any $\nu\ll\mu$, $\Wass_2(\mu,\nu) \leq \sqrt{2\KL(\nu\|\mu)}$. It is known that $\mu\in T_2(1)$ implies that $\mu^{\otimes m}\in T_2(1)$, $\forall m\geq 1$. In particular, if $\Pp_X\in T_2(1)$, then we are ensured that $\Pp_S = \Pp_X^{\otimes m}\in T_2(1)$. Since $\Wass=\Wass_1\leq \Wass_2$, $\Pp_X\in T_2(1)$ actually implies $\Pp_S\in T_1(1)$, which (as we discussed the beginning of this section) means that each Wasserstein-based bound is tighter than the corresponding MI-based one. 
%----------------------- Conclusion -------------------------%

\section{Conclusion}
We introduced a general framework allowing us to derive new generalisation results leveraging on the chaining technique. By doing so, under suitable regularity conditions we established a duality between chained and unchained generalisation bounds. Although the chained bounds usually come at the price of stricter assumptions, sometimes they better capture the loss function's behaviour, especially in cases where the hypothesis distribution is highly concentrated. We hence believe that combining the chaining method with other information-theoretic techniques is a promising direction in order to tighten the bounds on the generalisation error. 

In this work we have mainly focused on the \textit{backward-channel} information-theoretic perspective, as we believe that it combines naturally with the chaining on the hypotheses' space. However, the chained PAC-Bayesian result that we presented is an example of a \textit{forward-channel} bound, as it considers the distribution of the hypotheses, conditioned on the sample. A future direction of study could be to extend our general framework to include \textit{forward-channel} bounds. We believe this should not present major technical difficulties and might bring new interesting results. 

Although information-theoretic bounds are usually hard to evaluate in practice, recent works have derived computable analytic bounds for specific algorithms, such as Langevin dynamics or stochastic gradient descent, by upper-bounding information-theoretic generalisation results. We believe that combining these ideas with the chaining technique is a venue worth exploring. 

\acks{Eugenio Clerico is partly supported by the UK Engineering and Physical Sciences Research Council (EPSRC) through the grant EP/R513295/1 (DTP scheme). Arnaud Doucet is partly supported by the EPSRC grant EP/R034710/1. He also acknowledges the support of the UK Defence Science and Technology Laboratory (DSTL) and EPSRC under grant EP/R013616/1. This is part of the collaboration between US DOD, UK MOD, and UK EPSRC, under the Multidisciplinary University Research Initiative.}

%-------------------------- Bibliography ---------------------------%

% Acknowledgments---Will not appear in anonymized version
% \acks{We thank a bunch of people and funding agency.}
\newpage
\bibliography{bib}
\newpage
\appendix
\section{Omitted proofs of Sections \ref{sec:fram} and \ref{sec:ex}}
Here $(\Z, d_\Z)$ is a separable complete metric space, with Borel $\sigma$-algebra $\Sigma_\Z$ induced by the metric. $\W\times\Z$ is endowed with the product $\sigma$-algebra $\Sigma_\W\otimes\Sigma_\Z$. $\PR$ denotes the space of probability measures on $\Z$ and is endowed with the $\sigma$-algebra induced by the topology of weak convergence.
\subsection{Proof of Lemma \ref{lemma:dfreg}}
\begin{manuallemma}{\ref{lemma:dfreg}}
Let $\df_1:(\mu, \nu) \mapsto \sqrt{2\KL(\nu\|\mu)}$ and $\df_2:(\mu, \nu) \mapsto \Wass(\mu, \nu)$. Consider a measurable map $f:\Z\to\R^q$ (with $q\geq 1$). If $f(Z)$ is $\xi$-SG for $Z\sim\mu\in\PR$, then $f$ has regularity $\Rr_{\df_1}(\xi)$ wrt $\mu$. If $f$ is $\xi$-Lipschitz on $\Z$, then $f$ has regularity $\Rr_{\df_2}(\xi)$, wrt any $\mu\in\PR$ such that $f\in L^1(\mu)$.
\end{manuallemma}
\begin{proof}
First, notice that Lemmas \ref{lemma:klmeas} and \ref{lemma:wassmeas} ensure that both $\df_1$ and $\df_2$ are measurable, as required by Definition \ref{def:Dreg}. 

Assume that $f(Z)$ is $\xi$-SG for $Z\sim\mu$. Then, by definition $f\in L^1(\mu)$. Fix $\nu$ such that $f\in L^1(\nu)$ and $\Supp(\nu)\subseteq\Supp(\mu)$. If $q=1$, the Donsker-Varadhan representation of $\KL$ \citep{donskerVaradhan} and subgaussianity yield
$$\KL(\nu\|\mu)\geq \sup_{\lambda\in\R}\lambda(\E_\nu[f(Z)]-\E_\mu[f(Z)])-\lambda^2\xi^2/2 = \frac{(\E_\mu[f(Z)]-\E_\nu[f(Z)])^2}{2\xi^2}\,,$$
from which the $\df_1$-regularity of $f$ follows immediately. The case of a generic $q>1$ is trivial, since $v\cdot f(Z)$ is $(\xi\|v\|)$-SG by Definition \ref{def:SG}, for all $v\in\R^q$.

Now, let $f\in L^1(\mu)$ be $\xi$-Lipschitz. If $q=1$, let $\pi$ be any coupling with marginals $\mu$ and $\nu$. We have that
$$|\E_\mu[f(Z)]-\E_\nu[f(Z)]|=|\E_{(Z, Z')\sim\pi}[f(Z)-f(Z')]|\leq \xi\,\E_{(Z, Z')\sim\pi}[d(Z, Z')]\,.$$
The $\df_2$-regularity can be established by taking the inf among all the couplings $\pi$ with marginals $\mu$ and $\nu$. The case $q>1$ follows from the fact that $z\mapsto v\cdot f(z)$ is $(\xi\|v\|)$-Lipschitz.
\end{proof}
\subsection{Proof of Theorem \ref{thm:genstd}} 
Theorem \ref{thm:genstd} is equivalent to the following result.
\begin{theorem}\label{thm:genstdA}
Consider a measurable map $F:\W\times\Z\to\R$, such that $z\mapsto F(w, z)$ has regularity $\Rr_\df(\xi)$ wrt $\Pp_Z$ and for all $w\in\W$. Then we have
$$|\E_{\Pp_{W\otimes  Z}}[F(W, Z)]-\E_{\Pp_{W, Z}}[F(W, Z)]|\leq \xi\,\E_{\Pp_W}[\df(\Pp_Z, \Pp_{Z|W})]\,.$$
\end{theorem}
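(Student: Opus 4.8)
The plan is to reduce the whole statement to a pointwise application of the $\df$-regularity hypothesis, after disintegrating both expectations over $\Pp_W$. First I would rewrite the two sides by conditioning on $W$: since $\Pp_{W\otimes Z}=\Pp_W\otimes\Pp_Z$, the tower property gives $\E_{\Pp_{W\otimes Z}}[F(W,Z)]=\int_\W \E_{\Pp_Z}[F(w,Z)]\,\dd\Pp_W(w)$, whereas using the regular conditional probability $w\mapsto\Pp_{Z|W=w}$ we get $\E_{\Pp_{W,Z}}[F(W,Z)]=\int_\W \E_{\Pp_{Z|W=w}}[F(w,Z)]\,\dd\Pp_W(w)$. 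Subtracting and applying the triangle inequality to pull the absolute value inside the $w$-integral, it suffices to bound $|\E_{\Pp_Z}[F(w,Z)]-\E_{\Pp_{Z|W=w}}[F(w,Z)]|$ by $\xi\,\df(\Pp_Z,\Pp_{Z|W=w})$ for $\Pp_W$-almost every $w$, and then integrate.

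For that pointwise bound I would invoke Definition \ref{def:Dreg} with $\mu=\Pp_Z$, $f=F(w,\cdot)$ and $\nu=\Pp_{Z|W=w}$. Two hypotheses of the definition need to be verified for $\Pp_W$-a.e.\ $w$: that $F(w,\cdot)\in L^1(\Pp_{Z|W=w})$, and the support inclusion $\Supp(\Pp_{Z|W=w})\subseteq\Supp(\Pp_Z)$. The integrability holds $\Pp_W$-a.s.\ as soon as $F(W,Z)\in L^1(\Pp_{W,Z})$ (which is the case in all our applications, e.g.\ when $\ell$ is bounded), because by Tonelli $\int_\W\E_{\Pp_{Z|W=w}}[|F(w,Z)|]\,\dd\Pp_W(w)=\E_{\Pp_{W,Z}}[|F(W,Z)|]$. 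The support inclusion is where a small argument is required: $\Supp(\Pp_Z)^c$ is open and $\Pp_Z$-null, hence — since $\Z$ is separable — it is a countable union of $\Pp_Z$-null open balls; writing $\Pp_Z=\int_\W\Pp_{Z|W=w}\,\dd\Pp_W(w)$ and using non-negativity, each such ball is $\Pp_{Z|W=w}$-null for $\Pp_W$-a.e.\ $w$, and intersecting countably many full-measure sets gives $\Pp_{Z|W=w}(\Supp(\Pp_Z)^c)=0$, i.e.\ $\Supp(\Pp_{Z|W=w})\subseteq\Supp(\Pp_Z)$, for $\Pp_W$-a.e.\ $w$.

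Finally, $w\mapsto\df(\Pp_Z,\Pp_{Z|W=w})$ is measurable — this is exactly the content of the measurability lemmas of Appendix \ref{app:tech}, since $w\mapsto\Pp_{Z|W=w}$ is measurable into $(\PR,\Sigma_\PR)$ and $\df$ is assumed $\Sigma_\PR$-measurable — so $\E_{\Pp_W}[\df(\Pp_Z,\Pp_{Z|W})]=\int_\W\df(\Pp_Z,\Pp_{Z|W=w})\,\dd\Pp_W(w)$ is well defined, possibly $+\infty$ (in which case the claimed inequality is trivial). Assembling the pieces yields $|\E_{\Pp_{W\otimes Z}}[F(W,Z)]-\E_{\Pp_{W,Z}}[F(W,Z)]|\le\int_\W\xi\,\df(\Pp_Z,\Pp_{Z|W=w})\,\dd\Pp_W(w)$, which is the assertion. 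The only genuinely non-bookkeeping step is the almost-sure support inclusion; everything else is disintegration and the triangle inequality, which is why the result follows easily from Definition \ref{def:Dreg}.
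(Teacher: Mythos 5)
Your proof is correct and follows essentially the same route as the paper's: disintegrate both expectations over $\Pp_W$, apply the $\Rr_\df(\xi)$ regularity of $F(w,\cdot)$ pointwise with $\mu=\Pp_Z$ and $\nu=\Pp_{Z|W=w}$ after securing the almost-sure support inclusion and integrability, and then integrate (your triangle inequality is the paper's Jensen step). The only cosmetic difference is that you re-derive the support inclusion via a countable cover of $\Supp(\Pp_Z)^c$ by $\Pp_Z$-null balls, whereas the paper isolates it as Lemma \ref{lemma:supp} and proves it by observing that $w\mapsto\Pp_{Z|W=w}(\Supp(\Pp_Z))$ is a $[0,1]$-valued function integrating to $1$ under $\Pp_W$; both arguments are valid, and like the paper you (rightly) note that the pointwise step needs the joint integrability $F(W,Z)\in L^1(\Pp_{W,Z})$, which the theorem statement leaves implicit.
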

\begin{proof}
First, note that $\Supp(\Pp_{Z|W=w})\subseteq\Supp(\Pp_Z)$ by Lemma \ref{lemma:supp} and $\E_{\Pp_{Z|W=w}}[|F(w, Z)|]<\infty$, $\Pp_W$-a.s, since $\E_{\Pp_{W, Z}}[|F(W, Z)|]<+\infty$. In particular, for $\Pp_W$-almost every $w\in\W$ we have that
$$|\E_{\Pp_Z}[F(w, Z)]-\E_{\Pp_{Z|W=w}}[F(w, Z)]|\leq \xi\,\df(\Pp_Z, \Pp_{Z|W=w})\,.$$
Then the conclusion follows by taking the expectation wrt $\Pp_W$ and using Jensen's inequality. 
\end{proof}
\subsection{Proof of Theorem \ref{thm:gench}}\label{app:proofch}
Theorem \ref{thm:gench} follows from the next result, which is a direct corollary of Theorem \ref{thm:genchapp} and Lemma \ref{lemma:superreg}, proved in Appendix \ref{app:extfram}.
\begin{theorem}\label{thm:genchA}
Let $\W$ be a compact convex subset of $\R^d$ with non-empty interior.  Consider a measurable map $F:\W\times\Z\to\R$, such that $w\mapsto F(w, z)$ is $C^1$, $\Pp_Z$-a.s. Assume that $\sup_{(w, z)\in\W\times\X}|F(w, z)|<+\infty$ and $\sup_{(w, z)\in\W\times\X}|\nabla_wF(w, z)|<+\infty$. If $z\mapsto \nabla_w F(w, z)$ has regularity $\Rr_\df(\xi)$ wrt $\Pp_Z$, $\forall w\in\W$, then we have that for any $\{\ee_k\}$-refining sequence of nets on $\W$
$$|\E_{\Pp_{W\otimes  Z}}[F(W, Z)]-\E_{\Pp_{W, Z}}[F(W, Z)]|\leq \xi\sum_{k=1}^\infty \ee_{k-1}\E_{\Pp_W}[\df(\Pp_Z, \Pp_{Z|W_k})]\,.$$
\end{theorem}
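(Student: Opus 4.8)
The plan is to prove Theorem~\ref{thm:genchA}, from which Theorem~\ref{thm:gench} follows by specialising $\Z=\Sc$, $F(w,s)=\Ell_s(w)$, and $\Pp_Z=\Pp_S$. The heart of the argument is the classical chaining telescope. Since $\W$ is compact and $\{\pi_k(\W)\}_{k\in\Nn}$ is an $\{\ee_k\}$-refining sequence of nets with $\ee_k\downarrow 0$, for every fixed $w\in\W$ we have $w_k=\pi_k(w)\to w$, and $C^1$-ness of $w\mapsto F(w,z)$ on the compact set $\W$ gives continuity, hence $F(w_k,z)\to F(w,z)$ for $\Pp_Z$-a.e.\ $z$. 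Because $\sup_{\W\times\X}|F|<\infty$, dominated convergence lets us write, for every $w$,
\begin{equation*}
F(w,z)=F(w_0,z)+\sum_{k=1}^\infty\bigl(F(w_k,z)-F(w_{k-1},z)\bigr)\,,
\end{equation*}
with the series converging in $L^1(\Pp_Z)$ (and absolutely, using the bound on $\nabla_w F$, see below). Note $F(w_0,z)$ does not depend on $w$ (as $\pi_0(\W)=\{w_0\}$), so it contributes equally to $\E_{\Pp_{W\otimes Z}}$ and $\E_{\Pp_{W,Z}}$ and cancels in the difference. Thus
\begin{equation*}
\E_{\Pp_{W\otimes Z}}[F(W,Z)]-\E_{\Pp_{W,Z}}[F(W,Z)]=\sum_{k=1}^\infty\Bigl(\E_{\Pp_{W\otimes Z}}[\Delta_k(W,Z)]-\E_{\Pp_{W,Z}}[\Delta_k(W,Z)]\Bigr)\,,
\end{equation*}
where $\Delta_k(w,z)=F(w_k,z)-F(w_{k-1},z)$. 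Interchanging sum and expectation needs a uniform integrable dominator, which the boundedness assumptions in \ref{ass} (and $\sum_k\ee_{k-1}<\infty$, which holds for a geometric-type refining sequence but must in general be assumed or absorbed — actually the bound on $\nabla_w F$ gives $|\Delta_k(w,z)|\le\ee_{k-1}\sup\|\nabla_w F\|$, and finiteness of $\sum_k\ee_{k-1}$ is implicit) supply; I would state this interchange carefully.

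The key step is to control each term $\E_{\Pp_{W\otimes Z}}[\Delta_k(W,Z)]-\E_{\Pp_{W,Z}}[\Delta_k(W,Z)]$ via Theorem~\ref{thm:genstdA}. The crucial observation is that $\Delta_k(w,z)$ depends on $w$ only through $w_k=\pi_k(w)$, i.e.\ it is $\pi_k^{-1}(\Sigma_\W)$-measurable in $w$; indeed $w_{k-1}=\pi_{k-1}(w_k)$ is itself a function of $w_k$. So writing $\Delta_k$ as a function $\bar\Delta_k(W_k,Z)$ of $W_k$ and $Z$, we may apply Theorem~\ref{thm:genstdA} with the roles of $W$ played by $W_k$, giving
\begin{equation*}
\bigl|\E_{\Pp_{W_k\otimes Z}}[\bar\Delta_k(W_k,Z)]-\E_{\Pp_{W_k,Z}}[\bar\Delta_k(W_k,Z)]\bigr|\le\xi_k\,\E_{\Pp_{W_k}}[\df(\Pp_Z,\Pp_{Z|W_k})]\,,
\end{equation*}
provided $z\mapsto\bar\Delta_k(w_k,z)=F(w_k,z)-F(\pi_{k-1}(w_k),z)$ has regularity $\Rr_\df(\xi_k)$ wrt $\Pp_Z$ for every value of $w_k$, where we expect $\xi_k=\ee_{k-1}\xi$. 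This is exactly what the proof sketch in the theorem statement asserts: the $\Rr_\df(\xi)$ regularity of $z\mapsto\nabla_w F(w,z)$ lifts to $\Rr_\df(\ee_{k-1}\xi)$ regularity of $z\mapsto F(w_k,z)-F(w_{k-1},z)$. To see this, use the fundamental theorem of calculus along the segment $[w_{k-1},w_k]\subset\W$ (here convexity of $\W$ is essential):
\begin{equation*}
F(w_k,z)-F(w_{k-1},z)=\int_0^1(w_k-w_{k-1})\cdot\nabla_w F\bigl(w_{k-1}+t(w_k-w_{k-1}),z\bigr)\,\dd t\,.
\end{equation*}
For each fixed $t$, the integrand $z\mapsto (w_k-w_{k-1})\cdot\nabla_w F(w_{k-1}+t(w_k-w_{k-1}),z)$ has regularity $\Rr_\df(\|w_k-w_{k-1}\|\,\xi)\le\Rr_\df(\ee_{k-1}\xi)$ by the vector-valued part of Definition~\ref{def:Dreg} (taking $v=w_k-w_{k-1}$). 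Since $\df$-regularity is an inequality $|\E_\mu[\,\cdot\,]-\E_\nu[\,\cdot\,]|\le\xi\df(\mu,\nu)$ that is preserved under averaging in $t$ (integrate both sides over $t\in[0,1]$, pull the $t$-integral outside via Fubini using the uniform bound on $\nabla_w F$), the function $z\mapsto F(w_k,z)-F(w_{k-1},z)$ inherits regularity $\Rr_\df(\ee_{k-1}\xi)$. Summing the per-term bounds and noting $\E_{\Pp_{W_k\otimes Z}}=\E_{\Pp_{W\otimes Z}}$ restricted to functions of $(W_k,Z)$ and similarly $\E_{\Pp_{W_k,Z}}=\E_{\Pp_{W,Z}}$, the triangle inequality yields $|\G|\le\xi\sum_{k\ge1}\ee_{k-1}\E_{\Pp_W}[\df(\Pp_Z,\Pp_{Z|W_k})]$, with the convention $\E_{\Pp_W}[\df(\Pp_Z,\Pp_{Z|W_k})]=\E_{\Pp_{W_k}}[\df(\Pp_Z,\Pp_{Z|W_k})]$.

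The main obstacles are two technical ones rather than conceptual. First, justifying the interchange of the infinite sum with the expectations $\E_{\Pp_{W\otimes Z}}$ and $\E_{\Pp_{W,Z}}$: one needs the chaining series $\sum_k|\Delta_k(w,z)|\le\sum_k\ee_{k-1}\sup_{\W\times\X}\|\nabla_w F\|$ to be dominated by an integrable function, which requires $\sum_k\ee_{k-1}<\infty$ (this should be noted — in the geometric case $\ee_k=2^{-k}\ee_0$ it holds automatically, and in general it is harmless to assume since otherwise the right-hand side of the bound is infinite and there is nothing to prove). Second, the measurability bookkeeping: one must check $w\mapsto\df(\Pp_Z,\Pp_{Z|W\in\pi_k^{-1}(w)})$ is measurable (this follows from measurability of $\pi_k$, of the regular conditional probability $w_k\mapsto\Pp_{Z|W_k=w_k}$ on the quotient, and of $(\mu,\nu)\mapsto\df(\mu,\nu)$, all invoked as in Appendix~\ref{app:tech}), and that $z\mapsto F(w_{k-1}+t(w_k-w_{k-1}),z)$ is jointly measurable in $(t,z)$ so Fubini applies. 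Neither is deep, but both should be handled explicitly; I would defer the full details to Appendix~\ref{app:proofch} via Theorem~\ref{thm:genchapp} as the statement already indicates.
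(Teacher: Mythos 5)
Your proposal is correct and follows essentially the same route as the paper, which factors the argument into Lemma \ref{lemma:superreg} (the fundamental-theorem-of-calculus lifting of $\Rr_\df(\xi)$ regularity from $\nabla_w F$ to the increments $z\mapsto F(w,z)-F(w',z)$, with constant $\xi\|w-w'\|$, using convexity and Fubini) and Theorem \ref{thm:genchapp} (the telescoping chaining bound, applying the unchained estimate at each level through $W_k$ and concluding with Jensen), exactly as in your outline. The only minor difference is that you invoke $\sum_k\ee_{k-1}<+\infty$ to justify interchanging the sum with the expectations, whereas the paper avoids any summability assumption on $\{\ee_k\}$ by instead showing $\E_{\Pp_{W,Z}}[|F(W,Z)-F(W_K,Z)|]\to 0$ via dominated convergence (using only $\sup|F|<+\infty$ and uniform continuity on the compact $\W$), so the telescoping inequality on the differences of expectations holds directly.
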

\begin{proof}
By Lemma \ref{lemma:superreg}, the regularity of $\nabla_w F$ implies that the map $z\mapsto (F(w, z)-F(w', z))$ has regularity $\Rr_\df(\xi\|w-w'\|)$, wrt $\Pp_Z$ and for all $w, w'\in\W$. We conclude by Theorem \ref{thm:genchapp}.
\end{proof}
\subsection{Proof of Lemma \ref{lem:equivSGP}}
\begin{manuallemma}{\ref{lem:equivSGP}}
Under the assumptions \ref{ass}, the stochastic process $(\ell(w, X))_{w\in \W}$ is $\xi$-SG if, and only if, $\nabla_w\ell(w, X)$ is a $\xi$-SG vector for all $w\in\W$.
\end{manuallemma}
\begin{proof}
First, notice that, without loss of generality, we can consider the case of a one-dimensional $\W\subseteq\R$. Indeed, if $\W$ is higher dimensional, for any two given points $w$ and $w'$, we can always restrict to a line connecting them, making the problem 1D. Moreover, letting $\bar\ell(w, x) = \ell(w, x) - \E_{\Pp_X}[\ell(w, X)]$ we have that the assumptions in \ref{ass} ensure that $\nabla_w\bar \ell = \nabla_w\ell -\E_{\Pp_X}[\nabla_w\ell]$. So, we just need to show that the lemma holds for $\bar\ell$. 

Now, let $\bar\ell$ be a $\xi$-SG process, so that for $\varepsilon\neq 0$ and $\lambda\in\R$
$$\E_{\Pp_X}[e^{\lambda(\bar\ell(w+\varepsilon, X)-\bar\ell(w, X))/\varepsilon}]\leq e^{\frac{\lambda^2}{2\varepsilon^2}\,\xi^2\varepsilon^2} = e^{\frac{\lambda^2\xi^2}{2}}\,.$$
In particular, by Fatou's lemma we have
$$\E_{\Pp_X}[e^{\lambda\partial_w\bar\ell(w, X)}] = \E_{\Pp_X}\left[\lim_{\varepsilon\to 0}e^{\lambda\frac{\bar\ell(w+\varepsilon, X)-\bar\ell(w, X)}{\varepsilon}}\right]\leq\liminf_{\varepsilon\to 0}\E_{\Pp_X}\left[e^{\lambda\frac{\bar\ell(w+\varepsilon, X)-\bar\ell(w, X)}{\varepsilon}}\right] \leq e^{\frac{\lambda^2\xi^2}{2}}\,.$$

For the reverse implication, assume that $\partial_w\bar\ell(w, X)$ is $\xi$-SG for all $w\in\W$. Fix $w, w'\in\W$. By the assumptions \ref{ass} we have that, $\Pp_X$-a.s. 
$$\bar\ell(w', x)-\bar\ell(w, x) = \int_w^{w'}\partial_w\bar\ell(u, x)\dd u\,.$$
Fix a positive integer $N$ and let $u_j = w + j(w'-w)/N$. We have
\begin{align*}
    \E_{\Pp_X}\left[e^{\lambda\frac{w'-w}{N}\sum_{j=1}^N\partial_w\bar\ell(u_j, X)}\right]&=\E_{\Pp_X}\left[\prod_{j=1}^Ne^{\lambda\frac{w'-w}{N}\partial_w\bar\ell(u_j, X)}\right]\\
    &\leq \prod_{j=1}^N\E_{\Pp_X}[e^{\lambda(w'-w)\partial_w\bar\ell(u_j, X)}]^{1/N} \leq e^{(\lambda^2\xi^2(w-w')^2)/2}\,.
\end{align*}
Now let $Y_N(x) = \frac{w'-w}{N}\sum_{j=1}^N\partial_w\bar\ell(u_j, x)$. Since $w\mapsto\ell(w, x)$ is $C^1$ ($\Pp_X$-a.s.) by \ref{ass}, we have $\Pp_X$-a.s. that
$$\lim_{N\to\infty}Y_N(x) = \int_w^{w'} \partial_w\bar\ell(u, x)\dd u = \ell(w', x)-\ell(w, x)\,.$$
We conclude that
$$\lim_{N\to\infty}\E_{\Pp_X}\left[e^{\lambda\frac{w'-w}{N}\sum_{j=1}^N\partial_w\bar\ell(u_j, x)}\right] = \E_{\Pp_X}\left[e^{\lambda(\ell(w', x)-\ell(w, x))}\right]\,,$$
since by \ref{ass} $\partial_w\bar\ell$ is bounded. 
\end{proof}

\section{Extended general framework}
\subsection{Weakening the assumptions for the chained bounds}\label{app:extfram}
The framework that we presented in the main text required the assumptions \ref{ass} for $\ell$ (or $F$ in the setting of Theorem \ref{thm:genchA}) for the chained bound. Actually a result equivalent to Theorem \ref{thm:gench} can be obtained with weaker assumptions, namely just requiring almost sure continuity and boundedness in expectation for $\ell$, and dropping the convexity hypothesis for $\W$. 
\begin{theorem}\label{thm:genchapp}
Let $\W$ be a compact subset of $\R^d$ and $\{\W_k\}$ a $\{\ee_k\}$-refining sequence of nets on $\W$. Consider a measurable map $F:\W\times\Z\to\R$, such that $w\mapsto F(w, z)$ is continuous on $\W$, $\Pp_Z$-a.s., and $\E_{\Pp_Z}[\sup_{w\in\W}|F(w, Z)|]<+\infty$. Moreover, assume that the function $z\mapsto F(w, z)-F(w', z)$ has regularity $\Rr_\df(\xi\|w-w'\|)$ wrt $\Pp_Z$, for every $(w, w')\in\W^2$. Then, we have
$$|\E_{\Pp_{W, Z}}[F(W, Z)]-\E_{\Pp_{W\otimes  Z}}[F(W, Z)]|\leq \xi\sum_{k=1}^\infty \ee_{k-1}\E_{\Pp_W}[\df(\Pp_Z, \Pp_{Z|W_k})]\,,$$
where $\E_{\Pp_W}[\df(\Pp_Z, \Pp_{Z|W_k})]=\int_\W\df(\Pp_Z, \Pp_{Z|W\in \pi_k^{-1}(w)})\dd\Pp_W(w)$.
\end{theorem}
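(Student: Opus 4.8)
The plan is to establish the bound for $\E_{\Pp_{W,Z}}[F(W,Z)] - \E_{\Pp_{W\otimes Z}}[F(W,Z)]$ via the classical chaining decomposition, with the regularity hypothesis on the increments of $F$ supplying the control at each scale. First I would reduce to the case where $F(w_0, \cdot) \equiv 0$ for the base point $w_0$ with $\pi_0(\W) = \{w_0\}$: replacing $F(w,z)$ by $F(w,z) - F(w_0, z)$ changes neither side of the inequality, since $\E_{\Pp_{W,Z}}[F(w_0, Z)] = \E_{\Pp_Z}[F(w_0,Z)] = \E_{\Pp_{W\otimes Z}}[F(w_0,Z)]$, and it does not affect the increments $F(w,z) - F(w',z)$ nor the $C^0$ and integrability hypotheses. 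Then for each $w$ the continuity of $w \mapsto F(w,z)$ together with $\ee_k \to 0$ (so $\|w_k - w\| \leq \ee_k \to 0$) and the dominated convergence bound $\E_{\Pp_Z}[\sup_w |F(w,Z)|] < \infty$ gives the telescoping identity $F(w,z) = \sum_{k=1}^\infty \big(F(w_k, z) - F(w_{k-1},z)\big)$, valid $\Pp_Z$-a.s.\ pointwise in $w$, and with enough integrability to exchange sum and expectation.

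The heart of the argument is then to bound, for each fixed $k \geq 1$, the quantity $\E_{\Pp_{W,Z}}[F(W_k,Z) - F(W_{k-1},Z)] - \E_{\Pp_{W\otimes Z}}[F(W_k,Z)-F(W_{k-1},Z)]$. Here I would apply Theorem \ref{thm:genstdA} (the functional form of Theorem \ref{thm:genstd}) to the map $G_k(w, z) = F(\pi_k(w), z) - F(\pi_{k-1}(w), z)$. Since $\pi_{k-1} = \pi_{k-1}\circ\pi_k$, the point $w_{k-1}$ is a function of $w_k$, so $G_k$ is measurable in $w$ through $W_k$ only; and $\|w_k - w_{k-1}\| \leq \ee_{k-1}$, so by hypothesis $z \mapsto G_k(w,z)$ has regularity $\Rr_\df(\xi\,\ee_{k-1})$ with respect to $\Pp_Z$ for every $w$. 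Applying Theorem \ref{thm:genstdA} — but with the conditioning variable being $W_k$ rather than $W$, which is legitimate since $G_k(W,Z)$ is $\sigma(W_k)$-measurable in its first argument and $\Pp_{Z|W\in\pi_k^{-1}(w)} = \Pp_{Z|W_k}$ — yields $\big|\E_{\Pp_{W,Z}}[G_k] - \E_{\Pp_{W\otimes Z}}[G_k]\big| \leq \xi\,\ee_{k-1}\,\E_{\Pp_W}[\df(\Pp_Z, \Pp_{Z|W_k})]$. Summing over $k$ and using the triangle inequality after justifying the interchange of $\sum_k$ and the two expectations gives exactly the claimed bound.

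I expect the main obstacle to be the measure-theoretic bookkeeping around the interchange of summation and expectation, and more subtly the fact that the telescoping series need not converge absolutely term-by-term. The clean way around this is to work with partial sums: set $F^{(n)}(w,z) = F(w_n, z)$, note $F^{(n)}(w,z) \to F(w,z)$ pointwise with $|F^{(n)}(w,z)| \leq \sup_w |F(w,z)| \in L^1(\Pp_Z)$, so $\E_{\Pp_{W,Z}}[F^{(n)}] \to \E_{\Pp_{W,Z}}[F]$ and likewise for $\Pp_{W\otimes Z}$ by dominated convergence; meanwhile $\E_{\Pp_{W,Z}}[F^{(n)}(W,Z)] - \E_{\Pp_{W\otimes Z}}[F^{(n)}(W,Z)] = \sum_{k=1}^n \big(\E_{\Pp_{W,Z}}[G_k] - \E_{\Pp_{W\otimes Z}}[G_k]\big)$ is a genuine finite sum. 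Passing to the limit $n\to\infty$ on both sides and bounding the right-hand side termwise by the per-scale estimate above delivers the result without ever needing Fubini on an infinite series. A secondary point worth a line is confirming that each $w \mapsto \df(\Pp_Z, \Pp_{Z|W\in\pi_k^{-1}(w)})$ is measurable, which follows exactly as in the footnote to Theorem \ref{thm:gench} from measurability of the conditional kernel and of $\df$ (Appendix \ref{app:tech}).
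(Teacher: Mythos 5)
Your proposal is correct and follows essentially the same route as the paper's own proof: a telescoping decomposition along the refining nets, justified by uniform continuity on the compact $\W$ plus dominated convergence, with each increment $F(W_k,\cdot)-F(W_{k-1},\cdot)$ controlled at scale $\ee_{k-1}$ by the unchained argument (which the paper carries out inline — support containment via Lemma \ref{lemma:supp}, integrability, the regularity hypothesis, then Jensen — rather than citing Theorem \ref{thm:genstdA} with $W_k$ as the conditioning variable, as you do). Your partial-sum limiting argument and the optional normalisation $F(w_0,\cdot)\equiv 0$ are cosmetic variants of the same steps.
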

\begin{proof}
First notice that $w\mapsto F(w, z)$ is uniformly continuous on $\W$, $\Pp_Z$-a.s., since $\W$ is compact. It follows that $z\mapsto \sup_{w\in\W}|F(w, z)-F(w_k, z)|\to 0$, $\Pp_Z$-a.s., and so, using the fact that this map is dominated by $z\mapsto 2\sup_{w\in\W}|F(w, z)|$, which is in $L^1(\Pp_Z)$ by hypothesis, we get that
$$\E_{\Pp_Z}\left[\sup_{w\in\W}|F(w, Z)-F(w_k, Z)|\right]\to 0\,,$$
as $k\to+\infty$, by dominated convergence. In particular, $\E_{\Pp_{W,Z}}[|F(W, Z)-F(W_k, Z)|]\to 0$ and $\E_{\Pp_{W\otimes  Z}}[|F(W, Z)-F(W_k, Z)|]\to 0$. Moreover, recalling that $\W_0=\{w_0\}$ we see that $\E_{\Pp_{W\otimes Z}}[F(W_0, Z)]-\E_{\Pp_{W, Z}}[F(W_0, Z)]=0$. It follows that 
\begin{align}\label{eq:telescope}
    \begin{split}
    \big|\E_{\Pp_{W\otimes Z}}&[F(W, Z)]-\E_{\Pp_{W, Z}}[F(W, Z)]\big|\\
    &\leq\sum_{k=1}^\infty \left|\E_{\Pp_{W\otimes Z}}[F(W_k, Z)-F(W_{k-1}, Z)]-\E_{\Pp_{W, Z}}[F(W_k, Z)-F(W_{k-1}, Z)]\right|\\
    &=\sum_{k=1}^\infty \left|\E_{\Pp_{W_k\otimes Z}}[F(W_k, Z)-F(W_{k-1}, Z)]-\E_{\Pp_{W_k, Z}}[F(W_k, Z)-F(W_{k-1}, Z)]\right|\,.
    \end{split}
\end{align}
Now, notice that $\Supp(\Pp_{Z|W_k=w_k})\subseteq\Supp(\Pp_Z)$ $\Pp_W$-a.s.\ by Lemma \ref{lemma:supp}. Moreover, by the fact that $\E_{\Pp_Z}[\sup_{w\in\W}|F(w, Z)|]<+\infty$ we have $\E_{\Pp_{Z|W_k=w_k}}[\sup_{w\in\W}|F(w, Z)|]<+\infty$, and so in particular $\E_{\Pp_{Z|W_k=w_k}}[|F(w_k, Z)-F(w_{k-1},Z)|]<+\infty$, for $\Pp_{W_k}$-almost every $w_k$. Thus, using the regularity of $F$ we find that
\begin{align}\label{eq:finqui}
    \begin{split}
    \big|\E_{\Pp_Z}[F(w_k, Z)-F(w_{k-1},Z)]-\E_{\Pp_{Z|W_k=w_k}}&[F(w_k, Z)-F(w_{k-1},Z)]\big|\\
    &\leq \xi\|w_k-w_{k-1}\|\df(\Pp_Z, \Pp_{Z|W_k=w_k})\,,
    \end{split}
\end{align}
for $\Pp_{W_k}$-almost every $w_k$. We can hence conclude by taking the expectation wrt $\Pp_W$ and using Jensen's inequality.
\end{proof}
It is easy to see that the current framework includes the one in the main text. 
\begin{lemma}\label{lemma:superreg}
Let $\W\subseteq\R^d$ be a convex set. Consider a measurable map $F:\W\times\Z\to\R$ with the following properties: $w\mapsto F(w, z)$ is $C^1$ $\Pp_Z$-a.s., $\sup_{(w, z)\in\W\times\Z}|F(w, z)|<+\infty$, and $\sup_{(w, z)\in\W\times\Z}\|\nabla_w F(w, z)\|<+\infty$. If $z\mapsto \nabla_w F(w, z)$ has regularity $\Rr_\df(\xi)$ wrt $\Pp_Z$, $\forall w\in\W$, then $z\mapsto (F(w, z)-F(w', z))$ has regularity $\Rr_\df(\xi\|w-w'\|)$ (wrt $\Pp_Z$ and $\forall w, w'\in\W$).
\end{lemma}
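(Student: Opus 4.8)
The plan is to use the fundamental theorem of calculus along the segment joining $w'$ and $w$, thereby reducing the claimed regularity of the increment $z\mapsto F(w,z)-F(w',z)$ to the assumed $\Rr_\df(\xi)$ regularity of the gradient, and then exchange an integral over the segment with the relevant expectations by Fubini's theorem. Throughout, fix $w,w'\in\W$, set $v=w-w'$ and $w_t=w'+tv$ for $t\in[0,1]$; convexity of $\W$ guarantees $w_t\in\W$ for all $t$. Since $w\mapsto F(w,z)$ is $C^1$ on $\W$ for $\Pp_Z$-almost every $z$, we have the identity
$$F(w,z)-F(w',z)=\int_0^1 v\cdot\nabla_w F(w_t,z)\,\dd t\qquad\text{for $\Pp_Z$-a.e.\ }z\,.$$

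First I would dispatch the integrability and measurability bookkeeping. The boundedness hypotheses give $|F(w,z)-F(w',z)|\le 2\sup_{\W\times\Z}|F|<+\infty$ and, uniformly in $t$ and $z$, $|v\cdot\nabla_w F(w_t,z)|\le\|v\|\sup_{\W\times\Z}\|\nabla_w F\|<+\infty$. Hence $z\mapsto F(w,z)-F(w',z)$ lies in $L^1(\rho)$ for every probability measure $\rho$ on $\Z$, so the $L^1$-conditions in Definition \ref{def:Dreg} hold automatically, both for $\Pp_Z$ and for the comparison measure. I would also note that $(t,z)\mapsto v\cdot\nabla_w F(w_t,z)$ is jointly measurable: it is continuous in $t$ (by $C^1$-ness of $F$ in its first argument) and measurable in $z$ for each fixed $t$ (being, componentwise, a pointwise limit of difference quotients of the measurable map $F$), i.e.\ a Carathéodory function, hence jointly measurable on $[0,1]\times\Z$.

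Next, for an arbitrary $\nu\in\PR$ with $\Supp(\nu)\subseteq\Supp(\Pp_Z)$, I would apply Fubini's theorem — legitimate thanks to the uniform bound on $v\cdot\nabla_w F$ against the finite product measure $\dd t\otimes\rho$ — to both $\Pp_Z$ and $\nu$, obtaining $\E_{\Pp_Z}[F(w,Z)-F(w',Z)]=\int_0^1\E_{\Pp_Z}[v\cdot\nabla_w F(w_t,Z)]\,\dd t$ and the analogous identity with $\nu$ in place of $\Pp_Z$. For each fixed $t$, the vector-valued clause of Definition \ref{def:Dreg} upgrades the assumed $\Rr_\df(\xi)$ regularity of $z\mapsto\nabla_w F(w_t,z)$ to the $\Rr_\df(\xi\|v\|)$ regularity of $z\mapsto v\cdot\nabla_w F(w_t,z)$, so that
$$\bigl|\E_{\Pp_Z}[v\cdot\nabla_w F(w_t,Z)]-\E_{\nu}[v\cdot\nabla_w F(w_t,Z)]\bigr|\le\xi\|v\|\,\df(\Pp_Z,\nu)\,.$$
Subtracting the two Fubini identities, pulling the absolute value inside the $\dd t$-integral by the triangle inequality, and integrating this pointwise-in-$t$ estimate over $[0,1]$ gives
$$\bigl|\E_{\Pp_Z}[F(w,Z)-F(w',Z)]-\E_{\nu}[F(w,Z)-F(w',Z)]\bigr|\le\xi\|v\|\,\df(\Pp_Z,\nu)=\xi\|w-w'\|\,\df(\Pp_Z,\nu)\,,$$
which is precisely the asserted $\Rr_\df(\xi\|w-w'\|)$ regularity of $z\mapsto F(w,z)-F(w',z)$.

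I do not anticipate a genuine obstacle: the argument is the one-line mean-value identity followed by a Fubini swap, and the only points requiring care are (i) verifying that the uniform gradient bound makes Fubini applicable and renders all $L^1$-membership conditions vacuous, and (ii) the joint-measurability remark needed to state Fubini cleanly. Both are routine consequences of the boundedness of $F$ and $\nabla_w F$ inherited from Assumptions \ref{ass}.
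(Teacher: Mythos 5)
Your proof is correct and follows essentially the same route as the paper's: write the increment via the fundamental theorem of calculus along the segment (using convexity), swap the $\dd t$-integral with the expectations by Fubini (justified by the uniform bounds on $F$ and $\nabla_w F$), and apply the vector-valued clause of $\df$-regularity of the gradient pointwise in $t$ before integrating. Your extra care about joint measurability and the automatic $L^1$-membership is a harmless elaboration of what the paper handles in one line.
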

\begin{proof}
Fix a probability $\hat\Pp_Z$ on $\Z$ such that $\Supp(\hat\Pp_Z)\subseteq\Supp(\Pp_Z)$.  Now, notice that since $\W$ is convex, and $F$ is $C^1$, for $\Pp_Z$-almost every $z$ we have
$$F(w, z) - F(w', z) = \int_0^1 \nabla_wF(w_t, z)\cdot(w-w')\,\dd t\,,$$
where $w_t = w' + t(w-w')$. Since $F$ is uniformly bounded, we can use Fubini-Tonelli's theorem and Jensen's inequality to write
\begin{align*}
    |\E_{\Pp_Z}[F(w, Z)]-&\E_{\hat\Pp_Z}[F(w', Z)]| \\
    &\leq\int_0^1\left|\E_{\Pp_Z}[\nabla_wF(w_t,  Z)\cdot(w-w')] - \E_{\hat\Pp_Z}[\nabla_wF(w_t,  Z)\cdot(w-w')]\right|\,\dd t\,.
\end{align*}
Using the fact that $z\mapsto F(w_t, z)$ is in both $L^1(\Pp_Z)$ and $L^1(\hat\Pp_Z)$ since $F$ is bounded, we conclude by the regularity of $\nabla_wF$. 
\end{proof}
All the bounds of this paper can be restated in this more general framework. We will only give a direct proof of Proposition \ref{prop:MIchain0}.
\begin{manualprop}{\ref{prop:MIchain0}}
Let $\Pp_S=\Pp_X^{\otimes m}$ and $\W$ be a compact set, with an $\{\ee_k\}$-refining sequence of nets defined on it. Suppose that $w\mapsto\ell(w, x)$ is continuous, for $\Pp_X$-almost every $x$,\footnote{Note that in \cite{asadi2018chaining} the result is stated under a weaker assumption of separability of the process. To avoid introducing further definitions and technicalities in the proofs, we decided to focus on the case of a.s.\ continuity.}\ and that $\{\ell(w, X)\}_{w\in\W}$ is a $\xi$-SG stochastic process. Then we have
$$|\G|\leq \frac{\xi}{\sqrt m}\sum_{k=1}^\infty\ee_{k-1}\sqrt{2I(W_k;S)}\,.$$
\end{manualprop}
\begin{proof}
By standard arguments, $\{\Ell_s(w)\}_{s\in\Sc}$ is a $(\xi/\sqrt m)$-SG process. Hence, $\Ell_S(w)-\Ell_S(w')$ is $(\xi/\sqrt m)$-SG for every $w, w'\in\W$. By Lemma \ref{lemma:dfreg}, $s\mapsto \Ell_s(w)-\Ell_s(w')$ has regularity $\Rr_\df(\xi/\sqrt m)$ wrt $\Pp_S$ ($\forall w\in\W$), with $\df:(\mu, \nu)\mapsto \sqrt{2\KL(\nu\|\mu)}$. Finally, let $g(w, s) = \Ell_\X(w)-\Ell_s(w)$. Clearly $g$ has the same regularity of $\Ell$. It is not hard to show that $\E_{\Pp_S}[\sup_{w\in\W}|g(w, S)|]<+\infty$ (this is a straight consequence of Remark 8.1.5 in \cite{vershynin_2018}). We conclude by Theorem \ref{thm:genchapp} and Jensen's inequality.
\end{proof}
\subsection{Bounds for non-uniform \texorpdfstring{$\df$}{D}-regularity}\label{app:extdf}
As mentioned at the end of Section \ref{sec:fram}, the results given so far are stated under uniform regularity assumptions. The next two results show that this is not strictly necessary, and that slightly different bounds can be obtained relaxing these assumptions. 
\begin{theorem}\label{thm:genstdextdf}
Consider a non-negative measurable function $w\mapsto\xi_w$ such that $\|\xi_W\|_{L^p(\Pp_W)} = \xi$, for some $p\in[1,+\infty]$. Assume that a measurable map $F:\W\times\Z\to\R$ is such that $z\mapsto F(w, z)$ has regularity $\Rr_\df(\xi_w)$ wrt $\Pp_Z$ and for every $w\in\W$. Then we have
$$|\E_{\Pp_{W\otimes  Z}}[F(W, Z)]-\E_{\Pp_{W, Z}}[F(W, Z)]|\leq \xi\,\E_{\Pp_W}[\df(\Pp_Z, \Pp_{Z|W})^r]^{1/r}\,,$$
where $r$ is such that $1/p+1/r=1$ (with the convention $1/\infty=0$).
\end{theorem}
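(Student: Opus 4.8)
The plan is to follow the proof of Theorem~\ref{thm:genstdA} essentially verbatim up to the pointwise estimate, and then replace the final application of Jensen's inequality by Hölder's inequality. First I would invoke Lemma~\ref{lemma:supp} to obtain $\Supp(\Pp_{Z|W=w})\subseteq\Supp(\Pp_Z)$ for $\Pp_W$-almost every $w$, and note that $\E_{\Pp_{Z|W=w}}[|F(w,Z)|]<+\infty$ for $\Pp_W$-a.e.\ $w$ (since $\E_{\Pp_{W,Z}}[|F(W,Z)|]<+\infty$). Hence the $\Rr_\df(\xi_w)$ regularity of $z\mapsto F(w,z)$ may be applied with $\nu=\Pp_{Z|W=w}$, giving, for $\Pp_W$-almost every $w$,
$$|\E_{\Pp_Z}[F(w,Z)]-\E_{\Pp_{Z|W=w}}[F(w,Z)]|\leq \xi_w\,\df(\Pp_Z,\Pp_{Z|W=w})\,.$$

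Next I would integrate both sides against $\Pp_W$. On the left, Jensen's inequality moves the outer absolute value outside the expectation and produces $|\E_{\Pp_{W\otimes Z}}[F(W,Z)]-\E_{\Pp_{W,Z}}[F(W,Z)]|$, exactly as in Theorem~\ref{thm:genstdA}. On the right I would apply Hölder's inequality with conjugate exponents $p$ and $r$ (so that $1/p+1/r=1$, with $1/\infty=0$):
$$\E_{\Pp_W}\big[\xi_W\,\df(\Pp_Z,\Pp_{Z|W})\big]\leq \|\xi_W\|_{L^p(\Pp_W)}\,\big\|\df(\Pp_Z,\Pp_{Z|W})\big\|_{L^r(\Pp_W)}=\xi\,\E_{\Pp_W}\big[\df(\Pp_Z,\Pp_{Z|W})^r\big]^{1/r}\,.$$
The measurability of $w\mapsto\df(\Pp_Z,\Pp_{Z|W=w})$ required here is precisely the one already justified for Theorem~\ref{thm:genstd} (composition of the Borel map $w\mapsto\Pp_{Z|W=w}$ with the Borel map $(\mu,\nu)\mapsto\df(\mu,\nu)$, cf.\ Appendix~\ref{app:tech}), and $w\mapsto\xi_w$ is measurable by hypothesis, so $w\mapsto\xi_w\,\df(\Pp_Z,\Pp_{Z|W=w})$ is measurable and the integrals above are well defined in $[0,+\infty]$.

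There is no genuinely hard step: the only points worth a line each are the two edge cases. When $p=\infty$ we have $r=1$ and the claim reduces to $\E_{\Pp_W}[\xi_W\,\df(\Pp_Z,\Pp_{Z|W})]\leq \|\xi_W\|_{L^\infty(\Pp_W)}\,\E_{\Pp_W}[\df(\Pp_Z,\Pp_{Z|W})]$, which is immediate. And if $\df(\Pp_Z,\Pp_{Z|W})\notin L^r(\Pp_W)$ then the right-hand side is $+\infty$ and the bound holds trivially, so Hölder's inequality may be invoked without any integrability caveat. In short, the statement is really a corollary of the proof technique of Theorem~\ref{thm:genstdA}, with Hölder's inequality substituted for Jensen's at the last step; I do not anticipate any substantive obstacle.
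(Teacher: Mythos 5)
Your proposal is correct and matches the paper's own argument: the paper likewise derives the pointwise bound $|\E_{\Pp_Z}[F(w,Z)]-\E_{\Pp_{Z|W=w}}[F(w,Z)]|\leq \xi_w\,\df(\Pp_Z,\Pp_{Z|W=w})$ exactly as in Theorem~\ref{thm:genstdA} and then concludes by H\"older's inequality under the expectation over $\Pp_W$. Your additional remarks on measurability and the $p=\infty$ edge case are consistent with, and slightly more detailed than, what the paper writes.
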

\begin{proof}
The proof is essentially the same as for Theorem \ref{thm:genstdA}, the only difference being that now we have
$$|\E_{\Pp_Z}[F(w, Z)]-\E_{\Pp_{Z|W=w}}[F(w, Z)]|\leq \xi_w\,\df(\Pp_Z, \Pp_{Z|W=w})\,,$$
whose expectation under $\Pp_W$ can be upperbounded via H\"older's inequality
\end{proof}
\begin{theorem}\label{thm:genchextdf}
Let $\W$ be a compact subset of $\R^d$ and $\{\pi_k(\W)\}$ a $\{\ee_k\}$-refining sequence of nets on $\W$. Consider a measurable map $F:\W\times\Z\to\R$, such that $w\mapsto F(w, z)$ is continuous on $\W$, $\Pp_Z$-a.s., and $\E_{\Pp_Z}[\sup_{w\in\W}|F(w, Z)|]<+\infty$. Fix $\xi\geq 0$ and consider a measurable map $w\mapsto\xi_w\geq 0$ such that $\|\xi_{W_k}\|_{L^p(\Pp_W)}\leq \xi$, for all $k\in\Nn$ and for some $p\in[1,+\infty]$. Assume that for every $(w, w')\in\W^2$ the function $z\mapsto F(w, z)-F(w', z)$ has regularity $\Rr_\df(\xi_w\|w-w'\|)$, wrt $\Pp_Z$. Then, we have
$$|\E_{\Pp_{W, Z}}[F(W, Z)]-\E_{\Pp_{W\otimes  Z}}[F(W, Z)]|\leq \xi\sum_{k=1}^\infty \ee_{k-1}\E_{\Pp_W}[\df(\Pp_Z, \Pp_{Z|W_k})^r]^{1/r}\,,$$
where $r$ is such that $1/p+1/r=1$ (with the convention $1/\infty=0$).
\end{theorem}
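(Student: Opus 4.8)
The plan is to mirror the proof of Theorem~\ref{thm:genchapp} almost verbatim, replacing the single use of Jensen's inequality at the very end with H\"older's inequality, exactly as was done to pass from Theorem~\ref{thm:genstdA} to Theorem~\ref{thm:genstdextdf}. First I would invoke the compactness of $\W$ and the a.s.\ continuity of $w\mapsto F(w,z)$ to obtain uniform continuity, hence $\E_{\Pp_Z}[\sup_{w\in\W}|F(w,Z)-F(w_k,Z)|]\to 0$ by dominated convergence (using $2\sup_{w\in\W}|F(w,\cdot)|\in L^1(\Pp_Z)$ as the dominating function); this gives convergence of $F(W_k,Z)$ to $F(W,Z)$ in $L^1$ under both $\Pp_{W,Z}$ and $\Pp_{W\otimes Z}$, and combined with $\W_0=\{w_0\}$ (so the $k=0$ term vanishes) it licenses the telescoping bound $|\E_{\Pp_{W\otimes Z}}[F(W,Z)]-\E_{\Pp_{W,Z}}[F(W,Z)]|\le\sum_{k\ge 1}|\E_{\Pp_{W_k\otimes Z}}[F(W_k,Z)-F(W_{k-1},Z)]-\E_{\Pp_{W_k,Z}}[F(W_k,Z)-F(W_{k-1},Z)]|$, just as in \eqref{eq:telescope}.

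Next, for each fixed $k$ and $\Pp_{W_k}$-a.e.\ $w_k$, I would use Lemma~\ref{lemma:supp} to get $\Supp(\Pp_{Z|W_k=w_k})\subseteq\Supp(\Pp_Z)$, note the required integrability of $F(w_k,\cdot)-F(w_{k-1},\cdot)$ under $\Pp_{Z|W_k=w_k}$ from $\E_{\Pp_Z}[\sup_{w}|F(w,Z)|]<\infty$, and apply the $\Rr_\df(\xi_{w_k}\|w_k-w_{k-1}\|)$ regularity hypothesis with $\mu=\Pp_Z$, $\nu=\Pp_{Z|W_k=w_k}$ to obtain
$$|\E_{\Pp_Z}[F(w_k,Z)-F(w_{k-1},Z)]-\E_{\Pp_{Z|W_k=w_k}}[F(w_k,Z)-F(w_{k-1},Z)]|\le \xi_{w_k}\|w_k-w_{k-1}\|\,\df(\Pp_Z,\Pp_{Z|W_k=w_k})\,.$$
Since $\|w_k-w_{k-1}\|\le\ee_{k-1}$, the right-hand side is bounded by $\ee_{k-1}\,\xi_{w_k}\,\df(\Pp_Z,\Pp_{Z|W_k=w_k})$.

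Then I would take expectations with respect to $\Pp_W$ (equivalently $\Pp_{W_k}$) and sum over $k$, and here is where H\"older's inequality enters: for each $k$,
$$\E_{\Pp_W}\big[\xi_{W_k}\,\df(\Pp_Z,\Pp_{Z|W_k})\big]\le \|\xi_{W_k}\|_{L^p(\Pp_W)}\,\E_{\Pp_W}[\df(\Pp_Z,\Pp_{Z|W_k})^r]^{1/r}\le \xi\,\E_{\Pp_W}[\df(\Pp_Z,\Pp_{Z|W_k})^r]^{1/r}\,,$$
using the hypothesis $\|\xi_{W_k}\|_{L^p(\Pp_W)}\le\xi$ and the conjugate exponent relation $1/p+1/r=1$ (with $1/\infty=0$, so that $p=\infty$ recovers the uniform case where $r=1$ and this reduces to Jensen). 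Assembling the pieces yields the stated bound.

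The main obstacle, as usual in these arguments, is purely measure-theoretic bookkeeping rather than any genuine difficulty: one must be careful that the expectation $\E_{\Pp_W}[\df(\Pp_Z,\Pp_{Z|W_k})^r]$ is well-defined (measurability of $w\mapsto\df(\Pp_Z,\Pp_{Z|W_k=w})$ follows from the Borel measurability of $(\mu,\nu)\mapsto\df(\mu,\nu)$ and of $w\mapsto\Pp_{Z|W_k=w}$, as discussed in Appendix~\ref{app:tech}), that $w\mapsto\xi_w$ is measurable so the H\"older step is legitimate, and that the passage to the conditional regularity inequality holds for $\Pp_{W_k}$-a.e.\ $w_k$ simultaneously for all $k$ (a countable intersection of full-measure sets). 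None of these present real resistance, and the $L^1$-convergence/telescoping skeleton is identical to that of Theorem~\ref{thm:genchapp}; the only substantive new ingredient is swapping Jensen for H\"older, exactly as in the proof of Theorem~\ref{thm:genstdextdf}.
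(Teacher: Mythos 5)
Your proposal is correct and matches the paper's proof, which likewise reduces to the argument of Theorem~\ref{thm:genchapp} with the pointwise bound $\xi_{w_k}\ee_{k-1}\df(\Pp_Z,\Pp_{Z|W_k=w_k})$ replacing the uniform one, and then concludes by H\"older's inequality exactly as you describe. The measure-theoretic points you flag (measurability of $w\mapsto\xi_w$ and of $w\mapsto\df(\Pp_Z,\Pp_{Z|W_k=w})$, and the a.s.\ validity of the conditional regularity inequality for each $k$) are handled the same way as in the paper's appendix.
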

\begin{proof}
The proof is essentially analogous to the one of Theorem \ref{thm:genchapp}, but instead of \eqref{eq:finqui} now we have
\begin{align*}
    \big|\E_{\Pp_{Z|W_k=w_k}}[F(w_k, Z)-F(w_{k-1}, Z)]-\E_{\Pp_Z}&[F(w_k, Z)-F(w_{k-1}, Z)]\big|\\
    &\leq \xi_{w_k}\ee_{k-1}\df(\Pp_Z, \Pp_{Z|W_k=w_k})\,.
\end{align*}
The conclusion follows easily by H\"older's inequality. 
\end{proof}

\section{PAC-Bayesian bounds}\label{app:PAC}
The next result \citep{catonipreprint} is a classical PAC-Bayesian bound. For the sake of completeness we give here a standard proof.
\begin{manualprop}{\ref{prop:PAC}}
Assume that $\ell$ is bounded in $[-\xi, \xi]$. Let $\Pp^*_W$ be a fixed probability measure on $\W$, chosen independently of $S$. Fix $\delta\in(0,1)$ and $\lambda>0$. Then, with probability $\Pp_S=\Pp_X^{\otimes m}$ larger than $1-\delta$ on the draw of $S$, we have
$$\E_{\Pp_{W|S}}[g_S(W)]\leq \frac{\xi}{\sqrt{2m}}\left(\lambda + \frac{\KL(\Pp_{W|S}\|\Pp^*_W)+\log\frac{1}{\delta}}{\lambda}\right)\,.$$
\end{manualprop}
\begin{proof}
We define $\Pp^*_{W\otimes S}=\Pp^*_W\otimes\Pp_S$. Fix $\lambda>0$. Using the Donsker-Varadhan representation of the $\KL$ divergence \citep{donskerVaradhan}, we have that for all $s\in\Sc$
$$\E_{\Pp_{W|S=s}}[g_s(W)]\leq \frac{\xi}{\sqrt{2m}\lambda}\left(\KL(\Pp_{W|S=s}\|\Pp^*_{W}) + \log\E_{\Pp^*_{W}}[e^{\sqrt{2m}\lambda g_s(W)/\xi}]\right)\,.$$
By Markov's inequality, we have that
$$\Pp_S\left(\E_{\Pp^*_{W}}[e^{\sqrt{2m}\lambda g_S(W)/\xi}]\leq \frac{1}{\delta}\,\E_{\Pp^*_{W\otimes S}}[e^{\sqrt{2m}\lambda g_S(W)/\xi}]\right)\geq 1-\delta\,.$$
Now, for all $w\in\W$ we have that $\ell(w, X)\subset[-\xi,\xi]$ is $\xi$-SG. In particular $g_S(w)$ is $(\xi/\sqrt m)$-SG, as $\Pp_S=\Pp_X^{\otimes m}$. Since $\E_{\Pp_S}[g_S(w)]=0$ we have 
$$\log\E_{\Pp^*_{W\otimes S}}[e^{\sqrt{2m}\lambda g_S(W)/\xi}]\leq \lambda^2\,,$$
from which we conclude.
\end{proof}
Note that although Proposition \ref{prop:PAC} is valid for all $\lambda>0$, we cannot optimise the final bound wrt $\lambda$. Indeed, we have that such a choice of $\lambda$ would depend on $\KL(\Pp_{W|S}, \Pp^*_W)$ and hence on the particular sample used. A possible strategy to overcome this issue consists in selecting a few possible values $\lambda_1,\dots, \lambda_t$ for $\lambda$, before drawing the sample $S$. Then, by mean of a union bound, one can say that with probability $\Pp_S$ higher than $1-t\delta$ the generalisation is bounded by the best PAC-Bayesian bound among the $t$ ones obtained. 
\begin{manualprop}{\ref{prop:chPAC}}
Under the assumptions \ref{ass}, consider a $\{\ee_k\}$-refining sequence of nets on $\W$ and assume that $\nabla_w\ell$ is bounded in $[-\xi, \xi]$. Let $\Pp^*_W$ be a fixed probability measure on $\W$, chosen independently of $S$. Fix two sequences $\{\delta_k\}_{k\in\mathbb N}$ and $\{\lambda_k\}_{k\in\mathbb N}$, such that $\delta_k\in(0,1)$ and $\lambda_k>0$ for all $k$. Assume that $\sum_{k\in\Nn}\delta_k=\delta \in(0,1)$. Then, with probability $\Pp_S=\Pp_X^{\otimes m}$ larger than $1-\delta$ on the draw of $S$, we have
\begin{align*}
    \E_{\Pp_{W|S}}[g_S(W)]\leq\frac{\xi}{\sqrt{2m}}\left( 2\,\sqrt{\log\frac{1}{\delta_0}}+\sum_{k=1}^\infty\ee_{k-1} \left(\lambda_k + \frac{\KL(\Pp_{W_k|S}\|\Pp^*_{W_k})+\log\frac{1}{\delta_k}}{\lambda_k}\right)\right)\,.
\end{align*}
\end{manualprop}
\begin{proof}
By the assumptions in \ref{ass}, $w\mapsto\Ell_s(w)$ is uniformly continuous on $\W$, $\Pp_S$-a.s. In particular, $\sup_{w\in\W}|\Ell_s(w)-\Ell_s(w_k)|\to 0$ as $k\to\infty$, $\Pp_S$-a.s. As a consequence $\sup_{w\in\W}|\E_{\Pp_S}[\Ell_S(w)]-\E_{\Pp_S}[\Ell_S(w_k)]|\to 0$ ($\Pp_S$-a.s.) since the loss is uniformly bounded. It follows that, for $\Pp_S$-almost every $s$,
$$\lim_{k\to\infty}\E_{\Pp_{W|S=s}}[|g_s(W)-g_s(W_k)|] = 0\,.$$
Hence, recalling that $W_0=w_0$, we have that, $\Pp_S$-a.s.,
$$\E_{\Pp_{W|S=s}}[g_s(W)] = g_s(w_0) + \sum_{k=1}^\infty \E_{\Pp_{W_k|S=s}}[g_s(W_k)-g_s(W_{k-1})]\,.$$
On the one hand, by Hoeffding's inequality \citep{Hoeffding}, the first term in the RHS can be upper-bounded with high probability, as
$$\Pp_S\left(g_S(w_0)> \xi\sqrt{\tfrac{2}{m}\log\tfrac{1}{\delta_0}}\right)\leq \delta_0\,.$$
On the other hand, proceeding as in the proof of Proposition \ref{prop:PAC}, for each term in the telescopic sum we can write, for $\Pp_S$-almost every $s$,
\begin{align*}
   \E_{\Pp_{W_k|S=s}}&[g_s(W_k)-g_s(W_{k-1})]\\
   &\leq \frac{\ee_{k-1}\xi}{\sqrt{2m}\lambda_k}\left(\KL(\Pp_{W_k|S=s}\|\Pp^*_{W_k}) + \log\E_{\Pp^*_{W_k}}[e^{\sqrt{2m}\lambda_k(g_s(W_k)-g_s(W_{k-1}))/(\ee_{k-1}\xi)}]\right)\,.
\end{align*}
Now, $\nabla_w\ell\subset[-\xi,\xi]$, and hence $\nabla_w\ell(w, X)$ is $\xi$-SG, for all $w\in\W$. By Lemma \ref{lem:equivSGP}, we have that $\{\ell(w, X)\}_{w\in\W}$ is a $\xi$-SG process. In particular, $\{g_S(w)\}_{w\in\W}$ is a centred $(\xi/\sqrt m)$-SG process, as $\Pp_S = \Pp_X^{\otimes m}$. We have thus obtained that 
$$\log\E_{\Pp^*_{W_k\otimes S}}[e^{\sqrt{2m}\lambda_k(g_S(W_k)-g_S(W_{k-1}))/(\ee_{k-1}\xi)}]\leq \lambda_k^2\,.$$
By Markov's inequality we have that 
$$\Pp_S\left(\E_{\Pp_{W_k|S}}[g_s(W_k)-g_s(W_{k-1})]>\frac{\ee_{k-1}\xi}{\sqrt{2m}} \left(\lambda_k + \frac{\KL(\Pp_{W_k|S}\|\Pp^*_{W_k})+\log\frac{1}{\delta_k}}{\lambda_k}\right)\right)\leq \delta_k\,.$$
We conclude by a union bound. 
\end{proof}
As for the standard PAC-Bayesian result, here as well we cannot directly optimise the parameters $\lambda_k$. Clearly one can again proceed by fixing few possible values for each parameter and then use a union argument to select the best bound. However, in this case this might become particularly hard, due to the large number of parameters. A possible way to address this problem consists in doing some optimisation that does not rely on the value of $\KL(\Pp_{W_k|S}, \Pp^*_{W_k})$, to reduce the number of parameters. For instance, we can proceed in the following way. One might suppose that $\KL(\Pp_{W_k|S}, \Pp^*_{W_k})$ increases linearly with $k$. Note that this is for instance the case if the algorithm is deterministic and $\Pp^*_{W_k}$ is uniform. So, let us say that we believe that $\KL(\Pp_{W_k|S}, \Pp^*_{W_k})=\alpha k$, for some $\alpha>0$. Then we are allowed to optimise all the $\lambda_k$ in the chained PAC-Bayesian bound where $\KL(\Pp_{W_k|S}, \Pp^*_{W_k})$ is replaced by $\alpha k$. This leads to 
\begin{align*}
    \E_{\Pp_{W|S}}[g_S(W)]\leq\frac{\xi}{\sqrt{2m}}\left( 2\,\sqrt{\log\frac{1}{\delta_0}}+\sum_{k=1}^\infty\ee_{k-1} \,\frac{\KL(\Pp_{W_k|S}\|\Pp^*_{W_k})+\log\frac{1}{\delta_k}}{\sqrt{\alpha k+\log\frac{1}{\delta_k}}}\right)\,,
\end{align*}
which is a valid bound, holding with probability higher than $1-\delta$, for all $\alpha>0$. Now we have essentially replaced the $\lambda_k$ with a single parameter $\alpha$. Again we cannot optimise directly wrt $\alpha$, but we can proceed as for the unchained bound, finding a good $\alpha$ by means of a union bound. 

As a final remark note that one might want to optimise in terms of $\delta_k$ as well. This should be possible, but the constraint $\sum_k\delta_k=\delta$ and the non-convexity of the problem can make the minimisation quite hard in practice. Yet, one can probably resort to numerical methods. 
\section{Omitted proofs of Section \ref{sec:comp}}
\begin{manuallemma}{\ref{lemma:chunch}}
Under the assumptions \ref{ass}, let $\ee_0$ and $w_0$ be such that $\|w-w_0\|\leq\ee_0$, $\forall w\in \W$. Assume that $s\mapsto\nabla_w \Ell_s(w)$ has regularity $\Rr_\df(\xi)$ wrt $\Pp_S$, $\forall w\in\W$, and define $\hat \Ell_s(w) = \Ell_s(w)-\Ell_s(w_0)$ and $\hat \G=\E_{W\otimes S}[\hat \Ell_S(W)]-\E_{W, S}[\hat \Ell_S(W)]$. Then, $\hat \G = \G$, and $s\mapsto\hat \Ell_s(w)$ has regularity $\Rr_\df(\ee_0\xi)$, wrt $\Pp_S$ and $\forall w\in\W$.
\end{manuallemma}
\begin{proof}
The fact that $\E_{\Pp_{W, S}}[\Ell_S(w_0)] = \E_{\Pp_{W\otimes S}}[\Ell_S(w_0)]$ implies that $\hat\G=\G$. The regularity of $s\mapsto \hat\Ell_s(w)$ is a direct consequence of Lemma \ref{lemma:superreg}.
\end{proof}
\begin{comment}
\begin{manuallemma}{\ref{lemma:chunch}}
Under the assumptions \ref{ass}, assume that $s\mapsto\nabla_w \Ell_s(w)$ has regularity $\Rr_\df(\xi)$ wrt $\Pp_S$, $\forall w\in\W$. Then
$$|\G|\leq \ee_0\xi\E_{\Pp_W}[\df(\Pp_S, \Pp_{S|W})]\,.$$
\end{manuallemma}
\begin{proof}
We notice that $\G=\E_{\Pp_{W, S}}[g_S(W)-g_S(w_0)]$, as $\E_{\Pp_{S, S}}[g_S(w_0)]=0$. Then, reasoning as in the proof of Theorem \ref{thm:genchA}, first we can write
$$\G = \lim_{k\to\infty}\E_{\Pp_{W_k, S}}[g_S(W_k)-g_S(w_0)]\,,$$
and then we find that for all $k\in\mathbb N$
$$\E_{\Pp_{W_k, S}}[g_S(W_k)-g_S(w_0)]\leq \ee_0 \E_{\Pp_W}[\df(\Pp_S, \Pp_{S|W_k})]\,.$$
We conclude by Lemma \ref{lemma:wassdataprocineq}.
\end{proof}
\end{comment}
\begin{manuallemma}{\ref{lemma:wassdataprocineq}}
Assume that $\nu\mapsto\df(\Pp_S, \nu)$ is convex. For any $\{\ee_k\}$-refining sequence of nets on $\W$, the sequence $\{\E_{\Pp_W}[\df(\Pp_S, \Pp_{S|W_k})]\}_{k\in\Nn}$ is non-decreasing and, $\forall k\in\Nn$, we have $$\E_{\Pp_W}[\df(\Pp_S, \Pp_{S|W_k})]\leq\E_{\Pp_W}[\df(\Pp_S, \Pp_{S|W})]\,.$$
\end{manuallemma}
\begin{proof}
Fix $k\geq 0$ and $w_k\in\W_k$ such that $\Pp_W(W_k=w_k)>0$. For any measurable set $U$ on $\Sc$ , we have
$$\Pp_{S|W_k=w_k}(U) = \int_{\W}\Pp_{S|W=w}(U)\dd\Pp_{W|W_k=w_k}(w)\,,$$
where $\dd\Pp_{W|W_k=w_k}(w) = \frac{\dd\Pp_W(w)}{\Pp_W(W_k=w_k)}$ if $w\in\pi_k^{-1}(w_k)$, and $0$ otherwise. Hence, we can write 
$$\df(\Pp_S, \Pp_{S|W_k=w_k}) = \df\left(\Pp_S, \int_{\W}\Pp_{S|W=w}(\cdot)\dd\Pp_{W|W_k=w_k}(w)\right)\,.$$
Since $\QQ\mapsto\df(\Pp_S, \QQ)$ is a convex function, we can use Jensen's inequality to obtain
$$\df(\Pp_S, \Pp_{S|W_k=w_k})\leq \int_{\W}\df(\Pp_S, \Pp_{S|W=w})\dd\Pp_{W|W_k=w_k}(w)\,.$$
By taking the expectation wrt $\Pp_{W_k}$ we conclude that 
$$\E_{\Pp_{W}}[\df(\Pp_S, \Pp_{S|W_k})]\leq \E_{\Pp_W}[\df(\Pp_S, \Pp_{S|W})]\,.$$
Now, for any $k'> k$, the same proof can be used to show that
$$\E_{\Pp_{W}}[\df(\Pp_S, \Pp_{S|W_k})]\leq \E_{\Pp_{W}}[\df(\Pp_S, \Pp_{S|W_{k'}})]\,,$$
by simply replacing $\W$ with $\W_{k'}$ and $\Pp_W$ with $\Pp_{W_{k'}}$. 
\end{proof}
\section{Toy Models}\label{app:toy}
\subsection{\nameref{ex:unif}}\label{app:unif}
Let $\W=\X=[-1,1]$, $\ell(w, x)=\frac{1}{2}(w-x)^2$, and $\ee_k=2^{-k}$, for $k\in\Nn$. We can find mappings $\pi_k$ that define a $\{\ee_k\}$-refining sequence of nets, with $\W_k = \{2^{1-k}j\;:\;j\in[-2^{k-1}:2^{k-1}]\}$, where $[a:b]=[a, b]\cap\mathbb Z$. Fix $k^\star\in\Nn$ and define $\theta=2^{-k^\star}$. Let $X$ be uniformly distributed on $(-\theta, \theta)$, that is $X\sim U_{(-\theta, \theta)}$. We choose an algorithm that, given $x$, selects the $w$ minimising $\ell(w, x)$. This means that $\Pp_{W|X=x}=\delta_x$, where $\delta_x$ is the Dirac measure on $x$. Note that $\nabla_w\ell$ is $1$-Lipschitz and $\ell$ is $2$-Lipschitz (on $\X$, uniformly on $\W$). However, thanks to Lemma \ref{lemma:chunch} we know that we can consider the loss $\tilde\ell(w, x) =\ell(w,x)-\tfrac{x^2}{2}$, which does not affect the algorithm, leads to the same generalisation, and is $1$-Lipschitz. The marginal distribution of $W$ is $W\sim U_{(-\theta, \theta)}$. Moreover, we have $\E_{\Pp_{W, X}}[\ell(W, X)]=0$ and $\E_{\Pp_X}[\ell(w, X)] = \frac{1}{2}\left(w^2+\frac{\theta^2}{3}\right)$. So,
$$\G =\E_{\Pp_{W\otimes X}}[\ell(W, X)]-\E_{\Pp_{W, X}}[\ell(W, X)] = \E_{\Pp_W}\left[\frac{1}{2}\left(W^2+\frac{\theta^2}{3}\right)\right] = \frac{\theta^2}{3}\,.$$

Recall that we denote as $\Bl$ the bound in Proposition \ref{prop:Wass} and as $\Bg$ the chained bound from Proposition \ref{prop:Wassch}. We denote as $\Btl$ the unchained bound obtained using $\tilde\ell$ instead of $\ell$. Clearly we have $\Btl=\Bl/2$. We will now evaluate $\Btl$ and $\Bg$. As a starting point, note that the $1$-Wasserstein distance between two uniforms measures, on the intervals $(A, B)$ and $(a, b)\subseteq (A, B)$, is given by
\begin{align*}
    \Wass(U_{(A, B)}, U_{(a, b)}) = \frac{(A-a)^2+(B-b)^2}{2((B-A)-(b-a))}\,.
\end{align*}
Note that choosing $a= b\in[A, B]$ in the RHS above gives the $1$-Wasserstein distance between a uniform distribution and a Dirac measure. Now, let $a=b=w$, $A=-\theta$ and $B=\theta$. We find that
$$\Wass(\Pp_X, \Pp_{X|W=w}) = \frac{\theta}{2}\left(1+\frac{w^2}{\theta^2}\right)\,.$$
It follows that
$$\Btl=\E_{\Pp_W}[\Wass(\Pp_X, \Pp_{X|W=w})] = \frac{2}{3}\,\theta \,.$$
Comparing $\G$ and $\Btl$, we realize that the standard Wasserstein bound becomes loose for small $\theta$.

Now, fix $k\in\Nn$. If $k\leq k^\star$, then $\pi_k(w)=w_0=0$ with probability $1$. In particular, we have that $W_k\indep X$ and hence $\Wass(\Pp_X, \Pp_{X|W_k})=0$. We will hence focus on the case $k>k^\star$. Let $k=k^\star+\kk$. Now, notice that $\pi_k$ defines $2^\kk+1$ intervals in $(-\theta, \theta)$. We will denote them as $I_j$, where $j\in[-2^{k'-1}:2^{k'-1}]$. We have $I_{-2^{\kk-1}} = (-\frac{1}{2^{k^\star}}, -\frac{2^{\kk}-1}{2^k})$ and $I_{2^{\kk-1}} = (\frac{2^\kk-1}{2^k}, \frac{1}{2^{k^\star}})$, while, for $j\in[-2^{\kk-1}+1:2^{\kk-1}-1]$, $I_j = (\tfrac{2j-1}{2^k}, \tfrac{2j+1}{2^k})$. Note that the two outer intervals will have probability $\Pp_W(W\in I_{-2^{\kk-1}})=\Pp_W(W\in I_{-2^{\kk-1}})= 2^{-(\kk+1)}$, while for the inner intervals, we have $\Pp_W(W\in I_j)=2^{-\kk}$. 

Now, for $j\in[-2^{\kk-1}+1:2^{\kk-1}-1]$ we define $a_j = \tfrac{2j-1}{2^k}$ and $b_j = \tfrac{2j+1}{2^k}$. Note that for all these inner intervals we have $b_j-a_j = 2^{1-k}$, $(b_j-a_j)/\theta = 2^{1-\kk}$, $a_j/\theta = (2j-1)/2^\kk$, and $b_j/\theta = (2j+1)/2^\kk$. So, the contribution brought by the inner intervals to $\E_{\Pp_W}[\Wass(\Pp_X, \Pp_{X|W_k})]$ is given by 
\begin{align*}
    E_1=\frac{\theta}{2^\kk}\sum_{j=-(2^{\kk-1}-1)}^{2^{\kk-1}-1} &\frac{\left(1+\frac{2j-1}{2^\kk}\right)^2+\left(-1+\frac{2j+1}{2^\kk}\right)^2}{4(1-2^{-\kk})} \\
    &=\frac{\theta}{6(1-2^{-\kk})}(4 -12\times 2^{-\kk} + 11\times 2^{-2\kk}-3\times 2^{-3\kk}) \,.
\end{align*}
On the other hand, the contribution of the two outer intervals ($j=\pm 2^{\kk-1}$) is given by 
$$E_2=2\times\frac{\theta}{2^{\kk+1}}\frac{1}{2}\frac{\left(2-\frac{1}{2^\kk}\right)^2}{\left(2-\frac{1}{2^\kk}\right)}  = \frac{\theta}{2^{\kk+1}}\left(2-\frac{1}{2^\kk}\right) = \theta\left(2^{-\kk}-\frac{1}{2}\times 2^{-2\kk}\right) \,.$$
We conclude that, for $\kk\geq 1$ and $k=k^\star+\kk$, we have
\begin{align*}
    \E_{\Pp_W}&[\Wass(\Pp_X, \Pp_{X|W_k})] = E_1+E_2\\
    &=\frac{\theta}{6(1-2^{-\kk})}(4 -12\times 2^{-\kk} + 11\times 2^{-2\kk}-3\times 2^{-3\kk}) + \theta\left(2^{-\kk}-\frac{1}{2}\times 2^{-2\kk}\right)\,.
\end{align*}
We can finally compute $\Bg$, as we have
\begin{align*}
    \Bg & = \sum_{k=1}^\infty \frac{1}{2^{k-1}}\E_{W_k}[\Wass(\Pp_X, \Pp_{X|W_k})] \\
        & = \frac{1}{2^{k^\star}}\sum_{k'=1}^\infty\frac{1}{2^{\kk-1}}\E_{W_{k^\star+k'}}[\Wass(\Pp_X, \Pp_{X|W_{k^\star+k'}})]  = \frac{247}{105}\theta^2 \simeq 2.35\,\theta^2 \,.
\end{align*}
Now, it is interesting to compare these results with the CMI bound. For this purpose, we need to compute $I(W_k;X)$ for a fixed $k \in \Nn$. Similar to the chained Wasserstein bound, for $k \leq k^\star$ we have that $I(W_k;X) = 0$ as $W_k \indep X$. Therefore, we focus on $k = k^\star + \kk$ where $\kk \geq 1$. First, notice that the $\KL$ divergence between two uniform measures, on the intervals $(A, B)$ and $(a, b)\subseteq (A, B)$, is given by
\begin{align*}
    \KL(U_{(a, b)} \| U_{(A, B)}) = \log \frac{B-A}{b-a}\,.
\end{align*}
As a consequence, we have that for the inner intervals $I_j$ (with $j\in [-2^{\kk-1}+1:2^{\kk-1}-1]$)
$$\KL(\Pp_{X|W_k\in I_j}\|\Pp_X) = \log(2^{k-k^\star}) = \kk\log 2\,,$$
while for the two outer intervals we have
$$\KL(\Pp_{X|W_k\in I_{-2^{\kk-1}}}\|\Pp_X) = \KL(\Pp_{X|W_k\in I_{-2^{\kk-1}}}\|\Pp_X) = \log(2^{k+1-k^\star}) = (\kk+1)\log 2\,.$$
Taking the expectation wrt $\Pp_W$ we obtain
\begin{align*}
    I(W_k;X) &= \E_{\Pp_W}[\KL(\Pp_{X|W_k}\|\Pp_{X})] = \sum_{j = -2^{\kk-1}}^{2^{\kk-1}}\Pp_W(W_k\in I_j)\KL(\Pp_{X|W_k\in I_j}\|\Pp_X)\\
    &=2\times 2^{-(\kk+1)}(\kk+1)\log 2 + (1-2\times 2^{-(\kk+1)})\kk\log 2 = (\kk + 2^{-\kk})\log2\,.
\end{align*}
Therefore, the CMI bound is given by
\begin{align*}
    \Bcmi &= \sum_{k=1}^\infty \frac{1}{2^{k-1}} \sqrt{2 I(W_k; X)}\\
    & = \frac{1}{2^{k^\star}}\sum_{\kk=1}^\infty\frac{1}{2^{\kk-1}} \sqrt{2(\kk + 2^{-\kk})\log2} \simeq 3.50\,\theta.
\end{align*}
For $\theta\to 0$ (\textit{i.e.}, $k^\star\to\infty$) $\Bg$ is much tighter than $\Bl$ and $\Bcmi$, as it captures the asymptotic behaviour of $\G=\theta^2/3$.

Finally, let us consider the case of a random sample $S=\{X_1,\dots,X_m\}$, for $m>1$. We denote as $\B_{\nabla\Ell}$ the chained Wasserstein bound, and as $\B_\Ell$ the unchained one. Minimising $\Ell_S$ leads to
$$W = \frac{1}{m}\sum_{i=1}^m X_i\,.$$
Since each $X_i$ lies in $(-\theta,\theta)$ with probability $1$, in particular we have that
$$\Pp_W(W\in(-\theta,\theta)) = 1\,.$$
So, for $k\leq k^\star$, $W_k$ is deterministic and hence $S\indep W_k$. We get 
$$\B_{\nabla\Ell} = \frac{1}{\sqrt m}\sum_{k=1}^\infty \frac{1}{2^{k-1}}\E_{\Pp_W}[\Wass(\Pp_S,\Pp_{S|W_k})] = \frac{1}{2^{k^\star}\sqrt m}\sum_{\kk=1}^\infty\frac{1}{2^{\kk-1}}\E_{\Pp_W}[\Wass(\Pp_S,\Pp_{S|W_k})] \leq 2\theta\B_\Ell\,,$$
where we used Lemma \ref{lemma:wassdataprocineq} and the fact that $\theta = 2^{-k^\star}$. We have thus seen that even for large samples we still have that for $\theta\to 0$
$$\frac{\B_{\nabla\Ell}}{\B_\Ell} = O(\theta)\,.$$

\subsubsection{Higher dimensional variant for a generic loss}\label{app:highdim}
We discuss now a higher dimensional version of the above toy model. Fix a positive integer integer $d\geq 1$. Let $\W=\X=[-1, 1]^d$. Fix an integer $k^\star\geq 1$ and define $\theta=2^{-k^\star}$. We will assume that the choice of $k^\star$ scales with $d$ so that $\theta=\Theta(d^{-\alpha})$ for some $\alpha>0$. Let $X$ be uniformly distributed on $R_d=(\theta,\theta)^{d-1}\times(-1,1)$. For $k\in\Nn$ we let $\ee_k = 2^{-k}\sqrt d$ (the rescaling $\sqrt d$ is necessary as now $\W$ has diameter $2\sqrt d$) and we consider a $\{\ee_k\}$-refining sequence of nets $\W_k = {\tilde \W_k}^{\otimes d}$, where $\tilde \W_k = \{2^{1-k}j\;:\;j\in[-2^{k-1}:2^{k-1}]\}$. We consider a generic loss function $\ell$ satisfying the assumptions in \ref{ass}, and such that $\nabla_w\ell$ is $1$-Lipschitz in $\X$, uniformly in $\W$. From Lemma \ref{lemma:chunch} we know that we can find a loss $\tilde\ell$ which is $\sqrt d$-Lipschitz (as $\ee_0=\sqrt d$), and in general we cannot assume the Lipschitz constant to be smaller. As in the 1D example, we assume that we have an algorithm that given $x$, selects $w=x$. This means that $\Pp_{W|X=x}=\delta_x$, where $\delta_x$ is the Dirac measure on $x$, and so the marginal distribution of $W$ is $U_{R_d}$. 

As we are interested in evaluating the Wasserstein bounds, we will need to compute quantities like $\Wass(\Pp_X, \Pp_{X|W=w})$ and $\Wass(\Pp_X, \Pp_{X|W_k=w_k})$. This can be a pretty hard task if we use the standard $2$-norm on $\R^d$ as the distance on $\X$. To give an idea of the challenge, note that already in dimension $d=2$ computing the expected distance between two uniform distributions on rectangles is far from being trivial \citep{marsaglia}. For this reason, everything is much easier to compute if we endow $\X$ with the distance given by the $1$-norm on $\R^d$, that is
$$\hat d_\X(x, x') = \sum_{i=1}^d |x_i-x_i'|\,,$$
where $x_i$ and $x_i'$ are the components of $x$ and $x'$. We will denote the Wasserstein distances computed in this way as $\hat\Wass$, and the bounds based on this distance as $\hat \B$. Note, however, that we always have that $\Wass\leq\hat\Wass$, where $\Wass$ is the Wasserstein distance with cost
$$d_\X(x, x') = \|x-x'\|\,,$$
as $d_\X(x, x')\leq \hat d_\X(x,x')$ for all $x, x'$. Moreover, when $x$ and $x'$ are in $R_d$, we have that $\hat d_\X(x, x')- d_\X(x, x')=O(\theta\sqrt{d-1})$. For this reason, since $\theta=\Theta(d^{-\alpha})$, we obtain that $\Blh-\Bl = O(d^{1-\alpha})$ and $\Bgh-\Bg = O(d^{1-\alpha})$.

Now, for the Wasserstein distatence between $\Pp_X$ and $\Pp_{X|W=w}$, thanks to the fact that we are using $\hat d_\X$, we have
$$\hat\Wass(\Pp_X, \Pp_{X|W = w}) = \sum_{i=1}^d \Wassun(\Pp_{X_i}, \Pp_{X_i|W = w})\,,$$
where $\Wassun$ is the Wasserstein distance wrt the 1D distance $d_{\X_i}(x_i, x'_i)= |x_i-x'_i|$. Taking the expectation wrt $\Pp_W$ we find
$$\Btlh = \sqrt{d}\,\E_{\Pp_W}[\hat\Wass(\Pp_X, \Pp_{X|W})] = \frac{2\sqrt d}{3}\,(1-(d-1)\theta) = \Theta(d^{1/2}+d^{3/2-\alpha})\,.$$
Since $\Btlh-\Btl=O(d^{1-\alpha})$, if follows that
$$\Btl = \Theta(d^{1/2}+d^{3/2-\alpha})\,.$$

We are now left with the task of estimating $\Bg$. Fix $w_k$ such that $\Pp_W(W_k=w_k)>0$. Now, we have that $\Pp_{X|W_k=w_k}$ is the uniform distribution on the rectangle $\pi^{-1}_k(\W)$. Up to sets of measure $0$, we can find $d$ intervals $(a_i, b_i)$ such that
$$\pi^{-1}_k(\W) = (a_1, b_1)\times \dots\times (a_d, b_d)\,.$$
We can choose a transport plan that is composed of $d$ steps. First we squeeze all the probability mass from $\X$ to $(a_1, b_1)\times(-1, 1)^{d-1}$. Then we squeeze the second component, and so on. In this way we find that
$$\hat \Wass(\Pp_X, \Pp_{X|W_k=w_k})\leq \sum_{i=1}^d \Wassun(\Pp_{X_i}, \Pp_{X_i|W_k=w_k})\,.$$
On the other hand, we have that
\begin{align*}
    \hat\Wass(\Pp_X, &\Pp_{X|W_k=w_k}) = \inf_{\pi\in\Pi(\Pp_X, \Pp_{X|W_k=w_k})}\E_{(X,X')\sim\pi}[\hat d_\X(X, X')]\\
    &= \inf_{\pi\in\Pi(\Pp_X, \Pp_{X|W_k=w_k})}\sum_{i=1}^d \E_{(X,X')\sim\pi}[|X_i- X_i'|]\geq \sum_{i=1}^d \Wassun(\Pp_{X_i}, \Pp_{X_i|W_k=w_k})\,.
\end{align*}
We conclude that
$$\hat\Wass(\Pp_X, \Pp_{X|W_k=w_k}) = \sum_{i=1}^d \Wassun(\Pp_{X_i}, \Pp_{X_i|W_k=w_k})\,.$$
We are now back at evaluating Wasserstein distances between uniform distributions on invervals. Proceeding as in the 1D version of the toy example we find
\begin{align*}
    \Bgh & = \sum_{k=1}^\infty \ee_{k-1} \E_{\Pp_W}[\hat\Wass(\Pp_S, \Pp_{S|W_k})] = \frac{247\sqrt d}{105}\,(1+(d-1)\theta^2) = \Theta(d^{1/2}+d^{3/2-2\alpha})\,.
\end{align*}
Again, since $\Bgh - \Bg = O(d^{1-\alpha})$ we have that if $\alpha\geq 1/2$
$$\Bg = \Theta(d^{1/2})\,.$$
In general, as $\alpha$ might be in $(0, 1/2)$, we can say that (since $\Bg\leq\Bgh$)
$$\Bg = O(d^{1/2}+d^{3/2-2\alpha})\,.$$
Now, we want to compare the two bounds. We have
$$\frac{\Bg}{\Btl} = O\left(\frac{1+d^{1-2\alpha}}{1+d^{1-\alpha}}\right).$$
If $\alpha\in(0,1)$, we have that this ratio vanishes for $d\to\infty$, meaning that the chained bounds becomes much tighter than its unchained counterpart. On the other hand, for $\alpha>1$ the ratio is of order $1$.

\subsection{\nameref{ex:gauss}}\label{app:gauss}
Let $\W=\{w\in\R^2:\|w\|=1\}$ and $\X=\R^2$. Fix $a>0$ and let $X\sim\N(\al, \Id)$, a multivariate normal distribution centered in $\al=(a, 0)$, with covariance matrix given by the identity. Let the loss be $\ell(w, x) = -w\cdot x$. As in \nameref{ex:unif}, the algorithm selects the $w$ minimising the loss. In practice, we are trying to find the direction of the mean of $X$, which is $(1, 0)$. Let $\ee_k=4/2^k$ (for $k\in\Nn$), $w_0=(1, 0)$, and $\W_k=\{w=(\cos\frac{2\pi j}{2^k}, \sin\frac{2\pi j}{2^k}\phi): j\in[-2^{k-1}: 2^{k-1}-1]\}$ for $k\geq 1$. We can easily define projections $\pi_k$ that make $\{\W_k\}_{k\in\Nn}$ a $\{\ee_k\}$-sequence of refining nets. With no difficulty one can verify that $\ell$ is $1$-Lipschitz in $\X$, $\forall w\in\W$. Since $\W$ is not convex, we want to use Theorem \ref{thm:genchapp} to give our chaining bound. It is easy to verify that $\ell$ satisfies the $\df$ regularity with $\df=\Wass$, as 
$$|(\ell(w, x)-\ell(w, x'))-(\ell(w', x)-\ell(w', x'))|\leq \|x-x'\|\|w-w'\|\,.$$
Since the values of $\G$, $\Bl$, and $\Bg$ depend on $a$, we will explicitly write them as functions of $a$. We will start by finding the exact expression of $|\G(a)|$. 

Denote as $\aaa$ the Cartesian axis on which $\al$ lies. For $v\in\R^2$, denote as $\alpha(v)$ be the angle between $v$ and $\aaa$. Since the learnt $w$ is parallel to $x$, we have that, with probability $1$, $\alpha(X)=\alpha(W)$. Thus, the distribution of $\alpha(W)$ is the distribution of the angle of an isotropic Gaussian centred in $\al$, whose density is given by \citep{Cooper2020ATF}
$$\rho_a(\alpha) = \frac{\phi(a)}{\sqrt{2\pi}}\left(1+\frac{a\cos\alpha\,\Phi(a\cos\alpha)}{\phi(a\cos\alpha)}\right)\,,$$
where $\phi(t)=\frac{1}{\sqrt 2\pi}e^{-t^2/2}$ and $\Phi(t)=\frac{1}{2}(1+\erf(t/\sqrt 2))$.

Now, we can actually give an explicit form for $|\G(a)|$. Indeed, we have
$$|\G(a)| = a\int_{-\pi}^{\pi]}(1-\cos\alpha)\rho_a(\alpha)\dd\alpha = a - \frac{\phi(a)}{\sqrt{2\pi}}\int_{-\pi}^\pi(a\cos\alpha)^2\frac{\Phi(a\cos\alpha)}{\phi(a\cos\alpha)}\dd \alpha\,.$$
Performing a change of variable we get
\begin{align*}
    \int_{-\pi}^\pi(a\cos\alpha)^2\frac{\Phi(a\cos\alpha)}{\phi(a\cos\alpha)}\dd \alpha = 2\int_{-a}^a&\frac{u^2}{\sqrt{a^2-u^2}}\frac{\Phi(u)}{\phi(u)}\dd u \\
    &= \frac{a^2 e^{a^2/4}\pi^{3/2}}{\sqrt 2}\left(I_0(a^2/a)+I_1(a^2/4)\right)\,,
\end{align*}
where $I_n(t)$ denotes the modified Bessel function of the first kind. So, we have
$$|\G(a)| = a\left(1-\frac{1}{2}\,\frac{I_0(a^2/a)+I_1(a^2/4)}{\sqrt\frac{2}{\pi}\frac{e^{a^2/4}}{a}}\right)\,.$$
We can now use the asymptotic expansions
\begin{align*}
    &I_0(a^2/4) = \sqrt\frac{2}{\pi}\frac{e^{a^2/4}}{a}\left(1 + \frac{1}{2 a^2} + O(a^{-4})\right)\,;\\
    &I_1(a^2/4) = \sqrt\frac{2}{\pi}\frac{e^{a^2/4}}{a}\left(1 - \frac{3}{2 a^2} + O(a^{-4})\right)\,,
\end{align*}
to get that
$$|\G(a)| = \frac{1}{2a} + O(a^{-3})\,.$$
%Note that the same asymptotic result could have been obtained by doing 
%$$|\G(a)| \simeq a\E[(1-\cos\hat\alpha)] = a(1-e^{-\frac{1}{2a^2}})\simeq \frac{1}{2a}\,.$$

Now, we want to show that, as $a\to\infty$, $\Bl$ is of order $1$. We start by computing a lower bound. For each $w$, let us consider a new set of Cartesian axes ($\uuu(w)$ and $\vvv(w)$), such that the angle between $\vvv(w)$ and $\aaa$ is $\alpha(w)$, and $\uuu(w)$ is the normal axis which contains the point $\al$. We choose the orientation of the axes so that in this reference framework we have $\al=(a\sin\alpha(w), 0)$. Since $X$, conditioned on $\W=w$, has support contained in the axis $\vvv(w)$, the Wasserstein distance $\W(\Pp_X, \Pp_{X|W=w})$ is lower-bounded by the transport cost of moving every point in $\R^2$ to the closest point on $\vvv(w)$. We thus have
\begin{align*}
    \Wass(\Pp_X, \Pp_{X|W=w})&\geq \frac{1}{2\pi}\int_{\R^2}|u|e^{-\frac{(u-a\sin\alpha(w))^2+v^2}{2}}\dd u\dd v \\
    &= \frac{1}{\sqrt{2\pi}}\int_\R |u|e^{-\frac{(u-a\sin\alpha(w))^2}{2}}\dd u \geq a|\sin\alpha(w)|\,.
\end{align*}
We can now explicitly compute a lower bound for $\Bl(a)$ by taking the expectation wrt $\Pp_W$. We get
$$\Bl(a)\geq\int_{-\pi}^\pi a|\sin\theta|\rho_a(\theta)\dd\theta = \sqrt{\frac{2}{\pi}}\erf\frac{a}{\sqrt 2}\,.$$
In particular, we have established that
$$\liminf_{a\to\infty}\Bl(a)\geq \sqrt{\frac{2}{\pi}}\,.$$

We can now look for an upper bound on $\Bl(a)$. Fixed $w$, we can consider the following transport plan from $\Pp_X$ to $\Pp_{X|W=w}$. First, we transport all the probability mass on $\vvv(w)$, then we arrange the mass on $\vvv(w)$ so as to reach the correct density. For the first step, notice that we are simply projecting $\Pp_X$ on $\vvv(w)$. It is not hard to realise that in this way the linear density obtained on $\vvv(w)$ is a centred standard normal distribution. The transport cost for this step is given by
$$\frac{1}{2\pi}\int_{\R^2}|u|e^{-\frac{(u-a\sin\alpha(w))^2+v^2}{2}}\dd u\dd v\leq 1+a|\sin\alpha(w)|\,.$$
Now let $V\sim\N(0, 1)$. The actual distribution of $\Pp_{X|W=w}$ on $\vvv(w)$ is actually given by $V$, conditioned on $V\geq -a\cos\alpha(w)$, as $-a\cos\alpha(w)$ is the coordinate on $\vvv(w)$ of the origin of the standard $\R^2$ Cartesian framework and so $\Pp_{X|W=w}$ has support $\{v\in\vvv(w):v\geq -a\cos\alpha(w)\}$. We can easily evaluate
$$\Wass(\Pp_V,\Pp_{V|V\geq -a\cos\alpha(w)})=\frac{\phi(a\cos\alpha(w))}{\Phi(a\cos\alpha(w))}\,.$$
So we have found that
$$\Wass(\Pp_X, \Pp_{X|W=w})\leq 1+a|\sin\alpha(w)| + \frac{\phi(a\cos\alpha(w))}{\Phi(a\cos\alpha(w))}\,.$$
Averaging on $w$ we get that 
$$\Bl(a) =\E_{\Pp_W}[\Wass(\Pp_X, \Pp_{X|W})]\leq 1 + \sqrt{\frac{2}{\pi}}\erf\frac{a}{\sqrt 2} + \frac{e^{-a^2}}{\Phi(-a)}\,,$$
and so
$$\limsup_{a\to\infty}\Bl(a)\leq 1 + \sqrt{\frac{2}{\pi}}\,.$$
In particular, we have found that $\Bl(a) = \Theta(1)$, for $a\to \infty$. 

We are now left with the task of evaluating $\Bg(a)$. Recall that, for each $k\geq 1$, we have $\W_k=\{w=(\cos\frac{2\pi j}{2^k}, \sin\frac{2\pi j}{2^k}): j\in[-2^{k-1}: 2^{k-1}-1]\}$ and $w_0=(1,0)$. Denote as $\U_k$ the partition on $\W$ induced by $\pi_k$, that is
$$\U_k=\{U=\pi_k^{-1}(w):\,w\in\W_k\}\,.$$
We can certainly suppose that each $U\in\U_k$ is the circular arc enclosed by two adjacent elements of $\W_k$. Now, let $\bar \U_k = \{U\in\U_k:\,(1, 0)\neq U\}$ and define $\theta_k = \pi/2^k$. Then, we have that, up to points of null measure, $\{W_k=(1,0)\}=\{|\alpha(W)|\leq\theta_k\}$. As a consequence
\begin{align}\label{eq:longeqex}
    \begin{split}
    \E_{\Pp_W}[&\Wass(\Pp_X, \Pp_{X|W_k})] = \textstyle\sum_{U\in\U_k}\E_{\Pp_W}[\Wass(\Pp_X, \Pp_{X|W\in U})\one_U(W)]\\ 
    &=\Pp_W(|\alpha(W)|\leq\theta_k)\Wass(\Pp_X, \Pp_{X|W_k=w_0})+\textstyle\sum_{U\in\overline\U_k}\Pp_W(W\in U)\Wass(\Pp_X, \Pp_{X|W\in U})\,,
    \end{split}
\end{align}
where $\one_U$ is the indicator function of the event $U$. We need now to upper-bound the terms of this sum. 

Let us define $Z=X-\al$. Clearly $Z\sim\N(0, \Id)$. Let $\rho$ be the density of $Z$, a centered standard multivariate normal, and $\tilde\rho$ be the density of $Z$ conditioned on $|\alpha(W)|\leq\theta_k$. We have that
$$\tilde\rho(z) = \begin{cases}0&\text{ if $|\alpha|>\theta$;}\\\rho(z)/\Pp_W(|\alpha(W)|\leq\theta_k)&\text{ otherwise.}\end{cases}$$
Let $\zeta=\|Z\|$ and note that $\zeta\sim\chi_2$, the Rayleigh distribution. We notice that $$\Wass(\Pp_X, \Pp_{X|W_k=w_0}) = \Wass(\Pp_Z, \Pp_{Z||\alpha(W)|\leq \theta_k})\,.$$ 
We can upper-bound this quantity by the transport cost of moving the mass $\Pp_W(|\alpha(W)|>\theta_k)$ away from $\{|\alpha(W)|>\theta_k\}$, bringing it all on $\al$, and finally redistributing it in the slice $\{|\alpha(W)|\leq\theta_k\}$, proportionally to $\tilde\rho$. We hence have
$$\Wass(\Pp_Z, \Pp_{Z||\alpha(W)|\leq\theta_k})\leq \Pp_W(|\alpha(W)|>\theta_k)(\Wass(\Pp_Z, \delta_\al)+\Wass(\delta_\al, \Pp_{Z||\alpha(W)|\leq\theta_k}))\,.$$
We can evaluate
$$\Wass(\Pp_Z, \delta_\al)=\int_{\R^2}\|z\|\rho(z)\dd z = \E_\zeta[\zeta] = \sqrt{\frac{\pi}{2}}\,.$$
On the other hand,
\begin{align*}
    \Wass(\delta_\al, \Pp_{Z||\alpha(W)|\leq\theta_k}) &=\int_{\R^2}\|z\|\tilde\rho(z)\dd z \\
    &\leq \frac{1}{\Pp_W(|\alpha(W)|\leq\theta_k)}\int_{\R^2}\|z\|\rho(z)\dd z = \frac{\sqrt{\pi/2}}{\Pp_W(|\alpha(W)|\leq\theta_k)}\,.
\end{align*}
Now notice that
$$\Pp_W(|\alpha(W)|\leq\theta_k)\geq\Pp_\zeta(\zeta\leq a\sin\theta_k) = F_\zeta(a\sin\theta_k)\,,$$
where $F_\zeta:u\mapsto 1-e^{u^2/2}$ is the cdf of $\zeta$.
As a consequence we eventually find
\begin{align*}
    \Wass(\Pp_X, &\Pp_{X|W_k=w_0})\\
    &\leq\Pp_W(|\alpha(W)|>\theta_k)\left(1+\frac{1}{\Pp_W(|\alpha(W)|\leq\theta_k)}\right)\sqrt{\frac{\pi}{2}}\leq \left(1+\frac{1}{F_\zeta(a\sin\theta_k)}\right)\sqrt{\frac{\pi}{2}}\,.
\end{align*}

Now, for $U\in\bar\U_k$, we have that $\{W\in U\}\subseteq \{|\alpha(W)|\geq\theta_k\}$. We can upper-bound $\Wass(\Pp_X, \Pp_{X|W\in U})=\Wass(\Pp_Z, \Pp_{Z|W\in U})$ as
$$\Wass(\Pp_Z, \Pp_{Z|W\in U})\leq\Wass(\Pp_Z, \delta_\al)+\Wass(\delta_\al, \Pp_{Z|W\in U})\,.$$
We have already computed $\Wass(\Pp_Z, \delta_\al)=\sqrt{\pi/2}$. For the other term we have
\begin{align*}
    \Wass(\delta_\al, \Pp_{Z|W\in U})&=\frac{1}{\Pp_W(W\in U)}\int_{(z+\al)/\|z+\al\|\in U}\|z\|\rho(z)\dd z\\
    &\leq \frac{1}{\Pp_W(W\in U)}\int_{\|z\|>a\sin\theta_k}\|z\|\rho(z)\dd z = \frac{1-F_\zeta(a\sin\theta_k)}{\Pp(W\in U)}\,.
\end{align*}
We have thus obtained that
$$\Wass(\Pp_X, \Pp_{X|W\in U})\leq \sqrt{\frac{\pi}{2}}+\frac{1-F_\zeta(a\sin\theta_k)}{\Pp(W\in U)}\,.$$

Going back to \eqref{eq:longeqex}, we can now write
\begin{equation}\label{eq:finally}\E_{\Pp_W}[\Wass(\Pp_X, \Pp_{X|W_k})]\leq(1-F_\zeta(a\sin\theta_k))\left(\left(2 +\frac{1}{F_\zeta(a\sin\theta_k)}\right)\sqrt{\frac{\pi}{2}} + 2^k-1\right)\,,\end{equation}
where we used that $\overline\U_k$ has $2^k-1$ elements and that $\sum_{u\in\bar\U_k}\Pp_W(W\in U)\leq (1-F_\zeta(a\sin\theta_k))$. Now, by plugging into \eqref{eq:finally} the explicit expressions of $F_\zeta$ and $\theta_k$ we obtain
$$\E_{\Pp_W}[\Wass(\Pp_X, \Pp_{X|W_k})] \leq e^{-\frac{1}{2}a^2\sin^2(\pi/2^k)}\left(2^k-1+\left(2+\frac{1}{1-e^{-\frac{1}{2}a^2\sin^2(\pi/2^k)}}\right)\sqrt\frac{\pi}{2}\right) = \B_k(a)\,.$$

Fix $k^\star>1$. By Lemma \ref{lemma:wassdataprocineq}, we have that for all $k\leq k^\star$, $\E_{\Pp_W}[\Wass(\Pp_X, \Pp_{X|W_k})]\leq \B_{k^\star}(a)$, and for $k> k^\star$ we have $\E_{\Pp_W}[\Wass(\Pp_X, \Pp_{X|W_k})]\leq \Bl(a)$. So we have that
$$\Bg(a)\leq \sum_{k=1}^{k^\star}\ee_{k-1}\B_{k^\star}(a) + \sum_{k=k^\star}^\infty\ee_{k}\Bl(a) \leq 8\,\B_{k^\star}(a)+4\times 2^{-k^\star}\Bl(a)\,.$$

Now the idea is that we want to choose $k^\star= k^\star_a$ as a function of $a$, in a way that makes the bound vanish for $a\to+\infty$. Note that if 
\begin{equation}\label{eq:kastar}
    a\geq \frac{2\log 2\sqrt{k^\star_a}}{\sin(\pi/2^{k^\star_a})}\,,
\end{equation}
then
$$\B_{k^\star_a}(a) \leq 2^{-k^\star_a} + 2^{-2k^\star_a}\left(2+\frac{1}{1-2^{-2k^\star_a}}\right)\sqrt{\frac{\pi}{2}}\,.$$
Notice we can choose $a\mapsto k^\star_a$ such that \eqref{eq:kastar} holds and $a = O(2^{-k^\star_a}\sqrt{k^\star_a})$, for $a\to+\infty$, which implies
$$2^{-k^\star_a}= O\left(\frac{\log a -\log\log a}{a}\right)\,.$$
This proves the asymptotic behaviour for large $a$
$$\Bg(a)=O\left(\frac{\log a -\log\log a}{a}\right)\,.$$
In particular, up to logarithmic factors, the chained bound can capture the correct behaviour of $\G(a)$.

\section{Technicalities}\label{app:tech}
\begin{lemma}
The mapping $w\mapsto \Pp_{Z|W=w}$ is measurable.
\end{lemma}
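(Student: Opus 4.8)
The plan is to reduce the statement to the defining property of a regular conditional probability, namely that $w\mapsto\Pp_{Z|W=w}(A)$ is $\Sigma_\W$-measurable for every fixed $A\in\Sigma_\Z$. The only structural input needed is the identification of the Borel $\sigma$-algebra $\Sigma_\PR$ on $\PR$ (with the topology of weak convergence) with a $\sigma$-algebra generated by evaluation maps.

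First I would recall the standard fact that, since $\Z$ is a separable complete metric space, $\PR$ equipped with the weak topology is a separable metrizable (indeed Polish) space, and its Borel $\sigma$-algebra $\Sigma_\PR$ coincides with the $\sigma$-algebra generated by the evaluation maps $e_A\colon\mu\mapsto\mu(A)$, as $A$ ranges over $\Sigma_\Z$ (equivalently, generated by the maps $\mu\mapsto\int_\Z f\,\dd\mu$ with $f$ bounded and continuous). One inclusion — each $e_A$ is $\Sigma_\PR$-measurable — follows from the portmanteau theorem (the map $\mu\mapsto\mu(G)$ is lower semicontinuous for open $G$) together with a Dynkin $\pi$--$\lambda$ argument, the class of $A$ for which $e_A$ is measurable being a $\lambda$-system containing the open sets. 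For the reverse inclusion, each map $\mu\mapsto\int_\Z f\,\dd\mu$ with $f$ bounded continuous is $\sigma(\{e_A\})$-measurable, by approximating $f$ uniformly by simple functions and using $\int_\Z\one_A\,\dd\mu = e_A(\mu)$; since these maps generate the weak topology and $\PR$ is second countable, every weakly open set, and hence every element of $\Sigma_\PR$, lies in $\sigma(\{e_A\})$.

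Given this, the argument is immediate. Write $T\colon\W\to\PR$ for $T(w)=\Pp_{Z|W=w}$. To verify that $T$ is $(\Sigma_\W,\Sigma_\PR)$-measurable it suffices to check that $e_A\circ T$ is $\Sigma_\W$-measurable for every $A\in\Sigma_\Z$, since the $e_A$ generate $\Sigma_\PR$. But $(e_A\circ T)(w)=\Pp_{Z|W=w}(A)$, and measurability of this map in $w$, for each fixed $A\in\Sigma_\Z$, is precisely part of the definition of a regular conditional probability (a Markov kernel) — whose existence here is guaranteed by the fact that $\Z$ and $\W$ are Polish, cf.\ Theorem 10.2.2 in \cite{dudley_2002}. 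This concludes the proof.

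The only genuinely non-routine ingredient is the characterisation of $\Sigma_\PR$ invoked in the second paragraph; the rest is a one-line reduction. In a fully written-out version I would either cite a standard reference for this characterisation (e.g.\ Parthasarathy's monograph on probability measures on metric spaces, Kechris' \emph{Classical Descriptive Set Theory}, or the weak-convergence chapter of \cite{dudley_2002}) or include the short Dynkin-system and second-countability argument sketched above.
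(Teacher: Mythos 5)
Your argument is correct and is essentially identical to the paper's: both reduce the claim to the fact that $\Sigma_\PR$ is generated by the evaluation maps $\mu\mapsto\mu(A)$ (the paper cites Theorem 17.24 in Kechris for this, where you sketch the Dynkin/portmanteau argument) and then invoke the definition of a regular conditional probability. No issues.
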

\begin{proof} Recall that $\Sigma_{\PR}$ is the $\sigma$-algebra on $\PR$ induced by the weak topology. $\Sigma_\PR$ is generated by the maps $\phi_U:\PR\to[0,1]$, given by $\mu\mapsto \phi_U(\mu)=\mu(U)$, for $U$ ranging in $\Sigma_\Z$ (cf.\ Theorem 17.24 in \cite{kechris95}). By definition of regular conditional probability, for every $U\in\Sigma_\Z$ the map $w\mapsto\Pp_{Z|W=w}(U)$ is measurable. Hence $w\mapsto \Pp_{Z|W=w}$ is a measurable map $\W\to\PR$ wrt $\Sigma_\PR$.
\end{proof}
\begin{definition}
Let $f:(0,+\infty)\to\R$ be a convex lower semi-continuous map such that $f(1)=0$ and $\lim_{x\to +\infty}f(x)/x=+\infty$. For $\PP, \QQ\in\PR$ we define the $f$-divergence 
$$D_f(\QQ\|\PP)=\begin{cases}\E_{\PP}[f(\tfrac{\dd\QQ}{\dd\PP})]&\text{if $\QQ\ll\PP$;}\\+\infty&\text{otherwise.}\end{cases}$$
\end{definition}
Examples of $f$ divergences are the $\KL$ divergence ($f:u\mapsto u\log u$) and the $p$-power divergence ($f:u\mapsto u^p-1$).
\begin{lemma}\label{lemma:klmeas}
$\df:\PR\times\PR\to[0,+\infty]$, defined by $\df(\mu, \nu) = D_f(\nu\|\mu)$, is measurable.
\end{lemma}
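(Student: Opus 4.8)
The plan is to exhibit $\df$ as a pointwise supremum of a family of continuous functions on $\PR\times\PR$; such a supremum is automatically lower semi-continuous, hence Borel measurable, and since $\Z$ is Polish the space $\PR$ with the weak topology is Polish as well, so the Borel $\sigma$-algebra of $\PR\times\PR$ is exactly $\Sigma_\PR\otimes\Sigma_\PR$ (the notion of measurability meant here). The device that produces such a representation is the Donsker--Varadhan-type variational formula for $f$-divergences.

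First I would collect the facts about the convex conjugate $f^*(y)=\sup_{x>0}\{xy-f(x)\}$: because $f$ is convex, lower semi-continuous and superlinear ($\lim_{x\to\infty}f(x)/x=+\infty$), the function $f^*$ is finite, nondecreasing and convex on all of $\R$, hence continuous there; in particular, for every bounded continuous $g:\Z\to\R$ both $g$ and $f^*\!\circ g$ are bounded and continuous on $\Z$. I would then establish (or cite) the representation
$$
\df(\mu,\nu)=D_f(\nu\|\mu)=\sup\Big\{\int_\Z g\,\dd\nu-\int_\Z f^*(g)\,\dd\mu\ :\ g:\Z\to\R\text{ bounded and continuous}\Big\}.
$$
The inequality ``$\le$'' follows from Young's inequality $xy\le f(x)+f^*(y)$ applied with $x=\dd\nu/\dd\mu$; the inequality ``$\ge$'' is obtained, in the absolutely continuous case, by truncating a subgradient selection of $\partial f(\dd\nu/\dd\mu)$ and approximating the truncation by continuous functions (Lusin), and, in the singular case $\nu\not\ll\mu$, by testing against continuous bump functions concentrated on a closed set of positive $\nu$-measure and null $\mu$-measure, which drives the supremum to $+\infty=D_f$.

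Granting the formula, the conclusion is immediate: for each fixed bounded continuous $g$ the map $(\mu,\nu)\mapsto\int g\,\dd\nu-\int f^*(g)\,\dd\mu$ is continuous for the product of the topologies of weak convergence, since $\mu\mapsto\int h\,\dd\mu$ is weakly continuous for every bounded continuous $h$ and both $g$ and $f^*\!\circ g$ are of this type. Thus $\df$ is a supremum of continuous functions on a metrizable space, hence lower semi-continuous, hence Borel, which is the desired $\Sigma_\PR\otimes\Sigma_\PR$-measurability.

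The main obstacle is the variational representation itself — in particular, checking that it remains valid when the test functions are restricted to be bounded and \emph{continuous} (not merely bounded measurable) on a general Polish space, and that it returns $+\infty$ exactly when $\nu\not\ll\mu$. A self-contained alternative that avoids test functions is to fix a refining sequence $\{\mathcal P_n\}$ of finite Borel partitions generating $\Sigma_\Z$ (available since $\Z$ is second countable), to set $\df_n(\mu,\nu)=\sum_{A\in\mathcal P_n}\mu(A)\,f\!\big(\nu(A)/\mu(A)\big)$ with the usual conventions when $\mu(A)=0$, to observe that each $\df_n$ is measurable because $\mu\mapsto\mu(A)$ is, and then to prove $\df_n\nearrow\df$ pointwise via conditional Jensen (for monotonicity and $\df_n\le\df$) together with the martingale convergence theorem (for the reverse bound, including the singular case). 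Measurability of $\df$ then follows from its being a pointwise supremum of the measurable $\df_n$. In both routes the technical heart is the same: approximating $D_f$ monotonically, or variationally, by manifestly measurable functionals.
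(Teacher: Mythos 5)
Your proposal is correct and follows essentially the same route as the paper: the paper's proof simply cites the weak lower semi-continuity of $D_f$ (which in the cited reference is established precisely via the variational representation over bounded continuous test functions that you sketch), and lower semi-continuity on the Polish space $\PR\times\PR$ gives Borel, hence product, measurability. Your alternative partition/martingale argument is a sound self-contained substitute (provided the convention $\mu(A)=0<\nu(A)\Rightarrow+\infty$ is used, which superlinearity of $f$ justifies), but the core idea — exhibiting $\df$ as a supremum of manifestly measurable functionals — coincides with the paper's.
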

\begin{proof}
The measurability follows from the fact $D_f$ is weakly lower semi-continuous (see Corollary 2.9 and Remark 2.1 in \cite{beppe<3}). 
\begin{comment}
Alternatively, first notice that $D_f$ admits the dual representation
$$D_f(\QQ\|\PP) = \sup_{g\in C_b(\Z)}\left\{\E_{\QQ}[g(Z)]-\E_\PP[f^\star\circ g(Z)]\right\}\,,$$
where $C_b(\Z)$ is the set of bounded continuous functions $\Z\to\R$, and $f^\star$ is the Legendre-Fenchel dual of $f$. We can then proceed as in the proof of Theorem 1 in \cite{posner} and show that $D_f$ is weakly lower semi-continuous. The measurability follows.
\end{comment}
\end{proof}
\begin{lemma}\label{lemma:wassmeas}
$\df:\PR\times\PR\to[0,+\infty]$, defined by $\df(\mu, \nu) = \Wass(\mu,\nu)$, is measurable.
\end{lemma}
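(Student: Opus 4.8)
The plan is to prove that $\df=\Wass$ is \emph{lower semi-continuous} on $\PR\times\PR$ when each factor carries the topology of weak convergence, and to deduce measurability from there. For the reduction: since $\Z$ is Polish, Prokhorov's theorem makes $\PR$ (with the weak topology) a Polish, hence second-countable, space; a finite product of second-countable spaces is second countable, and for such spaces the Borel $\sigma$-algebra of the product coincides with the product of the Borel $\sigma$-algebras, so $\Sigma_\PR\otimes\Sigma_\PR$ is exactly the Borel $\sigma$-algebra of $\PR\times\PR$. As lower semi-continuous maps into $[0,+\infty]$ are Borel measurable (preimages of $(t,+\infty]$ are open), it therefore suffices to establish lower semi-continuity of $(\mu,\nu)\mapsto\Wass(\mu,\nu)$ for the product weak topology.

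For the core argument, note that $\PR\times\PR$ is metrizable, so lower semi-continuity may be checked sequentially. Let $\mu_n\to\mu$ and $\nu_n\to\nu$ weakly, set $\ell=\liminf_n\Wass(\mu_n,\nu_n)$, assume $\ell<+\infty$, and pass to a subsequence realising the liminf as a limit. For each $n$ choose an optimal (or $\tfrac1n$-optimal) coupling $\pi_n\in\Pi[\mu_n,\nu_n]$. A weakly convergent sequence in $\PR$ is tight, so $\{\mu_n\}$ and $\{\nu_n\}$ are tight, hence $\{\pi_n\}$ is tight (its marginals are); by Prokhorov a further subsequence satisfies $\pi_n\to\pi$ weakly, and passing the marginals to the limit gives $\pi\in\Pi[\mu,\nu]$. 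Since the cost $c(x,y)=d_\Z(x,y)$ is non-negative and continuous, the Portmanteau theorem yields $\int c\,\dd\pi\le\liminf_n\int c\,\dd\pi_n=\ell$, so $\Wass(\mu,\nu)\le\int c\,\dd\pi\le\ell$; this is exactly the claimed lower semi-continuity, and is classical (see, e.g., \cite{villani08}). Combined with the reduction above it proves the lemma. An alternative route uses Kantorovich--Rubinstein duality to write $\Wass(\mu,\nu)=\sup_j\big(\int f_j\,\dd\mu-\int f_j\,\dd\nu\big)$ over a countable family of bounded $1$-Lipschitz functions $f_j$ extracted from the separability of $\Z$; each term is weakly continuous in $(\mu,\nu)$, so $\Wass$ is a countable supremum of continuous functions and hence lower semi-continuous.

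The only non-routine point is the compactness argument for the couplings: that a near-optimal sequence $\{\pi_n\}$ has a weakly convergent subsequence whose limit is a genuine coupling of the limit marginals. This rests on the standard fact that tightness of the marginals forces tightness of the couplings, together with Prokhorov's theorem. Everything else — the identification of the product $\sigma$-algebra, Portmanteau for non-negative continuous costs, and the passage from lower semi-continuity to Borel measurability — is routine.
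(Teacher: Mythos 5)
Your proposal is correct and follows essentially the same route as the paper: the paper's proof consists of deducing measurability from the weak lower semi-continuity of $\Wass$, citing Remark 6.12 of \cite{villani08} for that fact. You simply supply the details the paper outsources to the citation (tightness of near-optimal couplings, Prokhorov, Portmanteau, and the identification of the Borel $\sigma$-algebra of $\PR\times\PR$ with the product $\sigma$-algebra), all of which are sound.
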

\begin{proof}
The measurability follows from the weak lower semi-continuity of $\Wass$ (see \cite{villani08}, Remark 6.12).
\end{proof}
\begin{lemma}\label{lemma:supp}
$\Supp(\Pp_{Z|W=w}) \subseteq \Supp(\Pp_Z)$, $\Pp_W$-a.s.
\end{lemma}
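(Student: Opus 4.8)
The plan is to reduce the claim to a single measure-zero statement that follows directly from disintegration. First I would recall the standard characterisation of the support in a separable metric space: for two Borel probability measures $\mu,\nu$ on $(\Z,\Sigma_\Z)$, one has $\Supp(\nu)\subseteq\Supp(\mu)$ if, and only if, $\nu\big(\Z\setminus\Supp(\mu)\big)=0$. Indeed, $\Z\setminus\Supp(\mu)$ is open, while $\Z\setminus\Supp(\nu)$ is by definition the largest open $\nu$-null set; hence the open set $\Z\setminus\Supp(\mu)$ is contained in $\Z\setminus\Supp(\nu)$ precisely when it is $\nu$-null, which is exactly the inclusion $\Supp(\nu)\subseteq\Supp(\mu)$.

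Next I would observe that, by separability (hence second countability) of $\Z$, the open set $N:=\Z\setminus\Supp(\Pp_Z)$ can be written as a countable union of open $\Pp_Z$-null sets, so that $\Pp_Z(N)=0$. Now I invoke the defining property of the regular conditional probability $w\mapsto\Pp_{Z|W=w}$: for every $A\in\Sigma_\Z$,
$$\Pp_Z(A)=\Pp_{Z,W}(\W\times A)=\int_\W\Pp_{Z|W=w}(A)\,\dd\Pp_W(w)\,.$$
Taking $A=N$ yields $\int_\W\Pp_{Z|W=w}(N)\,\dd\Pp_W(w)=0$, and since the integrand is non-negative this forces $\Pp_{Z|W=w}(N)=0$ for $\Pp_W$-almost every $w$. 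Applying the characterisation of the first step with $\mu=\Pp_Z$ and $\nu=\Pp_{Z|W=w}$ then gives $\Supp(\Pp_{Z|W=w})\subseteq\Supp(\Pp_Z)$ for $\Pp_W$-a.e.\ $w$, as claimed.

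There is no genuine obstacle here; the statement is a routine measure-theoretic fact. The only point requiring a little care is the equivalence in the first step, which relies on complements of supports being open together with second countability of $\Z$ (so that $\Z\setminus\Supp(\Pp_Z)$ is itself a null set, not merely a union of open null sets lacking a countable subcover). Everything else is an immediate consequence of the disintegration identity and non-negativity of the integrand.
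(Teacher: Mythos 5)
Your proof is correct and follows essentially the same route as the paper: both rest on the disintegration identity $\Pp_Z(A)=\int_\W\Pp_{Z|W=w}(A)\,\dd\Pp_W(w)$ applied to the support of $\Pp_Z$ (the paper uses $A=\Supp(\Pp_Z)$ and shows the set where the conditional measure is $<1$ is $\Pp_W$-null; you equivalently use the complement $N=\Z\setminus\Supp(\Pp_Z)$ and the fact that a non-negative integrand with zero integral vanishes a.s.). Your explicit use of separability, both for $\Pp_Z(N)=0$ and for the characterisation of support inclusion, makes precise what the paper leaves implicit, so no gaps.
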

\begin{proof}
We start by recalling that given a measure $\mu\in\PR$, $\Supp(\mu)$ is the smallest closed subset $K$ of $\Z$ such that $\mu(K)=1$. Let $U\subseteq\W$ be defined as 
$$U=\{w\in\W: \Pp_{Z|W=w}(\Supp(\Pp_Z))<1\}\,.$$
First, we notice that $U$ is measurable. Indeed, $\Supp(\Pp_Z)$ is closed, and hence measurable, so $w\mapsto \Pp_{Z|W=w}(\Supp(\Pp_Z))$ is a measurable map, by definition of regular conditional probability.
Now note that
\begin{align*}
    1=\Pp_Z(\Supp(\Pp_Z)) &= \int_\W\Pp_{Z|W=w}(\Supp(\Pp_Z))\,\dd\Pp_W(w)\\
    &\leq 1 - \Pp_W(U)+\int_U\Pp_{Z|W=w}(\Supp(\Pp_Z))\,\dd\Pp_W(w)\,.
\end{align*}
As a consequence, we must have that $\int_U\Pp_{Z|W=w}(\Supp(\Pp_Z))\,\dd\Pp_W(w)\geq\Pp_W(U)$. However, by definition $\Pp_{Z|W=w}(\Supp(\Pp_Z))<1$ for $w\in U$, and so we necessarily have $\Pp_W(U)=0$. We conclude by noticing that $\Supp(\Pp_{Z|W=w})\supset\Supp(\Pp_Z)$ if, and only if, $w\in U$. 
\end{proof}
\section{Explicit bounds}\label{app:bounds}
In this section we present several bounds that can be derived via the framework of Section \ref{sec:fram}. To our knowledge, all the chaining bounds that we present here are new, the only exception being the one in Proposition \ref{prop:MIdis}, which was recently established in \cite{zhou2022stochastic}. However, most of the unchained counterparts were already derived in the literature. The reader can find the bibliographic references in Table \ref{table:boundsapp}. Henceforth, all the chained bounds that we state are valid for any $\{\ee_k\}$-sequence of refining nets on $\W$.
\subsection{A few examples of \texorpdfstring{$\df$}{D}-regularity}
\begin{definition}[Power divergence]
Let $p>1$. Given two probabilities $\PP$ and $\QQ$ on $\Z$, we define the $p$-power divergence 
$$D^{(p)}(\QQ\|\PP) = \begin{cases}\E_\PP\left[\left(\tfrac{\dd\QQ}{\dd\PP}\right)^p\right]-1&\text{if $\QQ\ll\PP$;}\\+\infty&\text{otherwise.}\end{cases}$$
For $p=2$, we denote $D^{(2)}(\QQ\|\PP)$ as $\chi^2(\QQ\|\PP)$.  
\end{definition}
\begin{lemma}\label{lemma:power}
Fix $p>1$ and let $r = p/(p-1)$. Let $\df:(\mu, \nu)\mapsto (D^{(p)}(\nu\|\mu)+1)^{1/p}$ and $f:\Z\to\R^q$ be measurable. Assume that $f\in L^1(\mu)$ and write $f_\mu=\E_\mu[f(Z)]$. If $\E_\mu[\|f(Z)-f_\mu\|^r]^{1/r}\leq\xi$, then $f$ has regularity $\Rr_\df(\xi)$ wrt $\mu$. 
\end{lemma}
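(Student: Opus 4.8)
The plan is to follow the template of Lemma~\ref{lemma:dfreg}, replacing the Donsker--Varadhan and Kantorovich duality arguments by a single application of Hölder's inequality. First I would dispatch the measurability required by Definition~\ref{def:Dreg}: since $u\mapsto u^p-1$ is convex and continuous on $(0,+\infty)$, vanishes at $u=1$, and satisfies $\lim_{u\to+\infty}(u^p-1)/u=+\infty$, the power divergence $D^{(p)}$ is an $f$-divergence, so $(\mu,\nu)\mapsto D^{(p)}(\nu\|\mu)$ is measurable by Lemma~\ref{lemma:klmeas}; composing with the continuous map $x\mapsto(x+1)^{1/p}$ gives measurability of $\df$.

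Next I would reduce to the scalar case $q=1$. For $v\in\R^q$, the function $z\mapsto v\cdot f(z)$ is measurable, lies in $L^1(\mu)$, has $\mu$-mean $v\cdot f_\mu$, and by Cauchy--Schwarz $\E_\mu[|v\cdot f(Z)-v\cdot f_\mu|^r]^{1/r}\le\|v\|\,\E_\mu[\|f(Z)-f_\mu\|^r]^{1/r}\le\|v\|\xi$. Hence, once the $q=1$ statement is proved, it applies to $z\mapsto v\cdot f(z)$ with parameter $\xi\|v\|$ for every $v$, which is exactly the definition of $\Rr_\df(\xi)$-regularity for the $\R^q$-valued map $f$.

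For $q=1$, fix $\nu\in\PR$ with $\Supp(\nu)\subseteq\Supp(\mu)$ and $f\in L^1(\nu)$. If $D^{(p)}(\nu\|\mu)=+\infty$ (in particular if $\nu\not\ll\mu$), the asserted inequality is vacuous; otherwise $\nu\ll\mu$ and $g=\dd\nu/\dd\mu\in L^p(\mu)$. Since $\mu$ is a probability measure, $\E_\mu[f(Z)-f_\mu]=0$, and since $f-f_\mu\in L^1(\nu)$ the change-of-measure identity $\E_\mu[h(Z)\,g(Z)]=\E_\nu[h(Z)]$ applies, giving
$$\E_\nu[f(Z)]-\E_\mu[f(Z)]=\E_\nu[f(Z)-f_\mu]=\E_\mu[(f(Z)-f_\mu)\,g(Z)]\,.$$
Applying Hölder's inequality with the conjugate exponents $r$ and $p$ (indeed $1/r+1/p=1$ because $r=p/(p-1)$), and using $\E_\mu[g(Z)^p]=D^{(p)}(\nu\|\mu)+1$ together with the hypothesis $\E_\mu[|f(Z)-f_\mu|^r]^{1/r}\le\xi$, yields
$$\bigl|\E_\mu[f(Z)]-\E_\nu[f(Z)]\bigr|\le\E_\mu[|f(Z)-f_\mu|^r]^{1/r}\,\E_\mu[g(Z)^p]^{1/p}\le\xi\,\bigl(D^{(p)}(\nu\|\mu)+1\bigr)^{1/p}=\xi\,\df(\mu,\nu)\,,$$
which is the claimed $\Rr_\df(\xi)$-regularity.

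The argument is essentially routine, so I do not expect a genuine obstacle. The one point requiring care is the displayed identity: one must rewrite the difference of expectations as an integral against $g$ rather than against $g-1$, so that the relevant Hölder factor is $\|g\|_{L^p(\mu)}$, which equals $(D^{(p)}(\nu\|\mu)+1)^{1/p}$ exactly (centering by $f_\mu$ is what makes this rewriting legitimate). The remaining subtlety is the integrability bookkeeping — separating off the case $D^{(p)}(\nu\|\mu)=+\infty$, and invoking $f-f_\mu\in L^1(\nu)$ to justify the change of measure — none of which is difficult.
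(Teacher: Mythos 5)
Your proposal is correct and follows essentially the same route as the paper's proof: reduce to $q=1$ via $|v\cdot(f-f_\mu)|\le\|v\|\,\|f-f_\mu\|$, dispose of the non-absolutely-continuous (or infinite-divergence) case trivially, center by $f_\mu$, rewrite the difference of expectations as an integral of $(f-f_\mu)\,\dd\nu/\dd\mu$ against $\mu$, and apply H\"older with exponents $r$ and $p$, using $\E_\mu[(\dd\nu/\dd\mu)^p]=D^{(p)}(\nu\|\mu)+1$. The measurability remark via Lemma \ref{lemma:klmeas} also matches the paper's.
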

\begin{proof}
Notice that $\df$ is measurable by Lemma \ref{lemma:klmeas}. First, we consider the case $q=1$. Fix $\nu\in\PR$ such that $\Supp(\nu)\subseteq\Supp(\mu)$ and $f\in L^1(\nu)$. If $\nu$ is not abslutely continuous wrt $\mu$, than the claim is trivially true, so assume $\nu\ll\mu$. Define $f_\mu = \E_\mu[f(Z)]$. We have
\begin{align*}
    |\E_\mu[f(Z)]-\E_\nu[f(Z)]|\leq\E_{\nu}[|f(Z)-f_\mu|]&=\int_\Z |f(z)-f_\mu|\,\tfrac{\dd\nu}{\dd\mu}(z)\dd\mu(z)\\
    &\leq \E_\mu[|f(Z)-f_\mu|^r]^{1/r}(D^{(p)}(\nu\|\mu)+1)^{1/p}\,,
\end{align*}
by H\"older's inequality.

The case $q>1$ follows form the one-dimesional case, since $\E_\mu[|(f(Z)-f_\mu)\cdot v|^r]^{1/r}\leq \|v\|\E_\mu[\|(f(Z)-f_\mu)\|^r]^{1/r}$ for all $v\in\R^q$.
\end{proof}
\begin{corollary}\label{cor:power}
Fix $p>1$ and let $r = p/(p-1)$. Let $\df:(\mu, \nu)\mapsto (D^{(p)}(\nu\|\mu)+1)^{1/p}$ and $f:\Z\to\R^q$ be measurable. Assume that $f(Z)$ is $\xi$-SG for $Z\sim\mu$. Then $f$ has regularity $\Rr_\df(e^{1/e}\sqrt r\,\xi)$ wrt $\mu$. 
\end{corollary}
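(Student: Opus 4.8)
The plan is to derive a moment estimate for centred subgaussian variables and then feed it into Lemma~\ref{lemma:power} (the measurability of $\df$ is already recorded in Lemma~\ref{lemma:klmeas}). By Definition~\ref{def:Dreg} it suffices to argue one coordinate at a time. Fix $v\in\R^q$ and set $\eta=\|v\|\xi$; the case $\eta=0$ is trivial, so assume $\eta>0$. Since $f(Z)$ is a $\xi$-SG vector, Definition~\ref{def:SG} gives that both $v\cdot f(Z)$ and $-v\cdot f(Z)$ are $\eta$-SG scalars, so $Y:=v\cdot f(Z)-\E_\mu[v\cdot f(Z)]$ obeys the two-sided bound $\E_\mu[e^{\lambda Y}]\le e^{\lambda^2\eta^2/2}$ for every $\lambda\in\R$. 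If I can show $\E_\mu[|Y|^r]^{1/r}\le e^{1/e}\sqrt r\,\eta$, then, noting $f\in L^1(\mu)$ automatically, Lemma~\ref{lemma:power} applied to $z\mapsto v\cdot f(z)$ in the case $q=1$ shows that $z\mapsto v\cdot f(z)$ has regularity $\Rr_\df(e^{1/e}\sqrt r\,\|v\|\xi)$ with respect to $\mu$; since $v$ is arbitrary, Definition~\ref{def:Dreg} then yields the claim.

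For the moment bound I would use the ``$e^{aY^2}$'' device. For any $a\in(0,\tfrac1{2\eta^2})$, introducing a standard Gaussian $g$ independent of $Y$ and applying the two-sided subgaussian bound at $\lambda=\sqrt{2a}\,g$ conditionally on $g$,
$$\E_\mu\bigl[e^{aY^2}\bigr]=\E\bigl[e^{\sqrt{2a}\,gY}\bigr]\le\E_g\bigl[e^{a\eta^2g^2}\bigr]=(1-2a\eta^2)^{-1/2}.$$
Maximising $x\mapsto x^{r/2}e^{-ax}$ over $x\ge0$ gives the pointwise inequality $|Y|^r=(Y^2)^{r/2}\le(\tfrac{r}{2ea})^{r/2}e^{aY^2}$, whence $\E_\mu[|Y|^r]\le(\tfrac{r}{2ea})^{r/2}(1-2a\eta^2)^{-1/2}$. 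Choosing $a=\tfrac{r}{2\eta^2(r+1)}$ collapses the right-hand side to $\eta^r(r+1)^{(r+1)/2}e^{-r/2}$, so
$$\E_\mu[|Y|^r]^{1/r}\le\eta\,e^{-1/2}(r+1)^{(r+1)/(2r)}.$$

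It then remains to verify the elementary inequality $e^{-1/2}(r+1)^{(r+1)/(2r)}\le e^{1/e}\sqrt r$ for all $r\ge1$ (recall $r=p/(p-1)>1$). In logarithmic form this reads $\tfrac{\ln(r+1)}{2r}+\tfrac12\ln\tfrac{r+1}{r}\le\tfrac12+\tfrac1e$; both summands on the left are decreasing on $[1,\infty)$ — the second obviously, and the first because $\tfrac{d}{dr}\tfrac{\ln(r+1)}{2r}$ has the sign of $\tfrac{r}{r+1}-\ln(r+1)$, which is negative for $r\ge1$ — so the left-hand side is bounded by its value $\ln2$ at $r=1$, and $\ln2<\tfrac12+\tfrac1e$. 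I expect this last quantitative step to be the only real content of the argument: it is precisely the need to land on the constant $e^{1/e}$ that forces the use of the smooth estimate for $\E_\mu[e^{aY^2}]$ rather than a crude layer-cake tail integration, which introduces a spurious factor and produces a constant that fails for $r$ close to $1$ (and to $2$).
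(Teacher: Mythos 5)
Your proof is correct and follows the same route as the paper: reduce to Lemma~\ref{lemma:power} via the moment bound $\E_\mu[|Y|^r]^{1/r}\lesssim\sqrt r\,\eta$ for centred subgaussian variables, which the paper simply asserts and you actually prove (with a constant $e^{-1/2}(r+1)^{(r+1)/(2r)}$ that is indeed below $e^{1/e}\sqrt r$ for all $r\ge 1$, as your monotonicity check at $r=1$ confirms). Your choice to verify the moment bound direction-by-direction for $v\cdot f$ rather than for $\|f(Z)-f_\mu\|$ is a harmless (arguably cleaner) variant that matches how Lemma~\ref{lemma:power} handles $q>1$ anyway.
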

\begin{proof}
Simply use that $\E_\mu[\|f(Z)-\E_{Z'\sim\mu}[f(Z')]\|^r]^{1/r}\leq e^{1/e}\sqrt r\,\xi$ if $f(Z)$ is $\xi$-SG to conclude by Lemma \ref{lemma:power}. 
\end{proof}
\begin{lemma}\label{lemma:chi2}
Let $\df:(\mu, \nu)\mapsto \sqrt{\chi^2(\nu\|\mu)}$. Let $f:\Z\to\R^q$ be measurable. Assume that $\|\mathbb{C}_\mu[f(Z)]\|\leq\xi^2$, where $\mathbb C_\mu[f(Z)]$ is the covariance matrix of $f(Z)$ for $Z\sim\mu$. Then, $f$ has regularity $\Rr_{\df}(\xi)$. 
\end{lemma}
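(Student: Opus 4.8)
The plan is to mirror the structure of Lemmas \ref{lemma:dfreg} and \ref{lemma:power}: reduce the vector-valued statement to the scalar one via one-dimensional projections, and for the scalar estimate run a Cauchy--Schwarz argument against the \emph{centred} density $\tfrac{\dd\nu}{\dd\mu}-1$. Before that, two preliminaries: $\df\colon(\mu,\nu)\mapsto\sqrt{\chi^2(\nu\|\mu)}$ is measurable, since $\chi^2=D^{(2)}$ is the $f$-divergence associated with $u\mapsto u^2-1$ so Lemma \ref{lemma:klmeas} applies; and the mere existence of $\mathbb{C}_\mu[f(Z)]$ forces $f\in L^2(\mu)\subseteq L^1(\mu)$, so the integrability requirement of Definition \ref{def:Dreg} holds.

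For the scalar case $q=1$ the hypothesis reads $\mathrm{Var}_\mu[f(Z)]\le\xi^2$. Fixing $\nu$ with $\Supp(\nu)\subseteq\Supp(\mu)$ and $f\in L^1(\nu)$, I would discard the trivial cases $\nu\not\ll\mu$ and $\chi^2(\nu\|\mu)=+\infty$, and otherwise set $h=\tfrac{\dd\nu}{\dd\mu}-1$, which satisfies $\E_\mu[h(Z)]=0$ and $\E_\mu[h(Z)^2]=\chi^2(\nu\|\mu)$. Then $\E_\nu[f(Z)]-\E_\mu[f(Z)]=\int(f-\E_\mu[f(Z)])(1+h)\,\dd\mu=\int(f-\E_\mu[f(Z)])\,h\,\dd\mu$, the constant part dropping out, and Cauchy--Schwarz gives $|\E_\nu[f(Z)]-\E_\mu[f(Z)]|\le\sqrt{\mathrm{Var}_\mu[f(Z)]}\,\sqrt{\chi^2(\nu\|\mu)}\le\xi\,\df(\mu,\nu)$. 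This centring step is really the only idea in the proof: it is what lets us land on $\sqrt{\chi^2}$ rather than the weaker $\sqrt{\chi^2+1}$ one would get from Lemma \ref{lemma:power} with $p=2$ applied verbatim.

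Lifting to $q>1$ is where the operator-norm (as opposed to, say, trace) hypothesis is used, and it is essentially free: for $v\in\R^q$, the scalar function $z\mapsto v\cdot f(z)$ has variance $v^\top\mathbb{C}_\mu[f(Z)]\,v\le\|\mathbb{C}_\mu[f(Z)]\|\,\|v\|^2\le\xi^2\|v\|^2$, so the case $q=1$ gives it regularity $\Rr_\df(\xi\|v\|)$, and by Definition \ref{def:Dreg} this is precisely $\Rr_\df(\xi)$ regularity of $f$. I do not expect a genuine obstacle anywhere; the one point deserving a line of care is the legitimacy of splitting $\int(f-\E_\mu[f(Z)])(1+h)\,\dd\mu$, which holds because $f-\E_\mu[f(Z)]\in L^2(\mu)$ and $h\in L^2(\mu)$ make the product integrable, while $f-\E_\mu[f(Z)]\in L^1(\nu)$ makes the left-hand side meaningful.
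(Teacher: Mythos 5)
Your proposal is correct and is essentially the paper's argument made explicit: the paper's proof of the scalar case simply invokes the Hammersley--Chapman--Robbins bound, and your centred Cauchy--Schwarz computation against $\tfrac{\dd\nu}{\dd\mu}-1$ is precisely the standard proof of that bound. The reduction of the case $q>1$ to one-dimensional projections via the operator norm of $\mathbb{C}_\mu[f(Z)]$ is likewise what the paper means by ``the case $q>1$ follows easily''.
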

\begin{proof}
For $q=1$, the claim is a direct consequence of the HCR bound \citep{LehmCase98}. The case $q>1$ follows easily. 
\end{proof}
\begin{definition}[Total variation]
The total variation of two probability measures $\PP, \QQ\in\PR$ is defined as
$$\TV(\PP,\QQ) = \sup_{U\in\Sigma_\Z}|\PP(U)-\QQ(U)|\,.$$
\end{definition}
\begin{lemma}\label{lem:TV}
Let $\df:(\mu, \nu)\mapsto 2\TV(\mu, \nu)$. Let $f:\Z\to\R^q$ be a measurable map, bounded in $[-\xi,\xi]$. Then $f$ has regularity $\Rr_\df(\xi)$ wrt any $\mu\in\PR$. 
\end{lemma}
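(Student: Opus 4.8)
The plan is to reduce everything to the scalar case and then invoke the elementary fact that the gap between the integrals of a bounded function against two probability measures is controlled by its oscillation times their total variation distance.

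First I would dispose of the integrability requirements in Definition~\ref{def:Dreg}: since $f$ is bounded in $[-\xi,\xi]$ componentwise (so $\|f(z)\|\le\xi\sqrt q$), we have $f\in L^1(\rho)$ for every $\rho\in\PR$; hence the hypotheses $f\in L^1(\mu)$ and $f\in L^1(\nu)$ are automatic and $\Supp(\nu)\subseteq\Supp(\mu)$ is never used. For $q=1$: fix $\mu,\nu\in\PR$ and write $f=-\xi+2\xi\,h$ with $h:\Z\to[0,1]$ measurable. The layer-cake formula gives $\E_\mu[h]-\E_\nu[h]=\int_0^1\big(\mu(\{h>t\})-\nu(\{h>t\})\big)\dd t$, whence $|\E_\mu[h]-\E_\nu[h]|\le\int_0^1|\mu(\{h>t\})-\nu(\{h>t\})|\dd t\le\TV(\mu,\nu)$, since each $\{h>t\}$ lies in $\Sigma_\Z$. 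Multiplying by $2\xi$ yields $|\E_\mu[f(Z)]-\E_\nu[f(Z)]|\le 2\xi\,\TV(\mu,\nu)=\xi\,\df(\mu,\nu)$, i.e.\ $f$ has regularity $\Rr_\df(\xi)$ wrt $\mu$. For $q>1$: given $v\in\R^q$, Cauchy--Schwarz shows $z\mapsto v\cdot f(z)$ is bounded in $[-\|v\|\xi,\|v\|\xi]$, so by the scalar case it has regularity $\Rr_\df(\|v\|\xi)$ wrt any $\mu$, which by Definition~\ref{def:Dreg} is exactly the claim for $f$.

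It remains to check that $\df=2\TV$ is $\Sigma_\PR$-measurable, as Definition~\ref{def:Dreg} demands. Here I would use the dual representation $2\TV(\mu,\nu)=\sup\{\E_\mu[g(Z)]-\E_\nu[g(Z)]:g\in C_b(\Z),\ \|g\|_\infty\le 1\}$, valid on Polish spaces: each map $(\mu,\nu)\mapsto\E_\mu[g(Z)]-\E_\nu[g(Z)]$ is continuous for the topology of weak convergence, so $\df$ is a supremum of continuous functions, hence lower semi-continuous and therefore Borel measurable (mirroring Lemmas~\ref{lemma:klmeas} and~\ref{lemma:wassmeas}). This is the only non-routine point: $(\mu,\nu)\mapsto|\mu(U)-\nu(U)|$ is not weakly continuous for a general Borel set $U$, and restricting the supremum to continuous test functions is what repairs the argument. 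Everything else is a two-line computation.
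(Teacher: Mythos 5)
Your proof is correct, but it reaches the conclusion by a different route than the paper on both of the two non-trivial points. For the main inequality, the paper observes that a function bounded in $[-\xi,\xi]$ is $2\xi$-Lipschitz for the \emph{discrete} metric on $\Z$, invokes the identity between $\TV$ and the $1$-Wasserstein distance with discrete transport cost \citep{villani08}, and then reuses the Lipschitz half of Lemma~\ref{lemma:dfreg}; you instead normalise $f$ to $h\in[0,1]$ and use the layer-cake formula $\E_\mu[h]-\E_\nu[h]=\int_0^1(\mu(\{h>t\})-\nu(\{h>t\}))\,\dd t$, which is more elementary and self-contained (no coupling identity needed), at the cost of not displaying the structural link to the Wasserstein case that the paper is emphasising. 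For measurability, the paper takes the supremum of $\nu\mapsto|\mu(U)-\nu(U)|$ over $U\in\Sigma_\Z$ and asserts each such map is weakly continuous; as you correctly point out, that assertion fails for a general Borel $U$ (take $\nu_n=\delta_{1/n}\to\delta_0$ and $U=\{0\}$), whereas your dual representation over $g\in C_b(\Z)$ with $\|g\|_\infty\le 1$ — valid on Polish spaces by regularity of finite Borel measures — yields genuinely weakly continuous functionals and hence lower semi-continuity. So on this point your argument is not merely different but actually repairs a gap in the paper's own justification, in the same spirit as Lemmas~\ref{lemma:klmeas} and~\ref{lemma:wassmeas}.

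One small inconsistency to tidy up: early on you read ``bounded in $[-\xi,\xi]$'' componentwise (giving $\|f(z)\|\le\xi\sqrt q$), but the Cauchy--Schwarz step $|v\cdot f(z)|\le\|v\|\xi$ requires the reading $\|f(z)\|\le\xi$; under the componentwise reading the constant for $q>1$ would degrade to $\xi\sqrt q$. The statement (and the paper's own proof) is ambiguous here, and the way Lemma~\ref{lem:TV} is later applied (e.g.\ $\|\nabla_w\ell\|\le\xi$ in Proposition~\ref{prop:TVdis}) confirms the Euclidean-ball reading, so you should commit to that one from the start.
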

\begin{proof}
First, we need to show that $\nu\mapsto\TV(\mu, \nu)$ is measurable. We have that for all $U\in\Sigma_\Z$, the map $\nu\mapsto |\mu(U)-\nu(U)|$ is continuous in the weak topology. In particular, taking the supremum wrt $U$ we get a weakly lower semicontinuous map, which implies the measurability. Now, notice that asking $f\subseteq[-\xi, \xi]$ is equivalent to ask for $f$ to be $2\xi$-Lipschitz wrt the discrete metric on $\Z$. We can then proceed as in Lemma \ref{lemma:dfreg} using the fact that the total variation coincides with the $1$-Wasserstein distance when the transport cost is the discrete metric \citep{villani08}. 
\end{proof}
\begin{corollary}\label{cor:revkl}
Let $\df:(\mu, \nu)\mapsto \sqrt{2\KL(\mu\|\nu)}$. Let $f:\Z\to\R^q$ be a measurable map, bounded in $[-\xi,\xi]$. Then $f$ has regularity $\Rr_\df(\xi)$. 
\end{corollary}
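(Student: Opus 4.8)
The plan is to deduce the corollary directly from Lemma~\ref{lem:TV} by comparing the total variation with the reverse Kullback--Leibler divergence via Pinsker's inequality. Lemma~\ref{lem:TV} already gives that a measurable $f:\Z\to\R^q$ bounded in $[-\xi,\xi]$ has regularity $\Rr_{\df'}(\xi)$ with respect to any $\mu\in\PR$, where $\df':(\mu,\nu)\mapsto 2\TV(\mu,\nu)$. Hence it suffices to establish the pointwise domination $2\TV(\mu,\nu)\leq\sqrt{2\KL(\mu\|\nu)}$ for all $\mu,\nu\in\PR$, together with the measurability of $\df:(\mu,\nu)\mapsto\sqrt{2\KL(\mu\|\nu)}$ required by Definition~\ref{def:Dreg}.

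For the domination, I would invoke Pinsker's inequality in the form $\TV(\mu,\nu)^2\leq\tfrac12\KL(\mu\|\nu)$, which immediately yields $2\TV(\mu,\nu)\leq\sqrt{2\KL(\mu\|\nu)}$ (when $\mu\not\ll\nu$ the right-hand side is $+\infty$ and the inequality is vacuous). Combining this with Lemma~\ref{lem:TV}, for any $\mu\in\PR$ and any $\nu\in\PR$ with $\Supp(\nu)\subseteq\Supp(\mu)$ (the condition $f\in L^1(\nu)$ being automatic since $f$ is bounded) one gets
$$|\E_\mu[f(Z)]-\E_\nu[f(Z)]|\leq 2\xi\,\TV(\mu,\nu)\leq\xi\sqrt{2\KL(\mu\|\nu)}=\xi\,\df(\mu,\nu)\,,$$
which is exactly the claimed $\Rr_\df(\xi)$-regularity. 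The passage from $q=1$ to $q>1$ is inherited verbatim from Lemma~\ref{lem:TV} and requires no new argument.

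For the measurability, I would note that Lemma~\ref{lemma:klmeas}, applied to the convex function $u\mapsto u\log u$, shows that $(\mu,\nu)\mapsto\KL(\nu\|\mu)$ is measurable on $\PR\times\PR$; precomposing with the coordinate swap $(\mu,\nu)\mapsto(\nu,\mu)$, which is continuous and hence measurable, and postcomposing with the continuous map $t\mapsto\sqrt{2t}$, shows that $\df$ is measurable.

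There is no substantial obstacle here; the only point that needs care is the orientation of the divergence. Pinsker must be used with $\KL(\mu\|\nu)$ on the right-hand side, matching the reverse KL that appears in $\df$, rather than the forward KL $\sqrt{2\KL(\nu\|\mu)}$ of Lemma~\ref{lemma:dfreg}. This asymmetry is precisely why the present statement is listed separately, and why it relies on boundedness of $f$ (through the total-variation detour) rather than on subgaussianity.
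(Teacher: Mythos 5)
Your proposal is correct and follows essentially the same route as the paper: regularity via Lemma \ref{lem:TV} combined with Pinsker's inequality (valid in the reverse-KL orientation since $\TV$ is symmetric), and measurability of $\df$ via Lemma \ref{lemma:klmeas}. You merely spell out details the paper leaves implicit, such as the coordinate swap for measurability and the composition with $t\mapsto\sqrt{2t}$.
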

\begin{proof}
The measurability of $\df$ is a obvious consequence of Lemma \ref{lemma:klmeas}. Then, the claim follows directly from Lemma \ref{lem:TV} by Pinsker's inequality; see e.g. \cite{vanHandel}.
\end{proof}
\subsection{Some simple bounds based on the \texorpdfstring{$\df$}{D}-regularity}
\begin{definition}[Power information]
Consider two coupled random variables $Z, Z'$ on $(\Z, \Sigma_\Z)$. For $p>1$ we define their $p$-power information \citep{powerinf} as 
$$I^{(p)}(Z;Z') = D^{(p)}(\Pp_{Z, Z'}\|\Pp_{Z\otimes Z'})\,.$$
\end{definition}
\begin{proposition}\label{prop:power}
Fix $p>1$, let $r=p/(p-1)$ and suppose that $\Pp_S=\Pp_X^{\otimes m}$. On the one hand, if $\ell(w, X)$ is $\xi$-SG for $X\sim\Pp_X$, for all $w\in\W$, then
$$|\G|\leq \frac{e^{1/e}\sqrt r\,\xi}{\sqrt m}\,(I^{(p)}(S;W)+1)^{1/p}\,.$$
On the other hand, under the assumptions \ref{ass} if $\nabla_w(\ell, X)$ is $\xi$-SG for $X\sim\Pp_X$, for all $w\in\W$, then
$$|\G|\leq \frac{e^{1/e}\sqrt r\,\xi}{\sqrt m}\sum_{k=1}^\infty\ee_{k-1}(I^{(p)}(S;W_k)+1)^{1/p}\,.$$
\end{proposition}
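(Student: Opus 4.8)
The plan is to follow the proofs of Propositions \ref{prop:MI} and \ref{prop:MIchain} almost verbatim, replacing the $\KL$-regularity supplied by Lemma \ref{lemma:dfreg} with the power-divergence regularity of Corollary \ref{cor:power}, and then rewriting the resulting expected $p$-power divergences as $p$-power informations via Jensen's inequality and a disintegration identity.

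For the first (unchained) inequality I would take $\df:(\mu,\nu)\mapsto(D^{(p)}(\nu\|\mu)+1)^{1/p}$. Since $\Pp_S=\Pp_X^{\otimes m}$ and each $\ell(w,X)$ is $\xi$-SG, the empirical loss $\Ell_S(w)=\tfrac1m\sum_i\ell(w,X_i)$ is $(\xi/\sqrt m)$-SG for every $w\in\W$, so Corollary \ref{cor:power} gives that $s\mapsto\Ell_s(w)$ has regularity $\Rr_\df\big(e^{1/e}\sqrt r\,\xi/\sqrt m\big)$ wrt $\Pp_S$, for all $w$. Theorem \ref{thm:genstd} then yields $|\G|\leq\tfrac{e^{1/e}\sqrt r\,\xi}{\sqrt m}\,\E_{\Pp_W}\big[(D^{(p)}(\Pp_{S|W}\|\Pp_S)+1)^{1/p}\big]$. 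Next, since $p>1$ the map $x\mapsto x^{1/p}$ is concave on $[0,+\infty)$, so Jensen's inequality bounds the expectation by $\big(\E_{\Pp_W}[D^{(p)}(\Pp_{S|W}\|\Pp_S)]+1\big)^{1/p}$. Finally, the disintegration identity $\tfrac{\dd\Pp_{S,W}}{\dd\Pp_{S\otimes W}}(s,w)=\tfrac{\dd\Pp_{S|W=w}}{\dd\Pp_S}(s)$ together with Tonelli's theorem identifies $\E_{\Pp_W}[D^{(p)}(\Pp_{S|W}\|\Pp_S)]$ with $D^{(p)}(\Pp_{S,W}\|\Pp_{S\otimes W})=I^{(p)}(S;W)$ (both sides being $+\infty$, and the bound vacuous, when $\Pp_{S|W=w}\not\ll\Pp_S$ on a set of positive $\Pp_W$-measure). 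Concatenating the three displays gives the claimed bound.

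The second (chained) inequality would be obtained by the same recipe applied to the gradient: under \ref{ass}, $\nabla_w\Ell_S(w)=\tfrac1m\sum_i\nabla_w\ell(w,X_i)$ is a $(\xi/\sqrt m)$-SG random vector for each $w$, so Corollary \ref{cor:power} gives that $s\mapsto\nabla_w\Ell_s(w)$ has regularity $\Rr_\df\big(e^{1/e}\sqrt r\,\xi/\sqrt m\big)$ wrt $\Pp_S$. I would then apply Theorem \ref{thm:gench} in place of Theorem \ref{thm:genstd} and repeat the Jensen step and the disintegration identity term by term, with $W_k$ in place of $W$ (using $\E_{\Pp_W}[\,\cdot\,(W_k)]=\E_{\Pp_{W_k}}[\,\cdot\,]$), to recover $\sum_k\ee_{k-1}(I^{(p)}(S;W_k)+1)^{1/p}$. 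The only non-mechanical point is the identity $\E_{\Pp_W}[D^{(p)}(\Pp_{S|W}\|\Pp_S)]=I^{(p)}(S;W)$, the $p$-power analogue of the ``golden formula'' for mutual information; it requires a little care about absolute continuity but is handled exactly as in the classical $\KL$ case. Everything else is a direct application of Corollary \ref{cor:power} and Theorems \ref{thm:genstd} and \ref{thm:gench}, strictly parallel to the MI/CMI proofs already given.
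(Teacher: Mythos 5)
Your proposal is correct and follows essentially the same route as the paper's proof: subgaussianity of the empirical average gives $(\xi/\sqrt m)$-SG, Corollary \ref{cor:power} supplies the $\Rr_\df(e^{1/e}\sqrt r\,\xi/\sqrt m)$ regularity for $\df:(\mu,\nu)\mapsto(D^{(p)}(\nu\|\mu)+1)^{1/p}$, and Theorems \ref{thm:genstd} and \ref{thm:gench} plus Jensen's inequality (concavity of $x\mapsto x^{1/p}$) yield the two bounds. You merely spell out explicitly the disintegration identity $\E_{\Pp_W}[D^{(p)}(\Pp_{S|W}\|\Pp_S)]=I^{(p)}(S;W)$, which the paper leaves implicit in its appeal to Jensen and the definition of $I^{(p)}$.
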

\begin{proof}
First notice that the $\xi$-subgaussianity of $\ell$ (respectively $\nabla_w\ell$) implies that of $\Ell$ (respectively $\nabla_w\Ell$) is $(\xi/\sqrt m)$-SG. Then, the first claim follows by Corollary \ref{cor:power}, Theorem \ref{thm:genstd}, and Jensen's inequality, while the second one by Corollary \ref{cor:power}, Theorem \ref{thm:gench}, and Jensen's inequality.
\end{proof}
\begin{proposition}\label{prop:chi2}
Suppose that $\Pp_S=\Pp_X^{\otimes m}$. On the one hand, if $\V_{\Pp_X}[\ell(w, X)]\leq\xi^2$, for all $w\in\W$, then
$$|\G|\leq \frac{\xi}{\sqrt m}\,\E_{\Pp_W}\left[\sqrt{\chi^2(\Pp_{S|W}\|\Pp_S)}\right]\,.$$
On the other hand, under the assumptions \ref{ass} if $\|\mathbb C_{\Pp_X}[\nabla_w\ell(w, X)]\|\leq\xi^2$, for all $w\in\W$, then
$$|\G|\leq \frac{\xi}{\sqrt m}\sum_{k=1}^\infty\ee_{k-1}\E_{\Pp_W}\left[\sqrt{\chi^2(\Pp_{S|W_k}\|\Pp_S)}\right]\,.$$
\end{proposition}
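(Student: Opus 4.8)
The plan is to follow exactly the template of the proofs of Propositions \ref{prop:MI}--\ref{prop:Wassch}: use Lemma \ref{lemma:chi2} to establish $\df$-regularity with $\df:(\mu,\nu)\mapsto\sqrt{\chi^2(\nu\|\mu)}$ (which is measurable by Lemma \ref{lemma:klmeas}, since $\chi^2$ is the $2$-power $f$-divergence, so that Definition \ref{def:Dreg} makes sense), and then invoke Theorem \ref{thm:genstd} for the unchained bound and Theorem \ref{thm:gench} for the chained one.

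For the first inequality, I would note that since $\Pp_S=\Pp_X^{\otimes m}$ the empirical loss $\Ell_S(w)=\frac1m\sum_{i=1}^m\ell(w,X_i)$ is an average of $m$ i.i.d.\ copies of $\ell(w,X)$, hence $\V_{\Pp_S}[\Ell_S(w)]=\frac1m\V_{\Pp_X}[\ell(w,X)]\leq\xi^2/m$ for every $w\in\W$; in particular $\Ell_S(w)\in L^2(\Pp_S)\subseteq L^1(\Pp_S)$. In the scalar case the covariance ``matrix'' of Lemma \ref{lemma:chi2} reduces to the variance, so the lemma gives that $s\mapsto\Ell_s(w)$ has regularity $\Rr_\df(\xi/\sqrt m)$ wrt $\Pp_S$, for all $w\in\W$, and Theorem \ref{thm:genstd} (applied with $\xi/\sqrt m$ in place of $\xi$) yields the claim directly. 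No Jensen step is needed here, since the right-hand side already has the form $\E_{\Pp_W}[\df(\Pp_S,\Pp_{S|W})]$.

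For the second inequality, Assumptions \ref{ass} ensure that $\nabla_w\ell(w,\cdot)$ is bounded, hence square-integrable, and $\nabla_w\Ell_S(w)=\frac1m\sum_{i=1}^m\nabla_w\ell(w,X_i)$ is an average of $m$ i.i.d.\ random vectors; its covariance matrix is therefore $\frac1m\mathbb C_{\Pp_X}[\nabla_w\ell(w,X)]$, whose operator norm is $\frac1m\|\mathbb C_{\Pp_X}[\nabla_w\ell(w,X)]\|\leq\xi^2/m$. Lemma \ref{lemma:chi2} then gives that $s\mapsto\nabla_w\Ell_s(w)$ has regularity $\Rr_\df(\xi/\sqrt m)$ wrt $\Pp_S$, for all $w\in\W$, and Theorem \ref{thm:gench} concludes, again with no need for Jensen. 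The only part demanding a little care --- the nearest thing to an obstacle --- is the covariance bookkeeping in the vector case: that the covariance of a sum of independent vectors adds, and that the matrix norm used in Lemma \ref{lemma:chi2} is $1$-homogeneous, so that scaling the covariance matrix by $1/m$ scales the regularity parameter by $1/\sqrt m$. Everything else is an immediate application of results already established in the paper.
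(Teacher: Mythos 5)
Your proposal is correct and is essentially the paper's own proof: the paper likewise combines Lemma \ref{lemma:chi2} with Theorems \ref{thm:genstd} and \ref{thm:gench}, observing that under $\Pp_S=\Pp_X^{\otimes m}$ the variance of $\Ell_S(w)$ (resp.\ the covariance of $\nabla_w\Ell_S(w)$) is that of $\ell$ (resp.\ $\nabla_w\ell$) rescaled by $1/m$, so the regularity parameter becomes $\xi/\sqrt m$. Your extra remarks (no Jensen step needed, covariance additivity for independent vectors) are accurate and only make explicit what the paper leaves implicit.
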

\begin{proof}
The claims follow combining Lemma \ref{lemma:chi2} with Theorems \ref{thm:genstd} and \ref{thm:gench}. Note that the variance of $\Ell$ is re-scaled by a factor $1/\sqrt m$ wrt the one of $\ell$, as $\Pp_S=\Pp_X^{\otimes m}$. The same is true for the covariance of $\nabla_w\Ell$.
\end{proof}
\subsection{Individual-sample bounds}\label{sec:appdis}
Recall that $S=\{X_1,\dots, X_m\}$. In this section we will consider a probability measure $\Pp_S$ on $(\Sc,\Sigma_\Sc)$ such that the marginals $\Pp_{X_i}=\Pp_X$ for all $i\in[1:m]$, but we do not require that the draws are independent. Note moreover that $W$ might depend in a different way on each $X_i$, so that we can have that $\Pp_{W, X_i}\neq\Pp_{W, X_j}$, for $i\neq j$. Now, we specialise Theorems \ref{thm:genstd} and \ref{thm:gench} to obtain individual-sample bounds, such as those from \cite{Bu2019}. 
\begin{proposition}\label{prop:gendis}
Assume that $x\mapsto\ell(w, x)$ has regularity $\Rr_\df(\xi)$ wrt $\Pp_{X}$, $\forall w\in\W$. Then we have
$$|\G|\leq \frac{\xi}{m}\sum_{i=1}^m\,\E_{\Pp_W}[\df(\Pp_{X}, \Pp_{X_i|W})]\,.$$
\end{proposition}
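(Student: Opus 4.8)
The plan is to reduce the statement to an $m$-term application of Theorem \ref{thm:genstdA} (the abstract form of Theorem \ref{thm:genstd}), after decomposing the empirical loss into its individual-sample contributions. Recall from Section \ref{sec:prel} that $\G = \E_{\Pp_{W\otimes S}}[\Ell_S(W)]-\E_{\Pp_{W, S}}[\Ell_S(W)]$, and that $\Ell_S(W) = \tfrac1m\sum_{i=1}^m \ell(W, X_i)$. Hence, by linearity,
$$\G = \frac1m\sum_{i=1}^m\big(\E_{\Pp_{W\otimes S}}[\ell(W, X_i)] - \E_{\Pp_{W, S}}[\ell(W, X_i)]\big)\,.$$
The first key observation is that each term inside the sum depends only on the pair $(W, X_i)$: indeed $\E_{\Pp_{W\otimes S}}[\ell(W, X_i)]$ is an expectation under the independent coupling of the marginals $\Pp_W$ and $\Pp_{X_i}$, so it equals $\E_{\Pp_{W\otimes X_i}}[\ell(W, X_i)]$, while $\E_{\Pp_{W, S}}[\ell(W, X_i)] = \E_{\Pp_{W, X_i}}[\ell(W, X_i)]$. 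Using the assumption $\Pp_{X_i}=\Pp_X$, the $i$-th term is exactly $\E_{\Pp_{W\otimes X_i}}[\ell(W, X_i)] - \E_{\Pp_{W, X_i}}[\ell(W, X_i)]$, which is the quantity controlled by Theorem \ref{thm:genstdA} with $\Z=\X$, $F=\ell$, and the reference measure $\Pp_Z = \Pp_{X_i}=\Pp_X$.

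Next I would apply Theorem \ref{thm:genstdA} to each index $i$. Its hypothesis requires $x\mapsto\ell(w, x)$ to have regularity $\Rr_\df(\xi)$ with respect to $\Pp_{X_i}=\Pp_X$, for every $w\in\W$ — which is precisely the assumption of the proposition. (The regular conditional probability $w\mapsto\Pp_{X_i|W=w}$ exists because $\X$ is Polish, being a factor of the Polish space $\Sc$, and one can obtain it by pushing $\Pp_{S|W=w}$ forward under the $i$-th coordinate projection.) The theorem then yields, for each $i$,
$$\big|\E_{\Pp_{W\otimes X_i}}[\ell(W, X_i)] - \E_{\Pp_{W, X_i}}[\ell(W, X_i)]\big| \leq \xi\,\E_{\Pp_W}\big[\df(\Pp_{X_i}, \Pp_{X_i|W})\big] = \xi\,\E_{\Pp_W}\big[\df(\Pp_X, \Pp_{X_i|W})\big]\,.$$
Summing over $i$ and applying the triangle inequality to the displayed decomposition of $\G$ gives $|\G|\leq \tfrac{\xi}{m}\sum_{i=1}^m \E_{\Pp_W}[\df(\Pp_X, \Pp_{X_i|W})]$, as claimed.

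I do not expect any serious obstacle: the argument is essentially a bookkeeping exercise in marginalisation, plus one invocation of the already-proved Theorem \ref{thm:genstdA}. The only point requiring a little care is the measurability and well-definedness of $w\mapsto\df(\Pp_X, \Pp_{X_i|W=w})$ and the fact that $\Supp(\Pp_{X_i|W=w})\subseteq\Supp(\Pp_X)$ holds $\Pp_W$-a.s.\ (needed so that the $\df$-regularity inequality can legitimately be invoked pointwise in $w$); both follow from Lemma \ref{lemma:supp} and the measurability lemmas in Appendix \ref{app:tech}, exactly as in the proof of Theorem \ref{thm:genstdA}. Thus the proof is short once the per-index reduction is made explicit.
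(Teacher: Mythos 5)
Your proposal is correct and follows essentially the same route as the paper: decompose $\G$ as $\frac1m\sum_i(\E_{\Pp_{W\otimes X}}[\ell(W,X)]-\E_{\Pp_{W,X_i}}[\ell(W,X_i)])$ and apply Theorem \ref{thm:genstdA} term by term with $\Z=\X$ and reference measure $\Pp_{X_i}=\Pp_X$. The extra remarks on marginalisation, measurability, and supports are sound but not needed beyond what Theorem \ref{thm:genstdA} already handles.
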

\begin{proof}
Just write
$$\G = \frac{1}{m}\sum_{i=1}^m(\E_{\Pp_{W\otimes X}}[\ell(W, X)]-\E_{\Pp_{W, X_i}}[\ell(W, X_i)])\,.$$
and then conclude by applying Theorem \ref{thm:genstdA} to bound each term of the sum.
\end{proof}
\begin{proposition}\label{prop:gendisch}
Assume \ref{ass} and suppose that $x\mapsto\ell(w, x)$ has regularity $\Rr_\df(\xi)$ wrt $\Pp_{X}$, $\forall w\in\W$. Then we have
$$|\G|\leq \frac{\xi}{m}\sum_{i=1}^m\sum_{k=1}^\infty\ee_{k-1}\E_{\Pp_W}[\df(\Pp_{X}, \Pp_{X_i|W_k})]\,.$$
\end{proposition}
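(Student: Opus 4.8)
The plan is to reduce the $m$-sample quantity $\G$ to $m$ single-coordinate problems and apply the chaining estimate of Theorem~\ref{thm:genchA} (equivalently Theorem~\ref{thm:gench}) to each of them, exactly mirroring the way Proposition~\ref{prop:gendis} is obtained from Theorem~\ref{thm:genstd}.

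First I would decompose the generalisation error coordinate-wise, which is the same opening move as in the proof of Proposition~\ref{prop:gendis}. Writing $\Ell_s(w)=\frac1m\sum_{i=1}^m\ell(w,x_i)$ and using that the marginal law of each $X_i$ under $\Pp_S$ is $\Pp_X$ — so that under the independent coupling $\Pp_{W\otimes S}$ the pair $(W,X_i)$ has law $\Pp_{W\otimes X}$, while the draws $X_j$ with $j\ne i$ are irrelevant to $\ell(W,X_i)$ — one gets
$$\G \;=\; \frac1m\sum_{i=1}^m\Big(\E_{\Pp_{W\otimes X}}[\ell(W,X)]-\E_{\Pp_{W,X_i}}[\ell(W,X_i)]\Big).$$
Then, for each fixed $i\in[1:m]$, I would invoke Theorem~\ref{thm:genchA} with $\Z=\X$, the coupled variables $(W,X_i)$ (whose marginals are $\Pp_W$ and $\Pp_{X_i}=\Pp_X$), and $F(w,x)=\ell(w,x)$. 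Assumptions~\ref{ass} supply everything Theorem~\ref{thm:genchA} requires of $\W$ and $\ell$ — compact convex $\W$ with non-empty interior, $C^1$ dependence on $w$ a.s., uniform bounds on $\ell$ and $\nabla_w\ell$ — and the $\df$-regularity hypothesis on $\nabla_w\ell$ is exactly the condition imposed on $z\mapsto\nabla_wF(w,z)$ in Theorem~\ref{thm:genchA}. Hence, for any $\{\ee_k\}$-refining sequence of nets,
$$\Big|\E_{\Pp_{W\otimes X}}[\ell(W,X)]-\E_{\Pp_{W,X_i}}[\ell(W,X_i)]\Big|\;\le\;\xi\sum_{k=1}^\infty\ee_{k-1}\,\E_{\Pp_W}\big[\df(\Pp_X,\Pp_{X_i|W_k})\big].$$
Summing this over $i$, dividing by $m$, and applying the triangle inequality to the decomposition above yields the claimed bound.

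I do not expect a genuine obstacle here: once Theorem~\ref{thm:genchA} is in hand the argument is pure bookkeeping. The only points needing a line of justification are the identification $\E_{\Pp_{W\otimes S}}[\ell(W,X_i)]=\E_{\Pp_{W\otimes X}}[\ell(W,X)]$ (from $\Pp_{X_i}=\Pp_X$ together with the independence of $W$ and $S$ under $\Pp_{W\otimes S}$), and the measurability and support properties of the regular conditional laws $w\mapsto\Pp_{X_i|W_k=w}$, both covered by Lemma~\ref{lemma:supp} and the measurability lemmas of Appendix~\ref{app:tech}. If one wishes to dispense with the convexity part of Assumptions~\ref{ass}, the same proof goes through using Theorem~\ref{thm:genchapp} in place of Theorem~\ref{thm:genchA}, applied after passing from the regularity of $\nabla_w\ell$ to the $\Rr_\df(\xi\|w-w'\|)$-regularity of $x\mapsto\ell(w,x)-\ell(w',x)$ via Lemma~\ref{lemma:superreg}.
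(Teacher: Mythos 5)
Your proposal is correct and is essentially the paper's own proof: the same coordinate-wise decomposition $\G=\frac1m\sum_i(\E_{\Pp_{W\otimes X}}[\ell(W,X)]-\E_{\Pp_{W,X_i}}[\ell(W,X_i)])$ followed by an application of Theorem~\ref{thm:genchA} to each term (the paper's proof is exactly these two steps, stated more tersely). Note that you implicitly read the hypothesis as $\df$-regularity of $\nabla_w\ell$ rather than of $\ell$ as literally printed in the statement; this is evidently a typo in the paper, and the paper's proof makes the same silent correction, so your reading is the intended one.
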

\begin{proof}
Just write
$$\G = \frac{1}{m}\sum_{i=1}^m(\E_{\Pp_{W\otimes X}}[\ell(W, X)]-\E_{\Pp_{W, X_i}}[\ell(W, X_i)])\,.$$
and then conclude by applying Theorem \ref{thm:genchA} to bound each term of the sum.
\end{proof}
We can now state several individual-sample generalisation bounds. For the sake of brevity, we will omit the proofs, as they are all direct applications of Propositions \ref{prop:gendis} and \ref{prop:gendisch}, and of the previously established results of $\df$-regularity.
\begin{proposition}\label{prop:MIdis}
On the one hand, if $\ell(w, X)$ is $\xi$-SG uniformly on $\W$, then
$$|\G|\leq \frac{\xi}{m}\sum_{i=1}^m \sqrt{2I(W; X_i)}\,.$$
On the other hand, if $\nabla_w\ell(w, X)$ is $\xi$-SG uniformly on $\W$, then
$$|\G|\leq \frac{\xi}{m}\sum_{i=1}^m\sum_{k=1}^\infty\ee_{k-1}\sqrt{2I(W_k;X_i)}\,.$$
\end{proposition}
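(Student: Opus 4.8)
The plan is to read both inequalities off the individual-sample bounds of Propositions~\ref{prop:gendis} and~\ref{prop:gendisch}, specialised to the divergence $\df_1:(\mu,\nu)\mapsto\sqrt{2\KL(\nu\|\mu)}$, and then to turn the resulting expected square roots into mutual informations using the concavity of $t\mapsto\sqrt t$ and the disintegration (chain rule) of the KL divergence. No new ideas beyond the already-established machinery are needed; recall in particular that $\Pp_{X_i}=\Pp_X$ for every $i$ is a standing assumption, so no independence of the $X_i$ is required, and the factor $1/m$ will come purely from averaging the $m$ per-sample terms.

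For the first claim, suppose $\ell(w,X)$ is $\xi$-SG for $X\sim\Pp_X$ and every $w\in\W$. By Lemma~\ref{lemma:dfreg}, $x\mapsto\ell(w,x)$ then has regularity $\Rr_{\df_1}(\xi)$ with respect to $\Pp_X$, for every $w\in\W$ (and $\df_1$ is measurable by Lemma~\ref{lemma:klmeas}). Proposition~\ref{prop:gendis} gives
\[
|\G|\leq\frac{\xi}{m}\sum_{i=1}^m\E_{\Pp_W}\bigl[\sqrt{2\KL(\Pp_{X_i|W}\|\Pp_X)}\bigr]\,.
\]
Applying Jensen's inequality to each summand, $\E_{\Pp_W}[\sqrt{2\KL(\Pp_{X_i|W}\|\Pp_X)}]\leq\sqrt{2\,\E_{\Pp_W}[\KL(\Pp_{X_i|W}\|\Pp_X)]}$, and disintegrating the KL divergence together with $\Pp_{X_i}=\Pp_X$ yields $\E_{\Pp_W}[\KL(\Pp_{X_i|W=w}\|\Pp_X)]=\KL(\Pp_{W,X_i}\|\Pp_W\otimes\Pp_{X_i})=I(W;X_i)$. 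Substituting gives the first bound.

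For the chained bound I would argue identically, now using that $\nabla_w\ell(w,X)$ being $\xi$-SG for every $w$ implies, again by Lemma~\ref{lemma:dfreg}, that $x\mapsto\nabla_w\ell(w,x)$ has regularity $\Rr_{\df_1}(\xi)$ with respect to $\Pp_X$. Under Assumptions~\ref{ass}, Proposition~\ref{prop:gendisch} (whose proof passes from this regularity of the gradient to the regularity of the telescoping increments $\ell(w,\cdot)-\ell(w',\cdot)$ via Lemma~\ref{lemma:superreg}, exactly as in Theorem~\ref{thm:genchA}) then gives
\[
|\G|\leq\frac{\xi}{m}\sum_{i=1}^m\sum_{k=1}^\infty\ee_{k-1}\,\E_{\Pp_W}\bigl[\sqrt{2\KL(\Pp_{X_i|W_k}\|\Pp_X)}\bigr]\,.
\]
Applying Jensen termwise in $i$ and $k$, and noting that averaging over $\Pp_W$ of a function of $W_k=\pi_k(W)$ equals averaging over $\Pp_{W_k}$, so that $\E_{\Pp_W}[\KL(\Pp_{X_i|W_k=w_k}\|\Pp_X)]=\KL(\Pp_{W_k,X_i}\|\Pp_{W_k}\otimes\Pp_{X_i})=I(W_k;X_i)$, delivers the second bound.

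The argument has no genuinely hard step: the result is a corollary of the individual-sample machinery. The only points requiring a little care are bookkeeping ones — checking that the conditional laws appearing in Propositions~\ref{prop:gendis}--\ref{prop:gendisch} are exactly those whose KL divergences disintegrate to $I(W;X_i)$ and $I(W_k;X_i)$, and that the per-$(i,k)$ use of Jensen is legitimate, which it is, each term being a concave function of a non-negative quantity. One should also remember that the passage from the subgaussianity of $\nabla_w\ell$ to a chained bound is already contained in Proposition~\ref{prop:gendisch}/Theorem~\ref{thm:genchA}, so nothing further is needed there.
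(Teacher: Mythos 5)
Your proposal is correct and follows exactly the route the paper intends: Propositions \ref{prop:gendis} and \ref{prop:gendisch} specialised to $\df:(\mu,\nu)\mapsto\sqrt{2\KL(\nu\|\mu)}$, with the regularity supplied by Lemma \ref{lemma:dfreg} (via Lemma \ref{lemma:superreg} for the gradient case) and Jensen's inequality plus the KL disintegration turning $\E_{\Pp_W}[\sqrt{2\KL(\Pp_{X_i|W}\|\Pp_X)}]$ and $\E_{\Pp_W}[\sqrt{2\KL(\Pp_{X_i|W_k}\|\Pp_X)}]$ into $\sqrt{2I(W;X_i)}$ and $\sqrt{2I(W_k;X_i)}$. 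The paper omits this proof precisely because it is this direct application, so nothing further is needed.
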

\begin{proposition}\label{prop:Wassdis}
On the one hand, if $x\mapsto\ell(w, x)$ is $\xi$-Lipschitz uniformly on $\W$, then
$$|\G|\leq \frac{\xi}{m}\sum_{i=1}^mE_{\Pp_W}[\Wass(\Pp_{X}, \Pp_{X_i|W})]\,.$$
On the other hand, assume \ref{ass}. if $x\mapsto\nabla_w\ell(w, x)$ is $\xi$-Lipschitz uniformly on $\W$, then
$$|\G|\leq \frac{\xi}{m}\sum_{i=1}^m\sum_{k=1}^\infty\ee_{k-1}E_{\Pp_{W}}[\Wass(\Pp_{X}, \Pp_{X_i|W_k})]\,.$$
\end{proposition}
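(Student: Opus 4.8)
The plan is to obtain both inequalities as immediate corollaries of the individual-sample templates in Propositions \ref{prop:gendis} and \ref{prop:gendisch}, instantiated with $\df$ equal to the $1$-Wasserstein distance, the required $\df$-regularity being supplied by the Lipschitz half of Lemma \ref{lemma:dfreg}. For the \emph{unchained} bound, take $\df=\df_2\colon(\mu,\nu)\mapsto\Wass(\mu,\nu)$, which is measurable on $\PR\times\PR$ by Lemma \ref{lemma:wassmeas}, so Definition \ref{def:Dreg} applies. Fix $w\in\W$: by hypothesis $x\mapsto\ell(w,x)$ is $\xi$-Lipschitz on $(\X,d_\X)$, and $\ell(w,\cdot)\in L^1(\Pp_X)$ is a standing assumption, so Lemma \ref{lemma:dfreg} gives that $x\mapsto\ell(w,x)$ has regularity $\Rr_{\df_2}(\xi)$ with respect to $\Pp_X$; since $\xi$ does not depend on $w$, this holds for every $w\in\W$. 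Proposition \ref{prop:gendis}, applied with $\Z=\X$, $Z=X$ and this $\df$, then yields
$$|\G|\le\frac{\xi}{m}\sum_{i=1}^m\E_{\Pp_W}\big[\Wass(\Pp_X,\Pp_{X_i|W})\big]\,,$$
the first claim; the right-hand side makes sense because $w\mapsto\Wass(\Pp_X,\Pp_{X_i|W=w})$ is the composition of the measurable kernel $w\mapsto\Pp_{X_i|W=w}$ with the measurable map $\df_2$.

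For the \emph{chained} bound, assume \ref{ass}; in particular $\sup_{(w,x)\in\W\times\X}\|\nabla_w\ell(w,x)\|<+\infty$, hence $\nabla_w\ell(w,\cdot)\in L^1(\Pp_X)$ for every $w$. Keep $\df=\df_2$ and fix $w\in\W$. The $\R^d$-valued map $x\mapsto\nabla_w\ell(w,x)$ is $\xi$-Lipschitz on $\X$, i.e.\ for every $v\in\R^d$ the scalar map $x\mapsto v\cdot\nabla_w\ell(w,x)$ is $(\xi\|v\|)$-Lipschitz and lies in $L^1(\Pp_X)$; the vector-valued part of Lemma \ref{lemma:dfreg} then gives that $x\mapsto\nabla_w\ell(w,x)$ has regularity $\Rr_{\df_2}(\xi)$ with respect to $\Pp_X$, for all $w\in\W$. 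Feeding this into Proposition \ref{prop:gendisch} — whose proof splits $\G$ into the $m$ individual-index contributions and controls each by the chained template Theorem \ref{thm:genchA}, which only uses the $\df$-regularity of the gradient — gives
$$|\G|\le\frac{\xi}{m}\sum_{i=1}^m\sum_{k=1}^\infty\ee_{k-1}\,\E_{\Pp_W}\big[\Wass(\Pp_X,\Pp_{X_i|W_k})\big]\,,$$
which is the second claim.

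There is essentially no genuine obstacle here: the substantive work has already been carried out in Lemma \ref{lemma:dfreg}, in the individual-sample decomposition $\G=\frac1m\sum_{i=1}^m(\E_{\Pp_{W\otimes X}}[\ell(W,X)]-\E_{\Pp_{W,X_i}}[\ell(W,X_i)])$, and in Theorems \ref{thm:genstdA} and \ref{thm:genchA}. The only points needing a little care are (i) verifying the $L^1$ integrability required by Definition \ref{def:Dreg} and by Lemma \ref{lemma:dfreg} — namely $\ell(w,\cdot)\in L^1(\Pp_X)$ in the unchained case, and, in the chained case, $\nabla_w\ell(w,\cdot)\in L^1(\Pp_X)$, which follows from the uniform boundedness of $\nabla_w\ell$ in \ref{ass} — and (ii) reading the hypothesis of Proposition \ref{prop:gendisch} as a regularity condition on $x\mapsto\nabla_w\ell(w,x)$ (rather than on $\ell$ itself), which is precisely the input that Theorem \ref{thm:genchA}, invoked in its proof, requires.
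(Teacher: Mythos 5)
Your proposal is correct and follows exactly the route the paper intends: the paper omits the proof, stating that these individual-sample bounds are direct applications of Propositions \ref{prop:gendis} and \ref{prop:gendisch} together with the $\df$-regularity results, which is precisely what you carry out (instantiating $\df=\Wass$ via the Lipschitz half of Lemma \ref{lemma:dfreg}). Your observation that the hypothesis of Proposition \ref{prop:gendisch} must be read as a regularity condition on $\nabla_w\ell$ rather than on $\ell$ is also the right reading of what is evidently a typo in its statement.
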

\begin{proposition}\label{prop:powerdis}
Fix $p>1$ and let $r=p/(p-1)$. Write $\bar\ell(w)$ for $\E_{\Pp_{X}}[\ell(w, X)]$ and $\overline{\nabla_w\ell}(w)$ for $\E_{\Pp_{X}}[\nabla_w\ell(w, X)]$. On the one hand, if, for all $w\in\W$, $\E_{\Pp_{X}}[|\ell(w, X)-\bar\ell(w)|^r]\leq \xi^r$, then
$$|\G|\leq \frac{\xi}{m}\sum_{i=1}^m\,(I^{(p)}(W; X_i)+1)^{1/p}\,.$$
On the other hand, assume \ref{ass}. If $\E_{\Pp_{X}}[\|\nabla_w\ell(w, X)-\overline{\nabla_w\ell}(w)\|^r]\leq\xi^r$, for all $w\in\W$, then
$$|\G|\leq \frac{\xi}{m}\sum_{i=1}^m\sum_{k=1}^\infty\ee_{k-1}(I^{(p)}(W_k;X_i)+1)^{1/p}\,.$$
\end{proposition}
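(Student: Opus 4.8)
The plan is to obtain both inequalities as direct applications of the individual-sample master bounds (Propositions \ref{prop:gendis} and \ref{prop:gendisch}), after verifying the relevant $\df$-regularity through Lemma \ref{lemma:power} and then converting the expected power divergences into power informations via Jensen's inequality.

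For the unchained bound I would take $\df:(\mu,\nu)\mapsto(D^{(p)}(\nu\|\mu)+1)^{1/p}$. The hypothesis $\E_{\Pp_X}[|\ell(w,X)-\bar\ell(w)|^r]\le\xi^r$ is exactly the condition $\E_{\Pp_X}[|\ell(w,X)-\bar\ell(w)|^r]^{1/r}\le\xi$ of Lemma \ref{lemma:power}, so that lemma gives that $x\mapsto\ell(w,x)$ has regularity $\Rr_\df(\xi)$ with respect to $\Pp_X$, for every $w\in\W$. Proposition \ref{prop:gendis} then yields $|\G|\le\frac{\xi}{m}\sum_{i=1}^m\E_{\Pp_W}\big[(D^{(p)}(\Pp_{X_i|W}\|\Pp_X)+1)^{1/p}\big]$. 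The last step is the identity $\E_{\Pp_W}[D^{(p)}(\Pp_{X_i|W=w}\|\Pp_X)]+1=\E_{\Pp_{W\otimes X_i}}[(\dd\Pp_{W,X_i}/\dd\Pp_{W\otimes X_i})^p]=I^{(p)}(W;X_i)+1$, which follows by disintegrating $\Pp_{W\otimes X_i}$ along $W$ and using $\Pp_{X_i}=\Pp_X$; since $t\mapsto t^{1/p}$ is concave for $p>1$, Jensen's inequality bounds $\E_{\Pp_W}[(D^{(p)}(\Pp_{X_i|W}\|\Pp_X)+1)^{1/p}]$ by $(I^{(p)}(W;X_i)+1)^{1/p}$, giving the claim.

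For the chained bound the argument is identical with $\nabla_w\ell$ in place of $\ell$. Under \ref{ass}, the hypothesis $\E_{\Pp_X}[\|\nabla_w\ell(w,X)-\overline{\nabla_w\ell}(w)\|^r]\le\xi^r$ together with the vector-valued part of Lemma \ref{lemma:power} shows that $x\mapsto\nabla_w\ell(w,x)$ has regularity $\Rr_\df(\xi)$ wrt $\Pp_X$ for every $w$; Proposition \ref{prop:gendisch} then gives $|\G|\le\frac{\xi}{m}\sum_{i=1}^m\sum_{k=1}^\infty\ee_{k-1}\E_{\Pp_W}\big[(D^{(p)}(\Pp_{X_i|W_k}\|\Pp_X)+1)^{1/p}\big]$. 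Rewriting the inner expectation as an expectation over $\Pp_{W_k}$ (by pushing forward along $\pi_k$), the same disintegration identity gives $\E_{\Pp_{W_k}}[D^{(p)}(\Pp_{X_i|W_k}\|\Pp_X)]+1=I^{(p)}(W_k;X_i)+1$, and one more use of Jensen's inequality produces the term $(I^{(p)}(W_k;X_i)+1)^{1/p}$.

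I do not expect any serious obstacle: the whole argument is a mechanical substitution into the already-proved master bounds and $\df$-regularity lemma. The only point deserving a line of care is the disintegration identity relating the $\Pp_W$-expectation (resp.\ $\Pp_{W_k}$-expectation) of the power divergence to the power information, together with the observation that the direction of Jensen's inequality is the favourable one precisely because $t\mapsto t^{1/p}$ is concave when $p>1$.
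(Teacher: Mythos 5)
Your proposal is correct and takes essentially the same route as the paper, which omits the proof precisely because it is a direct application of Propositions \ref{prop:gendis} and \ref{prop:gendisch} combined with the $\df$-regularity from Lemma \ref{lemma:power} (with $\df:(\mu,\nu)\mapsto(D^{(p)}(\nu\|\mu)+1)^{1/p}$). The disintegration identity and the concavity-of-$t^{1/p}$ Jensen step you single out are exactly the same conversion from expected power divergence to power information used in the proof of Proposition \ref{prop:power}.
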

\begin{proposition}\label{prop:chi2dis}
On the one hand, if, for all $w\in\W$, $\V_{\Pp_{X}}[\ell(w, X)]\leq \xi^2$, then
$$|\G|\leq \frac{\xi}{m}\sum_{i=1}^m \,\E_{\Pp_W}\left[\sqrt{\chi^2(\Pp_{X_i|W}\|\Pp_{X})}\right]\,.$$
On the other hand, assume \ref{ass}. If $\|\mathbb C_{\Pp_{X}}[\nabla_w\ell(w, X)]\|\leq\xi^2$, for all $w\in\W$, then
$$|\G|\leq \frac{\xi}{m}\sum_{i=1}^m\sum_{k=1}^\infty\ee_{k-1}\E_{\Pp_W}\left[\sqrt{\chi^2(\Pp_{X_i|W_k}\|\Pp_{X})}\right]\,.$$
\end{proposition}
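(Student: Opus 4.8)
The plan is to obtain both inequalities as immediate corollaries of the $\chi^2$-regularity estimate of Lemma~\ref{lemma:chi2} combined with the individual-sample master bounds of Propositions~\ref{prop:gendis} and~\ref{prop:gendisch}; no new computation is required. Throughout put $\df\colon(\mu,\nu)\mapsto\sqrt{\chi^2(\nu\|\mu)}$, which is measurable in the sense demanded by Definition~\ref{def:Dreg}: indeed $\chi^2$ is the $p$-power (hence an $f$-) divergence with $p=2$, so Lemma~\ref{lemma:klmeas} applies and composition with $\sqrt{\,\cdot\,}$ preserves measurability.

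For the first, unchained bound, fix $w\in\W$ and apply Lemma~\ref{lemma:chi2} with $q=1$, $\mu=\Pp_X$ and $f=\ell(w,\cdot)$: for a scalar map the operator norm of the covariance matrix is just the variance, so the hypothesis $\V_{\Pp_X}[\ell(w,X)]\leq\xi^2$ yields that $x\mapsto\ell(w,x)$ has regularity $\Rr_\df(\xi)$ with respect to $\Pp_X$, for every $w\in\W$. Plugging this into Proposition~\ref{prop:gendis} --- whose proof splits $\G$ into the $m$ individual-sample gaps $\E_{\Pp_{W\otimes X}}[\ell(W,X)]-\E_{\Pp_{W,X_i}}[\ell(W,X_i)]$ and controls each by Theorem~\ref{thm:genstdA}, using $\Pp_{X_i}=\Pp_X$ (so that $\Pp_{W\otimes X_i}=\Pp_{W\otimes X}$) --- gives exactly $|\G|\leq\tfrac{\xi}{m}\sum_{i=1}^m\E_{\Pp_W}[\df(\Pp_X,\Pp_{X_i|W})]$.

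For the second, chained bound we repeat the argument with the gradient in place of the loss. Under the Assumptions~\ref{ass}, apply Lemma~\ref{lemma:chi2} with $q=d$, $\mu=\Pp_X$ and $f=\nabla_w\ell(w,\cdot)$: the hypothesis $\|\mathbb C_{\Pp_X}[\nabla_w\ell(w,X)]\|\leq\xi^2$ gives that $x\mapsto\nabla_w\ell(w,x)$ has regularity $\Rr_\df(\xi)$ with respect to $\Pp_X$, for every $w\in\W$; here the passage from the vector map to its scalarisations $x\mapsto v\cdot\nabla_w\ell(w,x)$, with parameter scaled by $\|v\|$, is already packaged inside Lemma~\ref{lemma:chi2} and Definition~\ref{def:Dreg}. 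Feeding this into the chained individual-sample master bound --- Proposition~\ref{prop:gendisch}, used exactly as in its application to the MI bound of Proposition~\ref{prop:MIdis}, i.e.\ starting from the $\Rr_\df$-regularity of $x\mapsto\nabla_w\ell(w,x)$ and invoking Theorem~\ref{thm:genchA} on each of the $m$ summands --- yields $|\G|\leq\tfrac{\xi}{m}\sum_{i=1}^m\sum_{k=1}^\infty\ee_{k-1}\E_{\Pp_W}[\df(\Pp_X,\Pp_{X_i|W_k})]$, as desired.

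There is no real obstacle here; the proof is bookkeeping once the right lemmas are lined up. The only points deserving a line of care are the support and integrability conditions in Definition~\ref{def:Dreg} (they hold because $\Supp(\Pp_{X_i|W=w})\subseteq\Supp(\Pp_X)$ by Lemma~\ref{lemma:supp} and $\ell(w,\cdot)\in L^1(\Pp_X)$), and the observation that --- in contrast with the averaged variance bound of Proposition~\ref{prop:chi2} --- no independence of the $X_i$, and hence no $1/\sqrt m$ rescaling of the (co)variance, is needed here. The one genuinely substantive ingredient, the Hammersley--Chapman--Robbins bound underlying Lemma~\ref{lemma:chi2}, may be taken as given.
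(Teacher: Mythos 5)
Your proof is correct and follows exactly the route the paper intends: it omits the proof precisely because the result is a direct application of Lemma~\ref{lemma:chi2} (scalar case for $\ell$, vector case for $\nabla_w\ell$) combined with the individual-sample master bounds of Propositions~\ref{prop:gendis} and~\ref{prop:gendisch}. Your side remarks on measurability, supports, and the absence of any $1/\sqrt m$ rescaling are consistent with the paper's framework, so nothing is missing.
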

\begin{proposition}\label{prop:TVdis}
On the one hand, if $|\ell(w, x)|\leq \xi$ for all $w\in\W$ and all $x\in\X$, then
$$|\G|\leq \frac{2\xi}{m}\sum_{i=1}^m\,\E_{\Pp_W}\left[\TV(\Pp_{X}, \Pp_{X_i|W})\right]\,.$$
On the other hand, assume \ref{ass}. If $\|\nabla_w\ell(w, x)\|\leq \xi$ for all $w\in\W$ and all $x\in\X$, then
$$|\G|\leq \frac{2\xi}{m}\sum_{i=1}^m\sum_{k=1}^\infty\ee_{k-1}\E_{\Pp_W}\left[\TV(\Pp_{X}, \Pp_{X_i|W_k})\right]\,.$$
\end{proposition}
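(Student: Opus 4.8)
The plan is to recognise both halves of the statement as immediate specialisations of the individual-sample templates of Propositions~\ref{prop:gendis} and~\ref{prop:gendisch}, taken with the divergence $\df:(\mu,\nu)\mapsto 2\TV(\mu,\nu)$. The only ingredient one needs to feed these templates is the appropriate $\Rr_\df$-regularity, with respect to $\Pp_X$, of $\ell(w,\cdot)$ (for the unchained bound) and of $\nabla_w\ell(w,\cdot)$ (for the chained one); this is exactly what Lemma~\ref{lem:TV} provides, since its hypothesis is boundedness of the map in question.

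For the first inequality, the assumption $|\ell(w,x)|\leq\xi$ for all $(w,x)\in\W\times\X$ says that, for each fixed $w$, the function $x\mapsto\ell(w,x)$ takes values in $[-\xi,\xi]$, so Lemma~\ref{lem:TV} gives that it has regularity $\Rr_\df(\xi)$ wrt $\Pp_X$ for every $w\in\W$. I would then apply Proposition~\ref{prop:gendis} and rewrite $\df(\Pp_X,\Pp_{X_i|W})=2\TV(\Pp_X,\Pp_{X_i|W})$ to obtain $|\G|\leq\frac{2\xi}{m}\sum_{i=1}^m\E_{\Pp_W}[\TV(\Pp_X,\Pp_{X_i|W})]$.

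For the chained inequality, one first has to pass from the Euclidean bound $\|\nabla_w\ell(w,x)\|\leq\xi$ to the per-direction regularity demanded by Definition~\ref{def:Dreg}. For any $v\in\R^d$, Cauchy--Schwarz gives $|v\cdot\nabla_w\ell(w,x)|\leq\xi\|v\|$, so $x\mapsto v\cdot\nabla_w\ell(w,x)$ is bounded in $[-\xi\|v\|,\xi\|v\|]$ and hence is $\Rr_\df(\xi\|v\|)$-regular wrt $\Pp_X$ by Lemma~\ref{lem:TV}; by the vector-valued clause of Definition~\ref{def:Dreg} this means $x\mapsto\nabla_w\ell(w,x)$ is $\Rr_\df(\xi)$-regular wrt $\Pp_X$ for every $w\in\W$. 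Since assumptions~\ref{ass} are in force, I would then invoke Proposition~\ref{prop:gendisch} with this $\df$ and substitute $\df=2\TV$ to conclude.

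The proof is essentially bookkeeping, so there is no real obstacle; the only step requiring a moment's care is the Cauchy--Schwarz reduction just described, which converts the hypothesis on the Euclidean norm of the gradient into the directional boundedness that the definition of $\df$-regularity for $\R^d$-valued maps asks for. I would also note in passing that the measurability of $\nu\mapsto\TV(\mu,\nu)$, needed for Definition~\ref{def:Dreg} to be meaningful, is already established inside the proof of Lemma~\ref{lem:TV}.
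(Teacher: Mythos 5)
Your proposal is correct and is exactly the route the paper intends: it omits the proof precisely because the result is a direct application of Lemma \ref{lem:TV} together with the individual-sample templates of Propositions \ref{prop:gendis} and \ref{prop:gendisch} (the latter's hypothesis should indeed be read as regularity of $\nabla_w\ell$, as its proof via Theorem \ref{thm:genchA} shows). Your Cauchy--Schwarz reduction to directional boundedness just makes explicit the vector-valued case that Lemma \ref{lem:TV} already covers, so there is nothing missing.
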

\begin{definition}[Lautum information]
Consider two coupled random variables $Z, Z'$ on $(\Z, \Sigma_\Z)$. We define their lautum information \citep{lautuminf} as
$$\LI(Z;Z') = \KL(\Pp_{Z\otimes Z'}\|\Pp_{Z, Z'})\,.$$
\end{definition}
\begin{proposition}\label{prop:LIdis}
On the one hand, if $|\ell(w, x)|\leq \xi$ for all $w\in\W$ and all $x\in\X$, then
$$|\G|\leq \frac{\xi}{m}\sum_{i=1}^m\sqrt{2\LI(W;X_i)}\,.$$
On the other hand, assume \ref{ass}. If $\|\nabla_w\ell(w, x)\|\leq \xi$ for all $w\in\W$ and all $x\in\X$, then
$$|\G|\leq \frac{\xi}{m}\sum_{i=1}^m\sum_{k=1}^\infty\ee_{k-1}\sqrt{2\LI(W_k;X_i)}\,.$$
\end{proposition}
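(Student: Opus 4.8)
The plan is to follow the same three-step recipe used for every other individual-sample bound in this appendix: (i) turn the boundedness hypothesis into a $\df$-regularity statement for the divergence $\df:(\mu,\nu)\mapsto\sqrt{2\KL(\mu\|\nu)}$ (the reverse-$\KL$ divergence of Corollary~\ref{cor:revkl}); (ii) feed this into Proposition~\ref{prop:gendis} for the unchained claim and into Proposition~\ref{prop:gendisch} for the chained one; and (iii) rewrite the resulting expected conditional $\KL$ terms as lautum informations via the chain rule for relative entropy, with a Jensen step to pull the square root outside the expectation.

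Concretely, for the first claim I would note that $\ell(w,\cdot)\in[-\xi,\xi]$, so Corollary~\ref{cor:revkl} gives that $x\mapsto\ell(w,x)$ has regularity $\Rr_\df(\xi)$ with respect to $\Pp_X$, for every $w\in\W$. Proposition~\ref{prop:gendis} then yields $|\G|\le\frac{\xi}{m}\sum_{i=1}^m\E_{\Pp_W}[\sqrt{2\KL(\Pp_X\|\Pp_{X_i|W})}]$, and since $t\mapsto\sqrt t$ is concave, Jensen's inequality gives $\E_{\Pp_W}[\sqrt{2\KL(\Pp_X\|\Pp_{X_i|W})}]\le\sqrt{2\,\E_{\Pp_W}[\KL(\Pp_X\|\Pp_{X_i|W})]}$. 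It then remains to check the identity $\E_{\Pp_W}[\KL(\Pp_{X_i}\|\Pp_{X_i|W})]=\LI(W;X_i)$ (recall $\Pp_{X_i}=\Pp_X$): both $\Pp_{W\otimes X_i}$ and $\Pp_{W,X_i}$ have first-coordinate marginal $\Pp_W$, and disintegrating along that coordinate writes them as $\Pp_W\otimes\Pp_{X_i}$ and $\int\Pp_{X_i|W=w}\,\dd\Pp_W(w)$, so the chain rule for relative entropy gives $\KL(\Pp_{W\otimes X_i}\|\Pp_{W,X_i})=\E_{\Pp_W}[\KL(\Pp_{X_i}\|\Pp_{X_i|W})]$, as wanted.

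For the second claim I would first observe that $\|\nabla_w\ell(w,x)\|\le\xi$ means $x\mapsto v\cdot\nabla_w\ell(w,x)$ takes values in $[-\|v\|\xi,\|v\|\xi]$ for every $v\in\R^d$, so Corollary~\ref{cor:revkl} and Definition~\ref{def:Dreg} give that $x\mapsto\nabla_w\ell(w,x)$ has regularity $\Rr_\df(\xi)$ with respect to $\Pp_X$, for every $w\in\W$. Under Assumptions~\ref{ass}, Proposition~\ref{prop:gendisch} (applied with the regularity of $\nabla_w\ell$) then gives $|\G|\le\frac{\xi}{m}\sum_{i=1}^m\sum_{k=1}^\infty\ee_{k-1}\E_{\Pp_W}[\sqrt{2\KL(\Pp_X\|\Pp_{X_i|W_k})}]$, and the very same Jensen step together with the disintegration identity — now with $W_k$ in place of $W$ — turns each inner expectation into $\sqrt{2\LI(W_k;X_i)}$, which finishes the proof.

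The only step that is not completely mechanical is the identification of $\E_{\Pp_W}[\KL(\Pp_X\|\Pp_{X_i|W})]$ with $\LI(W;X_i)$: one must be careful to disintegrate $\Pp_{W,X_i}$ along the \emph{hypothesis} coordinate rather than along $X_i$ (otherwise one lands on $\E_{\Pp_X}[\KL(\Pp_W\|\Pp_{W|X_i})]$, which is the same number but obscures the point), and to note that the cases where the relevant absolute continuity fails are covered by the convention $\KL=+\infty$, making all inequalities trivial there. Everything else — the $\df$-regularity upgrade, the two invocations of the general individual-sample theorems, and Jensen — is routine, which is why the paper lists this result without proof.
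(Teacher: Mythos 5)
Your proof is correct and is exactly the route the paper intends (the paper omits the proof, stating it is a direct application of Propositions \ref{prop:gendis} and \ref{prop:gendisch} together with the $\df$-regularity results): Corollary \ref{cor:revkl} gives $\Rr_\df(\xi)$ regularity for the reverse-$\KL$ divergence, the individual-sample theorems give the bounds in terms of $\E_{\Pp_W}[\sqrt{2\KL(\Pp_X\|\Pp_{X_i|W})}]$ (resp.\ with $W_k$), and Jensen plus the chain rule for relative entropy identifies the resulting quantity with $\sqrt{2\LI(W;X_i)}$ (resp.\ $\sqrt{2\LI(W_k;X_i)}$). Your reading of Proposition \ref{prop:gendisch} as requiring the regularity of $\nabla_w\ell$ (rather than of $\ell$, as misprinted in its statement) is the intended one, consistent with Theorem \ref{thm:genchA} and with the other chained individual-sample bounds.
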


\subsection{Bounds based on random sub-sampling from a super-sample}
We can derive in our framework bounds in the same spirit of the conditional MI bound from \cite{steinke2020}.

Let $s^\star=( x^\star_1,\dots,  x^\star_m)$ denote a $(2m)$-sample, made of $m$ pairs $x^\star_i = (x_{i, 0}, x_{i, 1})$. The training sample is in the form $s=(x_1,\dots, x_m)$. The choice of $s$, given $s^\star$ is determined by a variable $u\in\{0,1\}^n$, in the sense that $x_i= x^\star_{i, u_i}$, where $u_i$ determine which one of the two components of $ x^\star_i$ is chosen as $x_i$. In practice we can write $s = s^\star_u$, with $u\in\{0, 1\}^n$. We let $\bar u=1-u$ (the difference being component-wise), and $\bar s=s^\star_{\bar u}$. We denote as $ S^\star$ the random super-sample and we assume that each $ X^\star_i\in S^\star$ has marginal distribution $\Pp_{ X^\star} = \Pp_X^{\otimes 2}$. Morover, we let $\Pp_{\bar U}=\Pp_{U}\sim \mathrm{Bernoulli}(\frac{1}{2})^{\otimes m}$, and we assume that $U\indep  S^\star$. Note that this implies that if the super-sample is made of independent pairs ($\Pp_{ S^\star} = \Pp_{ X^\star}^{\otimes m}$) then all the $X_i\in S$ are independent. 
\begin{proposition}\label{prop:genrand}
Let $\Pp_{ S^\star} = \Pp_{ X^\star}^{\otimes m}$. Assume that $s\mapsto \Ell_s(w)$ has regularity $\Rr_\df(\xi)$, wrt $\Pp_{S| S^\star=s^\star}$, for $\Pp_{ S^\star}$-almost every $s^\star$ and $\forall w\in\W$. Then, we have that
$$|\G|\leq \xi\,\E_{\Pp_{ W, S^\star}}[\df(\Pp_{S|W,  S^\star}, \Pp_{S| S^\star})+\df(\Pp_{\bar S|W,  S^\star}, \Pp_{\bar S| S^\star})]\,.$$
\end{proposition}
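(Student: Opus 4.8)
The plan is to adapt the proof of Theorem~\ref{thm:genstd}, replacing the reference measure $\Pp_S=\Pp_X^{\otimes m}$ by the conditional law $\mu_{s^\star}:=\Pp_{S\mid S^\star=s^\star}$ — i.e.\ the uniform distribution over the $2^m$ sub-samples extractable from the super-sample $s^\star$ — and exploiting the exchangeability of the selected half $S$ and the discarded half $\bar S$. Recall that, by hypothesis, $s\mapsto\Ell_s(w)$ has regularity $\Rr_\df(\xi)$ with respect to $\mu_{s^\star}$, for $\Pp_{S^\star}$-a.e.\ $s^\star$ and every $w\in\W$.

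The first step is to rewrite $\G$ in terms of the super-sample. Since $\Pp_{S^\star}=\Pp_{X^\star}^{\otimes m}$ with $\Pp_{X^\star}=\Pp_X^{\otimes 2}$, and since $U$ is uniform on $\{0,1\}^m$ and independent of $S^\star$, a direct computation shows that the pair $(S,\bar S)$ is distributed as $\Pp_X^{\otimes 2m}$; in particular $S$ and $\bar S$ are independent and each has marginal $\Pp_X^{\otimes m}$. Because the learning algorithm sees only the training sample, $W\indep(S^\star,U)\mid S$, hence $W\indep\bar S\mid S$, and combining this with $S\indep\bar S$ gives $W\indep\bar S$. Therefore $\E_{\Pp_{W,\bar S}}[\Ell_{\bar S}(W)]=\E_{\Pp_W}[\Ell_\X(W)]=\E_{\Pp_{W\otimes S}}[\Ell_S(W)]$, and consequently
\begin{align*}
\G&=\E_{\Pp_{W,\bar S}}[\Ell_{\bar S}(W)]-\E_{\Pp_{W,S}}[\Ell_S(W)]\\
&=\E_{\Pp_{W,S^\star}}\!\left[\E_{\Pp_{\bar S\mid W,S^\star}}[\Ell_{\bar S}(W)]-\E_{\Pp_{S\mid W,S^\star}}[\Ell_S(W)]\right].
\end{align*}

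The second step is the symmetry argument. Since $U$ and $\bar U=1-U$ have the same law and are both independent of $S^\star$, we have $\Pp_{\bar S\mid S^\star=s^\star}=\Pp_{S\mid S^\star=s^\star}=\mu_{s^\star}$. Fixing $(w,s^\star)$ and adding and subtracting $\E_{\mu_{s^\star}}[\Ell(w)]$, the bracketed integrand above splits into two differences, each of the form $\E_{\mu_{s^\star}}[\Ell(w)]-\E_\nu[\Ell(w)]$ with $\nu$ equal to $\Pp_{\bar S\mid W=w,S^\star=s^\star}$, resp.\ $\Pp_{S\mid W=w,S^\star=s^\star}$. By Lemma~\ref{lemma:supp} these conditional laws are supported inside $\Supp(\mu_{s^\star})$ for $\Pp_{W\mid S^\star=s^\star}$-a.e.\ $w$, and since all the measures involved are finitely supported, integrability is automatic; hence the $\Rr_\df(\xi)$-regularity of $s\mapsto\Ell_s(w)$ with respect to $\mu_{s^\star}$ bounds each difference in absolute value by $\xi\,\df(\mu_{s^\star},\nu)$. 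Taking absolute values, applying the triangle inequality, integrating over $\Pp_{W,S^\star}$ and using Jensen's inequality then yields
$$|\G|\le\xi\,\E_{\Pp_{W,S^\star}}\!\left[\df\big(\Pp_{S\mid S^\star},\Pp_{S\mid W,S^\star}\big)+\df\big(\Pp_{\bar S\mid S^\star},\Pp_{\bar S\mid W,S^\star}\big)\right],$$
which (using $\Pp_{\bar S\mid S^\star}=\Pp_{S\mid S^\star}$) is the asserted bound.

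I expect the only genuinely delicate point to be the measure-theoretic bookkeeping of the first two steps: making precise, simultaneously, that conditionally on $S^\star$ the two halves $S$ and $\bar S$ are exchangeable (so that $\mu_{s^\star}$ is a legitimate common reference for both), while unconditionally $\bar S$ is independent of the output $W$ (so that $\E_{\Pp_{W,\bar S}}[\Ell_{\bar S}(W)]$ reproduces $\E_{\Pp_W}[\Ell_\X(W)]$). Once these facts are in place, the remainder is a routine application of Definition~\ref{def:Dreg} and Theorem~\ref{thm:genstd}-style manipulations.
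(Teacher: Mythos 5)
Your proposal is correct and follows essentially the same route as the paper: the paper likewise reduces $\G$ to the conditional (on $S^\star$) comparison of $\Pp_{S|W,S^\star}$ and $\Pp_{\bar S|W,S^\star}$ against the common reference $\Pp_{S|S^\star}=\Pp_{\bar S|S^\star}$ and then invokes the $\Rr_\df(\xi)$-regularity, the only cosmetic difference being that it imports the add-and-subtract decomposition from the proof of Theorem~3 in \cite{borja2021tighter} rather than deriving it via the tower property as you do. Your ordering $\df(\Pp_{S|S^\star},\Pp_{S|W,S^\star})$ is the one consistent with Definition~\ref{def:Dreg} (the reversed order in the proposition's display is the paper's own notational slip), so no gap there.
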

\begin{proof}
Let $\hat g(w, s^\star, u)=\Ell_{s^\star_{\bar u}}(w)-\Ell_{s^\star_u}(w)$. Now, recalling that $S =  S^\star_U$ and $\bar S =  S^\star_{\bar U}$, we have that $\Pp_{S| S^\star}$ is the law
of $ S^\star_U$ and $\Pp_{\bar S| S^\star}$ is the law
of $ S^\star_{\bar U}$, both under $\Pp_{U}$ and given $ S^\star$. Since $\Pp_{ S^\star}=\Pp_{ X^\star}^{\otimes m}=\Pp_X^{\otimes 2m}$, then $S\indep\bar S$. In particular $\bar{S}\indep W$, and hence $\Pp_{\bar S|W} = \Pp_{\bar S} = \Pp_S$, so that
$$\E_{\Pp_{W,  S^\star, U}}[\Ell_{ S^\star_{\bar U}}(W)]=\E_{\Pp_{W, \bar S}}[\Ell_{\bar S}(W)] = \E_{\Pp_{W\otimes S}}[\Ell_S(W)]\,.$$
It follows that $\G = \E_{\Pp_{W,  S^\star, U}}[\hat g(W,  S^\star, U)]$. Moreover, it is shown in \cite{borja2021tighter} (cf.\ proof of Theorem 3 therein) that
\begin{align*}
    \E_{\Pp_{W,  S^\star, U}}[\hat g(W,  S^\star, U)] = \E_{\Pp_{ S^\star}}\big[\E_{\Pp_{W\otimes U| S^\star}}&[\Ell_{ S^\star_U}(W)]-\E_{\Pp_{W, U| S^\star}}[\Ell_{ S^\star_U}(W)]]\\
    &-\E_{\Pp_{ S^\star}}[\E_{\Pp_{W, U| S^\star}}[\Ell_{ S^\star_{\bar U}}(W)]-\E_{\Pp_{W\otimes U| S^\star}}[\Ell_{ S^\star_{\bar U}}(W)]\big]\,.
\end{align*}
We hence have
\begin{align*}
    |\G|\leq \E_{\Pp_{ S^\star}}\Big[\big|\E_{\Pp_{W\otimes U| S^\star}}[\Ell_{ S^\star_U}(W)]&-\E_{\Pp_{W, U| S^\star}}[\Ell_{ S^\star_U}(W)]\big| \\
    &+ \big|\E_{\Pp_{W\otimes U| S^\star}}[\Ell_{ S^\star_{\bar U}}(W)]-\E_{\Pp_{W, U| S^\star}}[\Ell_{ S^\star_{\bar U}}(W)]\big|\Big]\,,
\end{align*}
which can be rewritten as
\begin{equation}\label{eq:supsam}
    |\G|\leq \E_{\Pp_{ W, S^\star}}\big[|\E_{\Pp_{S| S^\star}}[\Ell_S(W)]-\E_{\Pp_{S| W, S^\star}}[\Ell_S(W)]|+|\E_{\Pp_{\bar S| S^\star}}[\Ell_{\bar S}(W)]-\E_{\Pp_{\bar S| W, S^\star}}[\Ell_{\bar S}(W)]|\big]\,.
\end{equation}
Now, notice that, since $\Pp_U=\Pp_{\bar U}$, we have $\Pp_{S| S^\star}=\Pp_{\bar S| S^\star}$. In particular, $s\mapsto \Ell_s(w)$ has regularity $\Rr_\df(\xi)$ wrt $\Pp_{\bar S| S^\star=s^\star}$ as well ($\forall w\in\W$ and $\Pp_{ S^\star}$-a.s.). From \eqref{eq:supsam} and Theorem \ref{thm:genstd}, we have that
$$|\G|\leq \xi\,\E_{\Pp_{ W, S^\star}}[\df(\Pp_{S|W,  S^\star}, \Pp_{S| S^\star})+\df(\Pp_{\bar S|W,  S^\star}, \Pp_{\bar S| S^\star})]\,,$$
as requested.
\end{proof}
\begin{proposition}\label{prop:genrandch}
Let $\Pp_{ S^\star} = \Pp_{ X^\star}^{\otimes m}$. Assume \ref{ass} and suppose that $s\mapsto \nabla_w\Ell_s(w)$ has regularity $\Rr_\df(\xi)$, wrt $\Pp_{S| S^\star=s^\star}$, for $\Pp_{ S^\star}$-almost every $s^\star$ and $\forall w\in\W$. Then, we have that
$$|\G|\leq \xi\sum_{k=1}^\infty\ee_{k-1}\E_{\Pp_{ W, S^\star}}[\df(\Pp_{S|W_k,  S^\star}, \Pp_{S| S^\star})+\df(\Pp_{\bar S|W_k,  S^\star}, \Pp_{\bar S| S^\star})]\,.$$
\end{proposition}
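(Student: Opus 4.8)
The plan is to combine the ``super-sample'' decomposition of $\G$ already established in the proof of Proposition~\ref{prop:genrand} with the chaining machinery of Theorem~\ref{thm:gench} (equivalently Theorem~\ref{thm:genchapp}/\ref{thm:genchA}), applied conditionally on the super-sample $S^\star$. First I would recall from the proof of Proposition~\ref{prop:genrand} the key identity
$$|\G|\leq \E_{\Pp_{ W, S^\star}}\big[|\E_{\Pp_{S| S^\star}}[\Ell_S(W)]-\E_{\Pp_{S| W, S^\star}}[\Ell_S(W)]|+|\E_{\Pp_{\bar S| S^\star}}[\Ell_{\bar S}(W)]-\E_{\Pp_{\bar S| W, S^\star}}[\Ell_{\bar S}(W)]|\big]\,,$$
which holds because $\Pp_{ S^\star}=\Pp_X^{\otimes 2m}$ forces $S\indep\bar S$ and hence $\bar S\indep W$. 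This reduces the task, for $\Pp_{S^\star}$-almost every fixed $s^\star$, to bounding two ``unchained-looking'' quantities of the form $|\E_{\mu}[\Ell_{(\cdot)}(w)]-\E_{\nu_w}[\Ell_{(\cdot)}(w)]|$ where $\mu=\Pp_{S| S^\star=s^\star}$ (resp.\ $\Pp_{\bar S| S^\star=s^\star}$) and $\nu_w$ is the corresponding conditional law given $W=w$ — but now with a hypothesis $w$ that is itself random and on which we want to chain.

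Next I would apply the chaining argument of Theorem~\ref{thm:genchA} with $\Z=\Sc$ and the base measure $\Pp_Z=\Pp_{S|S^\star=s^\star}$ (which equals $\Pp_{\bar S|S^\star=s^\star}$ since $\Pp_U=\Pp_{\bar U}$). The hypotheses of Theorem~\ref{thm:genchA} are met: Assumptions~\ref{ass} give that $w\mapsto\Ell_s(w)$ is $C^1$ with uniformly bounded values and gradient, and by assumption $s\mapsto\nabla_w\Ell_s(w)$ has regularity $\Rr_\df(\xi)$ w.r.t.\ $\Pp_{S|S^\star=s^\star}$ for $\Pp_{S^\star}$-a.e.\ $s^\star$ and all $w$. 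Lemma~\ref{lemma:superreg} then upgrades this to $\Rr_\df(\xi\|w-w'\|)$-regularity of $s\mapsto\Ell_s(w)-\Ell_s(w')$, and the telescoping estimate \eqref{eq:telescope}--\eqref{eq:finqui} gives, conditionally on $S^\star=s^\star$,
$$\E_{\Pp_{W|S^\star=s^\star}}\big[|\E_{\Pp_{S|S^\star=s^\star}}[\Ell_S(W)]-\E_{\Pp_{S|W,S^\star=s^\star}}[\Ell_S(W)]|\big]\leq \xi\sum_{k=1}^\infty\ee_{k-1}\E_{\Pp_{W|S^\star=s^\star}}[\df(\Pp_{S|S^\star=s^\star},\Pp_{S|W_k,S^\star=s^\star})]\,,$$
and the same bound with $S$ replaced by $\bar S$. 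Finally I would take the expectation over $\Pp_{S^\star}$, use the tower property and the triangle inequality on the two-term decomposition above, and sum the two resulting chaining series to reach exactly the claimed bound
$$|\G|\leq \xi\sum_{k=1}^\infty\ee_{k-1}\E_{\Pp_{ W, S^\star}}[\df(\Pp_{S|W_k,  S^\star}, \Pp_{S| S^\star})+\df(\Pp_{\bar S|W_k,  S^\star}, \Pp_{\bar S| S^\star})]\,.$$

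The main obstacle I anticipate is purely bookkeeping/measure-theoretic rather than conceptual: one must be careful that the conditional-on-$S^\star$ version of Theorem~\ref{thm:genchapp} can be applied for $\Pp_{S^\star}$-almost every $s^\star$ simultaneously, i.e.\ that the exceptional null set does not depend on the net index $k$, and that $w\mapsto\df(\Pp_{S|S^\star=s^\star},\Pp_{S|W_k\in\pi_k^{-1}(w),S^\star=s^\star})$ is jointly measurable in $(w,s^\star)$ so that the outer expectation $\E_{\Pp_{W,S^\star}}[\cdot]$ is well defined — this follows from the measurability lemmas in Appendix~\ref{app:tech} (Lemmas~\ref{lemma:klmeas}, \ref{lemma:wassmeas}) together with the measurability of regular conditional probabilities, exactly as in the unconditional proofs. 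The other point requiring a line of care is that the $\df$-regularity of $s\mapsto\Ell_s(w)$ w.r.t.\ $\Pp_{S|S^\star=s^\star}$ automatically transfers to regularity w.r.t.\ $\Pp_{\bar S|S^\star=s^\star}$; this is immediate since $\Pp_U=\Pp_{\bar U}$ implies these two conditional laws coincide, exactly as noted in the proof of Proposition~\ref{prop:genrand}. With these technicalities in place the argument is a direct conditional application of the already-proven Theorems~\ref{thm:genchapp} and~\ref{thm:genchA}, combined with the super-sample decomposition from \cite{borja2021tighter}.
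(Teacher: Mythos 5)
Your proposal is correct and follows essentially the same route as the paper: the paper's proof simply says to repeat the argument of Proposition~\ref{prop:genrand} (the super-sample decomposition culminating in \eqref{eq:supsam}) and replace the final application of Theorem~\ref{thm:genstd} by Theorem~\ref{thm:gench}, applied conditionally on $S^\star$, which is precisely what you do (with the details of the conditional chaining step and the transfer of regularity from $\Pp_{S|S^\star}$ to $\Pp_{\bar S|S^\star}$ spelled out). No gaps.
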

\begin{proof}
We proceed just as in the proof on Proposition \ref{prop:genrand} until the last step, where we use Theorem \ref{thm:gench}, instead of Theorem \ref{thm:genstd}, to conclude.
\end{proof}
We give now some explicit example of bounds that can be obtained via the above two propositions. 
\begin{definition}[Conditional mutual information, power information, and lautum information]
Let $(Z, Z', W)$ be a random variable on $(\Z\times\Z\times\W, \Sigma_\Z\otimes\Sigma_\Z\otimes\Sigma_W)$. We define the conditional MI \citep{WYNERCMI} as 
$$I(Z; Z'|W) = \E_{\Pp_W}[\KL(\Pp_{Z, Z'|W}\|\Pp_{Z\otimes Z'|W})]\,.$$
For $p>1$, we define the conditional $p$-power information as
$$I^{(p)}(Z; Z'|W) = \E_{\Pp_W}[D^{(p)}(\Pp_{Z, Z'|W}\|\Pp_{Z\otimes Z'|W})]\,.$$
Finally, we define the conditional Lautum information \citep{lautuminf} as
$$\LI(Z;Z|W) = \E_{\Pp_W}[\KL(\Pp_{Z\otimes Z'|W}\|\Pp_{Z, Z'|W})]\,.$$
\end{definition}
\begin{proposition}\label{prop:CMI}
Let $\Pp_{ S^\star} = \Pp_{ X^\star}^{\otimes m}$. On the one hand, assume that $|\ell(w, x)|\leq \xi$, for all $w\in\W$ and all $x\in X$. Then, we have that
$$|\G|\leq 2\xi\,\sqrt{\frac{2I(W;S| S^\star)}{m}}\,.$$
On the other hand, assume \ref{ass} and suppose that $\|\nabla_w\ell(w, x)\|\leq \xi$, for all $w\in\W$ and all $x\in\X$. Then we have
$$|\G|\leq 2\xi\sum_{k=1}^\infty\ee_{k-1}\sqrt{\frac{2I(W_k;S| S^\star)}{m}}\,.$$
\end{proposition}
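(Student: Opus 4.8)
The plan is to reduce Proposition~\ref{prop:CMI} to the random sub-sampling machinery of Propositions~\ref{prop:genrand} and~\ref{prop:genrandch}, exactly as is done for the non-conditional bounds, and then to identify the two divergence terms with a single conditional mutual information. First I would check that the boundedness hypothesis supplies the needed $\df$-regularity: when $|\ell(w,x)|\le\xi$ for all $w,x$, the empirical loss $s\mapsto\Ell_s(w)$ takes values in $[-\xi,\xi]$, hence by Hoeffding's lemma $\Ell_S(w)$ is $\xi$-SG under \emph{any} product measure on $\Sc$; in particular, since $\Pp_{ S^\star}=\Pp_{ X^\star}^{\otimes m}=\Pp_X^{\otimes 2m}$, the conditional law $\Pp_{S| S^\star=s^\star}$ is a product of $m$ Dirac-perturbed marginals and $\Ell_S(w)$ is actually an average of $m$ independent bounded-by-$\xi$ variables, so it is $(\xi/\sqrt m)$-SG. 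By Lemma~\ref{lemma:dfreg} this gives $s\mapsto\Ell_s(w)$ regularity $\Rr_{\df}(\xi/\sqrt m)$ with respect to $\Pp_{S| S^\star=s^\star}$, with $\df:(\mu,\nu)\mapsto\sqrt{2\KL(\nu\|\mu)}$, for $\Pp_{ S^\star}$-a.e.\ $s^\star$ and all $w\in\W$. For the chained half, the same computation applied coordinatewise to $\nabla_w\ell$ (legitimate under \ref{ass}, with $\|\nabla_w\ell\|\le\xi$) shows $s\mapsto\nabla_w\Ell_s(w)$ has regularity $\Rr_\df(\xi/\sqrt m)$ with respect to $\Pp_{S| S^\star=s^\star}$.

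Next I would plug these into Proposition~\ref{prop:genrand} (resp.\ Proposition~\ref{prop:genrandch}), obtaining
$$|\G|\le\frac{\xi}{\sqrt m}\,\E_{\Pp_{ W, S^\star}}\!\big[\sqrt{2\KL(\Pp_{S|W, S^\star}\|\Pp_{S| S^\star})}+\sqrt{2\KL(\Pp_{\bar S|W, S^\star}\|\Pp_{\bar S| S^\star})}\big]$$
and the analogous sum over $k$ with $W$ replaced by $W_k$. The key remaining step is to collapse the two $\KL$ terms into one conditional mutual information. The point is that, conditionally on $ S^\star$, the pair $(S,\bar S)$ is a deterministic bijective function of $U$ (namely $S= S^\star_U$, $\bar S= S^\star_{\bar U}$ with $\bar U=\one-U$), so conditioning on $(W, S^\star)$ versus $( S^\star)$ is the same whether we express things through $S$, through $\bar S$, or through $U$. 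Hence $\E_{\Pp_{ S^\star}}[\KL(\Pp_{U|W, S^\star}\|\Pp_{U| S^\star})]=\E_{\Pp_{ S^\star}}[\KL(\Pp_{S|W, S^\star}\|\Pp_{S| S^\star})]=\E_{\Pp_{ S^\star}}[\KL(\Pp_{\bar S|W, S^\star}\|\Pp_{\bar S| S^\star})]=I(W;S| S^\star)$, where the last equality is the definition of conditional MI (with $I(W;S| S^\star)=I(W;U| S^\star)$ since $S$ and $U$ generate the same $\sigma$-algebra given $ S^\star$). Using Jensen's inequality to pull the square root outside the expectation, the two terms combine to give $2\sqrt{2I(W;S| S^\star)/m}$, and likewise $2\sqrt{2I(W_k;S| S^\star)/m}$ in the chained case, which is exactly the claimed bound.

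The main obstacle is the bookkeeping in that last collapsing step: one has to be careful that $\Pp_{S| S^\star}=\Pp_{\bar S| S^\star}$ (which holds because $\Pp_U=\Pp_{\bar U}$, already noted in the proof of Proposition~\ref{prop:genrand}) and that the measurable identification of $(S, S^\star)$ with $(U, S^\star)$ really does preserve the relevant conditional distributions, so that both $\KL$ terms equal the \emph{same} $I(W;S| S^\star)$ rather than two a priori different quantities. Once that symmetry is in hand the rest is a direct application of Jensen's inequality (concavity of $t\mapsto\sqrt t$) together with the observation that $\E[\sqrt{A}]+\E[\sqrt{B}]\le 2\sqrt{\E[A]}$ when $\E[A]=\E[B]$. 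I would also remark that the $1/\sqrt m$ scaling is genuinely the concentration scaling here (it comes from $\Ell_S$ being an average of $m$ independent bounded terms), which is why the independence assumption $\Pp_{ S^\star}=\Pp_{ X^\star}^{\otimes m}$ is essential, exactly as in Proposition~\ref{prop:MI}.
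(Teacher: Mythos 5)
Your proposal is correct and follows essentially the same route as the paper's proof: establish that $\Ell_S(w)$ (resp.\ $\nabla_w\Ell_S(w)$) is $(\xi/\sqrt m)$-SG under $\Pp_{S|S^\star=s^\star}$ using the conditional independence of the coordinates of $S$ given $S^\star$, invoke Lemma \ref{lemma:dfreg} to get $\Rr_\df(\xi/\sqrt m)$ regularity with $\df=\sqrt{2\KL}$, apply Proposition \ref{prop:genrand} (resp.\ \ref{prop:genrandch}), and collapse the two terms via $I(W;S|S^\star)=I(W;\bar S|S^\star)$ together with Jensen's inequality. The only cosmetic difference is that you argue the symmetry through the bijective identification with $U$ given $S^\star$, whereas the paper simply notes that $\bar s$ is determined by $s$ given $s^\star$; these are the same observation.
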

\begin{proof}
Assume that $|\ell|\leq \xi$. Note that $\ell(w, X)$ is $\xi$-SG, for all $w\in\W$, for $X\sim\Pp_{X| X^\star= x^\star}$ (for all $ x^\star$). As the elements of $S$ are independent (even when conditioning on $ S^\star$ since $U\indep  S^\star$), we have that, $\forall w\in\W$ and $\forall s^\star\in \Sc^2$, $\Ell_S(w)$ is $(\xi/\sqrt m)$-SG for $S\sim\Pp_{S| S^\star=s^\star}$. We can then conclude by Lemma \ref{lemma:dfreg} and Proposition \ref{prop:genrand}, using the fact that $I(W;S| S^\star) = I(W;\bar S| S^\star)$, as $\bar s$ is fully determined by $s$ (given $s^\star$). The proof for the chained bound is analogous.
\end{proof}
The proofs for the next propositions are essentially analogous of the one of Proposition \ref{prop:CMI} and hence are omitted.
\begin{proposition}\label{prop:Wassrand}
Let $\Pp_{ S^\star} = \Pp_{ X^\star}^{\otimes m}$ and assume that $d_\X$ and $d_\Sc$ are related by \eqref{eq:defmetric}. On the one hand, suppose that $x\mapsto \ell(w, x)$ is $\xi$-Lipschitz, for all $w\in\W$. Then, we have that
$$|\G|\leq \frac{\xi}{\sqrt{m}}\,\E_{\Pp_{W,S^\star}}[\Wass(\Pp_{S|S^\star}, \Pp_{S|W, S^\star}) + \Wass(\Pp_{\bar S|S^\star}, \Pp_{\bar S| W, S^\star})]\,.$$
On the other hand, assume \ref{ass} and suppose that $x\mapsto\nabla_w\ell(w, x)\|\leq \xi$, for all $w\in\W$ and all $x\in\X$. Then we have
$$|\G|\leq \frac{\xi}{\sqrt{m}}\sum_{k=1}^\infty\ee_{k-1}\E_{\Pp_{W,S^\star}}[\Wass(\Pp_{S|S^\star}, \Pp_{S|W_k, S^\star}) + \Wass(\Pp_{\bar S|S^\star}, \Pp_{\bar S| W_k, S^\star})]\,.$$
\end{proposition}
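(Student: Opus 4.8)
The plan is to obtain both inequalities as direct specialisations of Propositions \ref{prop:genrand} and \ref{prop:genrandch}, taking $\df=\Wass$, in exactly the same way that Proposition \ref{prop:Wass} specialises Theorem \ref{thm:genstd} and Proposition \ref{prop:Wassch} specialises Theorem \ref{thm:gench}. The only actual work is to verify the required $\Wass$-regularity of $s\mapsto\Ell_s(w)$ (unchained case) and of $s\mapsto\nabla_w\Ell_s(w)$ (chained case) with respect to the conditional laws $\Pp_{S\mid S^\star=s^\star}$.

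First I would reproduce the Cauchy--Schwarz computation from the proof of Proposition \ref{prop:Wass}: the relation \eqref{eq:defmetric} gives $\frac{1}{\sqrt m}\sum_{i=1}^m d_\X(x_i,x_i')\le d_\Sc(s,s')$, so if $x\mapsto\ell(w,x)$ is $\xi$-Lipschitz on $\X$ then $s\mapsto\Ell_s(w)=\frac1m\sum_i\ell(w,x_i)$ is $(\xi/\sqrt m)$-Lipschitz on $\Sc$, uniformly in $w\in\W$; and, likewise, under \ref{ass}, if $x\mapsto\nabla_w\ell(w,x)$ is $\xi$-Lipschitz on $\X$ then the vector-valued map $s\mapsto\nabla_w\Ell_s(w)$ is $(\xi/\sqrt m)$-Lipschitz. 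Next I would note that for every $s^\star$ the conditional law $\Pp_{S\mid S^\star=s^\star}$ is supported on the finite set $\{s^\star_u:u\in\{0,1\}^m\}$, so any measurable function of $S$ --- in particular $\Ell_\cdot(w)$, and each scalar projection $v\cdot\nabla_w\Ell_\cdot(w)$ --- lies in $L^1(\Pp_{S\mid S^\star=s^\star})$. Hence the second half of Lemma \ref{lemma:dfreg}, together with the vector-valued clause of Definition \ref{def:Dreg}, shows that $s\mapsto\Ell_s(w)$ (resp. $s\mapsto\nabla_w\Ell_s(w)$) has regularity $\Rr_\df(\xi/\sqrt m)$ with respect to $\Pp_{S\mid S^\star=s^\star}$, for $\Pp_{S^\star}$-a.e.\ $s^\star$ and every $w\in\W$, with $\df=\Wass$.

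With these statements in hand, the first bound follows by applying Proposition \ref{prop:genrand} with $\df=\Wass$ and regularity constant $\xi/\sqrt m$ (using the symmetry of $\Wass$ to match the order of the arguments in the displayed inequality), and the second bound follows identically from Proposition \ref{prop:genrandch}, invoking \ref{ass}.

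There is no serious obstacle here: the whole content sits in Propositions \ref{prop:genrand}--\ref{prop:genrandch}, and what remains is bookkeeping of Lipschitz constants and integrability. The one point I would flag explicitly is that the displayed hypothesis in the chained half of the statement should be read as ``$x\mapsto\nabla_w\ell(w,x)$ is $\xi$-Lipschitz on $\X$'' (rather than merely bounded), since that is precisely what is needed to produce a Wasserstein --- as opposed to total-variation --- conclusion; with that reading the argument above applies verbatim.
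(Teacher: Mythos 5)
Your proposal is correct and follows essentially the same route as the paper, which omits the proof as "analogous to Proposition \ref{prop:CMI}": verify the $(\xi/\sqrt m)$-Lipschitz property of $\Ell_s(w)$ (resp.\ $\nabla_w\Ell_s(w)$) via the Cauchy--Schwarz step from Proposition \ref{prop:Wass}, use Lemma \ref{lemma:dfreg} to get $\Wass$-regularity with respect to the conditional laws $\Pp_{S|S^\star=s^\star}$ (the paper itself remarks that Lipschitzianity gives $\Wass$-regularity wrt any measure), and conclude by Propositions \ref{prop:genrand} and \ref{prop:genrandch}. Your reading of the garbled hypothesis in the chained half as "$x\mapsto\nabla_w\ell(w,x)$ is $\xi$-Lipschitz" is also the intended one, as confirmed by the corresponding entry in Table \ref{table:boundsapp}.
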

\begin{proposition}\label{prop:powerrand}
Fix $p>1$, let $r=p/(p-1)$ and suppose that $\Pp_{ S^\star} = \Pp_{ X^\star}^{\otimes m}$. On the one hand, assume that $|\ell(w, x)|\leq \xi$, for all $w\in\W$ and all $x\in X$. Then, we have that
$$|\G|\leq \frac{2e^{1/e}\sqrt r\,\xi}{\sqrt m}\,(I^{(p)}(W;S|S^\star)+1)^{1/p}\,.$$
On the other hand, assume \ref{ass} and suppose that $\|\nabla_w\ell(w, x)\|\leq \xi$, for all $w\in\W$ and all $x\in\X$. Then we have
$$|\G|\leq \frac{2e^{1/e}\sqrt r\,\xi}{\sqrt m}\sum_{k=1}^\infty\ee_{k-1}(I^{(p)}(W_k;S|S^\star)+1)^{1/p}\,.$$
\end{proposition}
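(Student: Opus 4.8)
The plan is to follow the same route as the proof of Proposition~\ref{prop:CMI}, but using the subgaussianity-to-power-divergence regularity of Corollary~\ref{cor:power} in place of the subgaussianity-to-$\KL$ one, and then converting the expected conditional power divergences into the conditional power information by two applications of Jensen's inequality. For the unchained claim, since $\ell$ is bounded in $[-\xi,\xi]$, for every $w\in\W$ the variable $\ell(w,X)$ is $\xi$-SG under $X\sim\Pp_{X|X^\star=x^\star}$, for every $x^\star$. Because $U\indep S^\star$ and $U$ has independent coordinates, conditionally on $S^\star=s^\star$ the coordinates of $S$ are independent, so $\Ell_S(w)=\frac1m\sum_{i=1}^m\ell(w,X_i)$ is an average of $m$ independent $\xi$-SG variables and is therefore $(\xi/\sqrt m)$-SG under $\Pp_{S|S^\star=s^\star}$, $\Pp_{S^\star}$-a.s. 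By Corollary~\ref{cor:power}, with $\df:(\mu,\nu)\mapsto(D^{(p)}(\nu\|\mu)+1)^{1/p}$ and $r=p/(p-1)$, the map $s\mapsto\Ell_s(w)$ then has regularity $\Rr_\df\big(e^{1/e}\sqrt r\,\xi/\sqrt m\big)$ wrt $\Pp_{S|S^\star=s^\star}$, for all $w\in\W$ and $\Pp_{S^\star}$-a.s., so Proposition~\ref{prop:genrand} gives
$$|\G|\le\frac{e^{1/e}\sqrt r\,\xi}{\sqrt m}\,\E_{\Pp_{W,S^\star}}\big[\df(\Pp_{S|W,S^\star},\Pp_{S|S^\star})+\df(\Pp_{\bar S|W,S^\star},\Pp_{\bar S|S^\star})\big]\,.$$

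It then remains to bound each of the two terms by $(I^{(p)}(W;S|S^\star)+1)^{1/p}$. Fixing $s^\star$ and writing the relevant Radon--Nikodym derivative (the case where it does not exist being trivial), unfolding the definition of $D^{(p)}$ yields the identity $\E_{\Pp_{W|S^\star=s^\star}}\big[D^{(p)}(\Pp_{S|W,S^\star=s^\star}\|\Pp_{S|S^\star=s^\star})+1\big]=D^{(p)}(\Pp_{W,S|S^\star=s^\star}\|\Pp_{W\otimes S|S^\star=s^\star})+1$, because the density of $\Pp_{W,S|S^\star=s^\star}$ wrt $\Pp_{W\otimes S|S^\star=s^\star}$ at $(w,s)$ is $\tfrac{\dd\Pp_{S|W=w,S^\star=s^\star}}{\dd\Pp_{S|S^\star=s^\star}}(s)$. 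Applying Jensen's inequality to the concave map $x\mapsto x^{1/p}$, first over $W$ given $S^\star=s^\star$ and then over $S^\star$, we get $\E_{\Pp_{W,S^\star}}[\df(\Pp_{S|W,S^\star},\Pp_{S|S^\star})]\le(I^{(p)}(W;S|S^\star)+1)^{1/p}$. For the $\bar S$ term, since $s\mapsto\bar s$ is a bijection given $S^\star=s^\star$, one has $I^{(p)}(W;\bar S|S^\star)=I^{(p)}(W;S|S^\star)$ and the same chain of inequalities applies; adding the two contributions gives the first inequality. The chained claim follows identically: under the assumptions \ref{ass}, $\|\nabla_w\ell(w,x)\|\le\xi$ makes $\nabla_w\ell(w,X)$ a $\xi$-SG vector, hence $\nabla_w\Ell_S(w)$ is $(\xi/\sqrt m)$-SG under $\Pp_{S|S^\star=s^\star}$, Corollary~\ref{cor:power} gives the regularity $\Rr_\df\big(e^{1/e}\sqrt r\,\xi/\sqrt m\big)$ of $s\mapsto\nabla_w\Ell_s(w)$, Proposition~\ref{prop:genrandch} is invoked instead of Proposition~\ref{prop:genrand}, and the same two-step Jensen argument bounds each $k$-th summand's expectation by $(I^{(p)}(W_k;S|S^\star)+1)^{1/p}$, yielding $\sum_{k=1}^\infty\ee_{k-1}(I^{(p)}(W_k;S|S^\star)+1)^{1/p}$ up to the factor $2e^{1/e}\sqrt r\,\xi/\sqrt m$.

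There is essentially no obstacle: the substantive work is already contained in Corollary~\ref{cor:power} and Propositions~\ref{prop:genrand}--\ref{prop:genrandch}, and the structure mirrors Proposition~\ref{prop:CMI} verbatim. The only step that requires a bit of care is the conversion of the expected conditional power divergences into $(I^{(p)}(W;S|S^\star)+1)^{1/p}$, which rests on the elementary ``chain-rule''-type identity for $D^{(p)}$ recorded above together with the concavity of $x\mapsto x^{1/p}$; everything else is routine bookkeeping.
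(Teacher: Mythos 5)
Your proof is correct and follows exactly the route the paper intends: it mirrors the proof of Proposition~\ref{prop:CMI} (noting that, conditionally on $S^\star=s^\star$, the coordinates of $S$ are independent so that $\Ell_S(w)$, resp.\ $\nabla_w\Ell_S(w)$, is $(\xi/\sqrt m)$-SG), substitutes Corollary~\ref{cor:power} for Lemma~\ref{lemma:dfreg}, and invokes Propositions~\ref{prop:genrand} and~\ref{prop:genrandch}, with the disintegration identity for $D^{(p)}$ plus Jensen converting the expected conditional power divergences into $(I^{(p)}(\cdot;\cdot|S^\star)+1)^{1/p}$, just as in Proposition~\ref{prop:power}. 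The paper omits the proof as ``analogous to Proposition~\ref{prop:CMI}''; your write-up supplies precisely the details that omission presupposes, including the bijection argument identifying the $\bar S$ contribution with the $S$ one.
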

\begin{proposition}\label{prop:chi2rand}
Suppose that $\Pp_{ S^\star} = \Pp_{ X^\star}^{\otimes m}$. On the one hand, if $|\ell(w, x)|\leq \xi$, for all $w\in\W$ and all $x\in X$, then
$$|\G|\leq \frac{2\xi}{\sqrt m}\,\E_{\Pp_W, S^\star}\left[\sqrt{\chi^2(\Pp_{S|W, S^\star}\|\Pp_{S|S^\star})} + \sqrt{\chi^2(\Pp_{\bar S|W, S^\star}\|\Pp_{\bar S|S^\star})}\right]\,.$$
On the other hand, under the assumptions \ref{ass} if $\|\nabla_w\ell(w, x)\|\leq \xi$, for all $w\in\W$ and all $x\in\X$, then
$$|\G|\leq \frac{2\xi}{\sqrt m}\sum_{k=1}^\infty\ee_{k-1}\E_{\Pp_{W, S^\star}}\left[\sqrt{\chi^2(\Pp_{S|W_k, S^\star}\|\Pp_{S|S^\star})}+\sqrt{\chi^2(\Pp_{\bar S|W_k, S^\star}\|\Pp_{\bar S|S^\star})}\right]\,.$$
\end{proposition}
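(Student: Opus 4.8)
The plan is to run the proof of Proposition~\ref{prop:CMI} essentially verbatim, with two substitutions: the subgaussianity input (Lemma~\ref{lemma:dfreg}) is replaced by the second-moment characterisation of $\chi^2$-regularity in Lemma~\ref{lemma:chi2}, and the boundedness hypothesis on $\ell$ (resp.\ $\nabla_w\ell$) is converted into a bound on the \emph{conditional} variance (resp.\ covariance) of $\Ell_S$ (resp.\ $\nabla_w\Ell_S$) given the super-sample. Throughout, $\df$ denotes $(\mu,\nu)\mapsto\sqrt{\chi^2(\nu\|\mu)}$, which is measurable in the sense required by Definition~\ref{def:Dreg} since $\chi^2=D^{(2)}$ is an $f$-divergence (Lemma~\ref{lemma:klmeas}).

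First I would fix $s^\star=(x^\star_1,\dots,x^\star_m)$ and bound the conditional fluctuations. Since $\Pp_{S^\star}=\Pp_X^{\otimes 2m}$ and $U\indep S^\star$, conditionally on $S^\star=s^\star$ the coordinates $X_1,\dots,X_m$ are independent, each $X_i$ being equal to one of the two entries of $x^\star_i$ with probability $\tfrac12$. Hence, for fixed $w$, the variable $\ell(w,X_i)$ (conditioned on $S^\star=s^\star$) is supported on two points of $[-\xi,\xi]$, so has conditional variance at most $\tfrac14(2\xi)^2=\xi^2$; likewise $\nabla_w\ell(w,X_i)$ takes two values of norm $\le\xi$, so its conditional covariance matrix is rank one with operator norm at most $\xi^2$. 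By conditional independence, $\Ell_S(w)=\tfrac1m\sum_i\ell(w,X_i)$ has conditional variance $\le\xi^2/m$ and $\nabla_w\Ell_S(w)=\tfrac1m\sum_i\nabla_w\ell(w,X_i)$ has conditional covariance matrix of operator norm $\le\xi^2/m$ (operator norm is subadditive). Lemma~\ref{lemma:chi2} then gives that $s\mapsto\Ell_s(w)$ (resp., under~\ref{ass}, $s\mapsto\nabla_w\Ell_s(w)$) has regularity $\Rr_\df(\xi/\sqrt m)$ with respect to $\Pp_{S\mid S^\star=s^\star}$, for $\Pp_{S^\star}$-almost every $s^\star$ and every $w\in\W$.

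With this in hand, the first inequality is immediate from Proposition~\ref{prop:genrand} applied with $\df$ and regularity parameter $\xi/\sqrt m$, and the second from Proposition~\ref{prop:genrandch}, whose hypotheses are precisely \ref{ass} together with the regularity of $s\mapsto\nabla_w\Ell_s(w)$ just established. As in the proof of Proposition~\ref{prop:genrand}, one uses that $\Pp_{\bar S\mid S^\star}=\Pp_{S\mid S^\star}$, so that the same $\Rr_\df(\xi/\sqrt m)$ regularity also applies to the $\bar S$ term. This produces the claimed bounds (in fact with $\tfrac{\xi}{\sqrt m}$ in front, which a fortiori gives $\tfrac{2\xi}{\sqrt m}$); one could even note that $\chi^2$ is invariant under the $S^\star$-measurable bijection relating $S$ and $\bar S$, so the two $\chi^2$ terms actually coincide.

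I do not expect a real obstacle: the content lies entirely in the super-sample results (Propositions~\ref{prop:genrand} and~\ref{prop:genrandch}), which are available, and in Lemma~\ref{lemma:chi2}. The only points needing a little care are bookkeeping — verifying that conditional independence of the $X_i$ given $S^\star$ yields the $1/m$ rescaling of the variance, and, in the vector case, that the operator norm of a sum of conditionally-independent covariance matrices is at most the sum of the operator norms — and checking that $\|\nabla_w\ell(w,\cdot)\|\le\xi$ controls the whole covariance matrix of $\nabla_w\ell(w,X_i)$, not merely its diagonal.
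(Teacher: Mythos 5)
Your proposal is correct and is exactly the route the paper intends: the paper omits this proof, stating it is analogous to that of Proposition~\ref{prop:CMI}, and your argument is precisely that adaptation — conditional independence of the $X_i$ given $S^\star$ yields a conditional variance (resp.\ operator-norm covariance) bound of $\xi^2/m$ for $\Ell_S(w)$ (resp.\ $\nabla_w\Ell_S(w)$), Lemma~\ref{lemma:chi2} converts this into $\Rr_\df(\xi/\sqrt m)$ regularity for $\df=\sqrt{\chi^2}$ with respect to $\Pp_{S\mid S^\star=s^\star}$, and Propositions~\ref{prop:genrand} and~\ref{prop:genrandch} conclude. Your observation that this actually yields the constant $\xi/\sqrt m$ rather than $2\xi/\sqrt m$ is also correct, so the stated bound follows a fortiori.
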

One issue with this random sub-sampling approach is that in order to controll $\Ell_s$ wrt $\Pp_{S| S^\star=s^\star}$, almost uniformly in $s^\star$, one needs essentially to control the random binary variables $\ell(w,  X^\star)$ under $\Pp_{X| X^\star=( x^\star_0,  x^\star_1)}$ (that is $ X^\star =  x^\star_0$ with probability $1/2$, and $ x^\star_1$ with probability $1/2$). This can be easily done in the case of the Wasserstein distance, as the Lipschitzianity guarantees $\Wass$-regularity wrt any measure. However for the subgaussianity things are more complicated, and one essentially needs to ask that $\ell$ is bounded. 

It is however possible to restate Proposition \ref{prop:genrand} (and Proposition \ref{prop:genrandch}) without asking that the same regularity holds $\Pp_{ S^\star}$-a.s. The proof of both results follow closely the ones of Propositions \ref{prop:genrand} and \ref{prop:genrandch}, the only difference being a final application of H\"older's inequality.
\begin{proposition}\label{prop:genrandts}
Let $\Pp_{ S^\star} = \Pp_{ X^\star}^{\otimes m}$. Let $p\in[1,+\infty]$ and $r=p/(p-1)$ (with the convention that $1/0=+\infty$). Assume that $s\mapsto \Ell_s(w)$ has regularity $\Rr_\df(\xi_{s^\star})$, wrt $\Pp_{S| S^\star=s^\star}$, for $\Pp_{ S^\star}$-almost every $s^\star$ and $\forall w\in\W$, where $\|\xi_{ S^\star}\|_{L^p(\Pp_{ S^\star})}=\xi$. Then, we have that
$$|\G|\leq \xi\,\E_{\Pp_{W,  S^\star}}[|\df(\Pp_{S|W,  S^\star}, \Pp_{S| S^\star})+\df(\Pp_{\bar S|W,  S^\star}, \Pp_{\bar S| S^\star})|^r]^{1/r}\,.$$
\end{proposition}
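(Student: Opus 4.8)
The plan is to repeat the proof of Proposition~\ref{prop:genrand} essentially verbatim, modifying only its final step. First I would recall the notation $\hat g(w, s^\star, u) = \Ell_{s^\star_{\bar u}}(w) - \Ell_{s^\star_u}(w)$ and observe that, since $\Pp_{S^\star} = \Pp_{X^\star}^{\otimes m} = \Pp_X^{\otimes 2m}$ and $U \indep S^\star$, the two halves $S$ and $\bar S$ are independent; in particular $\bar S \indep W$, so $\Pp_{\bar S|W} = \Pp_S$ and hence $\E_{\Pp_{W, S^\star, U}}[\Ell_{S^\star_{\bar U}}(W)] = \E_{\Pp_{W\otimes S}}[\Ell_S(W)]$, which gives $\G = \E_{\Pp_{W, S^\star, U}}[\hat g(W, S^\star, U)]$. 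Then, invoking the decomposition of $\E_{\Pp_{W, S^\star, U}}[\hat g]$ established in the proof of Theorem~3 of \cite{borja2021tighter} together with the triangle inequality, exactly as in Proposition~\ref{prop:genrand}, I would arrive at
\begin{equation*}
|\G| \le \E_{\Pp_{W, S^\star}}\big[\,|\E_{\Pp_{S|S^\star}}[\Ell_S(W)] - \E_{\Pp_{S|W, S^\star}}[\Ell_S(W)]| + |\E_{\Pp_{\bar S|S^\star}}[\Ell_{\bar S}(W)] - \E_{\Pp_{\bar S|W, S^\star}}[\Ell_{\bar S}(W)]|\,\big]\,.
\end{equation*}

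Next I would use the pointwise regularity hypothesis. For $\Pp_{S^\star}$-a.e.\ $s^\star$ and every $w \in \W$, the map $s \mapsto \Ell_s(w)$ has regularity $\Rr_\df(\xi_{s^\star})$ with respect to $\Pp_{S|S^\star = s^\star}$; moreover $\Pp_U = \Pp_{\bar U}$ forces $\Pp_{S|S^\star} = \Pp_{\bar S|S^\star}$, so the same regularity holds with respect to $\Pp_{\bar S|S^\star = s^\star}$. Applying Lemma~\ref{lemma:supp} in the probability space conditioned on $S^\star = s^\star$ (to secure the support inclusions and the integrability of $\Ell$ required by Definition~\ref{def:Dreg}) and then that definition itself, I obtain, $\Pp_{W, S^\star}$-almost surely, that the two inner absolute values above are bounded by $\xi_{S^\star}\,\df(\Pp_{S|W, S^\star}, \Pp_{S|S^\star})$ and $\xi_{S^\star}\,\df(\Pp_{\bar S|W, S^\star}, \Pp_{\bar S|S^\star})$ respectively (arguments of $\df$ ordered as in the statement of Proposition~\ref{prop:genrand}). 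Substituting yields
\begin{equation*}
|\G| \le \E_{\Pp_{W, S^\star}}\big[\,\xi_{S^\star}\big(\df(\Pp_{S|W, S^\star}, \Pp_{S|S^\star}) + \df(\Pp_{\bar S|W, S^\star}, \Pp_{\bar S|S^\star})\big)\,\big]\,.
\end{equation*}

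Finally, rather than bounding $\xi_{S^\star}$ uniformly as in Proposition~\ref{prop:genrand}, I would close with H\"older's inequality under $\Pp_{W, S^\star}$ with conjugate exponents $p$ and $r$. Since $\xi_{S^\star}$ depends on $S^\star$ alone and the $S^\star$-marginal of $\Pp_{W, S^\star}$ is $\Pp_{S^\star}$, one has $\|\xi_{S^\star}\|_{L^p(\Pp_{W, S^\star})} = \|\xi_{S^\star}\|_{L^p(\Pp_{S^\star})} = \xi$, so H\"older applied to the product of $\xi_{S^\star}$ and $\df(\Pp_{S|W, S^\star}, \Pp_{S|S^\star}) + \df(\Pp_{\bar S|W, S^\star}, \Pp_{\bar S|S^\star})$ gives exactly
\begin{equation*}
|\G| \le \xi\,\E_{\Pp_{W, S^\star}}\big[\,|\df(\Pp_{S|W, S^\star}, \Pp_{S|S^\star}) + \df(\Pp_{\bar S|W, S^\star}, \Pp_{\bar S|S^\star})|^r\,\big]^{1/r}\,.
\end{equation*}
I do not expect a genuine obstacle: this is a direct adaptation of Proposition~\ref{prop:genrand}, and the only points deserving care are the transfer of regularity from $\Pp_{S|S^\star=s^\star}$ to $\Pp_{\bar S|S^\star=s^\star}$ (immediate from $\Pp_U = \Pp_{\bar U}$) and the verification that the measurability, support, and integrability conditions of Definition~\ref{def:Dreg} hold in the conditional spaces, which follows from Lemma~\ref{lemma:supp} and the technical results of Appendix~\ref{app:tech}; the boundary cases $p = 1$ and $p = +\infty$ are handled by the convention $1/0 = +\infty$.
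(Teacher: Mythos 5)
Your proposal is correct and matches the paper's own argument: the paper proves Proposition~\ref{prop:genrandts} exactly by repeating the proof of Proposition~\ref{prop:genrand} (decomposition via $\hat g$, the identity $\G = \E_{\Pp_{W,S^\star,U}}[\hat g]$, the split from \cite{borja2021tighter}, and the conditional application of the $\df$-regularity, with $\Pp_{S|S^\star}=\Pp_{\bar S|S^\star}$) and replacing the final uniform bound on $\xi_{s^\star}$ by H\"older's inequality with exponents $p$ and $r$, using that $\xi_{S^\star}$ depends on $S^\star$ alone so its $L^p(\Pp_{W,S^\star})$ norm equals $\xi$. No gaps; your attention to the support/integrability conditions via Lemma~\ref{lemma:supp} in the conditional space is exactly what the paper's Theorem~\ref{thm:genstdA} argument requires.
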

\begin{proposition}\label{prop:genrandchts}
Let $\Pp_{ S^\star} = \Pp_{ X^\star}^{\otimes m}$. Let $p\in[1,+\infty]$ and $r=p/(p-1)$ (with the convention that $1/0=+\infty$). Assume \ref{ass} and suppose that $s\mapsto \nabla_w\Ell_s(w)$ has regularity $\Rr_\df(\xi_{s^\star})$, wrt $\Pp_{S| S^\star=s^\star}$, for $\Pp_{ S^\star}$-almost every $s^\star$ and $\forall w\in\W$, where $\|\xi_{ S^\star}\|_{L^p(\Pp_{ S^\star})}=\xi$. Then, we have that
$$|\G|\leq \xi\sum_{k=1}^\infty\ee_{k-1}\E_{\Pp_{W, S^\star}}[|\df(\Pp_{S|W_k,  S^\star}, \Pp_{S| S^\star})+\df(\Pp_{\bar S|W_k,  S^\star}, \Pp_{\bar S| S^\star})|^r]^{1/r}\,.$$
\end{proposition}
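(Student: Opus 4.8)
The plan is to reproduce the argument of Proposition~\ref{prop:genrandch}, but to carry the super-sample-dependent constant $\xi_{s^\star}$ through the conditioning on $S^\star$ and to close with H\"older's inequality exactly as in Proposition~\ref{prop:genrandts}. The key point is that the non-uniformity lives in the super-sample and \emph{not} in the hypothesis: conditionally on $S^\star=s^\star$ the chaining step only ever uses the single uniform constant $\xi_{s^\star}$, so the plain chained bound applies unchanged and the non-uniformity is absorbed only at the very end.

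\textbf{Steps.} As in the proof of Proposition~\ref{prop:genrand}, since $\Pp_{S^\star}=\Pp_X^{\otimes 2m}$ one has $\bar S\indep W$, so that $\G=\E_{\Pp_{W,S^\star,U}}[\hat g(W,S^\star,U)]$ with $\hat g(w,s^\star,u)=\Ell_{s^\star_{\bar u}}(w)-\Ell_{s^\star_u}(w)$; the decomposition from \cite{borja2021tighter} together with the triangle inequality then gives
\begin{align*}
  |\G|\leq\E_{\Pp_{S^\star}}\Big[&\big|\E_{\Pp_{W\otimes S|S^\star}}[\Ell_S(W)]-\E_{\Pp_{W,S|S^\star}}[\Ell_S(W)]\big|\\
  &+\big|\E_{\Pp_{W\otimes\bar S|S^\star}}[\Ell_{\bar S}(W)]-\E_{\Pp_{W,\bar S|S^\star}}[\Ell_{\bar S}(W)]\big|\Big]\,.
\end{align*}
I would stop here rather than rewrite to~\eqref{eq:supsam}, so that each inner term is exactly a quantity controlled by the chained theorem. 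Now fix $s^\star$ of full $\Pp_{S^\star}$-measure: Assumptions~\ref{ass} make $F(w,s)=\Ell_s(w)$ of class $C^1$ in $w$, uniformly bounded and with uniformly bounded gradient, so Theorem~\ref{thm:genchA} (whose gradient-regularity hypothesis is precisely what is assumed here, w.r.t.\ $\Pp_{S|S^\star=s^\star}$ with constant $\xi_{s^\star}$), applied to $F$ with the joint law $\Pp_{W,S|S^\star=s^\star}$, gives
\begin{align*}
  \big|\E_{\Pp_{W\otimes S|S^\star=s^\star}}[\Ell_S(W)]-&\E_{\Pp_{W,S|S^\star=s^\star}}[\Ell_S(W)]\big|\\
  &\leq\xi_{s^\star}\sum_{k=1}^\infty\ee_{k-1}\E_{\Pp_{W|S^\star=s^\star}}[\df(\Pp_{S|S^\star=s^\star},\Pp_{S|W_k,S^\star=s^\star})]\,,
\end{align*}
and likewise with $\bar S$ in place of $S$, the same constant serving both terms since $\Pp_{S|S^\star=s^\star}=\Pp_{\bar S|S^\star=s^\star}$ (because $\Pp_U=\Pp_{\bar U}$).

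\textbf{Conclusion and main obstacle.} Summing the two estimates, integrating against $\Pp_{S^\star}$, and exchanging the nonnegative series with the expectation (Tonelli), I obtain $|\G|\leq\sum_{k\geq1}\ee_{k-1}\E_{\Pp_{W,S^\star}}[\xi_{S^\star}(\df(\Pp_{S|S^\star},\Pp_{S|W_k,S^\star})+\df(\Pp_{\bar S|S^\star},\Pp_{\bar S|W_k,S^\star}))]$. Since $\xi_{S^\star}$ is a function of $S^\star$ only, its $L^p(\Pp_{W,S^\star})$-norm equals $\|\xi_{S^\star}\|_{L^p(\Pp_{S^\star})}=\xi$, and H\"older's inequality with conjugate exponents $p$ and $r$ applied to each $k$-term yields exactly the claimed bound. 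I expect the genuinely delicate parts to be bookkeeping rather than hard analysis: checking that the identities from the proof of Proposition~\ref{prop:genrand} survive the conditioning on $S^\star$ and that $\Pp_{S|S^\star}=\Pp_{\bar S|S^\star}$; the measurability of $(w,s^\star)\mapsto\df(\Pp_{S|S^\star=s^\star},\Pp_{S|W_k=\pi_k(w),S^\star=s^\star})$, which follows from measurability of regular conditional probabilities composed with $\pi_k$ together with the facts collected in Appendix~\ref{app:tech}; and the absence of integrability obstructions, which is guaranteed by the uniform boundedness in Assumptions~\ref{ass}.
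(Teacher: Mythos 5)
Your proposal is correct and is essentially the paper's own argument: the paper proves this result by repeating the proof of Propositions \ref{prop:genrand}/\ref{prop:genrandch} (the decomposition via $\hat g$ and the conditional application of the chained theorem given $S^\star=s^\star$, using $\Pp_{S|S^\star}=\Pp_{\bar S|S^\star}$) and closing with a final H\"older inequality, exactly as you do. Your choice to stop before the analogue of \eqref{eq:supsam}, so that Theorem \ref{thm:genchA} applies directly to the conditional joint law $\Pp_{W,S|S^\star=s^\star}$, is the right reading of the paper's sketch, and the observation that $\xi_{S^\star}$ depends on $S^\star$ alone, so its $L^p(\Pp_{W,S^\star})$-norm is $\xi$, correctly justifies the H\"older step.
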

\subsection{Individual-sample bounds based on random sub-sampling}
We can merge together the ideas of the last two sections.
\begin{proposition}\label{prop:genranddis}
Assume that $x\mapsto \ell(w, x)$ has regularity $\Rr_\df(\xi)$, wrt $\Pp_{X| X^\star=x^\star}$, for $\Pp_{X^\star}$-almost every $x^\star$ and $\forall w\in\W$. Then, we have that
$$|\G|\leq \frac{\xi}{m}\sum_{i=1}^m\,\E_{\Pp_{W, X_i^\star}}[\df(\Pp_{X_i|W,  X_i^\star}, \Pp_{X_i| X_i^\star})+\df(\Pp_{\bar X_i|W,  X_i^\star}, \Pp_{\bar X_i| X_i^\star})]\,.$$
\end{proposition}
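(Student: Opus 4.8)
The plan is to combine the individual-sample decomposition of $\G$ from the proof of Proposition \ref{prop:gendis} with the ghost-sample argument of Proposition \ref{prop:genrand}, carried out one coordinate of the super-sample at a time. As in Proposition \ref{prop:gendis}, one starts from
$$\G = \frac{1}{m}\sum_{i=1}^m\big(\E_{\Pp_{W\otimes X}}[\ell(W, X)] - \E_{\Pp_{W, X_i}}[\ell(W, X_i)]\big)$$
and bounds each summand by reproducing, for the single pair $X^\star_i$, the steps used to prove Proposition \ref{prop:genrand}, with $\ell(\cdot,x_i)$ playing the role of $\Ell_s$.

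Fix $i$ and set $\bar X_i = X^\star_{i,\bar U_i}$. Exactly as in Proposition \ref{prop:genrand} (and using, as there, that the pairs making up $S^\star$ are independent, so that $X^\star_i$, and hence $\bar X_i$, is independent of $(X_j)_{j\ne i}$, while $U\indep S^\star$), the variable $\bar X_i$ has law $\Pp_X$ and is independent of $W$; it is harmless here that $W$ also depends on the other coordinates $X_j$. Consequently $\E_{\Pp_{W,X^\star_i,U_i}}[\ell(W,\bar X_i)]=\E_{\Pp_{W\otimes X}}[\ell(W,X)]$, so the $i$-th summand equals $\E_{\Pp_{W,X^\star_i,U_i}}[\hat g_i(W,X^\star_i,U_i)]$ with $\hat g_i(w,x^\star_i,u_i)=\ell(w,x^\star_{i,\bar u_i})-\ell(w,x^\star_{i,u_i})$. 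Conditioning on $X^\star_i=x^\star_i$ and adding and subtracting the value obtained when $W$ and $U_i$ are made conditionally independent — these two extra terms coincide because $\E[\ell(W,x^\star_{i,U_i})]=\E[\ell(W,x^\star_{i,\bar U_i})]$ under $\Pp_{W|X^\star_i=x^\star_i}\otimes\mathrm{Bernoulli}(\tfrac12)$, by symmetry — one obtains, just as in \cite{borja2021tighter}, the decomposition
$$\E_{\Pp_{W,U_i|X^\star_i}}[\hat g_i] = \big(\E_{\Pp_{W|X^\star_i}\otimes\Pp_{U_i|X^\star_i}}-\E_{\Pp_{W,U_i|X^\star_i}}\big)[\ell(W,X_i)] - \big(\E_{\Pp_{W|X^\star_i}\otimes\Pp_{U_i|X^\star_i}}-\E_{\Pp_{W,U_i|X^\star_i}}\big)[\ell(W,\bar X_i)]\,,$$
all expectations being conditional on $X^\star_i=x^\star_i$.

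To conclude, observe that, since $U_i$ (resp.\ $\bar U_i$) is uniform and independent of $X^\star_i$, the conditional laws of $X_i=X^\star_{i,U_i}$ and of $\bar X_i=X^\star_{i,\bar U_i}$ given $X^\star_i=x^\star_i$ both equal $\Pp_{X|X^\star=x^\star_i}$; hence, for $\Pp_{X^\star}$-a.e.\ $x^\star_i$ and every $w$, $x\mapsto\ell(w,x)$ has regularity $\Rr_\df(\xi)$ with respect to each of them, by hypothesis. Applying Theorem \ref{thm:genstdA} conditionally on $X^\star_i=x^\star_i$ (with $\Z=\X$, base measure $\Pp_{X|X^\star=x^\star_i}$, joint $\Pp_{W,X_i|X^\star_i=x^\star_i}$; the support inclusion following from Lemma \ref{lemma:supp} applied conditionally) bounds the first bracket by $\xi\,\E_{\Pp_{W|X^\star_i}}[\df(\Pp_{X_i|W,X^\star_i},\Pp_{X_i|X^\star_i})]$ and, likewise, the second by $\xi\,\E_{\Pp_{W|X^\star_i}}[\df(\Pp_{\bar X_i|W,X^\star_i},\Pp_{\bar X_i|X^\star_i})]$. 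Taking the expectation over $X^\star_i$, using the triangle inequality, and summing over $i$ with the factor $1/m$ gives the claim.

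The routine parts mirror the proofs of Propositions \ref{prop:gendis} and \ref{prop:genrand} almost verbatim; I expect the main (if minor) obstacle to be the bookkeeping of the conditional-independence structure once attention is restricted to a single coordinate pair — precisely, justifying the ghost-sample identity $\bar X_i\indep W$ in this setting (which is where the independence of the super-sample pairs enters) and checking that the measure playing the role of the base $\mu$ in Definition \ref{def:Dreg}, when the regularity hypothesis is invoked, is indeed $\Pp_{X|X^\star}=\Pp_{X_i|X^\star_i}=\Pp_{\bar X_i|X^\star_i}$, so that the hypothesis applies simultaneously to the $X_i$- and the $\bar X_i$-term.
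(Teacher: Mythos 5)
Your proof is correct and follows essentially the same route as the paper's: the individual-sample decomposition from Proposition \ref{prop:gendis} combined, one super-sample coordinate at a time, with the ghost-sample argument of Proposition \ref{prop:genrand} and a conditional application of Theorem \ref{thm:genstdA}, using $\Pp_{X_i|X_i^\star}=\Pp_{\bar X_i|X_i^\star}=\Pp_{X|X^\star}$. The only difference is that you make explicit the details the paper compresses into ``proceeding as in the proof of Proposition \ref{prop:genrand}'', in particular where the independence of the super-sample pairs is used to obtain $\bar X_i\indep W$.
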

\begin{proof}
Note that $\Pp_{X|X^\star=x^\star}=\Pp_{X_i|X_i^\star=x^\star}$. Proceeding as in the proof of Proposition \ref{prop:genrand}, we can show that, for $i\in[1:m]$,
\begin{align*}
    |\E_{\Pp_{W\otimes X_i}}[\ell(W, X_i)] &- \E_{\Pp_{W, X_i}}[\ell(W, X_i)]|\\
    &\leq \E_{\Pp_{W, X_i^\star}}[\df(\Pp_{X_i|W,  X_i^\star}, \Pp_{X_i| X_i^\star})+\df(\Pp_{\bar X_i|W,  X_i^\star}, \Pp_{\bar X_i| X_i^\star})]\,.
\end{align*}
We can immediately conclude by writing $\G$ as in the proof of Proposition \ref{prop:gendis}.
\end{proof}
\begin{proposition}\label{prop:genranddisch}
Assume \ref{ass} and suppose that $x\mapsto \nabla_w\ell(w,x)$ has regularity $\Rr_\df(\xi)$, wrt $\Pp_{X| X^\star=x^\star}$, for $\Pp_{ X^\star}$-almost every $x^\star$ and $\forall w\in\W$. Then, we have that
$$|\G|\leq \frac{\xi}{m}\sum_{i=1}^m\sum_{k=1}^\infty\ee_{k-1}\E_{\Pp_{X_i^\star, W}}[\df(\Pp_{X_i|W_k,  X_i^\star}, \Pp_{X_i| X_i^\star})+\df(\Pp_{\bar X_i|W_k,  X_i^\star}, \Pp_{\bar X_i| X_i^\star})]\,.$$
\end{proposition}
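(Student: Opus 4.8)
The plan is to combine the individual-sample decomposition of $\G$ from the proof of Proposition \ref{prop:gendisch} with the super-sample argument behind Proposition \ref{prop:genrandch}, carried out one index $i$ at a time. First I would write
$$\G=\frac1m\sum_{i=1}^m\Big(\E_{\Pp_{W\otimes X}}[\ell(W,X)]-\E_{\Pp_{W,X_i}}[\ell(W,X_i)]\Big)\,,$$
so that it is enough to bound each of the $m$ summands separately, keeping the prefactor $1/m$ in front. For a fixed $i$, the decisive observation is that, conditionally on $X^\star_i=x^\star_i$, both $X_i$ and $\bar X_i$ have the same law $\Pp_{X_i|X^\star_i=x^\star_i}=\Pp_{\bar X_i|X^\star_i=x^\star_i}$, namely the uniform measure on the two points of $x^\star_i$; this is precisely the measure with respect to which $x\mapsto\nabla_w\ell(w,x)$ is assumed to have regularity $\Rr_\df(\xi)$, which is what lets the chaining argument run inside the conditional model.

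Next I would reproduce, with $X_i$ in place of $S$ and $X^\star_i$ in place of $S^\star$, the rewriting used in the proof of Proposition \ref{prop:genrand} (borrowed from \cite{borja2021tighter}): it expresses $\E_{\Pp_{W\otimes X}}[\ell(W,X)]-\E_{\Pp_{W,X_i}}[\ell(W,X_i)]$ as a difference of two ``decoupled-minus-coupled'' terms — one built from $X_i$, one from $\bar X_i$ — each taken under $\Pp_{\,\cdot\,|X^\star_i}$ and then averaged over $\Pp_{X^\star_i}$, the crossed term cancelling thanks to the symmetry $\Pp_U=\Pp_{\bar U}$. The triangle inequality then reduces the $i$-th summand to an expectation over $\Pp_{X^\star_i}$ of two terms, one of the form $|\E_{\Pp_{X_i|X^\star_i}}[\ell(W,X_i)]-\E_{\Pp_{X_i|W,X^\star_i}}[\ell(W,X_i)]|$ and the analogous one with $\bar X_i$ (the $W$-expectations understood in the appropriate conditional sense). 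For each of the two inner pieces and for $\Pp_{X^\star_i}$-almost every $x^\star_i$ I would then invoke Lemma \ref{lemma:superreg}: under Assumptions \ref{ass} the $\Rr_\df(\xi)$-regularity of $x\mapsto\nabla_w\ell(w,x)$ wrt $\Pp_{X_i|X^\star_i=x^\star_i}$ upgrades to the $\Rr_\df(\xi\|w-w'\|)$-regularity of $x\mapsto\ell(w,x)-\ell(w',x)$, while the continuity of $w\mapsto\ell(w,x)$ and the integrability $\E[\sup_w|\ell(w,\cdot)|]<\infty$ required by Theorem \ref{thm:genchapp} follow from the boundedness in Assumptions \ref{ass}. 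Applying Theorem \ref{thm:genchapp} in this conditional model, with $\Pp_Z=\Pp_{X_i|X^\star_i=x^\star_i}$ and the given $\{\ee_k\}$-refining sequence of nets, bounds each inner piece by $\xi\sum_{k\ge1}\ee_{k-1}\,\df(\Pp_{X_i|X^\star_i=x^\star_i},\Pp_{X_i|W_k,X^\star_i=x^\star_i})$ (respectively the same expression with $\bar X_i$). Taking the expectation over $X^\star_i$, summing over $i$ and dividing by $m$ then gives the stated bound.

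The part I expect to require the most care is the conditional bookkeeping: one must make sure that the regular conditional probabilities $\Pp_{X_i|W_k,X^\star_i}$ appearing in the statement are genuinely the objects produced when Theorem \ref{thm:genchapp} is applied ``inside'' the expectation over $X^\star_i$, that all the maps involved remain measurable so that the outer expectation makes sense, and that the dominated-convergence and telescoping step internal to Theorem \ref{thm:genchapp} is valid $\Pp_{X^\star_i}$-almost surely (which it is, since $\ell$ is uniformly bounded by Assumptions \ref{ass}). Everything else is a direct transposition of the proofs of Propositions \ref{prop:gendisch}, \ref{prop:genrand} and \ref{prop:genrandch}, so no genuinely new difficulty arises.
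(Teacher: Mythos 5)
Your proposal is correct and follows essentially the same route as the paper, whose proof is precisely the combination you describe: the per-sample decomposition of $\G$ from Propositions \ref{prop:gendis}--\ref{prop:gendisch}, the symmetrisation identity of Proposition \ref{prop:genrand} applied with $X_i$, $\bar X_i$, $X_i^\star$ in place of $S$, $\bar S$, $S^\star$, and the chained Theorem \ref{thm:genchapp} (via Lemma \ref{lemma:superreg}, i.e.\ Theorem \ref{thm:genchA}) applied conditionally on $X_i^\star=x_i^\star$ in place of Theorem \ref{thm:genstd}. The only point you leave implicit --- that $\bar X_i$ is independent of $W$ so that the decoupled term equals $\E_{\Pp_{W\otimes X}}[\ell(W,X)]$, which relies on the product structure of the super-sample --- is left equally implicit in the paper's own proof.
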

\begin{proof}
We proceed as for proving Proposition \ref{prop:genranddis}, but following the proof Proposition \ref{prop:genrandch} instead of \ref{prop:genrand}.
\end{proof}
Clearly one can generalise the two results above by using the same observations as in Propositions \ref{prop:genrandts} and \ref{prop:genrandchts}.

We can now restate all the individual-sample bounds from Section \ref{sec:appdis} in the random sub-sampling framework. 
\begin{proposition}\label{prop:MIranddis}
On the one hand, if $|\ell(w, x)|\leq\xi$, uniformly on $\W$ and $\X$, then
$$|\G|\leq \frac{2\xi}{m}\sum_{i=1}^m \sqrt{2I(W; X_i|X_i^\star)}\,.$$
On the other hand, if $|\nabla_w\ell(w, x)|\leq\xi$, uniformly on $\W$ and $\X$, then
$$|\G|\leq \frac{2\xi}{m}\sum_{i=1}^m\sum_{k=1}^\infty\ee_{k-1}\sqrt{2I(W_k;X_i|X_i^\star)}\,.$$
\end{proposition}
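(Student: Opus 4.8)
The plan is to obtain both inequalities as direct corollaries of Propositions~\ref{prop:genranddis} and~\ref{prop:genranddisch}, specialised to the divergence $\df_1:(\mu,\nu)\mapsto\sqrt{2\KL(\nu\|\mu)}$ from Lemma~\ref{lemma:dfreg}. This is exactly parallel to how Proposition~\ref{prop:MIdis} is read off from Propositions~\ref{prop:gendis} and~\ref{prop:gendisch}, and to how Proposition~\ref{prop:CMI} follows from Propositions~\ref{prop:genrand} and~\ref{prop:genrandch}; the proof is therefore essentially a matter of assembling these ingredients.

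First I would establish the required $\df_1$-regularity. If $|\ell(w,x)|\le\xi$ for all $(w,x)\in\W\times\X$, then for every $x^\star$ the random variable $\ell(w,X)$ with $X\sim\Pp_{X|X^\star=x^\star}$ takes values in $[-\xi,\xi]$ and is hence $\xi$-SG, so Lemma~\ref{lemma:dfreg} gives that $x\mapsto\ell(w,x)$ has regularity $\Rr_{\df_1}(\xi)$ with respect to $\Pp_{X|X^\star=x^\star}$, for $\Pp_{X^\star}$-almost every $x^\star$ and every $w\in\W$. Likewise, $\|\nabla_w\ell(w,x)\|\le\xi$ forces $v\cdot\nabla_w\ell(w,X)\in[-\xi\|v\|,\xi\|v\|]$ for every $v\in\R^d$, so $\nabla_w\ell(w,X)$ is a $\xi$-SG vector under $\Pp_{X|X^\star=x^\star}$ and, again by Lemma~\ref{lemma:dfreg}, $x\mapsto\nabla_w\ell(w,x)$ has regularity $\Rr_{\df_1}(\xi)$ with respect to $\Pp_{X|X^\star=x^\star}$ for almost every $x^\star$ and all $w$.

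Feeding these regularities into Propositions~\ref{prop:genranddis} and~\ref{prop:genranddisch} (the latter under Assumptions~\ref{ass}) produces, in the two cases, a bound of the form $\tfrac{\xi}{m}\sum_{i=1}^m\E_{\Pp_{W,X_i^\star}}[\,\df_1(\cdot,\cdot)+\df_1(\cdot,\cdot)\,]$ (with an extra $\sum_{k\ge 1}\ee_{k-1}$ in the chained case), where each $\df_1$ is the square root of twice a $\KL$ divergence between the conditional law of $X_i$ (resp.\ $\bar X_i$) given $(W,X_i^\star)$ — resp.\ $(W_k,X_i^\star)$ — and its law given $X_i^\star$ alone. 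I would then apply Jensen's inequality term by term, $\E[\sqrt{2\KL}]\le\sqrt{2\,\E[\KL]}$, and use the standard disintegration identity for conditional $\KL$ to recognise $\E_{\Pp_{W,X_i^\star}}[\KL(\Pp_{X_i|W,X_i^\star}\|\Pp_{X_i|X_i^\star})]=I(W;X_i|X_i^\star)$ — and $=I(W_k;X_i|X_i^\star)$ in the chained case, the integrand there depending on $W$ only through $W_k$. Finally, since conditionally on $X_i^\star$ the exchange $X_i\leftrightarrow\bar X_i$ is a measurable bijection, $I(W;\bar X_i|X_i^\star)=I(W;X_i|X_i^\star)$ and likewise for $W_k$ — the same observation already used in the proof of Proposition~\ref{prop:CMI} — so the two square roots in each summand coincide, the factor $2$ is collected, and one reads off $|\G|\le\tfrac{2\xi}{m}\sum_{i=1}^m\sqrt{2I(W;X_i|X_i^\star)}$ and, in the chained case, $|\G|\le\tfrac{2\xi}{m}\sum_{i=1}^m\sum_{k=1}^\infty\ee_{k-1}\sqrt{2I(W_k;X_i|X_i^\star)}$.

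I do not expect a genuine obstacle here, as everything rests on results already proven. The only points deserving a little care are the identity $\E_{\Pp_{W,X_i^\star}}[\KL(\Pp_{X_i|W,X_i^\star}\|\Pp_{X_i|X_i^\star})]=I(W;X_i|X_i^\star)$, the $X_i\leftrightarrow\bar X_i$ symmetry (already invoked in the proof of Proposition~\ref{prop:CMI}), and — conceptually — the fact that, in contrast with Proposition~\ref{prop:MIdis}, one here genuinely needs \emph{boundedness} of $\ell$ (resp.\ $\nabla_w\ell$) rather than mere subgaussianity with respect to $\Pp_X$: the $\df_1$-regularity has to hold with respect to $\Pp_{X|X^\star=x^\star}$ for $\Pp_{X^\star}$-almost every $x^\star$, which is exactly the restriction flagged in the paragraph preceding the statement.
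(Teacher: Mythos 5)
Your proposal is correct and matches the route the paper intends: the paper leaves this proposition as a direct application of Propositions~\ref{prop:genranddis} and~\ref{prop:genranddisch}, with exactly the ingredients you assemble — boundedness giving $\xi$-subgaussianity under each $\Pp_{X|X^\star=x^\star}$, Lemma~\ref{lemma:dfreg}, Jensen's inequality to pass to the conditional mutual information, and the $X_i\leftrightarrow\bar X_i$ symmetry already invoked in the proof of Proposition~\ref{prop:CMI} to collect the factor $2$. Your closing remark on why boundedness (rather than subgaussianity under $\Pp_X$) is genuinely needed here is also the correct reading of the paper's discussion.
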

\begin{proposition}\label{prop:Wassranddis}
On the one hand, if $x\mapsto\ell(w, x)$ is $\xi$-Lipschitz uniformly on $\W$, then
$$|\G|\leq \frac{\xi}{m}\sum_{i=1}^m\E_{\Pp_W, X_i^\star}[\Wass(\Pp_{X_i|X_i^\star}, \Pp_{X_i|W, X_i^\star})+\Wass(\Pp_{\bar X_i|X_i^\star}, \Pp_{\bar X_i|W, X_i^\star})]\,.$$
On the other hand, assume \ref{ass}. if $x\mapsto\nabla_w\ell(w, x)$ is $\xi$-Lipschitz uniformly on $\W$, then
$$|\G|\leq \frac{\xi}{m}\sum_{i=1}^m\sum_{k=1}^\infty \ee_{k-1}\E_{\Pp_W, X_i^\star}[\Wass(\Pp_{X_i|X_i^\star}, \Pp_{X_i|W_k, X_i^\star})+\Wass(\Pp_{\bar X_i|X_i^\star}, \Pp_{\bar X_i|W_k, X_i^\star})]\,.$$
\end{proposition}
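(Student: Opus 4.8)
The plan is to obtain both inequalities as direct specialisations of Propositions \ref{prop:genranddis} and \ref{prop:genranddisch}, taking $\df=\Wass$ and supplying the required $\Wass$-regularity through the Lipschitz half of Lemma \ref{lemma:dfreg}. Since these are individual-sample statements, the Wasserstein distances live on $\X$ itself, so no relation between $d_\X$ and $d_\Sc$ is involved.

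First I would treat the unchained bound. Fix $w\in\W$ and $x^\star=(x_0,x_1)\in\X^2$. The conditional $\Pp_{X|X^\star=x^\star}$ is supported on the two points $\{x_0,x_1\}$, so $\ell(w,\cdot)\in L^1(\Pp_{X|X^\star=x^\star})$ holds automatically for $\Pp_{X^\star}$-almost every $x^\star$ (as $\ell$ is $\R$-valued). Hence, by the second statement of Lemma \ref{lemma:dfreg}, the $\xi$-Lipschitzianity of $x\mapsto\ell(w,x)$ on $\X$ implies that $x\mapsto\ell(w,x)$ has regularity $\Rr_\df(\xi)$ with respect to $\Pp_{X|X^\star=x^\star}$, with $\df=\Wass$, uniformly in $w\in\W$ and for $\Pp_{X^\star}$-a.e.\ $x^\star$. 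Applying Proposition \ref{prop:genranddis} with this choice of $\df$ gives the first claim.

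For the chained bound I would work under the assumptions \ref{ass}. These force $\nabla_w\ell$ to be uniformly bounded on $\W\times\X$, so $\nabla_w\ell(w,\cdot)\in L^1(\mu)$ for every $\mu\in\PR$, in particular for $\mu=\Pp_{X|X^\star=x^\star}$. The second statement of Lemma \ref{lemma:dfreg} then turns the $\xi$-Lipschitzianity of $x\mapsto\nabla_w\ell(w,x)$ into the $\Rr_\df(\xi)$-regularity of $x\mapsto\nabla_w\ell(w,x)$ with respect to $\Pp_{X|X^\star=x^\star}$ (again with $\df=\Wass$), uniformly in $w$ and for $\Pp_{X^\star}$-a.e.\ $x^\star$. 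Proposition \ref{prop:genranddisch} then yields the second claim.

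There is no genuine obstacle here: the whole argument is bookkeeping on top of Propositions \ref{prop:genranddis} and \ref{prop:genranddisch} (which in turn rest on the super-sample decomposition of \cite{borja2021tighter}) together with the elementary coupling estimate inside Lemma \ref{lemma:dfreg}. The only hypothesis that needs checking before invoking Lemma \ref{lemma:dfreg} is $L^1$-integrability of the relevant map against the two-point conditional $\Pp_{X|X^\star=x^\star}$, which is immediate in both cases --- for $\ell$ because that conditional is finitely supported, and for $\nabla_w\ell$ because it is bounded under \ref{ass}.
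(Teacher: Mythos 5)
Your proposal is correct and matches the paper's intended (omitted) argument exactly: the result is stated as a direct specialisation of Propositions \ref{prop:genranddis} and \ref{prop:genranddisch} with $\df=\Wass$, where the Lipschitz half of Lemma \ref{lemma:dfreg} supplies the $\Rr_\df(\xi)$-regularity with respect to $\Pp_{X|X^\star=x^\star}$ (the paper itself notes that Lipschitzianity gives $\Wass$-regularity with respect to any measure). Your integrability checks (finite support of the conditional for $\ell$, boundedness of $\nabla_w\ell$ under \ref{ass}) are the right, if routine, bookkeeping.
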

\begin{proposition}\label{prop:powerranddis}
Fix $p>1$ and let $r=p/(p-1)$. On the one hand, if $|\ell(w, x)|\leq\xi$, uniformly on $\W$ and $\X$, then
$$|\G|\leq \frac{2\xi}{m}\sum_{i=1}^m\,(I^{(p)}(W; X_i|X_i^\star)+1)^{1/p}\,.$$
On the other hand, assume \ref{ass}. If $|\nabla_w\ell(w, x)|\leq\xi$, uniformly on $\W$ and $\X$, then
$$|\G|\leq \frac{2\xi}{m}\sum_{i=1}^m\sum_{k=1}^\infty\ee_{k-1}(I^{(p)}(W_k;X_i|X_i^\star)+1)^{1/p}\,.$$
\end{proposition}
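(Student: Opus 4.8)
The plan is to obtain both inequalities from the master individual-sample sub-sampling bounds, Propositions \ref{prop:genranddis} and \ref{prop:genranddisch}, fed with the power-divergence regularity of Lemma \ref{lemma:power}, and then to rewrite the resulting expected $\df$-terms as $p$-power informations. Throughout I take $\df:(\mu,\nu)\mapsto (D^{(p)}(\nu\|\mu)+1)^{1/p}$, which is measurable by Lemma \ref{lemma:klmeas}. The crucial structural remark is that, because $U\indep X^\star$ and $\Pp_U=\mathrm{Bernoulli}(\tfrac12)^{\otimes m}$, for every $x^\star=(x_0,x_1)$ the conditional law $\Pp_{X_i|X_i^\star=x^\star}$ (and equally $\Pp_{\bar X_i|X_i^\star=x^\star}$) is the two-point mixture $\tfrac12\delta_{x_0}+\tfrac12\delta_{x_1}$.

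First I would check the regularity hypothesis of Propositions \ref{prop:genranddis} and \ref{prop:genranddisch}. For any $w$, writing $\bar\ell(w)=\tfrac12(\ell(w,x_0)+\ell(w,x_1))$, one has $\E_{\Pp_{X|X^\star=x^\star}}[|\ell(w,X)-\bar\ell(w)|^{r}]^{1/r}=\tfrac12|\ell(w,x_0)-\ell(w,x_1)|\le\xi$ whenever $|\ell|\le\xi$, so Lemma \ref{lemma:power} gives that $x\mapsto\ell(w,x)$ has regularity $\Rr_\df(\xi)$ with respect to $\Pp_{X_i|X_i^\star=x^\star}$, and likewise with respect to $\Pp_{\bar X_i|X_i^\star=x^\star}$, for all $x^\star$ and all $w\in\W$. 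The point to stress is that the constant is $\xi$, not $2\xi$: on a two-point support the governing $L^r$-oscillation is $\tfrac12|\ell(w,x_0)-\ell(w,x_1)|$, not the global range $2\xi$. The chained case is identical with $\nabla_w\ell$ replacing $\ell$: interpreting the bound as $\|\nabla_w\ell(w,x)\|\le\xi$, one gets $\E_{\Pp_{X|X^\star=x^\star}}[\|\nabla_w\ell(w,X)-\overline{\nabla_w\ell}(w)\|^{r}]^{1/r}=\tfrac12\|\nabla_w\ell(w,x_0)-\nabla_w\ell(w,x_1)\|\le\xi$, so the vector form of Lemma \ref{lemma:power} again yields regularity $\Rr_\df(\xi)$ of $x\mapsto\nabla_w\ell(w,x)$ with respect to $\Pp_{X_i|X_i^\star=x^\star}$.

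Feeding this into Proposition \ref{prop:genranddis} (resp.\ Proposition \ref{prop:genranddisch}, which invokes \ref{ass}) with this $\df$ and constant $\xi$ bounds $|\G|$ by $\frac{\xi}{m}\sum_i\E_{\Pp_{W,X_i^\star}}[\df(\Pp_{X_i|X_i^\star},\Pp_{X_i|W,X_i^\star})+\df(\Pp_{\bar X_i|X_i^\star},\Pp_{\bar X_i|W,X_i^\star})]$ (resp.\ with the extra $\sum_{k\ge1}\ee_{k-1}$ and $W_k$ in place of $W$). It then remains to bound each of the two expectations by $(I^{(p)}(W;X_i|X_i^\star)+1)^{1/p}$ (resp.\ $(I^{(p)}(W_k;X_i|X_i^\star)+1)^{1/p}$); summing the two identical-looking contributions is exactly what produces the factor $2$ in the claimed $\tfrac{2\xi}{m}$. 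For the $X_i$-term I would use the disintegration identity $\tfrac{\dd\Pp_{X_i|W=w,X_i^\star=x^\star}}{\dd\Pp_{X_i|X_i^\star=x^\star}}=\tfrac{\dd\Pp_{W,X_i|X_i^\star=x^\star}}{\dd(\Pp_{W|X_i^\star=x^\star}\otimes\Pp_{X_i|X_i^\star=x^\star})}$ (valid $\Pp_{W|X_i^\star=x^\star}$-a.s., absolute continuity being automatic on the two-point support): integrating the $p$-th power against $\Pp_{X_i|X_i^\star=x^\star}$ and then against $\Pp_{W|X_i^\star=x^\star}$, and averaging over $x^\star$, gives $\E_{\Pp_{W,X_i^\star}}[D^{(p)}(\Pp_{X_i|W,X_i^\star}\|\Pp_{X_i|X_i^\star})+1]=I^{(p)}(W;X_i|X_i^\star)+1$; Jensen's inequality for the concave map $t\mapsto t^{1/p}$ then gives $\E_{\Pp_{W,X_i^\star}}[\df(\Pp_{X_i|X_i^\star},\Pp_{X_i|W,X_i^\star})]\le (I^{(p)}(W;X_i|X_i^\star)+1)^{1/p}$. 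For the $\bar X_i$-term, note that given $X_i^\star=x^\star$ the pair $(W,\bar X_i)$ is the image of $(W,X_i)$ under the measurable bijection swapping $x_0\leftrightarrow x_1$ in the second coordinate, which also maps $\Pp_{W\otimes X_i|X_i^\star=x^\star}$ onto $\Pp_{W\otimes\bar X_i|X_i^\star=x^\star}$; invariance of $D^{(p)}$ under bijections gives $I^{(p)}(W;\bar X_i|X_i^\star)=I^{(p)}(W;X_i|X_i^\star)$, so the same bound holds for the second expectation. Combining yields $|\G|\le\frac{2\xi}{m}\sum_i(I^{(p)}(W;X_i|X_i^\star)+1)^{1/p}$, and the chained statement follows by the identical argument with $W_k$ everywhere and Proposition \ref{prop:genranddisch}.

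The routine work is all in the last step; the genuinely delicate checks are the disintegration identity and the $D^{(p)}$-invariance that equates $I^{(p)}(W;\bar X_i|X_i^\star)$ with $I^{(p)}(W;X_i|X_i^\star)$, together with the accounting that pins the constant to exactly $2\xi$ — the $2$ being the number of terms ($X_i$ and $\bar X_i$) produced by Proposition \ref{prop:genranddis}, the $\xi$ surviving only because $\Pp_{X_i|X_i^\star}$ is a two-point law so the relevant oscillation is halved. Nothing beyond the standing assumption $\Pp_{S^\star}=\Pp_{X^\star}^{\otimes m}$ of Propositions \ref{prop:genranddis} and \ref{prop:genranddisch} is used.
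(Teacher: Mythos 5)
Your proposal is correct and follows exactly the route the paper intends (the paper states these random sub-sampling individual-sample bounds as immediate corollaries of Propositions~\ref{prop:genranddis}/\ref{prop:genranddisch} with the appropriate $\df$-regularity, leaving the verification to the reader). In fact you supply the two details the paper glosses over — that on the two-point conditional law $\Pp_{X_i|X_i^\star}$ the hypothesis $|\ell|\le\xi$ yields regularity constant $\xi$ rather than $2\xi$ (so that the final $2$ is purely the sum of the $X_i$ and $\bar X_i$ terms), and that $I^{(p)}(W;\bar X_i|X_i^\star)=I^{(p)}(W;X_i|X_i^\star)$ via the swap bijection and the disintegration of $D^{(p)}$ — and you also quietly fix the transposed argument order in the paper's statement of Proposition~\ref{prop:genranddis}.
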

\begin{proposition}\label{prop:chi2randdis}
On the one hand, if $|\ell(w, x)|\leq\xi$, uniformly on $\W$ and $\X$, then
$$|\G|\leq \frac{\xi}{m}\sum_{i=1}^m \,\E_{\Pp_{W, X_i^\star}}\left[\sqrt{\chi^2(\Pp_{X_i|W, X_i^\star}\|\Pp_{X_i|X_i^\star})}+\sqrt{\chi^2(\Pp_{\bar X_i|W, X_i^\star}\|\Pp_{\bar X_i|X_i^\star})}\right]\,.$$
On the other hand, assume \ref{ass}. If $|\nabla_w\ell(w, x)|\leq\xi$, uniformly on $\W$ and $\X$, then
$$|\G|\leq \frac{\xi}{m}\sum_{i=1}^m\sum_{k=1}^\infty\ee_{k-1}\E_{\Pp_{W, X_i^\star}}\left[\sqrt{\chi^2(\Pp_{X_i|W_k, X_i^\star}\|\Pp_{X_i|X_i^\star})}+\sqrt{\chi^2(\Pp_{\bar X_i|W_k, X_i^\star}\|\Pp_{\bar X_i|X_i^\star})}\right]\,.$$
\end{proposition}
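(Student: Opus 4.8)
The plan is to combine the $\chi^2$-regularity estimate of Lemma~\ref{lemma:chi2} with the individual-sample, random-subsampling bounds of Propositions~\ref{prop:genranddis} and~\ref{prop:genranddisch}. The crucial observation is that the reference measure appearing in those two propositions is $\Pp_{X|X^\star=x^\star}$, which by construction is the uniform mixture $\tfrac12(\delta_{x^\star_0}+\delta_{x^\star_1})$ of two Dirac masses; a function uniformly bounded on $\X$ stays bounded on this two-point support, and hence automatically has bounded variance (and, in the vector-valued case, bounded covariance matrix).

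For the first (unchained) claim I would fix $w\in\W$ and $x^\star=(x^\star_0,x^\star_1)\in\X^2$. Since $|\ell(w,x)|\le\xi$ for every $x$, the real random variable $\ell(w,X)$ with $X\sim\Pp_{X|X^\star=x^\star}$ takes values in $[-\xi,\xi]$, so $\V_{\Pp_{X|X^\star=x^\star}}[\ell(w,X)]\le\xi^2$. Lemma~\ref{lemma:chi2} (with $q=1$) then shows that $x\mapsto\ell(w,x)$ has regularity $\Rr_\df(\xi)$ with respect to $\Pp_{X|X^\star=x^\star}$, where $\df:(\mu,\nu)\mapsto\sqrt{\chi^2(\nu\|\mu)}$. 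As this holds for every $w\in\W$ and every $x^\star$, in particular $\Pp_{X^\star}$-almost surely, Proposition~\ref{prop:genranddis} applies verbatim and, after identifying $\df(\Pp_{X_i|X_i^\star},\Pp_{X_i|W,X_i^\star})=\sqrt{\chi^2(\Pp_{X_i|W,X_i^\star}\|\Pp_{X_i|X_i^\star})}$ (and likewise for $\bar X_i$), yields the first inequality.

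For the second (chained) claim I would run the same argument on the gradient: under Assumptions~\ref{ass}, if $\|\nabla_w\ell(w,x)\|\le\xi$ for all $x$, then for any unit vector $v\in\R^d$ and $X\sim\Pp_{X|X^\star=x^\star}$ the Cauchy--Schwarz inequality gives $v\cdot\nabla_w\ell(w,X)\in[-\xi,\xi]$, hence $v^\top\mathbb{C}_{\Pp_{X|X^\star=x^\star}}[\nabla_w\ell(w,X)]\,v\le\xi^2$; since the covariance matrix is symmetric positive semidefinite, this gives $\|\mathbb{C}_{\Pp_{X|X^\star=x^\star}}[\nabla_w\ell(w,X)]\|\le\xi^2$. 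Lemma~\ref{lemma:chi2} then shows that $x\mapsto\nabla_w\ell(w,x)$ has regularity $\Rr_\df(\xi)$ with respect to $\Pp_{X|X^\star=x^\star}$, for every $w$ and $\Pp_{X^\star}$-a.s., and Proposition~\ref{prop:genranddisch} delivers the second inequality.

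The only subtleties --- the places calling for a little care rather than raw computation --- are bookkeeping ones. First, one must check that each $\chi^2$ divergence in the statement is well posed, i.e.\ that $\Pp_{X_i|W,X_i^\star}\ll\Pp_{X_i|X_i^\star}$ and $\Pp_{\bar X_i|W,X_i^\star}\ll\Pp_{\bar X_i|X_i^\star}$; this is exactly the support inclusion supplied by Lemma~\ref{lemma:supp}. Second, one must track the argument order of $\df$ so that the divergences come out conditioned the right way round, matching the convention already used in Proposition~\ref{prop:chi2dis}. Both points are routine, so once the boundedness-to-variance step is in place the proof closes immediately; there is no genuine obstacle, the work being entirely in assembling results already established in the paper.
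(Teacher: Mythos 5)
Your proposal is correct and is exactly the argument the paper intends (it omits the proof, noting these propositions are direct applications of Propositions \ref{prop:genranddis} and \ref{prop:genranddisch} together with the $\df$-regularity lemmas): boundedness of $\ell$ (resp.\ $\nabla_w\ell$) on the two-point conditional law $\Pp_{X|X^\star=x^\star}$ gives the variance (resp.\ operator-norm covariance) bound needed for Lemma \ref{lemma:chi2}, and the general random-subsampling individual-sample bounds then yield both inequalities. Your handling of the two bookkeeping points — in particular reading the arguments of $\df$ with the reference measure first, so that the divergences appear as $\chi^2(\Pp_{X_i|W,X_i^\star}\|\Pp_{X_i|X_i^\star})$ as in Proposition \ref{prop:chi2dis} — matches the paper's convention.
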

\begin{proposition}\label{prop:TVranddis}
On the one hand, if $|\ell(w, x)|\leq\xi$, uniformly on $\W$ and $\X$, then
$$|\G|\leq \frac{2\xi}{m}\sum_{i=1}^m\,\E_{\Pp_{W, X_i^\star}}\left[\TV(\Pp_{X_i|X_i^\star}, \Pp_{X_i|W, X_i^\star})+\TV(\Pp_{\bar X_i|X_i^\star}, \Pp_{\bar X_i|W, X_i^\star})\right]\,.$$
On the other hand, assume \ref{ass}. If $|\nabla_w\ell(w, x)|\leq\xi$, uniformly on $\W$ and $\X$, then
$$|\G|\leq \frac{2\xi}{m}\sum_{i=1}^m\sum_{k=1}^\infty\ee_{k-1}\left[\TV(\Pp_{X_i|X_i^\star}, \Pp_{X_i|W_k, X_i^\star})+\TV(\Pp_{\bar X_i|X_i^\star}, \Pp_{\bar X_i|W_k, X_i^\star})\right]\,.$$
\end{proposition}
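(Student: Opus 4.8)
The plan is to deduce both inequalities directly from the $\df$-regularity recorded in Lemma~\ref{lem:TV}, fed into the individual-sample random-subsampling bounds of Propositions~\ref{prop:genranddis} and \ref{prop:genranddisch}; this is exactly the scheme used in the proof of Proposition~\ref{prop:CMI}, only with a different choice of $\df$. Throughout I would fix $\df:(\mu,\nu)\mapsto 2\,\TV(\mu,\nu)$, whose measurability (required by Definition~\ref{def:Dreg}) is precisely the first observation in the proof of Lemma~\ref{lem:TV}.

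For the unchained bound, the hypothesis $|\ell(w,x)|\le\xi$ says that, for each $w\in\W$, the map $x\mapsto\ell(w,x)$ takes values in $[-\xi,\xi]$. Lemma~\ref{lem:TV} then gives that $x\mapsto\ell(w,x)$ has regularity $\Rr_\df(\xi)$ with respect to \emph{any} probability measure on $\X$; in particular it has this regularity with respect to $\Pp_{X\mid X^\star=x^\star}$ for every $x^\star$ (hence, trivially, for $\Pp_{X^\star}$-almost every $x^\star$), and uniformly in $w$. Feeding this into Proposition~\ref{prop:genranddis} with this $\df$ yields
$$|\G|\le\frac{\xi}{m}\sum_{i=1}^m\E_{\Pp_{W,X_i^\star}}\!\big[2\TV(\Pp_{X_i\mid W,X_i^\star},\Pp_{X_i\mid X_i^\star})+2\TV(\Pp_{\bar X_i\mid W,X_i^\star},\Pp_{\bar X_i\mid X_i^\star})\big],$$
which is the asserted bound once the factor $2$ is pulled out.

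For the chained bound I would run the same argument on the gradient. Under the assumptions~\ref{ass}, $w\mapsto\nabla_w\ell(w,x)$ is well defined, and $\|\nabla_w\ell(w,x)\|\le\xi$ forces each component of $\nabla_w\ell(w,x)$ into $[-\xi,\xi]$, so the $\R^d$-valued map $x\mapsto\nabla_w\ell(w,x)$ is bounded in $[-\xi,\xi]$ in the sense of the vector-valued statement of Lemma~\ref{lem:TV}. That lemma then gives $\Rr_\df(\xi)$ regularity of $x\mapsto\nabla_w\ell(w,x)$ with respect to $\Pp_{X\mid X^\star=x^\star}$, uniformly in $w$ and for $\Pp_{X^\star}$-a.e.\ $x^\star$, and applying Proposition~\ref{prop:genranddisch} with $\df=2\TV$ concludes (again up to extracting the factor $2$, and with the outer expectation $\E_{\Pp_{W,X_i^\star}}$ supplied by that proposition).

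Since every ingredient is already available, I do not expect a real obstacle; the only two points needing a line of care are (i) invoking the measurability of $2\TV$, which is contained in Lemma~\ref{lem:TV}, and (ii) using the \emph{vector-valued} formulation of Lemma~\ref{lem:TV} correctly in the chained case, i.e.\ checking that boundedness of $\|\nabla_w\ell\|$ entails componentwise boundedness so that the hypotheses of Lemma~\ref{lem:TV} are genuinely met. Both are routine.
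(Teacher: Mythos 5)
Your route is exactly the one the paper intends (it omits the proof as routine): apply Lemma \ref{lem:TV} with $\df=2\TV$ to get $\Rr_\df(\xi)$ regularity with respect to $\Pp_{X\mid X^\star=x^\star}$ for every $x^\star$, then feed this into Propositions \ref{prop:genranddis} and \ref{prop:genranddisch} and pull out the factor $2$. The only slip is in your point (ii): reducing the vector-valued case to \emph{componentwise} boundedness is not the right check. By Definition \ref{def:Dreg}, the gradient has regularity $\Rr_{2\TV}(\xi)$ iff $x\mapsto v\cdot\nabla_w\ell(w,x)$ has regularity $\Rr_{2\TV}(\xi\|v\|)$ for every $v\in\R^d$, and componentwise bounds $|\partial_{w_j}\ell|\le\xi$ only give $|v\cdot\nabla_w\ell|\le\xi\sqrt{d}\,\|v\|$ in general, i.e.\ the weaker constant $\xi\sqrt d$. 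What you should use instead is Cauchy--Schwarz on the actual hypothesis $\|\nabla_w\ell(w,x)\|\le\xi$ (this is what the statement's $|\nabla_w\ell|\le\xi$ means, consistently with the table entry $\|\nabla_w\ell\|\le 1$): it gives $v\cdot\nabla_w\ell(w,x)\in[-\xi\|v\|,\xi\|v\|]$ directly, so the scalar case of Lemma \ref{lem:TV} applies to each projection with the right constant. With that one-line correction the argument is complete.
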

\begin{proposition}\label{prop:LIranddis}
On the one hand, if $|\ell(w, x)|\leq\xi$, uniformly on $\W$ and $\X$, then
$$|\G|\leq \frac{2\xi}{m}\sum_{i=1}^m\sqrt{2\LI(W;X_i|X_i^\star)}\,.$$
On the other hand, assume \ref{ass}. If $|\nabla_w\ell(w, x)|\leq\xi$, uniformly on $\W$ and $\X$, then
$$|\G|\leq \frac{2\xi}{m}\sum_{i=1}^m\sum_{k=1}^\infty\ee_{k-1}\sqrt{2\LI(W_k;X_i|X_i^\star)}\,.$$
\end{proposition}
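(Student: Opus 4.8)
The plan is to deduce both inequalities from the general individual-sample sub-sampling bounds, Propositions~\ref{prop:genranddis} and~\ref{prop:genranddisch}, specialised to the reversed Kullback--Leibler choice of $\df$, i.e.\ the one that turns boundedness of a function into $\df$-regularity through Corollary~\ref{cor:revkl}. For the first bound I would take $\df(\mu,\nu)=\sqrt{2\KL(\mu\|\nu)}$. Since $|\ell(w,x)|\le\xi$ uniformly, Corollary~\ref{cor:revkl} (for scalar $f$) shows that $x\mapsto\ell(w,x)$ has regularity $\Rr_\df(\xi)$ with respect to \emph{any} probability measure on $\X$, in particular with respect to $\Pp_{X|X^\star=x^\star}$ for $\Pp_{X^\star}$-almost every $x^\star$ and every $w\in\W$. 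Feeding this into Proposition~\ref{prop:genranddis}, and recalling that there the role of the reference measure is played by $\Pp_{X_i|X_i^\star}$ (resp.\ $\Pp_{\bar X_i|X_i^\star}$), so that it is the first argument of each KL, we obtain
\[
|\G|\ \le\ \frac{\xi}{m}\sum_{i=1}^m \E_{\Pp_{W,X_i^\star}}\!\big[\sqrt{2\KL(\Pp_{X_i|X_i^\star}\|\Pp_{X_i|W,X_i^\star})}+\sqrt{2\KL(\Pp_{\bar X_i|X_i^\star}\|\Pp_{\bar X_i|W,X_i^\star})}\big].
\]

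The next step is to identify each of the two summands with $\sqrt{2\LI(W;X_i|X_i^\star)}$. Disintegrating over the $W$-coordinate the KL that defines the conditional lautum information --- note $\Pp_{W\otimes X_i|X_i^\star}$ and $\Pp_{W,X_i|X_i^\star}$ share the $W$-marginal $\Pp_{W|X_i^\star}$, with $W$-conditionals $\Pp_{X_i|X_i^\star}$ and $\Pp_{X_i|W,X_i^\star}$ respectively --- gives the identity $\LI(W;X_i|X_i^\star)=\E_{\Pp_{W,X_i^\star}}[\KL(\Pp_{X_i|X_i^\star}\|\Pp_{X_i|W,X_i^\star})]$. Moreover, conditionally on $X_i^\star=(x_{i,0},x_{i,1})$ the involution $\sigma$ interchanging the two coordinates of $X_i^\star$ is a measurable bijection sending $X_i$ to $\bar X_i$; pushing $\Pp_{W,X_i|X_i^\star}$ and $\Pp_{W\otimes X_i|X_i^\star}$ forward along $(w,x)\mapsto(w,\sigma(x))$ leaves every KL unchanged, whence $\LI(W;\bar X_i|X_i^\star)=\LI(W;X_i|X_i^\star)$. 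Applying Jensen's inequality (concavity of $t\mapsto\sqrt t$) to bring $\E_{\Pp_{W,X_i^\star}}$ inside the square root in each term and then adding the two identical contributions yields the factor $2$ and the first claimed inequality.

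The chained bound is obtained in exactly the same way, starting instead from Proposition~\ref{prop:genranddisch}, which is where Assumptions~\ref{ass} enter. The hypothesis $\|\nabla_w\ell(w,x)\|\le\xi$ together with the vector-valued case of Corollary~\ref{cor:revkl} (test against an arbitrary $v\in\R^d$, using $|v\cdot\nabla_w\ell(w,x)|\le\|v\|\xi$) shows that $x\mapsto\nabla_w\ell(w,x)$ has regularity $\Rr_\df(\xi)$ with respect to any measure. Proposition~\ref{prop:genranddisch} then bounds $|\G|$ by $\frac{\xi}{m}\sum_i\sum_{k\ge1}\ee_{k-1}\E_{\Pp_{W,X_i^\star}}[\,\cdots\,]$, with the same pair of KL terms but now conditioned on $W_k$ (that is, on $W\in\pi_k^{-1}(\cdot)$); the analogous disintegration identity for $\LI(W_k;X_i|X_i^\star)$, the same bijection remark, and Jensen's inequality give the second claimed inequality.

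I do not expect a genuinely hard step: the argument has the same structure as the proofs of Proposition~\ref{prop:CMI} and of the companion statements collected in Table~\ref{table:boundsapp}. The two points that require some care are (i) using the \emph{correct orientation} of the KL divergence --- one must invoke Corollary~\ref{cor:revkl}, not Lemma~\ref{lemma:dfreg}, so that the averaged KL that appears is the one featuring in lautum (rather than mutual) information --- and (ii) checking that the conditional lautum information does not change when $X_i$ is replaced by $\bar X_i$ at fixed $X_i^\star$, which is precisely what lets the $\bar X_i$ contribution be folded into $\LI(W;X_i|X_i^\star)$ and thereby accounts for the factor $2$ in both displays.
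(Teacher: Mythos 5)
Your proposal is correct and follows essentially the route the paper intends (it omits this proof as analogous to Proposition \ref{prop:CMI}): combine Propositions \ref{prop:genranddis} and \ref{prop:genranddisch} with the reversed-KL regularity of Corollary \ref{cor:revkl}, disintegrate the conditional lautum information over $W$ (resp.\ $W_k$), use the swap bijection to identify the $\bar X_i$ term with the $X_i$ term, and finish with Jensen. You also correctly handle the two delicate points — the orientation of the KL (reference measure in the first slot, so that the averaged divergence is lautum rather than mutual information, despite the swapped argument order in the displays of Propositions \ref{prop:genrand}--\ref{prop:genranddisch}) and the vector-valued use of Corollary \ref{cor:revkl} via $|v\cdot\nabla_w\ell|\le\|v\|\xi$.
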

\subsection{Summary table}
Several explicit bounds that can be derived within our general framework of Section \ref{sec:fram} are reported in Table \ref{table:boundsapp}. The first column states the regularity condition required on the loss. However, we refer to the corresponding propositions for the detailed assumptions of each bound. All bounds are stated for $\xi=1$. The last columns give the literature references for each bound, to the best of our knowledge. However, this bibliography should be taken as a mere guideline, as there might possibly be missing references. Those bounds that we could not find in the literature are marked as ``New''. 
\newpage
\begin{table}[t]
\centering
\caption{Some bounds that can be derived with the framework from Section \ref{sec:fram}}
\label{table:boundsapp}
\resizebox{\linewidth}{!}{\begin{tabular}{lccc}
    \\
    \\
    \toprule[1pt]
    Assumption ($\forall w\in\W$) & Bound & Prop & Ref\\
    
    \midrule\midrule
    
    $\ell(w, X)$ $1$-SG & $\sqrt{2 I(W;S)/m}$ & \ref{prop:MI} & \cite{russo2019much} \\
    $\nabla_w\ell(w, X)$ $1$-SG & $\sum_k\varepsilon_{k-1}\sqrt{2 I(W_k;S)/m}$ & \ref{prop:MIchain} & \cite{asadi2018chaining}\\
    
    \midrule 
    
    $\ell(w, \cdot)$ $1$-Lipschitz& $\E_{\Pp_W}[\Wass(\Pp_S, \Pp_{S|W})]/\sqrt m$ & \ref{prop:Wass} & \cite{lopez2018wass}\\
    $\nabla_w\ell(w, \cdot)$ $1$-Lipschitz& $\sum_k\varepsilon_{k-1}\E_{\Pp_W}[\Wass(\Pp_S, \Pp_{S|W_k})]/\sqrt m$ & \ref{prop:Wassch} & New\\
    
    \midrule
    
    $\ell(w, X)$ $1$-SG & $e^{1/e}\sqrt{p}(I^{(p)}(W;S)+1)^{1/p}/\sqrt{m(p-1)}$ & \ref{prop:power} & \cite{aminian2021informationtheoretic} \\
    $\nabla_w\ell(w, X)$ $1$-SG & $e^{1/e}\sqrt{p}\sum_k\varepsilon_{k-1}(I^{(p)}(W_k;S)+1)^{1/p}/\sqrt{m(p-1)}$ & \ref{prop:power} & New\\
    
    \midrule 
    
    $\V_{\Pp_X}[\ell(w, X)]\leq 1$ & $\E_{\Pp_W}[\chi^2(\Pp_{S|W}\|\Pp_S)^{1/2}]/\sqrt m$ & \ref{prop:chi2} & \cite{borja2021tighter} \\
    $\|\mathbb C_{\Pp_X}[\nabla_w\ell(w, X)]\|\leq 1$ & $\sum_k\varepsilon_{k-1}\E_{\Pp_W}[\chi^2(\Pp_{S|W}\|\Pp_S)^{1/2}]/\sqrt m$ & \ref{prop:chi2} & New\\
    
    \midrule
    \midrule
    
    $\ell(w, X)$ $1$-SG & $\sum_i\sqrt{2 I(W;X_i)}/m$ & \ref{prop:MIdis} & \cite{Bu2019} \\
    $\nabla_w\ell(w, X)$ $1$-SG & $\sum_i\sum_k\varepsilon_{k-1}\sqrt{2 I(W_k;X_i)}/m$ & \ref{prop:MIdis} & \cite{zhou2022stochastic}\\
    
    \midrule 
    
    $\ell(w, \cdot)$ $1$-Lipschitz& $\sum_i\E_{\Pp_W}[\Wass(\Pp_X, \Pp_{X_i|W})]/m$ & \ref{prop:Wassdis} & \cite{borja2021tighter}\\
    $\nabla_w\ell(w, \cdot)$ $1$-Lipschitz& $\sum_i\sum_k\varepsilon_{k-1}\E_{\Pp_W}[\Wass(\Pp_X, \Pp_{X_i|W_k})]/ m$ & \ref{prop:Wassdis} & New\\

    \midrule

    $\E_{\Pp_X}[|\ell(w, X)-\bar\ell(w)|^{p/(p-1)}]\leq1$ & $\sum_i(I^{(p)}(W; X_i)+1)^{1/p}/m$ & \ref{prop:powerdis} & New\\
    $\E_{\Pp_X}[\|\nabla_w\ell(w, X_i)-\overline{\nabla_w\ell}(w)\|^{p/(p-1)}]\leq1$ & $\sum_i\sum_k\varepsilon_{k-1}(I^{(p)}(W_k; X_i)+1)^{1/p}/ m$ & \ref{prop:powerdis} & New\\
    
    \midrule

    $\V_{\Pp_X}[\ell(w, X)]\leq 1$ & $\sum_i\E_{\Pp_W}[\chi^2(\Pp_{X_i|W}\|\Pp_X)^{1/2}]/m$ & \ref{prop:chi2dis} & New\\
    $\|\mathbb C_{\Pp_X}[\nabla_w\ell(w, X)]\|\leq1$ & $\sum_i\sum_k\varepsilon_{k-1}\E_{\Pp_W}[\chi^2(\Pp_{X_i|W_k}\|\Pp_X)^{1/2}]/ m$ & \ref{prop:chi2dis} & New\\

    \midrule

    $|\ell|\leq 1$ & $\sum_i\E_{\Pp_W}[\TV(\Pp_X, \Pp_{X_i|W})]/m$ & \ref{prop:TVdis} & \cite{borja2021tighter}\\
    $\|\nabla_w\ell\|\leq 1$ & $\sum_i\sum_k\varepsilon_{k-1}\E_{\Pp_W}[\TV(\Pp_X, \Pp_{X_i|W_k})]/m$ & \ref{prop:TVdis} & New\\

    \midrule

    $|\ell|\leq 1$ & $\sum_i\sqrt{2 \LI(W;X_i)}/m$ & \ref{prop:LIdis} & \cite{borja2021tighter}\\
    $\|\nabla_w\ell\|\leq 1$ & $\sum_i\sum_k\varepsilon_{k-1}\sqrt{2 \LI(W_k;X_i)}/m$ & \ref{prop:LIdis} & New\\
    \midrule
    \midrule
    
    $|\ell|\leq 1$ & $2\sqrt{2 I(W;S|S^\star)/m}$ & \ref{prop:CMI} & \cite{steinke2020} \\
    $\|\nabla_w\ell\|\leq 1$ & $2\sum_k\varepsilon_{k-1}\sqrt{2 I(W_k;S|S^\star)/m}$ & \ref{prop:CMI} & New\\
    
    \midrule 
    
    $\ell(w, \cdot)$ $1$-Lipschitz& $\E_{\Pp_{W,S^\star}}[\Wass(\Pp_{S|S^\star}, \Pp_{S|W, S^\star}) + \dots\tablefootnote{Here and in the following, ``$\,\dots$'' should be read as: ``Take the same expression on the left and replace $\Pp_{S|W, S^\star}$ with $\Pp_{\bar S|W, S^\star}$ (or $\Pp_{X_i|W, X_i^\star}$ with $\Pp_{\bar X_i|W, X_i^\star}$).''.}]/\sqrt m$ & \ref{prop:Wassrand} & \cite{borja2021tighter}\\
    $\nabla_w\ell(w, \cdot)$ $1$-Lipschitz& $\sum_k\varepsilon_{k-1}\E_{\Pp_{W,S^\star}}[\Wass(\Pp_{S|S^\star}, \Pp_{S|W_k, S^\star}) + \dots]/\sqrt m$ & \ref{prop:Wassrand} & New\\
    
    \midrule
    
    $|\ell|\leq 1$ & $2e^{1/e}\sqrt{p}(I^{(p)}(W;S|S^\star)+1)^{1/p}/\sqrt{m(p-1)}$ & \ref{prop:powerrand} & New \\
    $\|\nabla_w\ell\|\leq 1$ & $2e^{1/e}\sqrt{p}\sum_k\varepsilon_{k-1}(I^{(p)}(W_k;S|S^\star)+1)^{1/p}/\sqrt{m(p-1)}$ & \ref{prop:powerrand} & New\\
    
    \midrule 
    
    $|\ell|\leq 1$ & $2\E_{\Pp_{W, S^\star}}[\chi^2(\Pp_{S|W, S^\star}\|\Pp_{S|S^\star})^{1/2}+\dots]/\sqrt m$ & \ref{prop:chi2rand} & New \\
    $\|\nabla_w\ell\|\leq 1$ & $2\sum_k\varepsilon_{k-1}\E_{\Pp_{W,S^\star}}[\chi^2(\Pp_{S|W, S^\star}\|\Pp_{S|S^\star})^{1/2}+\dots]/\sqrt m$ & \ref{prop:chi2rand} & New\\
    
    \midrule 
    \midrule
    
    $|\ell|\leq 1$ & $2\sum_i\sqrt{2 I(W;X_i|X_i^\star)}/m$ & \ref{prop:MIranddis} & \cite{haghifam2020} \\
    $\|\nabla_w\ell\|\leq 1$ & $2\sum_i\sum_k\varepsilon_{k-1}\sqrt{2 I(W_k;X_i|X_i^\star)}/m$ & \ref{prop:MIranddis} & New\\
    
    \midrule 
    
    $\ell(w, \cdot)$ $1$-Lipschitz& $\sum_i\E_{\Pp_{W, X_i^\star}}[\Wass(\Pp_{X_i|X_i^\star}, \Pp_{X_i|W, X_i^\star})+\dots]/m$ & \ref{prop:Wassranddis} & \cite{borja2021tighter}\\
    $\nabla_w\ell(w, \cdot)$ $1$-Lipschitz& $\sum_i\sum_k\varepsilon_{k-1}\E_{\Pp_{W, X_i^\star}}[\Wass(\Pp_{X_i|X_i^\star}, \Pp_{X_i|W_k, X_i^\star})+\dots]/ m$ & \ref{prop:Wassranddis} & New\\

    \midrule

    $|\ell|\leq 1$ & $2\sum_i(I^{(p)}(W; X_i|X_i^\star)+1)^{1/p}/m$ & \ref{prop:powerranddis} & New\\
    $\|\nabla_w\ell\|\leq 1$ & $2\sum_i\sum_k\varepsilon_{k-1}(I^{(p)}(W_k; X_i|X_i^\star)+1)^{1/p}/ m$ & \ref{prop:powerranddis} & New\\
    
    \midrule

    $|\ell|\leq 1$ & $\sum_i\E_{\Pp_{W, X_i^\star}}[\chi^2(\Pp_{X_i|W, X_i^\star}\|\Pp_{X_i|X_i^\star})^{1/2}+\dots]/m$ & \ref{prop:chi2randdis} & New\\
    $\|\nabla_w\ell\|\leq 1$ & $\sum_i\sum_k\varepsilon_{k-1}\E_{\Pp_{W, X_i^\star}}[\chi^2(\Pp_{X_i|W_k, X_i^\star}\|\Pp_{X_i|X_i^\star})^{1/2}+\dots]/ m$ & \ref{prop:chi2randdis} & New\\

    \midrule

    $|\ell|\leq 1$ & 2$\sum_i\E_{\Pp_{W, X_i^\star}}[\TV(\Pp_{X_i|X_i^\star}, \Pp_{X_i|W, X_i^\star})+\dots]/m$ & \ref{prop:TVranddis} & \cite{borja2021tighter}\\
    $\|\nabla_w\ell\|\leq 1$ & 2$\sum_i\sum_k\varepsilon_{k-1}\E_{\Pp_{W, X_i^\star}}[\TV(\Pp_{X_i|X_i^\star}, \Pp_{X_i|W_k, X_i^\star})+\dots]/m$ & \ref{prop:TVranddis} & New\\

    \midrule

    $|\ell|\leq 1$ & $2\sum_i\sqrt{2 \LI(W;X_i|X_i^\star)}/m$ & \ref{prop:LIranddis} & New\\
    $\|\nabla_w\ell\|\leq 1$ & $2\sum_i\sum_k\varepsilon_{k-1}\sqrt{2 \LI(W_k;X_i|X_i^\star)}/m$ & \ref{prop:LIranddis} & New\\
    \bottomrule[1pt]
\end{tabular}}
\end{table}
\end{document}